\newcommand{\CMscr}{}
\let\CMscr=\mathscr
\DeclareSymbolFontAlphabet{\mathrsfs}{rsfs}
\DeclareMathAlphabet{\mathcalligra}{T1}{calligra}{m}{n}
\DeclareMathOperator{\Tr}{Tr}
\newtheorem{theorem}{Theorem}% 
\newtheorem{lemma}{Lemma}% 
\newtheorem{proposition}[theorem]{Proposition}%
\newtheorem{remark}{Remark}%
\newtheorem{definition}{Definition}
\newtheorem{assumption}{Assumption}
\newtheorem{corollary}{Corollary}
\def\N{\mathbb{N}}
\def\R{\mathbb{R}}
\def \a {\alpha}
\def \b {\beta}
\def\dif{\mathop{}\!\mathrm{d}}
\def\EE{{\mathbb E}\,}
\def\defas{\stackrel{\text{def}}{=}}
\def\eqd{\stackrel{d}{=}}
\DeclareDocumentCommand{\Prto} {o} {
  \IfNoValueTF {#1}
  {\overset{\Pr}{\longrightarrow}}
  { \xrightarrow[ #1 \to \infty]{\Pr }}
}
\DeclareDocumentCommand{\Asto} {o} {
  \IfNoValueTF {#1}
  {\overset{\text{\rm a.s.}}{\longrightarrow}}
  { \xrightarrow[ #1 \to \infty]{\text{\rm a.s.} }}
}
\DeclareDocumentCommand{\law} {o} {
  \IfNoValueTF {#1}
  {\overset{\text{law}}{=}}
  { \xrightarrow[ #1 \to \infty]{\Pr }}
}
\newcommand{\scrD}{\mathscr{D}}
\newcommand{\ls}{{\ell^{\star}}}
\newcommand{\lp}{{\ell^{+}}}
\newcommand{\lpp}{{\bar{\ell}}}
\DeclareMathOperator{\E}{\mathbb{E}}
\newcommand{\e}{\varepsilon}
\newcommand{\cA}{\mathcal{A}}
\newcommand{\cE}{\mathcal{E}}
\newcommand{\cF}{\mathcal{F}}
\newcommand{\cG}{\mathcal{G}}
\newcommand{\cK}{\mathcal{K}}
\newcommand{\cM}{\mathcal{M}}
\newcommand{\cV}{\mathcal{V}}
\newcommand{\cN}{\mathcal{N}}
\newcommand{\cR}{\mathcal{R}}
\newcommand{\cL}{\mathcal{L}}
\def\cm{\mathcalligra{m}}
\def\csM{\mathscr{M}}
\def\csN{\mathscr{N}}
\def\csB{\mathscr{B}}
\def\csD{\mathscr{D}}
\def\csF{\mathscr{F}}
\def\csS{\mathscr{S}}
\def\csZ{\mathscr{Z}}
\def\csX{\mathscr{X}}
\def\csQ{\mathscr{Q}}
\def\csV{\CMscr{V}}
\def\csL{\CMscr{L}}
\def\csU{\CMscr{U}}
\def\csm{\mathfrak{m}}
\def\r{\theta}
\def\w{\mathrm{w}}
\def\htQ{{\tau}_Q}
\DeclareMathOperator{\grad}{Grad}
\DeclareMathOperator{\hess}{Hess}
\newcommand{\beq}{ \begin{equation} }
\newcommand{\eeq}{ \end{equation} }
\newcommand{\ip}[1]{\langle {#1} \rangle }
\newcommand{\ifl}[1]{\lfloor {#1} \rfloor }
\newcommand{\ipa}[1]{\left\langle {#1} \right\rangle }
\newcommand{\tr}{\text{Tr}}
\newcommand{\vertiii}[1]{{\left\vert\kern-0.25ex\left\vert\kern-0.25ex\left\vert #1 
    \right\vert\kern-0.25ex\right\vert\kern-0.25ex\right\vert}}
\def\Dif{\mathop{}\!\mathrm{D}}
\newcommand{\op}{ \text{op} }
\theoremstyle{thmstyletwo}%
\numberwithin{equation}{section}
\begin{document}

%\subtitle{Subject Section}

\title
% [Exact Dynamics of Multi-class SGD]
{Exact Dynamics of Multi-class Stochastic Gradient Descent}

\author{ Elizabeth Collins-Woodfin\ORCID{0000-0002-4018-0769}
\address{\orgdiv{Department of Mathematics}, \orgname{University of Oregon}, 
% \orgaddress{\street{Street}, \postcode{Postcode}, 
\state{Eugene}, \country{OR, USA}}}
\author{Inbar Seroussi\ORCID{0000-0001-5209-5839}
\address{\orgdiv{School of Mathematical Science and Computer Science}, \orgname{Tel-Aviv University}, 
% \orgaddress{\street{Street}, \postcode{Postcode}, \state{State}, 
\country{Israel}}}
\author{Elizabeth Collins-Woodfin and Inbar Seroussi}

% \corresp[*]{Corresponding author: \href{email:collinsw@uoregon.edu}{collinsw@uoregon.edu}}
\maketitle
%\editor{Associate Editor: Name}
\begin{abstract}
 We develop a framework for analyzing the learning dynamics of high-dimensional problems trained using one-pass stochastic gradient descent (SGD) with data from multiple anisotropic classes. Our main theorem provides exact expressions for quantities of interest, including the risk and the overlap with the true signal, in terms of a deterministic system of ODEs, valid in the high-dimensional limit. 
The theorem holds for a broad class of optimization problems and extends to settings where the number of classes grows with dimension. To illustrate its utility, we investigate in detail the effect of the data's anisotropic structure on the problems of binary logistic regression and least-squares (LS) loss. We study the LS in a linear multiclass setup and derive a learning-rate threshold that depends on the average eigenvalue of the covariance matrices. In the binary logistic regression, we study three cases: isotropic covariances, data covariance matrices with a large fraction of zero eigenvalues (denoted as the \textit{zero-one model}), and covariance matrices with power-law spectra. We show that a structural phase transition occurs. In particular, for the zero-one model and the power-law model with sufficiently large power, SGD aligns more closely with values of the class mean that are projected onto the ``clean directions'' (i.e., directions of smaller variance). This is supported by analytical studies and numerical simulations, which show the exact asymptotic behavior of the loss in the high-dimensional limit. The effects of data anisotropy that we demonstrate are likely to hold beyond these examples and illustrate one application of the broader theorem that we prove. 
\end{abstract}

% \keywords{stochastic gradient descent; high-dimensional; stochastic optimization; generalized linear models; Gaussian mixture models.}

\section{Introduction}
Stochastic optimization algorithms are applied to high-dimensional data and parameter spaces. Understanding the dynamics and generalization of these algorithms remains a central challenge in modern machine learning. While significant progress has been made on isotropic Gaussian data distributions, real-world datasets exhibit substantially more structure: they are comprised of multiple classes, each with anisotropic noise. This anisotropy and non-Gaussianity can significantly affect algorithm convergence and stability, yet most theoretical analyses either assume spherical Gaussians or consider only a small number of classes with isotropic covariance structure.

In this work, we study the dynamics of stochastic gradient descent (SGD) on risk minimization problems over Gaussian mixture models (GMMs) with anisotropic class-wise covariance structures. Our framework handles multiple classes, including a number of classes that can grow with input dimension, and data with non-zero mean, addressing more realistic datasets.

GMMs are particularly well-suited for high-dimensional analysis due to their mathematical tractability and surprising universality. GMM universality in high-dimension means that the performance of many algorithms (like generalized linear models) depends asymptotically only on the first- and second-order moments of the data; see for example \cite{dandi2024universality}.  Recent work has shown that the Gaussian mixture assumption captures key behaviors across diverse machine learning tasks. \cite{loureiro2021learning, dandi2024universality} demonstrated that empirical learning curves follow GMM behavior even for non-Gaussian data. \cite{seddik2020random} showed that deep learning representations in generative adversarial nets (GANs) can be characterized by their first two moments for a wide range of classifiers in high dimensions. Furthermore, GMMs serve as effective toy models for understanding the training dynamics of neural network final layers through the neural collapse phenomenon \cite{papyan2020prevalence}, where representations from the penultimate layer often behave like linearly separable Gaussian mixtures.

Despite their widespread use, most theoretical studies of GMMs are restricted to isotropic covariances and a fixed, small number of classes. This limits their applicability to real datasets, which typically exhibit anisotropic structure within each class and may contain many classes. Understanding how SGD behaves under these more realistic conditions is essential for bridging the gap between theory and practice.

\paragraph{Main contributions} 
\begin{itemize}
    \item We study the high-dimensional limit of stochastic gradient descent (SGD) in the proportional regime where the feature dimension $d$ is linearly proportional to the number of samples $n$ for data generated from a multi-class Gaussian mixture model with general covariance and mean. We derive a closed set of equations for the norm of the iterates and their inner products (overlaps) with the class means. This allows us to predict the learning curve of SGD for general means and covariances. 
    
    \item We show that the deterministic equivalent of the SGD limit is also valid in the setting in which the class number grows logarithmically with the feature dimension ($\ls  = O(\log (d))$). 
    \item We study the effect of different covariance matrix structures and means on the dynamics of SGD in the setting of binary logistic regression. In particular, we provide exact asymptotics of the loss and overlap as a function of the number of samples for three different models. We quantify how the rate of decay depends on the learning rate and the structure of the covariance matrix.
    \item In models with power-law decay on the covariance spectrum, we identify a structural phase transition with respect to the power-law exponent in binary logistic regression. We suspect that this phase transition holds in a larger generality for other models as well.     
\end{itemize}
An important technical challenge addressed in our paper 
is obtaining the limiting dynamics in a setting where a finite set of summary statistics is not sufficient to close the equations.  In many high-dimensional settings, the dynamics are actually controlled by a finite set of scalar observables (e.g. projections of the learned parameters to certain ground truth directions).  Examples of settings that require only a finite set of such observables include online SGD with isotropic Gaussian data or finite data distributions.  In contrast, when the data have a more complicated covariance structure (e.g. power law), a finite set of scalar observables is not sufficient.  Rather, one would need projections of the parameters to all Krylov subspaces generated by the covariance matrix with the mean and parameter vectors.  The method developed in \cite{collinswoodfin2023hitting}, and extended here to multi-class problems, uses resolvents to encode this infinite set of scalar observables as a finite set of function-valued observables.  The technical details are explained later, but this is a crucial tool that enables us to analyze the affects of different covariance structures on SGD dynamics.

\subsection*{Related work}
\paragraph{Deterministic dynamics of stochastic algorithms in high-dimensions.}
The literature on  
deterministic dynamics 
for isotropic Gaussian data, has a long history \cite{saad1995dynamics,biehl1994line, biehl1995learning, saad1995exact}.  
These results have been rigorously proven and extended to other models under the isotropic Gaussian assumption \cite{goldt2019dynamics,wang2019solvable,arnaboldi2023highdimensional, arous2022high} and have been used to study sample complexity in some high-dimensional problems \cite{arous2021online,Bruno,damian2023smoothing,dandi2024benefits}. Extensions to
multi-pass SGD with small mini-batches \cite{PPAP01} as well as momentum \cite{LeeChengPaquettePaquette} have also been studied. 
Other works have studied high-dimensional limits 
from the perspective of dynamical mean field theory and related methods \cite{mignacco2020dynamical,gerbelot2022rigorous,celentano2021highdimensional,chandrasekher2021sharp,bordelon2022learning}.

High-dimensional analysis of streaming SGD for multi-class Gaussian data with isotropic covariance was done in \cite{arous2022high,arous2025local,refinetti2021classifying}, in which they also study the geometry and the spectral properties of the Hessian \cite{arous2025local}. 
In \cite{mignacco2020dynamical}, the authors study GMM with isotropic covariance in multi-pass SGD in the proportionate batch setting using a physics technique known as dynamical mean field theory. 

Recently, significant contributions have been made in understanding the effects of data of one class with a non-identity covariance matrix on the training dynamics \cite{CollinsWoodfinPaquette01,balasubramanian2023high,goldt2022gaussian,Yoshida, Goldt,collinswoodfin2023hitting}. 
The non-identity covariance modifies the optimization landscape and affects convergence properties, as discussed in \cite{collinswoodfin2023hitting}. This work extends the findings of \cite{collinswoodfin2023hitting} to Gaussian mixture data with non-identity covariance and mean, examining the impact of non-identity covariance within and between classes on the dynamics of SGD.

\paragraph{Information theoretic bounds and other estimators.} 

Information-theoretic bounds for Gaussian mixtures have been studied in both
supervised classification, where training labels are observed, and
unsupervised clustering, where labels are unknown. In supervised
high-dimensional Linear Discriminant Analysis (LDA), \cite{cai2019high}
derives minimax lower bounds for the excess misclassification risk relative
to the oracle Fisher rule. Related minimax bounds for Gaussian classification
were obtained in the isotropic setting by \cite{li2017minimax}, and more
recently for binary anisotropic high-dimensional mixtures with shared
covariance by \cite{minsker2025classification}. In the unsupervised setting,
minimax misclustering rates are known for isotropic mixtures
\cite{azizyan2013minimax,loffler2021optimality} and anisotropic mixtures
\cite{chen2024achieving,huang2025minimax}; see also
\cite{lesieur2016phase} for phase transitions in high-dimensional Gaussian
mixture clustering. These works provide statistical benchmarks for
classification or label recovery, whereas our focus is on the dynamics of a
specific training algorithm.

Closest to our setting are works deriving precise high-dimensional asymptotics
for estimators trained on Gaussian-mixture data. In the binary setting,
\cite{mai2019high} analyzes regularized and unregularized empirical risk
minimization for logistic and square losses under a shared positive-definite
covariance. Related precise asymptotics for ridge regression and regularized
discriminant analysis with general feature covariance were obtained in
\cite{dobriban2018high}. For multiclass models,
\cite{thrampoulidis2020theoretical} studies linear classifiers in the
proportional regime with a fixed number of classes and isotropic Gaussian
features, while \cite{loureiro2021learningGMM} studies convex empirical risk minimization (ERM) for
$K$-class Gaussian mixtures with positive-definite class covariances, with
explicit simplifications in the jointly commuting case. Compared with
these works, we analyze the supervised one-pass SGD trajectory itself, rather
than the final ERM estimator, and we allow nonconvex losses,
class-dependent possibly singular commuting covariance matrices, and a number
of classes growing with the input dimension.

\paragraph{Notation}
\begin{itemize}
\item We say an event holds \textit{with overwhelming probability, w.o.p.,} 
if there is a function $\omega: \N \to \R$ with $\omega(d)/\log d \to
\infty $ so that the event holds with probability at least
$1-e^{-\omega(d)}$. 
\item 
For  normed vector spaces $\mathcal{A}$, $\mathcal{B}$  with norms $\|\cdot\|_{\mathcal{A}}$ and $\|\cdot\|_{\mathcal{B}}$, respectively, and for $\alpha \geq 0$, we say a function $F \, : \, \mathcal{A} \to \mathcal{B}$ is \textit{$\alpha$-pseudo-Lipschitz} ($\alpha$-PL) with constant $L$ if for any $A, \hat{A} \in \mathcal{A}$, we have 
\[
\|F(A) - F(\hat{A})\|_{\mathcal{B}} \le L\|A-\hat{A}\|_{\mathcal{A}} (1 + \|A\|_{\mathcal{A}}^{\alpha} + \|\hat{A}\|_{\mathcal{A}}^{\alpha} ).
\]
\item We write $f(t) \asymp g(t)$ if there exist \textit{absolute} constants $C, c > 0$ such that $c  g(t) \le f(t) \le C  g(t)$ for all $t$. 
\end{itemize}
\paragraph{Tensor inner products and norms.}
We briefly review our notation for tensor computations and refer the reader to Section 3 of \cite{collinswoodfin2023hitting} for a comprehensive overview of tensors in this context.  Given vector spaces $\cV_1,\cV_2$ equipped with inner products $\ip{\cdot,\cdot}_{\cV_1},\ip{\cdot,\cdot}_{\cV_2}$ and given $a_1,a_2\in\cV_1$ and $b_1,b_2\in\cV_2$, we define the inner product of simple tensors by
\begin{equation}
    \ip{a_1\otimes b_1,\; a_2\otimes b_2}_{\cV_1\otimes\cV_2}\defas\ip{a_1,a_2}_{\cV_1}\ip{b_1,b_2}_{\cV_2}
\end{equation}
and this is extended to an inner product on all tensor by bilinearity.  For higher tensors, we define a partial contraction $\ip{A,B}_{\cV}$ where $A$ and $B$ are contracted along their first $\cV$ axis and the resulting dimension has the shape of the uncontracted axes of $A$ followed by those of $B$.  For example, given $A,B\in\cV_1\otimes\cV_2$, we have
\begin{equation}
    \ip{A,B}_{\cV_1}=A^\top B\in\cV_2^{\otimes2},\quad
    \ip{A,B}_{\cV_2}=AB^\top \in\cV_1^{\otimes2},\quad
    \ip{A,B}_{\cV_1\otimes\cV_2}=\Tr(A^\top B).
\end{equation}
When the subscript is omitted, it indicates contraction over the full space.  Finally, we use $\|\cdot\|$ and $\|\cdot\|_{\text{op}}$ to denote Hilbert-Schmidt norm and operator norm respectively. Given a function $f: \R_{+}\to \R_{+} $ we denote the $L_1$ norm of the function $\|f\|_1 = \int_0^\infty  f(x)\dif x$.

\subsection{Model Set-up}

We consider feature input data $a\in\R^d$ that comes from a Gaussian mixture with $\ell^\star$ number of classes.
At the training stage, each data point comes to us with its 
target $y\in \R^m$ that encodes the class label. A data point belongs to the class $i\in[\ls]$ with probability $p_i,$ such that $\sum_{i=1}^\ls p_i = 1.$ The number of classes $\ls$ can either be constant or $O(\log (d)).$
We consider the optimization problem 
\begin{align}
\min_{X\in\R^{d\times \ell}} \{\cL(X) = \E_{(a,y)}[l(X^\top a;y)]\}.
\end{align}
where $l: \R^\ell \times \R^m \to \R$ is the loss function and $X\in \R^{d\times \ell}$ is the matrix of learned parameters.

We now introduce the key assumptions for our model.

\begin{assumption}[Data features]
\label{ass:data}
 The distribution of a data point, conditioned on being in class $i$, is $a \mid  i \sim \mathcal{N}(\mu_i, K_i)$. The covariance matrices $K_i \in \mathbb{R}^{d \times d}$ are bounded in $\ell_2$ operator norm independent of $d$ (i.e., $\| K_i \|_{\textup{op}} \le C$); and $\{K_i\}$ commute. In addition, we scale $\mu_i$ such that $\sum_{i=1}^\ls p_i\|\mu_i\|^2\le A$ for some constant $A>0$.  In the case of growing $\ls$, we also require the means to be orthogonal.
\end{assumption}

We emphasize that the matrices $K_i$ are not necessarily positive definite. In fact, our analysis of the examples will demonstrate the effect of the zero eigenvalue direction on the optimization process. 
\begin{assumption}[Target model]\label{ass:target}
We consider two models for the target function: 
\begin{enumerate}[(A)]
    \item\label{ass:Hard_label} (\text{{Hard} label})  $y\mid i = i$ with $i\in [\ls]$  
    \item \label{ass:Soft_label} (\text{{Soft} label}) There exists a fixed matrix $X^\star \in \R^{d\times \ls}$ such that $\|X^\star\|< \sqrt{\ls}C$ for some $C>0$ independent of $d$, and a link function $g_i: \R^\ls \to \R^m$ for all $i\in [\ls]$. The target function is then $y |i  = g_i (\ip{X^\star, a_i}_{\R^d}; \e)$ with $\e\in \R^\ls$ some independent sub-Gaussian noise with  zero mean independent of the class and the input feature $a$. For example, the softmax function, $g_i(r^\star_{i})={e^{r^\star_{i,i}}}/{\sum_{j=1}^{\ell^\star}e^{r^\star_{i,j}}}$,  with $r^\star_{i} = \ip{X^\star, a_i}_{\R^d} \in \R^{\ls}$, and $r^\star_{i,j}$ is it $j$th coordinate. 
    
\end{enumerate}
\end{assumption}
We use the terms ``hard label'' and ``soft label'' to distinguish the two cases in our setup: case~(A) where the target is the class label, and case~(B) where the target is a function of projections of the data to a ground truth $X^\star$. We note that these terms have other meanings in the machine learning literature (e.g., one-hot encodings vs. probability distributions); our usage here refers specifically to the rigidity vs. flexibility of the target specification in our framework.

Throughout the paper, we will denote the matrix concatenating all means by $\mu \defas \mu_1\oplus \mu_2\cdots \oplus \mu_\ls.$ In order to keep a unified notation for hard and soft labels, we will sometimes use the notation
\begin{equation}
    \hat{X}\defas\begin{cases}
        X \oplus X^\star \in \R^{\lpp }\text{ with }\lpp = \ell+\ls & \text{(soft label setting),}\\
        X\in\R^{\lpp }\text{ with }\lpp = \ell &\text{(hard label setting).}
    \end{cases}
\end{equation}

Following the setup above, the population loss function can be written as a function of the parameters $\hat{X}$ with an expectation over the input feature $a$ the class label $i$, and the noise  $\e$: 
\begin{align}
\cL(X) = \E_{(a,i,\e)}[f_i(\hat{X}^\top a;\e)].
\end{align}
where $\{f_i:\R^{\ell+\ls}\to\R\}_{i=1}^{\ls}$ are functions satisfying growth and smoothness as described in assumptions \ref{ass:pseudo_lipschitz} and \ref{ass:risk_fisher} below. This notation, which exploits the problem's symmetry, will be used throughout the paper. This structure applies to both soft-label and hard-label models. In the case of soft label we take $f_i=f$ since the true label is not given and the influence of the class enters through the structure of the label. 

\begin{assumption}[Pseudo-Lipschitz (PL) $f_i$] 
\label{ass:pseudo_lipschitz}
The function $f_i :\R^\lpp \to \R$ is $\alpha$-PL for all $i\in [\ls]$ with respect to $\hat{X}^\top a$ and $\e$.  If $\ls$ growing with $d$ the functions $\{f_i\}$ are Lipschitz (i.e. $\alpha =0$). 
\end{assumption}
 
\begin{assumption}[Risk and its derivatives]\label{ass:risk_fisher} 

Let $B_i\in \R^{\lpp \times \lpp}$ be a positive-semidefinite matrix and let $m_i\in \R^{\lpp }$, and $z\sim \mathcal{N}(0, I_{\lpp}),$ for $i\in [\ls].$ 
The functions   
$\E_z[\nabla f_i(\sqrt{B_i} z + m_i; \e)], $ $\E_z[\nabla f_i(\sqrt{B_i} z + m_i, \e)^{\otimes 2}],$ and $\E_z[\nabla^2 f_i(\sqrt{B_i} z + m_i;\e)], $ are Lipschitz with respect to $B_i, m_i$ for all $i\in[ \ls],$  
with Lipschitz constants, $L_1, L_2, L_{22}$ respectively.  
\end{assumption}

   Assumption \ref{ass:risk_fisher} can be replaced by the weaker but rather more technical Assumption \ref{ass:risk_fisher_U}.

\subsection{Algorithmic set-up}
We consider an online/streaming stochastic gradient descent algorithm with iterates $\{X_k\in \R^{d\times \ell}\}$ given by the recurrence
\begin{align}\label{eq:SGD_update_rule}
X_{k+1}=X_k-\frac{\gamma_k}{d}a_{k+1,I_{k+1}}\otimes \nabla_x f_{I_{k+1}}(r_{k, I_{k+1}};\e_{k+1}),
\end{align}
where the $k$th data point is expressed as
$a_{k,I_k}=\sqrt{K_{I_k}}v_k+\mu_{I_k}$
and $I_k\in\{1,...,\ell^\star\}$ is a randomly chosen class index,  $v_k\sim\cN(0,I_d)$, and $r_{k,I_{k+1}} = \ip{\hat{X}_k, a_{k+1, I_{k+1}}}_{\R^d}.$ 
The notation $\nabla_x$ means that we are taking the gradient with respect to the variables associated with $\ip{X,a}_{\R^d}$ and not those coming from $\ip{X^\star,a}_{\R^d}$ or $\e$. 
The learning rate $\gamma_k$ is uniformly bounded. 

\section{Main Result: Deterministic dynamics for SGD \label{sec:main_result_rho_ODEs_thm}}
\paragraph{Time parametrization} To derive deterministic dynamics, we make the following change to continuous time by setting
\[
k \text{ iterations of SGD} = \lfloor td \rfloor, \quad \text{where $t \in \mathbb{R}$ is the continuous time parameter}.
\]
where $\lfloor \cdot \rfloor$ is the floor function. In addition, we define $X_{td} \defas X_{\lfloor td \rfloor}$ and $ \gamma(t) \defas \gamma_{\lfloor td \rfloor}. $  

\paragraph{Eigen-decomposition} Let $\{(\lambda_\rho^{(i)},u_\rho)\}_{\rho\leq d}$ denote the eigenvalue-eigenvector pairs for the covariance matrix $K_i$, noting that the eigenvectors will be the same for all covariance matrices due to the commuting assumption.  Then, for the index $\rho$ eigenspace, define the projected norm and mean overlap
\begin{equation}
V_\rho(X_{td})\defas dX_{td}^\top u_\rho u_\rho^\top X_{td},\qquad m_{\rho,j}(X_{td})\defas dX_{td}^\top u_\rho u_\rho^\top \mu_j\quad \text{for all }j\in[\ls].
\end{equation}
We also define
\begin{equation}
    B_i(X_{td})\defas \frac1d X_{td}^\top K_iX_{td}=\frac1d\sum_{\rho=1}^d\lambda_{\rho}^{(i)} V_{\rho}(X_{td}).
\end{equation}
\paragraph{Differential equations} We present the deterministic quantities $\csV_{\rho}(t),\csm_{\rho,j}(t),\csB_i(t)$ that approximate the SGD quantities $V_\rho(X_{td}),m_{\rho,j}(X_{td}),B_i(X_{td})$ respectively, where these deterministic equivalents solve the following system of differential equations (for hard labels):
\begin{equation}\label{eq:V_m_rho_2}\begin{split}
    \frac{\dif \csV_\rho}{\dif t}
    &=\sum_{i=1}^\ls p_i\Big(-2\gamma(t)\left( \csV_\rho\lambda_\rho^{(i)}\EE[\nabla^2f_i(\r_{t, i},\e)]+\csm_{\rho,i}\otimes \EE[\nabla f_i(\r_{t,i},\e)]\right)\\
    &\qquad\qquad+{\gamma(t)^2}(\lambda_\rho^{(i)}+\ip{\mu_i,u_\rho}^{2})\EE[\nabla f_i(\r_{t,i},\e)^{\otimes2}]\Big)\\ 
    \frac{\dif \csm_{\rho,j}}{\dif t}&=-\gamma(t)\sum_{i=1}^\ls p_i\left(\lambda_\rho^{(i)}\EE[\nabla^2f_i(\r_{t,i},\e)]\csm_{\rho,j}+d\ip{\mu_i,u_\rho}\ip{u_\rho,\mu_j}\EE[\nabla f_i(\r_{t,i},\e)]\right),
\end{split}\end{equation}
where $(\r_{t,i},\e)$ is a continuous model for $(r_{td,i},\e_{\lfloor td\rfloor+1})$ and has distribution
\begin{equation}
    \r_{t,i}= 
\sqrt{\mathscr{B}_i(t)}v +\frac1d\sum_{\rho=1}^d\csm_{\rho,i}(t) \text{ with } v \sim \mathcal{N}(0, I_\lpp)\text{ and }\e\sim\mathcal{N}(0,\sigma^2I_{\ls})\text{ independent}. 
\end{equation}
The limiting loss is then given by $\csL(t) = \sum_{i=1}^\ls p_i\EE[f_i(\r_{t,i},\e)]. $ Importantly, we note that, using the definition of $\r_{t,i}$ above, \eqref{eq:V_m_rho_2} is an autonomous system for $\{\csV_{\rho},\csm_{\rho,j}\}_{\rho\in[d],j\in[\ls]}$. We derive similar equations in the soft labels setting, see \eqref{eq:V_m_rho_2_soft} in the Appendix \ref{sec:integrodiff}. Furthermore, it follows from the Lipchitz condition in Assumption \ref{ass:risk_fisher} that the system above has a unique solution.  Under a weaker assumption (see Assumption \ref{ass:risk_fisher_U}) the system has a \textit{locally} unique solutions on some set $\csU$.

Finally, taking $\CMscr{M}\defas\bigoplus_{i=1}^{\lpp}\csm_{\rho,i}$, we encode all of these quantities in the compact notation
\[
\CMscr{Z}_\rho(t) \stackrel{\text { def }}{=}
\begin{bmatrix}
\CMscr{V}_{\rho}(t) & \CMscr{M}_{\rho}(t) \\
\CMscr{M}_{\rho}^\top(t) & \tilde{\mu}_{\rho}^{\otimes 2}
\end{bmatrix} 
\]
where 
 {$\tilde{\mu}_\rho = \ip{\mu, u_\rho}$ 
 and 
$\CMscr{Z}_{\rho}(t)$ is the deterministic continuous analogue of $ dW_{td}^\top u_\rho u_\rho^\top W_{td}$ where $W=\hat{X}\oplus\mu$.

Our main result shows that the above deterministic quantities are close to the stochastic ones defined by the iterates of SGD. In fact, we can show that the deterministic dynamics is close in the sense of Theorem \ref{thm:main_risk_m_v} to the stochastic one on a more general infinite class of functions defined by Definition \ref{ass:statistic}. This class of functions also includes the loss function, $\mathcal{L}(X)$, and its deterministic equivalent $\csL(t)$.

\begin{definition}
[Structure
of the statistics, $\varphi$] \label{ass:statistic} Denote by $W \defas\hat{X}\oplus \mu
\in \R^{d\times (\bar{\ell}+\ls)}$, and $q(K_1\dots,K_{\ls} )
$. The statistic satisfies a composite structure, 
\begin{equation*}
\begin{gathered}
\varphi(X) = 
g( W^\top q(K) W ) 
\end{gathered}
\end{equation*}
where ${g} \, : \, \R^{(\bar{\ell}+\ls)\times (\bar{\ell}+\ls)} \to \mathbb{R}$ is $\alpha$-pseudo-Lipschitz function
and $q$  
a polynomial (In the case of $\ls$ growing with $d$, we require the sum of the absolute value of the coefficients of $q$ to be bounded).
\end{definition}

\begin{theorem}[Learning curves]\label{thm:main_risk_m_v} Suppose 
that the assumptions above hold with $\alpha \le 1$. For any function satisfying Definition \ref{ass:statistic}, any $\e \in (0, \frac{1}{2})$ and $T>0$, with overwhelming probability, 
\begin{equation*}
\sup_{0\leq t\leq T} 
| \varphi(X_{\lfloor td \rfloor})- \phi(t)|
\leq Cd^{-\varepsilon}.
\end{equation*}
with $\phi(t) \defas g \bigg (\frac{1}{d} \sum_{\rho=1}^d \CMscr{Z}_\rho(t) q(\lambda_\rho^{(1)},\dots, \lambda_\rho^{(\ls)}) 
 \bigg ).$    
\end{theorem}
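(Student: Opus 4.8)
The plan is to reduce the statement for a general statistic $\varphi$ to the convergence of the fundamental quantities $\{V_\rho(X_{td}), m_{\rho,j}(X_{td})\}$ encoded in $\CMscr{Z}_\rho$, and then to establish the latter via a martingale/ODE-comparison argument for the SGD recursion \eqref{eq:SGD_update_rule}. First I would reduce to the statistic. Since $\varphi(X)=g(W^\top q(K)W)$ with $W=\hat X\oplus\mu$, and since $W^\top q(K)W=\frac1d\sum_\rho d\,W^\top u_\rho u_\rho^\top W\; q(\lambda_\rho^{(1)},\dots,\lambda_\rho^{(\ls)})$ by simultaneous diagonalization of the commuting $K_i$, the argument of $g$ is exactly $\frac1d\sum_\rho \CMscr{Z}_\rho^{\mathrm{SGD}}(t)\,q(\lambda_\rho)$ where $\CMscr{Z}_\rho^{\mathrm{SGD}}$ is the stochastic analogue of $\CMscr{Z}_\rho$. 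Using the $\alpha$-pseudo-Lipschitz property of $g$ with $\alpha\le 1$ (and the bound on the coefficient sum of $q$ when $\ls$ grows), it suffices to control $\|\frac1d\sum_\rho(\CMscr{Z}_\rho^{\mathrm{SGD}}-\CMscr{Z}_\rho)q(\lambda_\rho)\|$ together with an a priori bound on $\|\frac1d\sum_\rho \CMscr{Z}_\rho^{\mathrm{SGD}}q(\lambda_\rho)\|$ to absorb the $(1+\|\cdot\|^\alpha)$ factor; both follow from uniform control of the weighted sums of $V_\rho$ and $m_{\rho,j}$.

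Next I would set up the one-step analysis of the SGD iteration. Writing the update for $V_\rho(X_{k+1})$ and $m_{\rho,j}(X_{k+1})$ by expanding $dX_{k+1}^\top u_\rho u_\rho^\top X_{k+1}$ and using $a_{k+1,I_{k+1}}=\sqrt{K_{I_{k+1}}}v_{k+1}+\mu_{I_{k+1}}$, I would take the conditional expectation given $\mathscr F_k$. The drift terms, after a Gaussian/Stein computation replacing $\nabla f_i$ and $\nabla^2 f_i$ evaluated at $r_{k,i}$ by their expectations over the Gaussian law of $r_{k,i}$ (whose conditional law is exactly $\mathcal N(\sqrt{B_i(X_k)}\,v + \frac1d\sum_\rho m_{\rho,i},\sigma^2 I)$), reproduce the right-hand sides of \eqref{eq:V_m_rho_2} up to $O(1/d)$; the quadratic-in-$\gamma/d$ term supplies the $\gamma^2$ contribution. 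This shows $\{V_\rho,m_{\rho,j}\}$ track a discretized version of the ODE system. I would then bound the martingale increments: the fluctuation part of each step is $O(\gamma/d)$ times a bounded-variance term (bounded using Assumption~\ref{ass:pseudo_lipschitz}, the operator-norm bound $\|K_i\|_{\op}\le C$, the mean normalization, and standard Gaussian concentration for $\|v_{k+1}\|$ and for $\ip{u_\rho,v_{k+1}}$), so that over $\lfloor Td\rfloor$ steps a Freedman/Azuma-type inequality gives deviation $O(d^{-1/2+\e'})$ w.o.p., which after the Grönwall step yields $Cd^{-\e}$.

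The remaining steps are: (i) a Grönwall/stability argument comparing the (random) discrete drift recursion to the continuous ODE solution, using the Lipschitz regularity of $B_i\mapsto \EE[\nabla f_i],\EE[\nabla^2 f_i],\EE[\nabla f_i^{\otimes2}]$ from Assumption~\ref{ass:risk_fisher} to close the loop and propagate the error uniformly on $[0,T]$; and (ii) an a priori moment/boundedness estimate showing $\sup_{k\le Td}\|X_k\|$ and the relevant weighted traces stay bounded w.o.p., which is needed both to justify the pseudo-Lipschitz absorption and to keep the Grönwall constant finite — here I would run a self-bounding argument on $\|X_{k+1}\|^2$ using the PL growth of $f_i$. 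The main obstacle is item (ii) combined with the regime $\ls=O(\log d)$: one must control a growing number of coupled coordinates $m_{\rho,j}$, $j\in[\ls]$, with errors that could accumulate a factor $\ls$; this is where the extra hypotheses in Assumption~\ref{ass:classes}B (Lipschitz $f_i$, i.e. $\alpha=0$, and orthogonal means) are essential, since Lipschitzness gives dimension-free bounds on the increments and orthogonality decouples the mean-overlap equations enough that the union bound over $\rho\in[d]$ and $j\in[\ls]$ and the Grönwall constant only cost polylog factors, still beaten by the $e^{-\omega(d)}$ tail. I would handle the bounded-$\ls$ case (Assumption~\ref{ass:classes}A) first with the weaker $\alpha\le1$ PL hypothesis, then indicate the modifications for the logarithmic regime.
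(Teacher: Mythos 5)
Your high-level skeleton (Doob decomposition, drift plus martingale, Gr\"onwall, a priori boundedness, stopping time removal) matches the paper's Section 5, but there is a genuine gap in the core of your argument: you claim the per-step fluctuation of $V_\rho(X_k)=dX_k^\top u_\rho u_\rho^\top X_k$ is $O(\gamma/d)$ and then apply Azuma over $\lfloor Td\rfloor$ steps. This is wrong. Since $u_\rho^\top X_k=O(1/\sqrt d)$ and $u_\rho^\top\Delta X_k=O(1/d)$, the martingale increment of $V_\rho$ is $d\cdot O(d^{-1/2})\cdot O(d^{-1})=O(d^{-1/2})$, and summing $O(d)$ such increments gives an $O(1)$ martingale. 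A toy computation (isotropic covariance, $\nabla f\equiv 1$) shows $V_\rho(X_{td})$ is asymptotically a rescaled $\chi^2_1$: individual modes simply do not concentrate, and mode-by-mode tracking of $\{V_\rho,m_{\rho,j}\}$ cannot yield $\sup_\rho|V_\rho(X_{td})-\csV_\rho(t)|\to0$.

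The resolution is to track only self-averaging weighted sums $\bar V(q)=\frac1d\sum_\rho V_\rho\,q(\lambda_\rho)$, but then your Gr\"onwall step faces a moment-closure problem you have not addressed: the drift of $\bar V(q)$ involves $\bar V(\lambda^{(i)}q)$, i.e.\ a weighted sum against a higher-degree polynomial, so the family does not close at any finite degree. This is exactly why the paper works with the resolvent statistic $Z(W,z)=W^\top\cR_{\ls}(\cK,z)W$ on a contour: multiplication by $\lambda^{(i)}$ becomes $z_i\csS(t,z)-\frac{1}{2\pi\mathrm i}\oint\csS\,\dif z_i$, giving an autonomous integro-differential equation that contains all polynomial weighted sums simultaneously (Lemma~\ref{lem:spe_decom_m_S}). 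The price of this compression is that the Gr\"onwall and Azuma bounds must be carried out uniformly in $z\in\Omega^{\ls}$, which forces the net argument (Lemma~\ref{lem:net}) over a $d^{-\delta}$ mesh of the contour, plus the Hessian-error analysis (Lemma~\ref{lem:Hess_error}) via the conditioning lemma to remove the contribution beyond the leading $K_i+\mu_i\mu_i^\top$ covariance. Your proposal either skips or cannot support these steps in the form written. Your observations about where Assumption~\ref{ass:classes}B enters (Lipschitzness giving dimension-free increments, orthogonal means) are in the right spirit, and your boundedness step is a correct recognition of the non-explosiveness / stopping-time removal (Proposition~\ref{prop:nonexplosiveness}), but the argument cannot close without the resolvent (or an equivalent moment-hierarchy) machinery.
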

\begin{remark}
  The statistics $\varphi(X_{\lfloor td\rfloor})$ can also be compared to $\varphi(\csX_t)$ where $\{\csX_t\}_{t\geq0}$ is another stochastic process, called homogenized SGD (HSGD), with randomness driven by a $d$-dimensional Brownian motion.  See Section \ref{sec:GMM_homogenized} for precise formulas. The statistic $\varphi(\csX_t)$ will also converge to $\phi(t)$, which can be shown in a similar manner to the approach used in \cite{collinswoodfin2023hitting}.  We do not focus on the HSGD here, since it converges to the same deterministic equivalent as SGD.  However, one could consider other learning rate scalings in which statistics of SGD do not concentrate, but can be modeled in distribution by HSGD.  We do not pursue this type of analysis but refer the reader to 
  \cite{arous2022high} and the follow-up works.  
\end{remark}
   
The proof of Theorem \ref{thm:main_risk_m_v} is provided in Section \ref{sec:proof_main_thm}.  The proof involves the following key steps:
\begin{enumerate}
\item The first step of the proof is a Doob decomposition of the process with respect to quadratic functions. This is done in Section \ref{sec:doob_decomp}. This decomposition is a key step that motivates the differential equation above, and the one presented in Section \ref{sec:integrodiff}.
\item In Section \ref{sec:integrodiff}, we present an alternate formulation of the differential equations above in terms of resolvents and show that the two representations are equivalent. The resolvent formulation is a powerful tool that is useful in the discretization of the function space and for providing an autonomous dynamics. It is similar to the one developed in \cite{collinswoodfin2023hitting} for the single-class case, except that our case requires a product of the resolvents of each $K_i$, and we also need to incorporate the means $\mu_i$, which were not present in \cite{collinswoodfin2023hitting}.
\item The heart of the proof comes in Section \ref{sec:approx_sol}. Using the resolvent formulation and integro-differential equations introduced in Section \ref{sec:integrodiff}, the proof of Theorem \ref{thm:main_risk_m_v} boils down to proving Proposition \ref{prop:approx_sol}, which says that our resolvent statistic, evaluated on SGD, is an ``approximate solution'' to the integrodifferential equation in some suitable sense. This argument again follows the approach of \cite{collinswoodfin2023hitting}, but becomes delicate here, particularly in the case where the number of classes is growing with $d.$ 
\item Finally, getting from Proposition \ref{prop:approx_sol} to Theorem \ref{thm:main_risk_m_v} requires removing a stopping time, which is accomplished in Section \ref{sec:supporting_lemmas} (in this section we also verify the stability of solutions). In Section \ref{sec:error_bds} we verify bounds for martingale and error terms that appear in the proof of Proposition \ref{prop:approx_sol}.
\end{enumerate}

\section{Illustrative example: Binary logistic regression}\label{sec:example_binarylogistic} 

Many examples fall within our framework.  We mention a few briefly and provide a deeper analysis for the binary logistic regression.  While this example is simpler than many others, it yields differential equations that are tractable to study analytically, providing an illustration of how our methods can be used to study the effects of anisotropy on SGD dynamics for Gaussian mixture data.  For more complicated examples, we see similar effects numerically in simulations of the equations given in our theorem.

In the set-up for binary logistic regression, we consider just two classes $i\in\{1,2\}$ and we denote by $\{(\lambda_\rho^{(i)},u_\rho)\}_{\rho\leq d}$  the eigenvalue-eigenvector pairs of $K_i$ (note that the eigenvectors are the same for all classes since we assume that the matrices commute).  For ease of computation, we also take the means for classes 1 and 2 to be $+\mu$ and $-\mu$ respectively with $\|\mu\| = O(1)$, noting that this set-up is equivalent to any other choice of means in the binary case by a coordinate shift. 

To study the effect of anisotropy on the SGD dynamics in this setting, we compare three models for the data distribution: 
\begin{enumerate}
    \item \textit{Identity model} -- Covariance $K_1=K_2=I_d$ 
    ;
    \item \textit{Zero-one model} -- All eigenvalues of $K_1$ and $K_2$ are either 0 or 1 
    ;
    \item \textit{Power-law model} -- Eigenvalues of $K_1$ and $K_2$ follow a power-law and entries of mean $\mu$ also follow a power-law (with different exponent).
\end{enumerate}
The set-ups for the zero-one model and power-law model, as well as our results for each, are described in further detail below.  We note that the identity model has been well-studied via other methods (e.g., see \cite{arous2022high}), but we briefly describe it here for the purpose of comparison with the anisotropic models.

Figure \ref{fig:logistic_concentration_GMM} presents the learning curves (loss vs. SGD iteration/$d$) for all models across different learning rates, showing both the SGD output and our matching theoretical predictions via the ODEs. The most important observation, which we will develop further when analyzing each model analytically, is that for the identity model and the mild power-law, the loss saturates relatively quickly, depending on the learning rate. In contrast, for the zero-one model and the extreme power-law, the loss continues to decrease toward zero at a relatively slow rate.
\begin{table}
 \centering
\noindent\renewcommand{\arraystretch}{1.4}
\begin{tabular}{|p{3in}|p{2.4in}|}
\hline
\textbf{Zero-one model \& 
\newline Extreme power-law} $\a\in[\b+1, 2\b)$  
\begin{itemize}
\item Risk decays to zero at a polynomial rate 
\[t^{-c_1}<\csL(t)< t^{-c_2}\]
In the zero-one model, $\csL(t)\asymp t^{-1}$.
\item Norm of SGD iterates grows at log rate
\end{itemize}
& 
 \textbf{Identity model \& 
 \newline Mild power-law} $\a<\b+1$
\begin{itemize}
\item Risk bounded away from zero
\item Norm of iterates remains bounded
\end{itemize}\\
\hline
\end{tabular}
\caption{Summary of our results for the zero-one, identity, and power-law models under binary logistic regression (see Propositions \ref{prop:zero-one},\ref{prop:power_law_good_regime},\ref{prop:power_law_bad_regime} for more detail). This is also supported by numerical simulation. See Figure \ref{fig:power_law_los} and Figure \ref{fig:power_law_m} for power-law models, and Figure \ref{fig:zero_one_identity_los_m} for identity and zero-one models. }
\end{table}

\begin{figure}[t]
\centering

% --- Row 1 ---
\begin{minipage}[t]{0.4\textwidth}
    \centering
    \includegraphics[width=\linewidth]{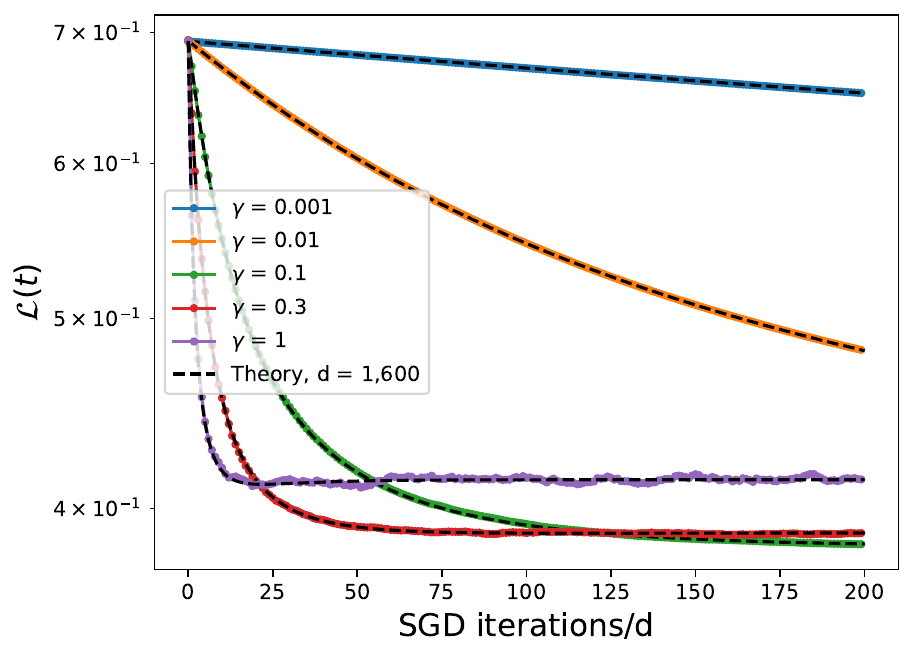}
    \textit{(a) Identity covariance}
\end{minipage}\hspace{0.1cm}
\begin{minipage}[t]{0.4\textwidth}
    \centering
    \includegraphics[width=\linewidth]{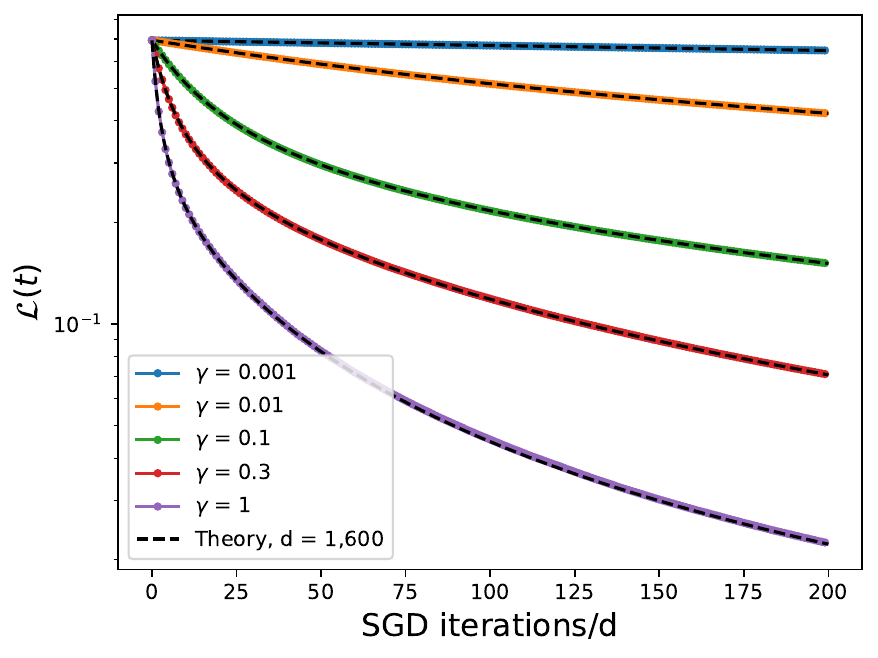}
    \textit{(b) Zero-one model}
\end{minipage}

% vertical space
\vspace{0.35cm}

% --- Row 2 ---
\begin{minipage}[t]{0.4\textwidth}
    \centering
    \includegraphics[width=\linewidth]{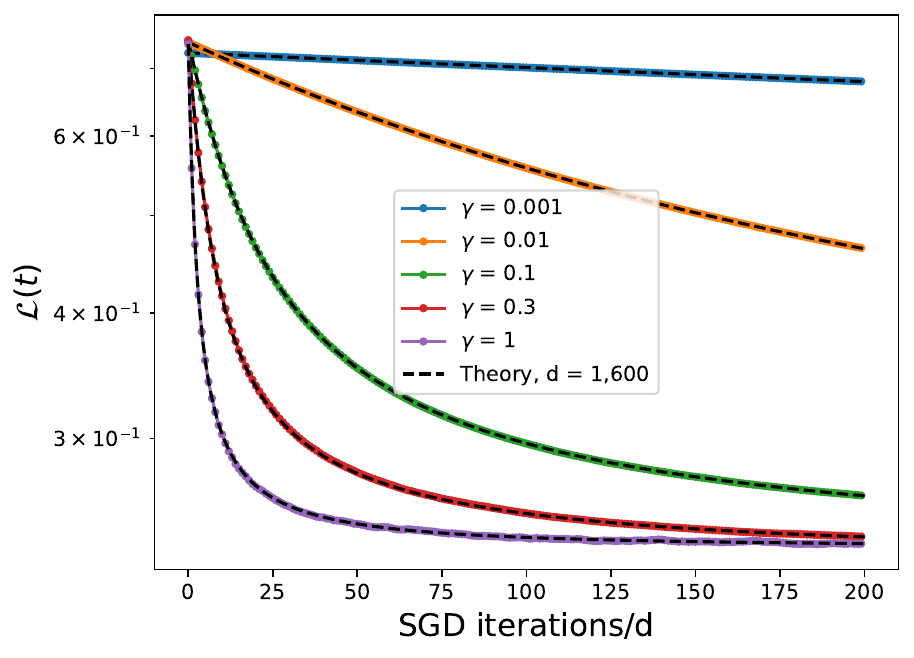}
    \textit{(c) Mild power-law $\beta=1$}
\end{minipage}\hspace{0.1cm}
\begin{minipage}[t]{0.4\textwidth}
    \centering
    \includegraphics[width=\linewidth]{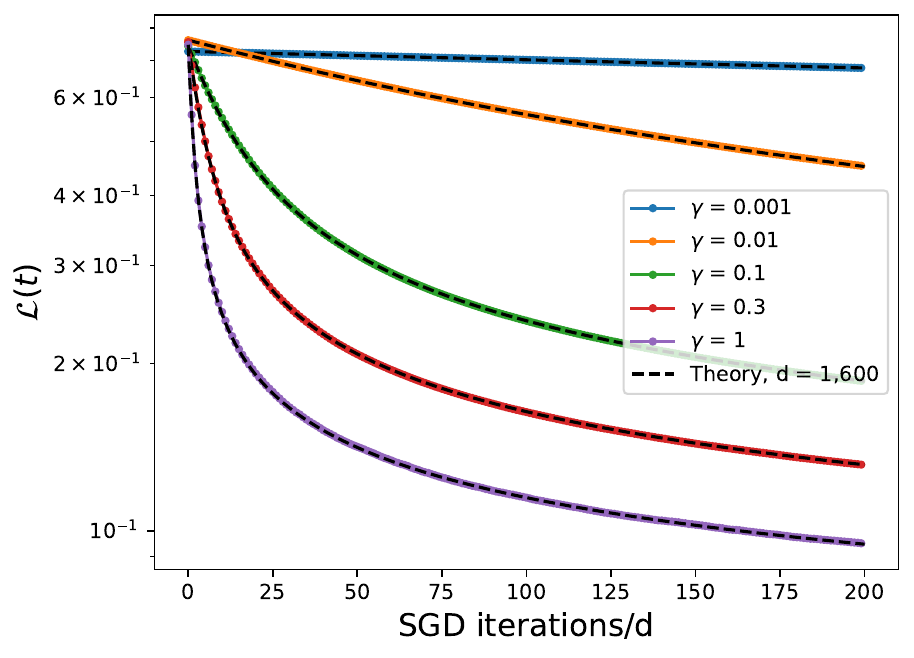}
    \textit{(d) Extreme power-law $\beta=0.2$}
    \vspace{0.35cm}
\end{minipage}

\caption{\textbf{Population risk concentration} in binary logistic regression across learning rates $\gamma$.
(a) Identity covariance.  
(b) Zero-one model where the mean lies in the null space of the covariance.  
(c,d) Power-law diagonal covariances $\lambda_\rho^{(1)}=(\rho/d)^{1.1}$ and $\lambda_\rho^{(2)}=(\rho/d)^{1.5}$, with mean $\mu_\rho = d^{-1/2}(\rho/d)^{\beta}$ for $\beta = 1$ and $0.2$.}
\label{fig:logistic_concentration_GMM}
\end{figure}

To gain a more precise understanding of each of these models, we study the dynamics of $\csm(t)$, $\csV(t)$, and $\csB_i(t)$, which are the deterministic equivalents (by our theorem) of $\mu^\top X_{\lfloor td\rfloor}$,  $\|X_{\lfloor td\rfloor}\|^2$, and $X_{\lfloor td\rfloor}^TK_iX_{\lfloor td\rfloor}$ respectively.  Intuitively, $\csm(t)$ and $\csV(t)$ tell us about the overlap of the iterate vector with the mean direction and the norm of the iterate vector, and how they evolve over time.  Computationally, these quantities are defined as
\[
\csm(t)=\frac1d\sum_{\rho=1}^d\csm_{\rho,1}(t),\qquad \csV(t)=\frac1d\sum_{\rho=1}^d\csV_\rho(t),\qquad \csB_i(t)=\frac1d\sum_{\rho=1}^d\lambda_\rho^{(i)}\csV_\rho(t),
\]
where $\{\csm_{\rho,j}(t),\csV_\rho(t)\}_{\rho=1}^d$ solve the system of ODEs given in \eqref{eq:V_m_rho_2} and, for the binary case, we only need to consider the class index $j=1$ for $\csm$, since our set-up has the symmetry $\csm_{\rho,2}=-\csm_{\rho,1}$. We denote the normalized overlap at time $t$ by $$\text{Alignment}:= \frac{\csm(t)}{\sqrt{\csV(t)}}$$

These quantities enable us to predict the exact dynamics of the loss $\mathcal{L}(X_{\lfloor td\rfloor})$ through its continuum equivalent $\csL(t)$. Using this framework, we provide an asymptotic analysis that determines whether the loss converges to zero or to a positive constant and characterizes the scaling behavior of the loss and these quantities as a function of the number of iterations (which corresponds to the number of samples). 
This scaling behavior is of its own interest as it provide a scaling law for nonlinear models and the scaling depends on the structure of the data i.e. covariance matrices and means. 

\subsection{Zero-one model (subspace of perfect classification)\label{subsec:zero_one_result}}

\begin{figure}[htbp]
\centering
\vspace{-0.2cm} % optional, reduces space above
% =========================
% ======= ROW 1 ===========
% =========================
\begin{minipage}[t]{0.23\textwidth}
    \centering
    \includegraphics[height=3.5cm]{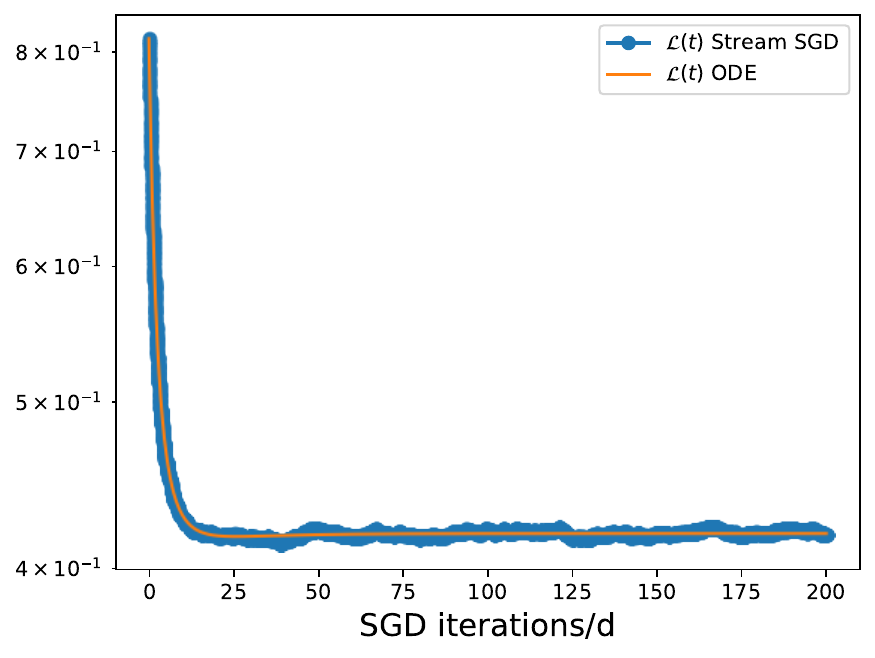}
    \medskip
    \textit{(a) Identity - $\csL(t)$ }
\end{minipage}\hspace{0.1\textwidth}%
\begin{minipage}[t]{0.23\textwidth}
    \centering
    \includegraphics[height=3.5cm]{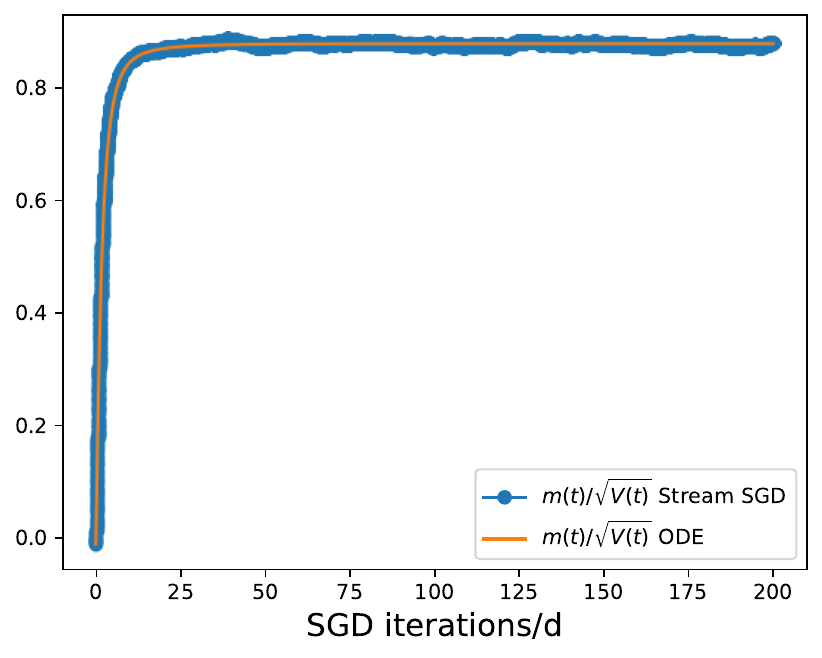}
    \medskip
   \textit{(b) Identity - 
   Alignment
   }
\end{minipage}\hspace{0.1\textwidth}%
\begin{minipage}[t]{0.23\textwidth}
    \centering
    \includegraphics[height=3.5cm]{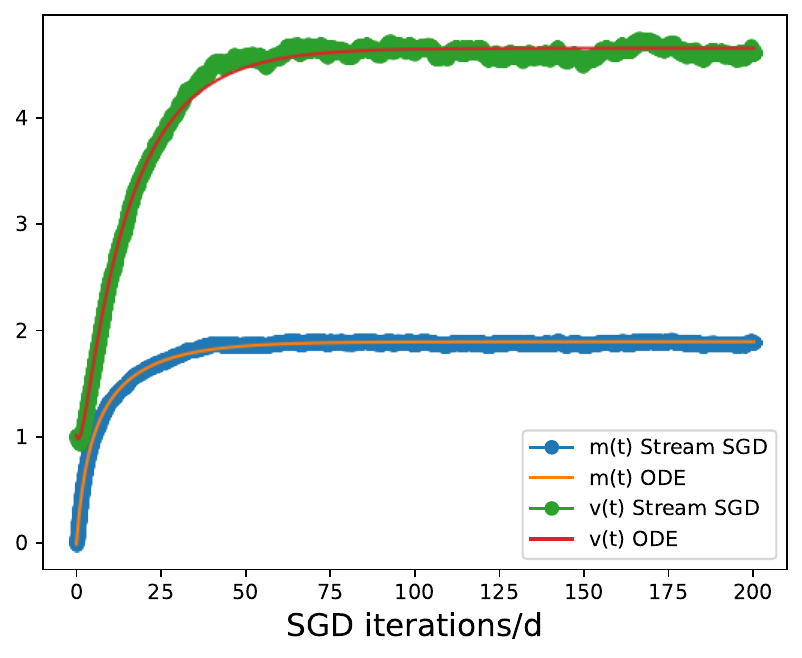}
    \medskip
     \textit{(c) Identity - $\mathfrak{m}(t)$ and $\sqrt{\mathfrak{V}(t)}$}
    
\end{minipage}

\vspace{0.4cm} % vertical space between rows

% =========================
% ======= ROW 2 ===========
% =========================
\begin{minipage}[t]{0.23\textwidth}
    \centering
    \includegraphics[height=3.5cm]{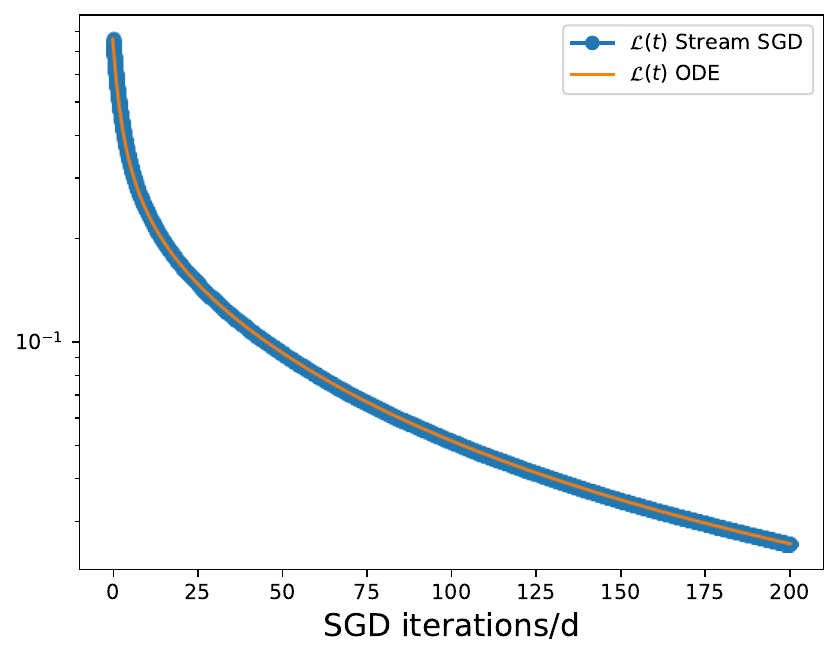}
    \medskip
    \textit{(d) Zero-one model- $\csL(t)$}
\end{minipage}\hspace{0.1\textwidth}%
\begin{minipage}[t]{0.23\textwidth}
    \centering
    \includegraphics[height=3.5cm]{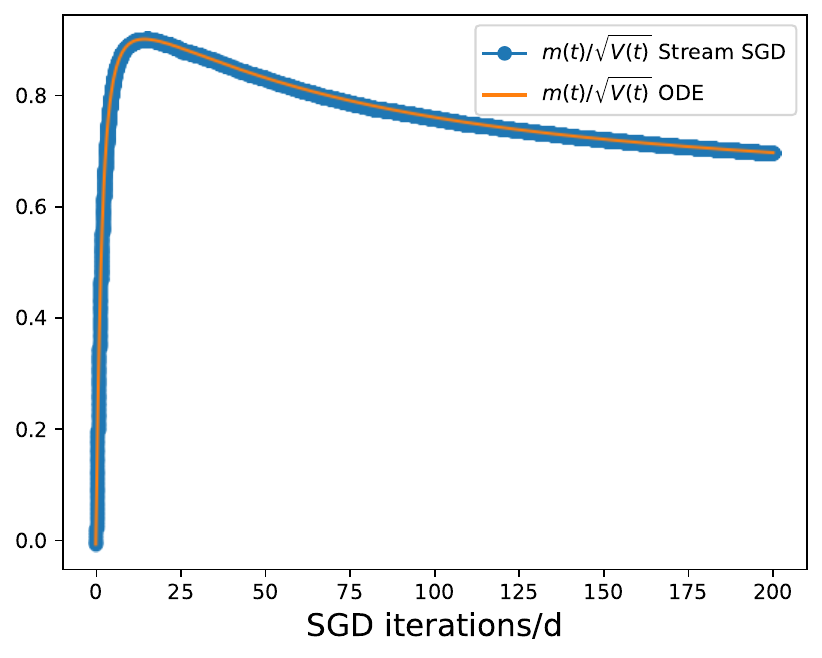}
    \medskip
   \textit{(e) Zero-one model- 
   Alignment
   }
\end{minipage}\hspace{0.1\textwidth}%
\begin{minipage}[t]{0.23\textwidth}
    \centering
    \includegraphics[height=3.5cm]{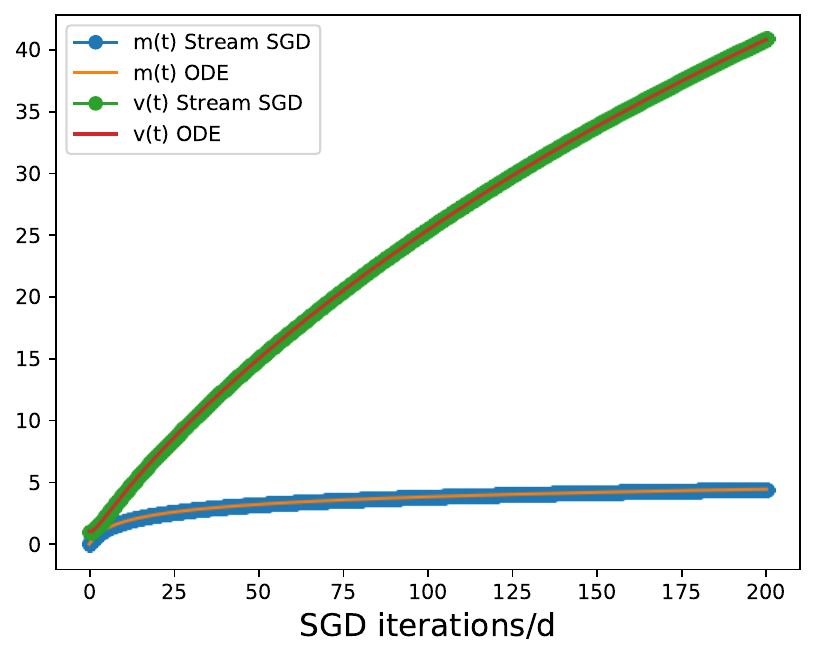}
    \medskip
      \textit{(f) Zero-one model- $\mathfrak{m}(t)$ and $\sqrt{\mathfrak{V}(t)}$}
   
\end{minipage}
\vspace{0.1cm} % vertical space between rows
\caption{\textbf{Identity vs. Zero-one model} with $d=1000$ and $\gamma=0.9$.  
Panels show $\mathcal{L}(t)$, $\mathfrak{m}(t)$, $\mathfrak{V}(t)$, and the alignment, defined as the normalized ratio $\mathfrak{m}(t)/\sqrt{\mathfrak{V}(t)}$.  
Each curve compares SGD simulations with the corresponding ODE predictions.}
\label{fig:zero_one_identity_los_m}
\vspace{-0.2cm} % optional, reduces space below
\end{figure}

\begin{definition}[Zero-one model]

The covariance matrices $K_1,K_2$ have all eigenvalues equal to either $0$ or $1$.  Then the index set $[d]$ can be partitioned as
\begin{equation}
    [d]=I_{00}\cup I_{01}\cup I_{10}\cup I_{11},\quad\text{where }I_{jk}=\{i\leq d\;|\;\lambda^{(1)}_i=j,\;\lambda^{(2)}_i=k\}.
\end{equation}
We further assume that $I_{00}\neq\emptyset$ and there exists $i\in I_{00}$ such that $\mu^\top u_i\neq0$, meaning that there exist eigenvector directions in which both $K_1,K_2$ have eigenvalue 0, and the mean $\mu$ has non-zero overlap with one or more of these directions.
\end{definition}
From an information theoretical perspective, if $X$ aligned with the projection of $\mu$ into the eigenspace associated with $I_{00}$, this would yield perfect classification (because the two classes have zero variance and distinct means in that direction).  This is in sharp contrast with the identity model, in which there is non-trivial variance in every direction and perfect classification is impossible.

While the zero-one set-up is not an accurate model for real-world data, it is an interesting toy example to study the dynamics that occur when variance is significantly larger in some directions than in others.  For isotropic data, one expects the classifier to naturally prefer the direction of the largest difference in the class means.  For anisotropic data, one expects that both the direction of greatest difference in means and the directions of smallest variance will play a role.  To capture this idea of ``directions of small variance,'' we denote the projection of $\csm(t)$ into the eigenspace associated with $I_{ij}$ by
\begin{equation}
    m_{(ij)}(t)=\sum_{\rho\in I_{ij}}\csm_\rho(t)
\end{equation}
so we have $m_{(00)}(t)+m_{(01)}(t)+m_{(10)}(t)+m_{(11)}(t)=\csm(t)$. Likewise, we define $\mu_{(ij)}$ to be the projection of $\mu$ into the span of eigenvectors $\{u_\rho:\lambda^{(1)}_\rho=i,\lambda^{(2)}_\rho=j\}.$ Intuitively, $m_{(00)}$ tells us about the alignment of $X$ with the projection of $\mu$ into the zero-variance subspace.
Our results for the zero-one model are summarized in the following proposition, whose proof appears in Section \ref{sec:proof_zero-one}. 

\begin{proposition}\label{prop:zero-one}
    Consider the zero-one model with $\|\mu_{(ij)}\|^2\to\frac14\|\mu\|^2$ as $d\to \infty$ and the additional symmetry constraints that $p_1=p_2=\frac12$ and $|I_{jk}|=\frac14$ for all $j,k\in\{0,1\}$ and $X_0=0$.  We also impose a technical assumption \ref{ass:W_1W_2ab}, which is explained later.  Then we have the following bounds:
    \begin{enumerate}
     \item\label{prop:zero-one_loss} For the loss,    \[\csL(t)\asymp t^{-1}.\] This asymptotic for the deterministic equivalent of the loss implies an analogous asymptotic for the empirical loss in high dimension as a corollary of Theorem \ref{thm:main_risk_m_v}.
        \item\label{prop:zero-one_m/v2} For the alignment with $\mu$, we have the following asymptotic for large $d$ and $t$:
        \[\frac{\csm(t)}{\sqrt{\csV(t)}}=\frac12\left(1+O((\log t)^{-1})\right).\]
        Recall that $\csm(k/d)$ and $\csV(k/d)$ model $X_k^\top \mu$ and $\|X_k\|^2$ respectively.  Applying Theorem \ref{thm:main_risk_m_v}, this result implies that, when $d$ is large, the projection to the mean, $X_k^\top \mu/\|X_k\|$, will approach roughly $\frac12$ as $k/d$ grows. 
        \item\label{prop:zero-one_mij} For alignment with the zero-variance eigenspace, we have the following large-$t$ asymptotics:
        \[m_{(00)}=\log t+O(1),\qquad m_{(01)},m_{(10)},m_{(11)}\asymp1,\]
        which implies that the distance from $X_k/\|X_k\|$ to the zero-variance eigenspace is $O((\log(k/d))^{-1})$ as $k/d$ becomes large.
    \end{enumerate}
\end{proposition}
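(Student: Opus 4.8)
The strategy is to collapse the $2d$ coupled ODEs \eqref{eq:V_m_rho_2} into a bounded number of scalar ones using the symmetries of the set-up, and then extract the asymptotics of those. With $p_1=p_2=\tfrac12$, $\mu_1=-\mu_2=\mu$, $X_0=0$ and $\ell=1$ (so every quantity below is scalar), one has $\csm_{\rho,2}=-\csm_{\rho,1}$, and the coefficients of the $\rho$-th equation in \eqref{eq:V_m_rho_2} depend on $\rho$ only through $(\lambda^{(1)}_\rho,\lambda^{(2)}_\rho)\in\{0,1\}^2$ and the scalar $\ip{\mu,u_\rho}^2$. Writing $\sigma(x)=(1+e^{-x})^{-1}$ for the logistic link and $h(B,m)\defas\E_{v\sim\cN(0,1)}[\sigma(\sqrt Bv-m)]$, a direct computation identifies $\E[\nabla f_i(\r_{t,i})]$, $\E[\nabla^2 f_i(\r_{t,i})]$ and $\E[\nabla f_i(\r_{t,i})^{\otimes2}]$ (up to signs) with $h$, $-\partial_m h$ and $\E_v[\sigma(\sqrt Bv-m)^2]$ evaluated at $(\csB_1(t),\csm(t))$ and $(\csB_2(t),\csm(t))$; all three are $\asymp e^{-\csm(t)}$ once $\csm$ is large and $\csB_1,\csB_2$ are bounded, by the standard logistic tail estimate $h(B,m)=e^{-m+B/2}(1+o(1))$ as $m\to\infty$ for $B$ bounded (and similarly for the Hessian term, while the second-moment term is $\asymp e^{-2m}$). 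Summing \eqref{eq:V_m_rho_2} over $\rho\in I_{jk}$ and dividing by $d$ then yields a closed system for the eight scalars $m_{(jk)},V_{(jk)}$, $jk\in\{00,01,10,11\}$ (where $V_{(jk)}\defas\tfrac1d\sum_{\rho\in I_{jk}}\csV_\rho$), with $\csm=\sum_{jk}m_{(jk)}$, $\csV=\sum_{jk}V_{(jk)}$, $\csB_1=V_{(10)}+V_{(11)}$, $\csB_2=V_{(01)}+V_{(11)}$.

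In the block $I_{00}$ the eigenvalues vanish, so the $\csm_{\rho,1}$-equation has no linear term and integrates explicitly: $\csm_{\rho,1}(t)=\tfrac d2\ip{\mu,u_\rho}^2\Phi(t)$ with $\Phi(t)\defas\int_0^t\gamma(s)\big(h(\csB_1,\csm)+h(\csB_2,\csm)\big)\dif s$, whence $m_{(00)}(t)=\tfrac12\|\mu_{(00)}\|^2\Phi(t)$; feeding this into the $\csV_\rho$-equation and using $\int_0^t\gamma(h(\csB_1)+h(\csB_2))\Phi\,\dif s=\tfrac12\Phi^2$ gives $V_{(00)}(t)=\tfrac14\|\mu_{(00)}\|^2\Phi(t)^2+O(1/d)$ (the error because the second-moment contribution is $\asymp\tfrac1d\int_0^t e^{-2\csm}\,\dif s=O(1/d)$), so $m_{(00)}^2/V_{(00)}\to\|\mu_{(00)}\|^2$. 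In each of the other three blocks the $\csm_{\rho,1}$- and $\csV_\rho$-equations are scalar linear ODEs $\dot y=-a(t)y+b(t)$ whose decay rate $a(t)$ is a positive multiple of $\gamma\,\E[\nabla^2 f_i]\asymp\gamma e^{-\csm}$ — comparable to the forcing $b(t)$ — so each solution is trapped between $0$ and a slowly-varying equilibrium $b/a=O(1)$; summing, $m_{(01)},m_{(10)},m_{(11)}=O(1)$ and $V_{(01)},V_{(10)},V_{(11)}=O(1)$, and therefore $\csB_1,\csB_2$ are bounded, $\csm=m_{(00)}+O(1)$, $\csV=V_{(00)}+O(1)$. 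This has to be organized as a continuity/bootstrap argument: assume $\csB_1,\csB_2\le M$ on $[0,T]$, derive all the above bounds with constants independent of $T$, and let $T\to\infty$; the technical Assumption \ref{ass:W_1W_2ab} is what makes the loop close.

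It remains to pin the rate of $m_{(00)}$, which then gives everything. With $\csB_1,\csB_2$ bounded and $\csm-m_{(00)}$ bounded, the logistic tail bounds yield $\mathcal{L}(t)\asymp e^{-\csm(t)}\asymp e^{-m_{(00)}(t)}$ and $\Phi'(t)=\gamma(t)\big(h(\csB_1,\csm)+h(\csB_2,\csm)\big)\asymp e^{-m_{(00)}(t)}=\exp(-\tfrac12\|\mu_{(00)}\|^2\Phi(t))$; integrating the separable relation $\tfrac{\dif}{\dif t}\exp(\tfrac12\|\mu_{(00)}\|^2\Phi)\asymp 1$ gives $\exp(\tfrac12\|\mu_{(00)}\|^2\Phi(t))\asymp t$, i.e. $\Phi(t)\asymp\log t$ and $m_{(00)}(t)=\log t+O(1)$. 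Assembling: $\csL(t)\asymp e^{-m_{(00)}(t)}\asymp t^{-1}$ gives part (1); $\csm(t)=\log t+O(1)$ and $\csV(t)=\tfrac14\|\mu_{(00)}\|^2\Phi(t)^2+O(1)\asymp(\log t)^2$, so $\csm(t)/\sqrt{\csV(t)}=\|\mu_{(00)}\|(1+O((\log t)^{-1}))=\tfrac12(1+O((\log t)^{-1}))$ once $\|\mu_{(00)}\|^2\to\tfrac14\|\mu\|^2=\tfrac14$, giving part (2); and $m_{(00)}=\log t+O(1)$ with $m_{(01)},m_{(10)},m_{(11)}\asymp1$ from the block analysis, while the squared distance from $X_k/\|X_k\|$ to the $I_{00}$-eigenspace equals $1-V_{(00)}/\csV=(\csV-V_{(00)})/\csV=O((\log t)^{-2})$, giving part (3). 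Transferring each of these statements about the deterministic equivalents back to the SGD iterate $X_{\lfloor td\rfloor}$ is an application of Theorem \ref{thm:main_risk_m_v} to the statistics $\varphi=\mathcal{L}$, $\varphi(X)=\mu^\top X$, $\varphi(X)=\|X\|^2$, and $\varphi(X)=\|P_{I_{00}}X\|^2$, each of which satisfies Assumption \ref{ass:statistic}.

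The main obstacle is the bootstrap in the second paragraph. It is genuinely circular — boundedness of $\csB_1,\csB_2$ is what gives the tail comparison $h(\csB_i,\csm)\asymp e^{-\csm}$ and hence the $O(1)$ equilibria in the blocks $I_{01},I_{10},I_{11}$, but it is precisely those equilibria that keep $\csB_1,\csB_2$ bounded — and it is quantitatively delicate because the relaxation rate $a(t)\asymp e^{-\csm(t)}$ looks integrable: one must verify $\int_0^\infty e^{-\csm(s)}\,\dif s=\infty$ (equivalently $\Phi(\infty)=\infty$), so that the linear ODEs in those blocks actually relax to their equilibria rather than freezing at a time-dependent value that could then drift. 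Making this self-consistent control uniform in $t$ — not merely on compact intervals — is where Assumption \ref{ass:W_1W_2ab} and the bulk of Section \ref{sec:proof_zero-one} are spent.
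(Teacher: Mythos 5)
Your plan tracks the paper's Section \ref{sec:proof_zero-one} closely: decompose into the four $(\lambda^{(1)},\lambda^{(2)})$-blocks, integrate $I_{00}$ explicitly to get $m_{(00)}(t)=\gamma\tilde\mu_{(00)}\omega_1(t)$ and $v_{(00)}(t)=\tilde\mu_{(00)}^{-1}m_{(00)}^2(t)\bigl(1+O((\log t)^{-1})\bigr)$, bound the remaining blocks by a slowly-varying equilibrium via Assumption \ref{ass:W_1W_2ab}, and close with the separable ODE $\dot m_{(00)}\asymp e^{-m_{(00)}}$. One caution on framing: the ``genuinely circular'' bootstrap you worry about is an artifact of routing the equilibrium estimate through the tail comparison $h(\csB_i,\csm)\asymp e^{-\csm}$, which does presuppose $\csB$ bounded. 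The paper (Lemma \ref{lem:zero-one_mij_final}) avoids the loop entirely: under the symmetry hypotheses $\csB_1=\csB_2$ is preserved from $X_0=0$, so $W_1=\EE w_{12}$, $W_2=\EE w_{12}^2$, the decay coefficient in the $m_{(10)}$-ODE is $\tfrac{\gamma}{2}(W_1-W_2)$ and the forcing is $\tfrac{\gamma}{4}W_1$; the equilibrium is therefore exactly $\tfrac12\,W_1/(W_1-W_2)\le\tfrac12 C_w$ by Assumption \ref{ass:W_1W_2ab}, with no tail estimate or bound on $\csB$ required. The comparison principle for $\dot y=-a(t)y+b(t)$, $y(0)=0$, $\sup_s b(s)/a(s)\le C$ gives $y\le C$ unconditionally — $\int a=\infty$ is irrelevant to the upper bound. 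The implications then run linearly, not circularly: $m_{(ij)},v_{(ij)}=O(1)$ for $ij\ne00$ $\Rightarrow$ $\csB_1,\csB_2=O(1)$ $\Rightarrow$ $W_1\asymp e^{-m_{(00)}}$ $\Rightarrow$ $m_{(00)}=\log t+O(1)$.

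The genuine gap: you only establish $m_{(01)},m_{(10)},m_{(11)}=O(1)$, but part~(\ref{prop:zero-one_mij}) asserts $\asymp1$, and ``trapped between $0$ and an $O(1)$ equilibrium'' supplies no lower bound. The paper's Lemma \ref{lem:m_ij_bounds} gets $\Omega(1)$ from the explicit integral representation
\[
m_{(01)}(t)\geq\gamma\tilde{\mu}_{(01)}\int_0^t W_1(s)\exp\Bigl(-\gamma\int_s^t W_1(\tau)\dif\tau\Bigr)\dif s=\tilde\mu_{(01)}\bigl(1-e^{-\gamma\omega_1(t)}\bigr),
\]
combined with a uniform-in-$t$ lower bound $\omega_1(t)\ge c>0$ for $t\ge1$, which in turn uses the crude upper bound $\csm(t)=O(t)$. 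Some such step must be added; as written, part~(\ref{prop:zero-one_mij}) is incomplete. (A smaller slip, not load-bearing: the Fisher contribution to $V_{(00)}$ is $\tfrac{\gamma^2}{d}\tilde\mu_{(00)}\omega_2(t)=O(m_{(00)}(t)/d)$, not $O(1/d)$ uniformly in $t$; either way it is subleading to $m_{(00)}^2$.)
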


\begin{remark}
    The proof of Proposition \ref{prop:zero-one}, which we defer to Section \ref{sec:proof_zero-one}, relies on the technical assumption \ref{ass:W_1W_2ab}. Intuitively, this assumption implies that the logistic weights concentrate along the limiting trajectory. We justify this assumption numerically and heuristically, and we believe that it holds generically in the identity, zero-one, and power-law models, although we do not prove this analytically. In the identity model, we provide a proof that does not rely on the assumption.
\end{remark}

     \begin{figure}[h]
     \centering
         \includegraphics[scale = 0.45]{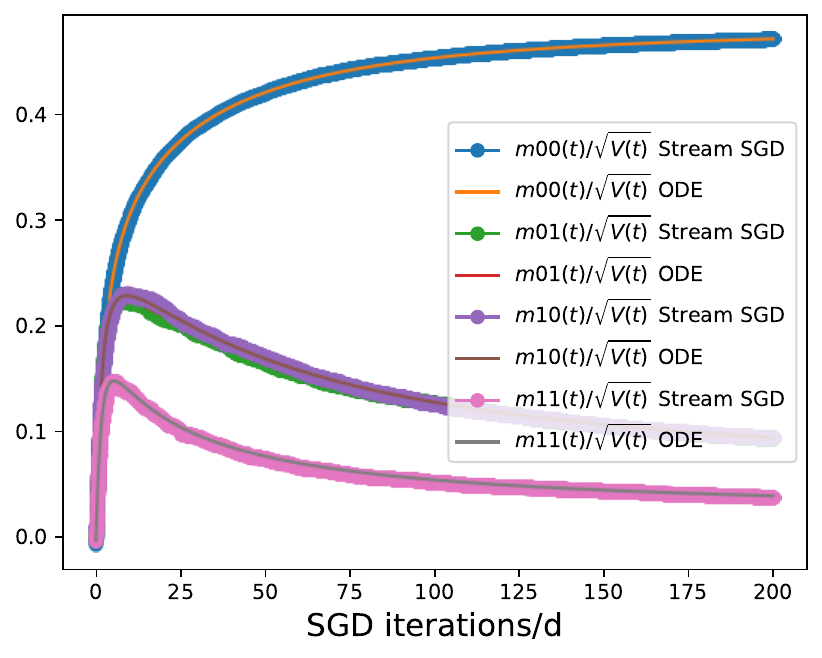}        
         \caption{\textbf{Zero-one model subspace alignment} with $d=1000$ and $\gamma=0.9$}
         \label{fig:alignment_proj_ZO}         
     \end{figure}

 Figure \ref{fig:zero_one_identity_los_m} supports these theoretical findings. It compares the loss curves and alignments of the identity and zero-one models. In particular, display \ref{fig:zero_one_identity_los_m}(d) shows the slow polynomial decay of the loss toward zero. Display \ref{fig:zero_one_identity_los_m})(e) presents the alignment, which converges to a constant, and finally, display \ref{fig:zero_one_identity_los_m}(f) presents the overlap itself, showing logarithmic growth. The behavior of the projected components of the overlap is presented in Figure \ref{fig:alignment_proj_ZO}. The complementary analytical results for the identity model are presented in the next section, Proposition \ref{prop:power_law_good_regime}.

\begin{remark}
    In the zero-one model, we observe two stages of learning. In the first stage, the iterate aligns with the direction of the mean. This happens at approximately the same rate as in the identity model. In the second stage, the iterate tends to align more with the projection of the mean into the subspace of zero variance of the data. This convergence is much slower than the first one. We suspect that this is due to the fact that, in this setting, there exists a separating subspace. We note that several works have studied gradient descent (GD) with logistic loss on separable data. Interestingly, they showed that GD minimizes the training loss and also maximizes the margin \cite{soudry2018implicit} in a rate  $O(1/\log(t))$ with $t$ being the iteration of GD\cite{bartlett2017spectrally}. Several works have improved that rate by properly normalizing the gradient update \cite{ji2021characterizing,nacson2019convergence} and using momentum techniques \cite{ji2021fast}, improving that rate to $O(1/t^2)$. 
\end{remark}

\begin{figure}[htbp]
    \centering
    \includegraphics[width=0.4\linewidth]{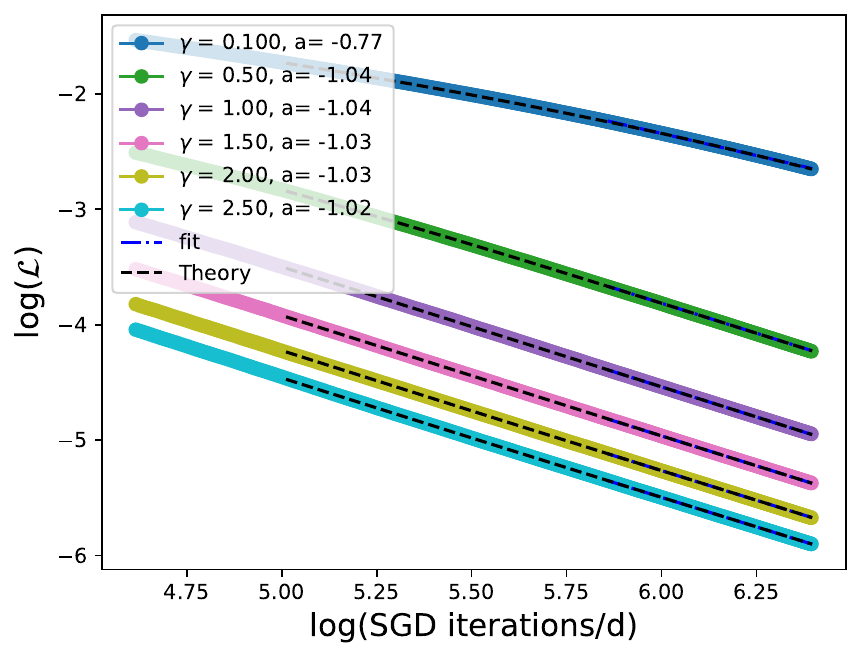}
    \caption{\label{fig:zero_one_rate_vs_gamma}\textbf{Zero-one model - loss scaling law} remain the same for different  learning rates $\gamma$. The value $a$ represents the power obtained i.e. $\csL(t) \asymp t^{a}$. The figure supports the rate found in Proposition \ref{prop:zero-one}, which is independent of the learning rate.} 
    \label{fig:placeholder}
\end{figure}

\begin{remark}
 Figure \ref{fig:zero_one_rate_vs_gamma} plots the decay rate of the loss in the zero-one model as a function of different learning rates, suggesting that after sufficiently many SGD iterates (large $t$) the rate of decay of the loss  (samples/parameter's dimension) is proportional to $t^{-1}$, regardless of the learning rate, supporting Proposition \ref{prop:zero-one}(\ref{prop:zero-one_loss}).
\end{remark}
\subsection{Power-law and identity models asymptotic analysis}
We now move beyond the zero-one model to a more realistic data distribution, i.e., one where the magnitudes of the eigenvalues are varied and are non-zero, but can be arbitrarily close to zero as the dimension grows.
\begin{definition}[Power-law model]\label{ass:power_law_mu_lam_con}
The eigenvalues of the matrix $K_i$ and the projections of the mean $\mu$ on the eigenvectors of $K_i$ satisfy, 
\begin{equation}
    \lambda_\rho^{(i)}=\left(\frac{\rho}{d}\right)^{\a_i};\qquad
    \tilde{\mu}_\rho^{(i)}=\frac{1}{d}\left(\frac{\rho}{d}\right)^{\b_i} \quad {\beta_i}, \a_i\ge 0.
\end{equation} 
where $\tilde{\mu}_\rho^{(i)}\defas (\mu_i^\top u_\rho)^2$ for all $i\in [\ls]$.
\end{definition}
As in the zero-one section \ref{subsec:zero_one_result}, we consider binary logistic regression with class mean $-\mu$ and $\mu$ so we have $\beta_1=\beta_2 = \b$. For simplicity, we focus on the case where both classes have the same covariance with exponent $\a$. 

\begin{remark}[The role of $\a$ and $\beta$]

For all $\a>0,$ this set-up has a limiting spectral density for $K_i$ (as $d\to\infty$) supported on the interval $(0,1]$ with the density function $f(\lambda)=\frac1\a\lambda^{\frac1\a-1}$.  This is integrable for all $\a>0$. For $\a>1$, it is unbounded near zero, which is our intended power-law set-up with eigenvalues accumulating near 0.  In contrast, the density for $0<\a\leq1$ is bounded and is an increasing function on the interval $(0,1]$ (except in the case $\a=1$ where it is the uniform density function). Finally, the case of $\a=0$ corresponds to a point mass $1$, i.e., identity covariance. $\b$ characterizes the alignment of the mean with the eigenvalues of the covariance. When the mean aligns with directions in which there is an accumulation of zero eigenvalues, depending on the values of $\a$ and $\b$, perfect classification is possible. We demonstrate that SGD converges to this solution.     
\end{remark}

We distinguish two regimes according to the effective signal-to-noise ratio $\mu^\top K^{-1}\mu.$
This quantity provides an information-theoretic baseline: for the
symmetric two-class Gaussian mixture, by Fisher's classical linear discriminant rule, the Bayes classification error is
$\Phi(-\sqrt{\mu^\top K^{-1}\mu})$ where $\Phi$ is the cumulative distribution function
of the standard normal distribution; see \cite{cai2019high} in the context of high dimension. In the mild power-law regime, $\mu^\top K^{-1}\mu$ remains
bounded as $d\to\infty$, and hence the population classification error cannot
vanish (This corresponds to the regime in which $\a<\b+1$).  This regime is qualitatively similar to the identity-covariance model,
where the noise is spread across all directions. Proposition
\ref{prop:power_law_good_regime} shows that the limiting SGD risk remains
bounded away from zero, consistent with this information-theoretic
obstruction. The corresponding numerical results are shown in Figure
\ref{fig:power_law_los}(b) for the loss and Figure
\ref{fig:power_law_m}(b) for the alignment with the true mean.

In the extreme power-law regime, by contrast,
$\mu^\top K^{-1}\mu\to\infty$ as $d\to\infty$ (and $\a\geq\b+1$), so the Bayes classification
error vanishes. Intuitively, the signal has a sufficiently large projection
onto directions whose noise variance becomes small. Proposition
\ref{prop:power_law_bad_regime} shows that the limiting SGD dynamics exploit
these low-noise directions and drive the risk arbitrarily close to zero as $d$ grows; Numerical evidence for this regime is presented in Figure
\ref{fig:power_law_los}(a) for the risk and Figure
\ref{fig:power_law_m}(a) for the alignment with the true mean. Depending on the values of $\a$ and $\b$, the qualitative behavior of SGD on this model may look more similar to the identity model or more similar to the zero-one model.The power-law model, therefore, provides an interpolation between these
two limiting cases.

For comparison, Figure \ref{fig:zero_one_identity_los_m} presents the risk
(display \ref{fig:zero_one_identity_los_m}(a)), alignment
(display \ref{fig:zero_one_identity_los_m}(b)), and overlap
(display \ref{fig:zero_one_identity_los_m}(c)) for the identity model. These
observations are also consistent with Proposition
\ref{prop:power_law_good_regime}. Finally, Figure
\ref{fig:power_law_align_proj} provides the power-law counterpart of Figure
\ref{fig:alignment_proj_ZO} for the zero-one model. Unlike in the zero-one
case, the alignment in the power-law model does not admit a sharp
decomposition into growing and constant components. We therefore illustrate
its spectral distribution using a cutoff at half of the eigenmodes, while
noting that the resulting aggregated projections can depend substantially on
the choice of cutoff.

\begin{proposition}[Mild power-law regime and identity]\label{prop:power_law_good_regime}
Suppose $p_1=\frac{1}{2}$, $K_1 =K_2 = K,$ $X_0 = 0$ 
and Assumption \ref{ass:W_1W_2ab} holds together with Assumption \ref{ass:power_law_mu_lam_con} for $\beta+1>\a$ or for $K_1=K_2 =I_d$ for any $\mu$ with $\|\mu\| = O(1)$ (identity model).
Then $\csm(t),\csB_1(t)$, and $\csB_2(t)$ are all bounded above while the loss is bounded below by a positive constant.  More precisely, 
\begin{enumerate}
    \item $\csm(t) \to c\mu^\top[K]^{-1}\mu$ for all $t\ge 1$ with $c\in [1, C_w]$ 
    \item $\csB(t) = \csB_1(t) = \csB_2(t)< C$ for some $C=C(\gamma, \|\mu\|, \a,\b)>0.$ 
    \item 
    $\log(1+e^{-C_w\mu^\top[K]^{-1}\mu} 
    )\le \csL(t)\le \log(1+e^{-(1-\frac{C_w^2}{2})\mu^\top[K]^{-1}\mu+\frac{\gamma C_w}{4d}\textup{Tr}(K)}).$ 
\end{enumerate}
\end{proposition}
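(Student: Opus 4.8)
The plan is to first specialize the ODE system \eqref{eq:V_m_rho_2} to binary logistic regression. With $f_1(r)=\log(1+e^{-r})$, $f_2(r)=\log(1+e^{r})$, $p_1=p_2=\tfrac12$, $\mu_1=-\mu_2=\mu$ and $K_1=K_2=K$, the built-in symmetry $\csm_{\rho,2}=-\csm_{\rho,1}$ collapses the system to a family of scalar equations, one per eigenmode,
\[
\dot\csm_\rho=-\gamma(t)\lambda_\rho h(t)\,\csm_\rho+\gamma(t)\,d\tilde\mu_\rho\,g(t),\qquad
\dot\csV_\rho=-2\gamma(t)\lambda_\rho h(t)\,\csV_\rho+2\gamma(t)g(t)\,\csm_\rho+\gamma(t)^2(\lambda_\rho+\tilde\mu_\rho)\,g_2(t),
\]
where $\csm_\rho:=\csm_{\rho,1}$, $\tilde\mu_\rho:=(\mu^\top u_\rho)^2$, and, with $\sigma$ the logistic sigmoid and $v\sim\mathcal N(0,1)$,
\[
g(t)=\E\big[\sigma(\sqrt{\csB(t)}\,v-\csm(t))\big],\quad h(t)=\E\big[\sigma'(\sqrt{\csB(t)}\,v-\csm(t))\big],\quad g_2(t)=\E\big[\sigma(\sqrt{\csB(t)}\,v-\csm(t))^2\big].
\]
The elementary facts I would record first: $g,h,g_2>0$ with $h=g-g_2\le g$ and $g_2\le g\le\tfrac12$ (the last using $\csm\ge0$), the Gaussian lower bound $h\ge\tfrac14 e^{-\csm-\sqrt{2\csB/\pi}}$, and, by a mode-wise comparison at the boundary (when $\csm_\rho=0$ the drift is $\gamma d\tilde\mu_\rho g\ge0$, and likewise for $\csV_\rho$), $\csm_\rho,\csV_\rho\ge0$; hence $\csm,\csV,\csB\ge0$ and $\csB_1=\csB_2=\csB$.

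The identity model ($\lambda_\rho\equiv1$) is the easy case and needs no extra hypothesis: the mode-wise equations then add up to an autonomous planar system for $(\csm,\csB)$, namely $\dot\csm=-\gamma h\csm+\gamma\|\mu\|^2 g$ and $\dot\csB=-2\gamma h\csB+2\gamma g\csm+\gamma^2(1+o(1))g_2$, which I would analyze directly. Since $\dot\csm=\gamma h(\tfrac gh\|\mu\|^2-\csm)$ with $\tfrac gh\ge1$, and $\dot\csB$ has a strictly negative linear-in-$\csB$ part, a short bootstrap using the crude controls $g/h\le4e^{\csB/2+\sqrt{2\csB/\pi}}$ and $g\le e^{\csB/2-\csm}$ confines $(\csm,\csB)$ to a bounded region and forces convergence to the unique fixed point, at which $\csm\to c\|\mu\|^2$ with $c=g/h\in[1,C_w]$. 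This is the route that sidesteps Assumption~\ref{ass:W_1W_2ab}.

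In the genuine power-law regime the modes do not decouple, so the argument becomes a bootstrap built on comparison with the frozen equilibria $\csm_\rho^\star=\tfrac gh\,d\tilde\mu_\rho/\lambda_\rho$ and $\csV_\rho^\star=\tfrac{g}{\lambda_\rho h}\csm_\rho+\tfrac{\gamma(\lambda_\rho+\tilde\mu_\rho)}{2\lambda_\rho h}g_2$: the drift of $\csm_\rho$ (resp.\ of $\csV_\rho$) is negative once it exceeds its equilibrium. On the time interval where $\csB\le M'$, the crude bounds above give $g/h\le C(M')$, hence $\csm_\rho(t)\lesssim C(M')\,d\tilde\mu_\rho/\lambda_\rho$ and therefore $\csm(t)=\tfrac1d\sum_\rho\csm_\rho(t)\lesssim C(M')\sum_\rho\tilde\mu_\rho/\lambda_\rho=C(M')\,\mu^\top K^{-1}\mu$; the decisive point is that $\sum_\rho\tilde\mu_\rho/\lambda_\rho\asymp\tfrac1d\sum_\rho(\rho/d)^{\b-\a}\asymp\int_0^1 x^{\b-\a}\,dx$ is finite \emph{precisely because} $\b+1>\a$. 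Similarly $\csV_\rho\lesssim\csV_\rho^\star$ gives $\csB=\tfrac1d\sum_\rho\lambda_\rho\csV_\rho\lesssim\tfrac gh\csm+\tfrac{\gamma g_2}{2h}\big(\tfrac1d\Tr K+o(1)\big)$, finite since $\tfrac1d\Tr K\asymp\int_0^1 x^{\a}\,dx<\infty$. The real difficulty is to promote these \emph{conditional} estimates to an unconditional, self-consistent pair of bounds on $(\csm,\csB)$: the naive bootstrap does not close, because the crude bound $g/h\le C(M')$ is (doubly) exponential in $M'$. Assumption~\ref{ass:W_1W_2ab} is exactly the structural input that breaks this circularity --- a uniform control of the relevant weighted sums, equivalently of the ratio $c(t)=g(t)/h(t)$ --- which pins $c(t)\in[1,C_w]$ ($c\ge1$ being free, since $h=g-g_2\le g$). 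Granting it, the displays above give items 1 and 2: $\csm(t)\asymp\mu^\top K^{-1}\mu$ with the stated constants for $t\ge1$, and $\csB(t)=\csB_1(t)=\csB_2(t)<C(\gamma,\|\mu\|,\a,\b)$.

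Item 3 then follows in two lines once $\csm,\csB$ are controlled. Write $\csL(t)=\E[\log(1+e^{Y})]$ with $Y=\sqrt{\csB(t)}\,v-\csm(t)$. Convexity of $x\mapsto\log(1+e^{x})$ and Jensen give $\csL(t)\ge\log(1+e^{\E Y})=\log(1+e^{-\csm(t)})\ge\log(1+e^{-C_w\mu^\top K^{-1}\mu})$, using $\csm\le C_w\mu^\top K^{-1}\mu$. Concavity of $u\mapsto\log(1+u)$ and Jensen give $\csL(t)\le\log(1+\E[e^{Y}])=\log(1+e^{-\csm(t)+\csB(t)/2})$; substituting $\csB(t)\lesssim\tfrac gh\csm(t)+\tfrac{\gamma g_2}{2h}(\tfrac1d\Tr K+o(1))$ with $g/h=c\le C_w$, $g_2/h=c-1\le C_w$ and $\csm(t)\ge\mu^\top K^{-1}\mu$ (or $\csm(t)\le C_w\mu^\top K^{-1}\mu$, according to the sign of $1-\tfrac{g}{2h}$) yields $\csm(t)-\tfrac12\csB(t)\ge(1-\tfrac{C_w^2}{2})\mu^\top K^{-1}\mu-\tfrac{\gamma C_w}{4d}\Tr K$, which is the stated upper bound on $\csL$. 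To summarize, the single genuinely delicate step is the self-consistent bootstrap for $(\csm,\csB)$ in the power-law case, controlled through Assumption~\ref{ass:W_1W_2ab}; the $\mu^\top K^{-1}\mu$-scaling of the overlap, the boundedness of $\csB$, and the two-sided loss bounds are then fairly mechanical consequences of the mode-wise comparison, the integrability $\b+1>\a$, and Jensen, while the identity case bypasses the bootstrap altogether through its closed planar system.
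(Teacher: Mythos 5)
Your route through the per-mode ODEs is genuinely different from the paper's: you bound the iterates by a comparison-principle argument against the \emph{frozen equilibria} $\csm_\rho^\star=\tfrac{g}{h}\,\tfrac{d\tilde\mu_\rho}{\lambda_\rho}$ and $\csV_\rho^\star$, whereas the paper solves the per-mode ODEs explicitly by an integrating factor (Lemma~\ref{lem:ulb_m_v_rho}) and aggregates via the kernel functions $F_\mu,\cK_2$ of Assumption~\ref{ass:F_K_power_law_assmp}. Your approach gives clean time-uniform \emph{upper} bounds with less machinery: the argument that the drift of $\csm_\rho$ at the level $C_w\,d\tilde\mu_\rho/\lambda_\rho$ is $\gamma d\tilde\mu_\rho(g-C_w h)\le0$ is correct and closes, and the analogous $\csV_\rho$ barrier works for the same reason. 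The integrability at $\b+1>\a$ enters in the same place in both arguments: $\sum_\rho\tilde\mu_\rho/\lambda_\rho=\mu^\top K^{-1}\mu$ and $\tfrac1d\Tr K$ must both be $O(1)$.

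There is, however, a real gap in your treatment of the \emph{lower} bound on $\csm(t)$, which you use but do not derive. After establishing the upper bound $\csm_\rho\le C_w\,d\tilde\mu_\rho/\lambda_\rho$, you assert ``$\csm(t)\asymp\mu^\top K^{-1}\mu$'' and then in item 3 invoke $\csm(t)\ge\mu^\top K^{-1}\mu$; but the frozen-equilibrium comparison only caps $\csm_\rho$ from above --- it does not by itself show that $\csm_\rho$ \emph{reaches} (a constant multiple of) $d\tilde\mu_\rho/\lambda_\rho$, because the relaxation rate of mode $\rho$ is $\gamma\lambda_\rho h$, which is not uniform in $\rho$: modes with $\lambda_\rho\to0$ converge arbitrarily slowly, and the lower bound for $\csm(t)=\tfrac1d\sum_\rho\csm_\rho(t)$ requires an interchange of the $t\to\infty$ limit with the $\rho$-average. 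This is precisely what the paper's integral representation handles: one gets $\csm_\rho(t)\ge\tfrac{d\tilde\mu_\rho}{\lambda_\rho}(1-e^{-\gamma\lambda_\rho\omega_1(t)})$ pointwise, then shows $\omega_1(t)\to\infty$ (because the upper bound on $\csm$ and Lemma~\ref{lem:w12bounds} keep $W_1$ bounded away from $0$), and finally averages via $\gamma\int_0^{\omega_1(t)}F_\mu(y)\,\dif y\uparrow\gamma\|F_\mu\|_1=\mu^\top K^{-1}\mu$. Your sketch should be supplemented with an analogous step before the loss upper bound of item 3 can be concluded. Also note that to recover the paper's exponent $(1-\tfrac{C_w^2}{2})\mu^\top K^{-1}\mu$, you must substitute the upper bound $\csm\le C_w\mu^\top K^{-1}\mu$ \emph{into} your $\csB$ estimate before taking $\csm-\tfrac12\csB$, rather than keeping the $(1-\tfrac{g}{2h})\csm$ form. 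Finally, your identity-case argument that sidesteps Assumption~\ref{ass:W_1W_2ab} is only a sketch; the paper's assumption-free identity result (Proposition~\ref{prop:identity_model}) requires restrictions on $\gamma$ and $\|\mu\|$, and the present proposition's proof of the identity case actually still goes through $F_\mu$ with $\a=0$ under Assumption~\ref{ass:W_1W_2ab}.
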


\begin{figure}[t]
\centering
\vspace{-0.2cm} % optional, reduces space above

% --- Panel (a) ---
\begin{minipage}[t]{0.4\textwidth}
    \centering
    \includegraphics[height=4cm]{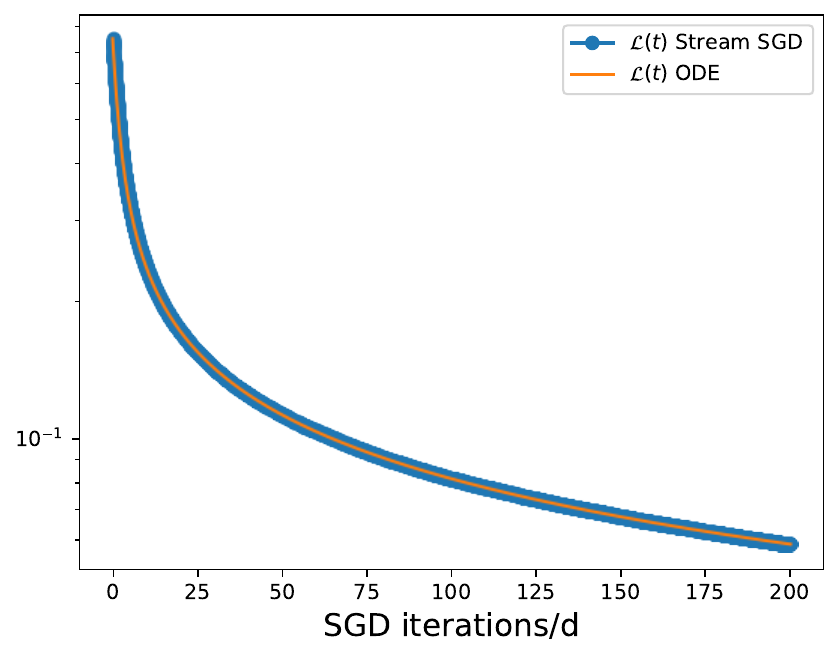}
    \medskip
    
    \textit{(a) Risk curve, $\beta=0$}
\end{minipage}\hspace{0.05\textwidth}%
% --- Panel (b) ---
\begin{minipage}[t]{0.4\textwidth}
    \centering
    \includegraphics[height=4cm]{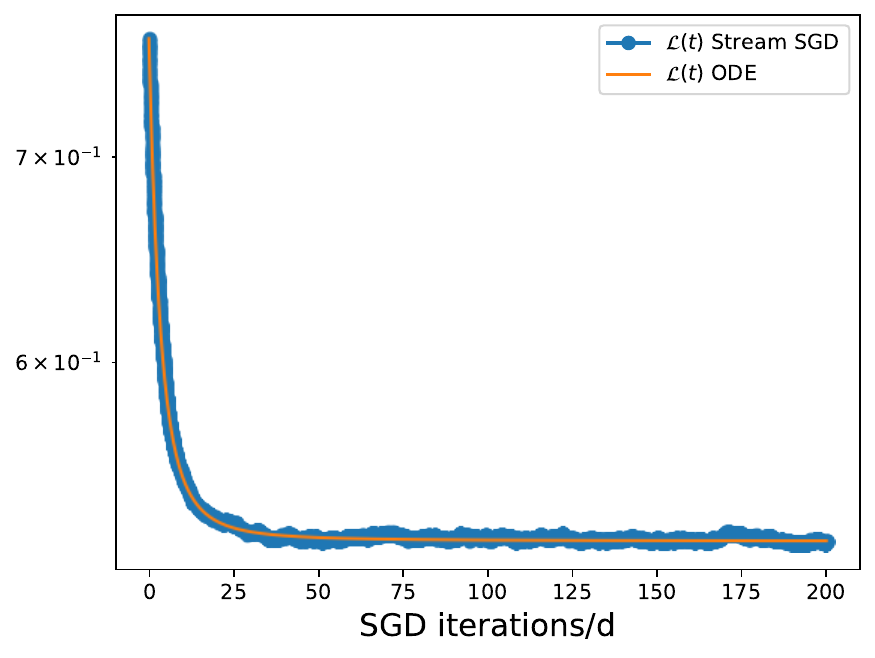}
    \medskip
    
    \textit{(b) Risk curve, $\beta=1.2$}
\end{minipage}
\vspace{0.1cm}
\caption{\textbf{Power-law model - Learning curve} with $d=1000$, $a_1=a_2=1.2$, and different values of $\beta$.
Panel (a) corresponds to the extreme power law regime; the slow decay of the loss illustrates Proposition \ref{prop:power_law_bad_regime}. Panel (b) corresponds to the mild power law regime; the loss decays to a constant, illustrating Proposition \ref{prop:power_law_good_regime}.}
\label{fig:power_law_los}
\vspace{-0.1cm} % optional, reduces space below
\end{figure}
\begin{proposition}[Extreme power-law]\label{prop:power_law_bad_regime}
Suppose $p_1=\frac{1}{2}$,  $K_1 =K_2 = K,$ $X_0 = 0$, Assumption \ref{ass:power_law_mu_lam_con},  hold and $\b+1\le \a$. Then 
\begin{enumerate}
    \item Suppose $\beta +1< \a $ then $ \csm(t) 
    =\Omega(\log^{1-\frac{1+\b}{\a}}(t))$ and if $\beta +1 =  \a $ then  $\csm(t) 
     =\Omega(\log(\log(t)))$    
    \item Suppose $\b < 2\a$, Assumption \ref{ass:W_1W_2ab} holds and there exist $\gamma,t_0>0$ and $\e=\e(\gamma,\a,\b)\in(0,1)$ such that, for all $t\geq t_0$, we have $C_w\leq2-\e.$  
    Then, 
    \begin{enumerate}
    \item  $ \csB(t)\asymp \csm(t)  
   \asymp \log(t)$
    \item There exist constants $c_1(\gamma,\a,\b), c_2(\gamma,\a,\b)>0 $ such that the risk decreases to zero with 
    $$t^{-c_1}\le \csL(t) \le t^{-c_2}.$$
    \end{enumerate}
\end{enumerate}
\end{proposition}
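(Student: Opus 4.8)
By Theorem~\ref{thm:main_risk_m_v} it suffices to analyze the deterministic system~\eqref{eq:V_m_rho_2}, since the quantities in the statement, $\csL(t)$, $\csm(t)=\tfrac1d\sum_\rho\csm_{\rho,1}(t)$, $\csV(t)$ and $\csB(t)=\tfrac1d\sum_\rho\lambda_\rho^{(1)}\csV_\rho(t)$, are exactly its deterministic equivalents. First I would specialize~\eqref{eq:V_m_rho_2} to binary logistic regression, $f(r)=\log(1+e^{-r})$. Using $p_1=p_2=\tfrac12$, $K_1=K_2=K$, means $\pm\mu$, the symmetry $\csm_{\rho,2}=-\csm_{\rho,1}$ and $X_0=0$, the system decouples across eigenmodes into
\begin{align*}
\dot{\csm}_\rho &= \gamma(t)\Bigl((\rho/d)^{\b}\,|h_1(t)|-(\rho/d)^{\a}\,h_{22}(t)\,\csm_\rho\Bigr),\\
\dot{\csV}_\rho &= 2\gamma(t)\Bigl(\csm_\rho\,|h_1(t)|-(\rho/d)^{\a}\,h_{22}(t)\,\csV_\rho\Bigr)+\gamma(t)^2(\rho/d)^{\a}\,h_2(t)+O(1/d),
\end{align*}
where $h_1(t)=\E[f'(Z_t)]<0$, $h_2(t)=\E[f'(Z_t)^2]$, $h_{22}(t)=\E[f''(Z_t)]>0$ and $Z_t$ is Gaussian with mean $\csm(t)$ and variance $\csB(t)$. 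I would then record two elementary inputs: (i) $\csm_\rho,\csV_\rho\ge0$ and $0<h_{22}\le|h_1|$ pointwise (since $f''\le-f'$); and (ii) a Gaussian-tail lemma showing that, in the relevant regime, $\csL(t)\asymp|h_1(t)|\asymp h_{22}(t)\asymp h_2(t)$, together with the unconditional bounds $c\,e^{-\csm(t)}\le\csL(t),|h_1(t)|$ and $\csL(t)\le e^{-\csm(t)+\csB(t)/2}$.

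The heart of the argument is the change to the intrinsic time $\tau(t):=\int_0^t\gamma(s)h_{22}(s)\,ds$. In $\tau$-time each per-mode equation is linear with coefficients whose only nonconstant feature is the bounded ratio $|h_1|/h_{22}\in[1,C_0]$, so its solution obeys $\csm_\rho(\tau)\asymp(\rho/d)^{\b-\a}\bigl(1-e^{-(\rho/d)^{\a}\tau}\bigr)$, and the $\csV_\rho$-equation integrates to a comparable profile. Summing over $\rho$ (a Riemann sum converging to an integral against the power-law spectrum of Assumption~\ref{ass:power_law_mu_lam_con}, finite since $1-e^{-x^{\a}\tau}=O(x^{\a}\tau)$ near $0$),
\[
\csm(\tau)\ \asymp\ \int_0^1 x^{\b-\a}\bigl(1-e^{-x^{\a}\tau}\bigr)\,dx\ \asymp\ \tau^{\,1-\frac{1+\b}{\a}}\qquad(\tau\to\infty),
\]
with $\log\tau$ in place of the power when $\b+1=\a$; a parallel computation gives $\csB(\tau)\asymp\tau^{\,1-\frac{1+\b}{\a}}$ as well, hence $\csB\asymp\csm$ asymptotically. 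The exponent $\theta:=1-\tfrac{1+\b}{\a}\in(0,1)$ — which is the exponent in the statement — comes from the split of the clean modes into equilibrated ($(\rho/d)^{\a}\tau\gg1$) and still-growing ($(\rho/d)^{\a}\tau\ll1$) families.

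To finish, convert back to real time: $\tau'(t)=\gamma h_{22}\asymp\gamma\csL\asymp\gamma e^{-\Theta(\csm)}\asymp\gamma e^{-\Theta(\tau^{\theta})}$, and integrating this separable relation gives $\tau(t)^{\theta}\asymp\log t$, hence $\csm(t)\asymp\log t$, $\csB(t)\asymp\log t$, and then $\csL(t)\asymp t^{-\Theta(1)}$ — the bound $t^{-c_1}\le\csL(t)\le t^{-c_2}$ of part~(2). For part~(1), which does not invoke Assumption~\ref{ass:W_1W_2ab}, one uses only the easy half — the per-mode lower bounds and the crude bounds $|h_1|\ge c\,e^{-\csm}$, $h_{22}\le\tfrac14$ — to get the (weaker) lower bound $\csm(t)=\Omega(\log^{\theta}t)$, resp.\ $\Omega(\log\log t)$ when $\b+1=\a$. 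For part~(2) one also needs the matching \emph{upper} bound on $\csL$, i.e.\ that $e^{-\csm+\csB/2}$ genuinely decays; this requires $\csB(t)\le C_w\csm(t)$ with $C_w\le 2-\e$, and this is exactly where the technical Assumption~\ref{ass:W_1W_2ab}, the restriction $\b<2\a$, and the smallness of $\gamma$ enter, keeping the asymptotic ratio $\csB/\csm$ strictly below $2$ and letting the bootstrap (``$\csm$ large $\Rightarrow$ the $h$-ratios are bounded $\Rightarrow$ the asymptotics above hold $\Rightarrow$ $\csB\asymp\csm$'') close. I expect this last point to be the main obstacle: $\csB$ is a spectral reweighting of the $\csV_\rho$'s, which are themselves driven by the $\csm_\rho$'s and by the $\gamma^2$ fluctuation term, so the bound $\csB\lesssim C_w\csm$ is not self-improving and genuinely relies on the mean-field-type decoupling that Assumption~\ref{ass:W_1W_2ab} supplies, uniformly in $t\ge t_0$.
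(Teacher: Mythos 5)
Your overall strategy — change to an intrinsic time, solve the per-mode linear ODEs, evaluate the spectral sum as a power-law integral with exponent $1-\tfrac{1+\b}{\a}$, then translate back to real time — is the same route the paper takes (the paper's clock is $\omega_1(t)=\int_0^t W_1\,\dif s$ with $W_1=\E w_{12}=|h_1|$, rather than your $\tau=\int\gamma h_{22}$, but under Assumption~\ref{ass:W_1W_2ab} these differ only by bounded factors). You correctly identify the exponent, the relevant integral $\int_0^1 x^{\b-\a}(1-e^{-x^\a\tau})\,\dif x$, and the central obstruction: you need $\csm-\csB/2$ to grow, i.e.\ $\csB\le(2-\e)\csm$, not merely $\csB\asymp\csm$.

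However, there is a genuine gap precisely at that obstruction, which you flag as ``the main obstacle'' but do not resolve. The paper's mechanism is sharper and more elementary than ``mean-field decoupling'': it is a \emph{per-mode} algebraic bound coming from a Fubini identity. Writing the $\csm_\rho$-solution as an integral and squaring it, exchange of integration order gives the exact relation $\csV_\rho(t)=\tfrac{\csm_\rho(t)^2}{\tilde\mu_\rho d}+(\text{fluctuation contribution})$; see Eq.~\eqref{eq:v_rho_ub} and the last display in the proof of Lemma~\ref{lem:ulb_m_v_rho}. Combined with the per-mode upper bound $\csm_\rho(t)\le C_w\,\tilde\mu_\rho d/\lambda_\rho$ (Eq.~\eqref{eq:m_rho_ub_final} under Assumption~\ref{ass:W_1W_2ab}), this yields $\lambda_\rho\csV_\rho\le\bigl(\tfrac{\lambda_\rho\csm_\rho}{\tilde\mu_\rho d}\bigr)\csm_\rho+(\text{small})\le(2-\e)\csm_\rho+(\text{small})$ for each $\rho$, and hence $\csB\le(2-\e)\csm+(\text{subleading})$ after summing. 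The condition $\b<2\a$ is what makes the fluctuation term (driven by $\cK_2$, exponent $\kappa_2=\tfrac1\a+2$) genuinely subleading relative to $F_\mu$ (exponent $\kappa_\mu=\tfrac{\b+1}{\a}$). Without the Fubini identity, your ``parallel computation gives $\csB\asymp\tau^\theta$'' establishes the correct order of magnitude of $\csB$ but not the constant $2-\e$ in front, and the constant is what the risk bound hinges on.

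A smaller issue: for part~(1), which must go through \emph{without} Assumption~\ref{ass:W_1W_2ab}, the clock $\tau=\int\gamma h_{22}$ is a poor choice — you would need $\tau(t)=\Omega(\log t)$, i.e.\ a lower bound on $h_{22}$, but $h_{22}/|h_1|$ can degenerate as $\csB$ grows. The paper avoids this by using $\omega_1=\int W_1$ as the clock for the per-mode lower bound (absorbing the decay via $\int(W_1-W_2)\le\int W_1$), and then closing with $\dot\omega_1=W_1\ge(1+e^{\csm})^{-1}$ together with the crude upper bound $\csm\le\gamma\|\mu\|^2\omega_1$ from dropping the decay term. Your cited crude bound $h_{22}\le\tfrac14$ is an upper bound on the clock rate and does not by itself give $\tau=\Omega(\log t)$.
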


The proofs of the above propositions under milder assumptions on the mean and covariance matrix (see Assumption \ref{ass:F_K_power_law_assmp}) are deferred to Appendices  \ref{sec:power_law_proofs} and \ref{sec:identity_proof}.

\begin{remark}
    We expect that similar results hold when the two classes have different covariance matrices (i.e. $K_1\neq K_2$). We conjecture that what will determine the transition is the maximal power in $\a$, see Figure \ref{fig:logistic_concentration_GMM}. As the analysis is not trivial, we leave this for future work.
\end{remark}

\begin{figure}[t]
\centering
\vspace{-0.2cm} % optional, reduces space above
% --- Panel (a) ---
\begin{minipage}[t]{0.4\textwidth}
    \centering
    \includegraphics[height=4cm]{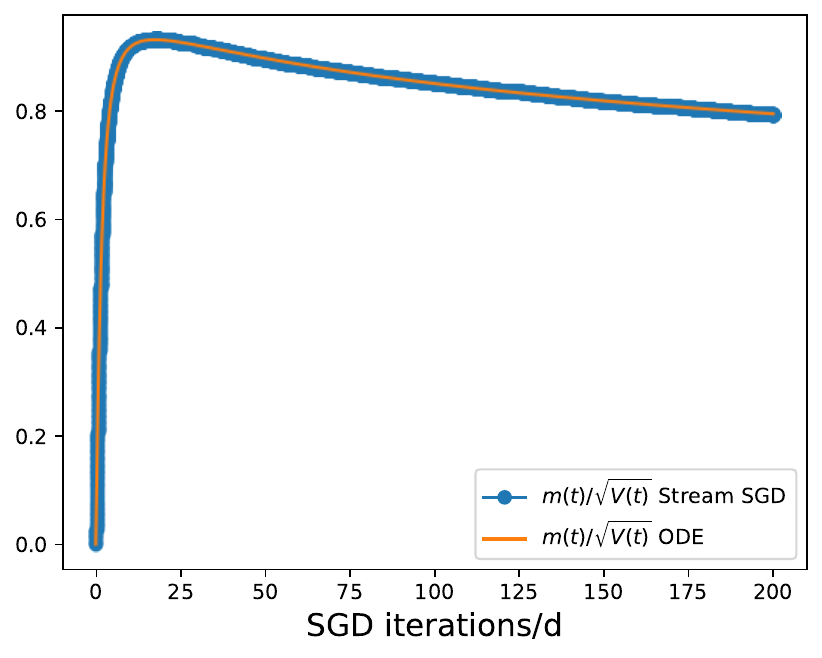}
    \medskip
    \textit{(a) Alignment, $\beta=0$}
\end{minipage}\hspace{0.05\textwidth}%
% --- Panel (b) ---
\begin{minipage}[t]{0.4\textwidth}
    \centering
    \includegraphics[height=4cm]{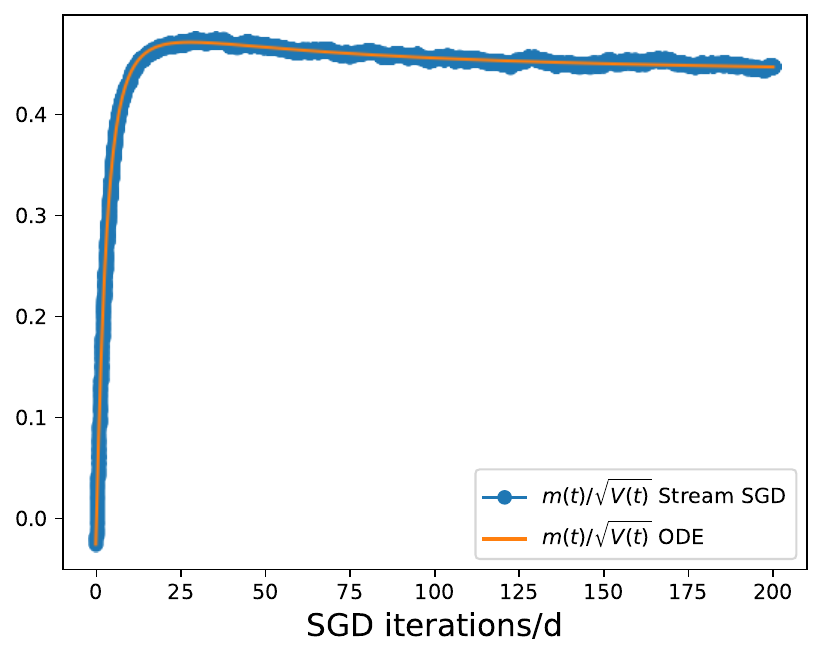}
    \medskip
    \textit{(b) Alignment, $\beta=1.2$}
\end{minipage}
\vspace{0.1cm}
\caption{\textbf{Alignment for the power-law model} with $d=1000$, $a_1=a_2=1.2$, and different values of $\beta$.}
\label{fig:power_law_m}
\vspace{-0.4cm} % optional, reduces space below
\end{figure}

\begin{figure}[t]
\centering
% --- Row with three panels ---
\begin{minipage}[t]{0.25\textwidth}
    \centering
    \includegraphics[height=4cm,keepaspectratio]{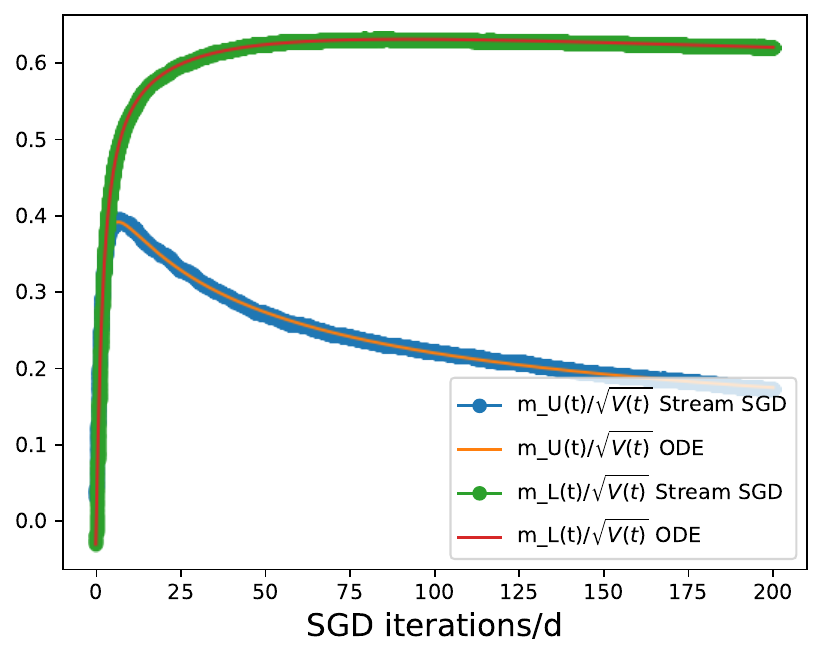}
    \medskip
    \textit{(a) Subspace alignment, $\beta=0$}
\end{minipage}\hspace{0.1\textwidth}%
\begin{minipage}[t]{0.25\textwidth}
    \centering
    \includegraphics[height=4cm,keepaspectratio]{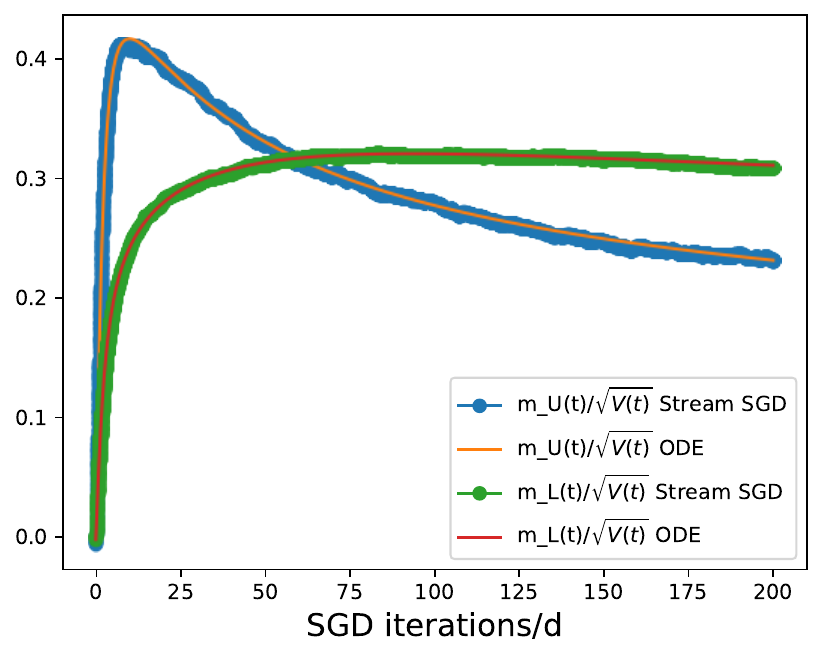}
    \medskip
    \textit{(b) Subspace alignment, $\beta=0.4$}
\end{minipage}\hspace{0.1\textwidth}%
\begin{minipage}[t]{0.25\textwidth}
    \centering
    \includegraphics[height=4cm,keepaspectratio]{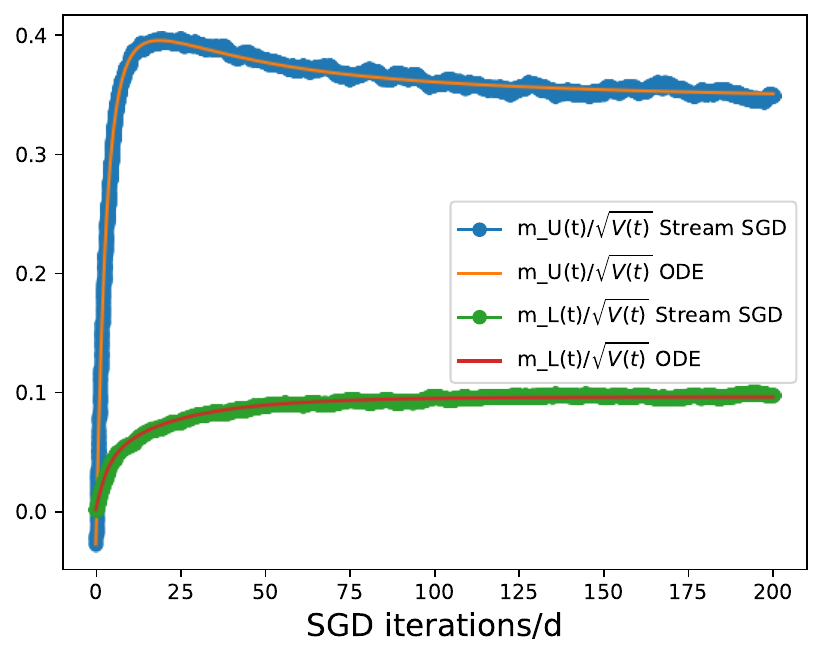}
    \medskip
    \textit{(c) Subspace alignment, $\beta=1.2$}
\end{minipage}
\vspace{0.1cm}
\caption{\textbf{Subspace alignment in the power-law model.} $d=1000$, $a_1=a_2=1.2$. We project $\bar m(t)$ into the lower and upper eigenspaces, obtaining $m_L(t)$ and $m_U(t)$, respectively.}
\label{fig:power_law_align_proj}
\end{figure}

\subsection{High-dimensional ODE description for logistic regression}
% \paragraph{Optimization set-up}
Beginning with the more general setting of multi-class logistic regression, let us consider a one-hot encoding label vector $y_I\in  \{0,1\}^{\ell}$ which is a function of the true label $I\in [\ls]\sim p_I.$ The data is then pairs of $(a_I,y_I).$ The risk function considered is the cross-entropy loss: 
\begin{equation}
    f_I(r_I)=-\sum_{i\in [\ell]} y_I^i r_I^{i}+\log\sum_{j=1}^\ell\exp(r_I^{j})\quad\text{where }r_{i}=X^\top a|(a\in\text{class }i)=\begin{bmatrix}r_{i}^1\\ 
    \vdots \\ r_{i}^\ell\\ \end{bmatrix}.
\end{equation}
The expected loss is then, 
\begin{equation}
    \csL(t) = \EE [f_I(r_{t,I})]= \sum_{k = 1}^{\ls} p_k\left(\sum_{i\in [\ell]} -\EE [y_k^i\mid a] m_k^{i}(t)+\EE[\log\sum_{j=1}^\ell\exp(r_{t,k}^{j})]\right).
\end{equation}
In this setting, the gradient $i\in [\ell]$ is given by 
\begin{equation}
    \nabla f_I(r_I)=-y_I+\begin{bmatrix}\w_{I1}\\ \vdots\\ \w_{I\ls}\end{bmatrix}\quad\text{where } \w_{Ij}=\frac{\exp(r_I^j)}{\sum_{k=1}^{\ell}\exp(r_I^k)}\quad\text{for $I\in [\ls]$ and $j\in[\ell]$}.
\end{equation}
We then have that 
\begin{align}
    \partial_k\partial_j f_I(r_I) = \w_{Ij}\delta_{jk} - \w_{Ij}\w_{Ik}
\end{align}
Finally, the Fisher term reads
\begin{align}
    \EE[\partial_k f_I(r_I) \partial_j f_I(r_I)\mid I = i] &= \EE[y_i^j y_i^k]  - \EE[y_i^j]\w_{ik} - \EE[y_i^k]\w_{ij} + \EE[\w_{ij}\w_{Ik}]    
    \\\nonumber &= \delta_{ik}\delta_{ji} - \delta_{ij}\w_{ik} - \delta_{ik}\w_{ij} + \EE[\w_{ij}\w_{ik}]  
\end{align}
where the last transition holds in the case where $\ell=\ls$ and thus $y_I^j = \delta_{Ij}$.

We now specialize to the binary case $\ls=2$.  By the symmetry of the problem, we simplify the analysis by taking $X=\left[\begin{array}{cc}
x &
0
\end{array}\right]$ with $x\in \R ^d. $  We then have that 
\begin{equation}
    f_i(r_i)=-y_ir_i+\log(1+\exp(r_i))\quad\text{where }r_{i}=x^\top a|(a\in\text{class }i),\quad y_i=\mathbf{1}_{\{i=1\}}.
\end{equation}
Furthermore, the $\w$ variables become
\[
\w_{11}=\frac{e^{r_1}}{e^{r_1}+1},\quad
\w_{12}=\frac{1}{e^{r_1}+1},\quad
\w_{21}=\frac{e^{r_2}}{e^{r_2}+1},\quad
\w_{22}=\frac{1}{e^{r_2}+1}.
\]
where we note that $\w_{11}+\w_{12}=\w_{21}+\w_{22}=1.$
Using this, the derivatives of $f_i$ are 
\begin{align}
    f'_i(r_i)=-y_i+\w_{i1} 
=\begin{cases}
    -\w_{12}& i=1,\\
    \w_{21}& i=2,
\end{cases}\qquad
    f''_i(r_i)
    = \w_{i1} \w_{i2}.
\end{align}
Following the above simplification, we obtain that, $\csm_{\rho,1} = - \csm_{\rho,2} = \csm_{\rho}  = d x^\top u_{\rho} u_{\rho}^\top  \mu  \in \R$, similarly we reduce $V_\rho  = \left[\begin{array}{cc}
\csV_\rho & 0 \\
0 & 0
\end{array}\right]$ with  $\csV_\rho  =d x^\top  u_\rho u_\rho^\top  x\in  \R$, and $r_i = x^\top  a_i.$

For the risk, we get 
\begin{align}
\csL(t)&= \EE[f_i(r_i)]=-p_1 \csm(t)+p_1\EE_z[\log(1+\exp(\csm(t)+\sqrt{\csB_1(t)}z))] \\\nonumber &+p_2\EE_z[\log(1+\exp(-\csm(t)+\sqrt{\csB_2(t)}z))]   .
\end{align}
with $\csm(t) = \ip{\mu, X}$, and  $\csB_i(t) = \ip{K_i, X^{\otimes 2}}.$ 
Thus, the system of equations \eqref{eq:V_m_rho_2} reduces to
\begin{align}
    \frac{\dif \csV_\rho}{\dif t}
    &=-2\gamma \csV_\rho\sum_{i=1}^2 p_i \lambda_\rho^{(i)}\EE[w_{i1}w_{i2}]+ 2\gamma \csm_{\rho}(p_1\EE[w_{12}]+p_2\EE[w_{21}])+{\gamma^2}\sum_{i=1}^2 p_i\left(\lambda_\rho^{(i)}+(\ip{\mu,u_\rho})^2\right)\EE[w_{i(\sim i)}^2]\\ \nonumber
    \frac{\dif \csm_{\rho}}{\dif t}&=-\gamma \csm_{\rho}\sum_{i=1}^2 p_i\lambda_\rho^{(i)}\EE[w_{i1}w_{i2}]+ {\ip{\mu, u_\rho}^2}\gamma (p_1 \EE[w_{12}]+p_2 \EE[w_{21}])d
\end{align}
where $w_{ij}$ denotes the deterministic equivalent of $\w_{ij}$ and can be written as $w_{12} = 
\left(1+e^{\csm(t)+\sqrt{\csB_1(t)}z}\right)^{-1}
$ and $
w_{11}
= 1 -w_{12}$ where the expectation is with respect to $z\sim \mathcal{N}(0,1)$. Similarly, $w_{22} = 
\left(1+e^{-\csm(t)+\sqrt{\csB_2(t)}z}\right)^{-1}
$ and, $w_{21}=1-w_{22}$. We note that in the identity covariance case $w_{12}\overset{d}{=}w_{22}, $ and $w_{21}\overset{d}{=}w_{11}.$ Finally, we use $\sim i$ to denote ``not $i$'' (so $w_{i(\sim i)}=w_{12},w_{21}$ for $i=1,2$ respectively).

\section{Gaussian Mixture with MSE Loss: Multi-class Example\label{sec:linear_model}}
To illustrate the broader capability of Theorem~\ref{thm:main_risk_m_v}, we present an example with more than two classes in the context of linear regression. We stress that the results of the theorem also apply to nonlinear multi-class models.

\subsection{Setup and Main Results}

Consider data satisfying Assumptions~\ref{ass:data} and \ref{ass:target} with a soft linear link function. In particular, the label $y\in \mathbb{R}^{\ell^*}$ are generated such that $y\mid i = \langle X^\star, a_i \rangle + \varepsilon$ with $\varepsilon \sim \mathcal{N}(0, \sigma^2/\ell^* I_{\ell^*})$ for all $i\in [\ell^*]$ and the ground truth matrix $X^\star \in \mathbb{R}^{d\times \ell^*}.$ We consider the mean square error loss function $f(r)=\|r-y\|^2/2$ with $r=\ip{X, a}_{\R^d}$.  The population error per class is then
$$\mathcal{L}_i(X) = \frac{1}{2}\text{Tr}((X-X^\star)^\top (K_i+\mu_i \mu_i^\top) (X-X^\star)) +\frac{1}{2}\sigma^2,$$
such that $\mathcal{L}(X) = \sum_{i=1}^{\ell^*}p_i \mathcal{L}_i(X)$.

We note that \cite{mai2019high} considered the superiority of the mean square error estimator in the hard label case for the binary model with a linear regression solution vs a logistic regression solution. Our focus here is on characterizing the behavior of SGD in this setting, in particular when there is a large number of classes. 

\subsection{Deterministic Dynamics and Convergence}

A direct application of Theorem~\ref{thm:main_risk_m_v} and Lemma~\ref{lem:spe_decom_m_S} shows that the following functions, for any $\rho \in [d]$,
$$D_\rho(X) \defas d\text{Tr}((X-X^\star)^\top u_\rho u_\rho^\top (X-X^\star)), \quad \text{and} \quad M_{\rho}(X) \defas d \mu^\top u_\rho u_\rho^\top (X-X^\star) \in \mathbb{R}^{\ell^* \times \ell^*}$$
converge to deterministic limits $\mathscr{D}_\rho(t)$ and $\mathfrak{m}_{\rho,ij}(t)$ that satisfy the following equations:
\begin{align} 
 \frac{d \mathscr{D}_\rho(t)}{dt}  =   &-2\gamma_t \sum_{i=1}^{\ell^*} p_i \lambda_\rho^{(i)}\mathscr{D}_\rho(t)- 2\gamma_t \sum_{j,i=1}^{\ell^*} p_i \mathfrak{m}_{\rho,i,j}(t)m_{i,j}(t) +2\gamma_t^2 \sum_{i=1}^{\ell^*} p_i (\lambda_\rho^{(i)}+\tilde{\mu}_{\rho}^{(i)})\csL_i(t) \label{eq:D_rho}
\end{align}

\begin{align}
 \frac{d \mathfrak{m}_{\rho, j, u}(t)}{dt}  =   &-\gamma_t \sum_{i=1}^{\ell^*} p_i \lambda_\rho^{(i)}\mathfrak{m}_{\rho,j,u}(t) -\gamma_t d\sum_{i=1}^{\ell^*} p_i \mu_j^{\top} u_\rho u_\rho^{\top}\mu_i \mathfrak{m}_{i, u}(t), \label{eq:dm_rho_j_dt}
\end{align}
where $\mathscr{D}(t) = \frac{1}{d} \sum_\rho \mathscr{D}_\rho(t)$, $\mathfrak{m}_{i,u}(t) = \frac{1}{d} \sum_\rho \mathfrak{m}_{\rho,i, u}(t)$, the average overlap for the $u$th direction in $X$ is $\mathfrak{m}_u(t) = \sum_{j=1}^{\ell^*} p_j\mathfrak{m}_{j,u}$, and the loss per class is
$$\csL_i(t) =\frac{1}{2d}\sum_\rho \lambda_\rho^{(i)}\mathscr{D}_\rho(t) +\frac{1}{2}\sum_{j=1}^{\ell^*} \mathfrak{m}_{i,j}^2+\frac{\sigma^2}{2}.$$
The Fisher term for the $i$th class is then $\mathbb{E}[\nabla f_i(\theta_{t,i})^{\otimes 2}] = 2\csL_i(t)$. 

\begin{remark}[Orthogonal means]
Averaging over $\rho$ in Eq.~\eqref{eq:dm_rho_j_dt}, if the means are orthogonal, the second term vanishes. Therefore, if optimization starts from zero, then $\mathfrak{m}_{j,u}(t) = 0$ for all $t$, although each component may vary over time. A better observable is $\frac{1}{d}\sum_{j=1}^{\ell^*}\sum_{\rho=1}^d p_j \mathfrak{m}_{\rho ju}^2$, which measures the variance of the eigenmodes. See Figures~\ref{fig:lin_13_m} and \ref{fig:lin_05_m} for the extreme and mild power-law settings with random means, which in high dimensions are nearly orthogonal.
\end{remark}

\subsection{Stepsize Condition}

The following lemma provides a stability condition on the learning rate:

\begin{lemma}[Linear least square GMM]\label{lem:linear_lr_thresh}
Suppose the data is generated as above with $\sigma = 0$. Then online SGD for the mean square error converges to zero for any learning rate satisfying
$$\gamma_t < \frac{1}{\max_{i\in [\ell^*]}\frac{1}{d}(\textup{Tr}(K_i) + \|\mu_i\|^2)}.$$
\end{lemma}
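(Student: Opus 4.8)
The plan is to reduce to the deterministic dynamics and build a Lyapunov functional. By Theorem~\ref{thm:main_risk_m_v} it suffices to show that the deterministic risk curve $\csL(t)$ tends to $0$ as $t\to\infty$. The natural Lyapunov functional for a quadratic loss is $\cD(t)$, the deterministic equivalent of the distance to optimality $\|X_{\lfloor td\rfloor}-X^\star\|^2$; writing $\Delta=X-X^\star$ and letting $\csV^\Delta_\rho(t)$ denote the deterministic equivalent of $d\,\Delta^\top u_\rho u_\rho^\top\Delta$ (i.e.\ the quantity $\csV_\rho$ with $X$ replaced by $\Delta$), one has $\cD(t)=\tfrac1d\sum_{\rho=1}^d\Tr\csV^\Delta_\rho(t)$. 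Both $\cD$ and $\csL$ fall within the scope of Theorem~\ref{thm:main_risk_m_v}, and $\cD(0)=\|X_0-X^\star\|^2<\infty$ by Assumptions~\ref{ass:target}\ref{ass:Soft_label} and~\ref{ass:param_scaling}, so no hypothesis such as $X_0=0$ is needed.

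Next I would specialize the ODE system \eqref{eq:V_m_rho_2_soft} to $f_i(r)=(r-y_i)^2$: then $\nabla^2 f_i\equiv 2I$, and in the deterministic-equivalent sense $\EE[\nabla f_i(\r_{t,i},\e)]=2\,\Delta^\top\mu_i$ and $\EE[\nabla f_i(\r_{t,i},\e)^{\otimes2}]=4\big(\Delta^\top K_i\Delta+(\Delta^\top\mu_i)^{\otimes2}\big)$, since $\r_{t,i}-y_i$ models $\Delta^\top a_i$. Summing the $\csV_\rho$-equation of \eqref{eq:V_m_rho_2_soft} over $\rho$, taking traces, and dividing by $d$, and using $\sum_\rho\csm_{\rho,i}=d\,\Delta^\top\mu_i$, $\sum_\rho\lambda_\rho^{(i)}\Tr\csV^\Delta_\rho=d\,\Tr(\Delta^\top K_i\Delta)$, $\Tr(\Delta^\top K_i\Delta)+\|\Delta^\top\mu_i\|^2=\cL_i(t)$ (the deterministic equivalent of the per-class risk), and $\sum_\rho\big(\lambda_\rho^{(i)}+\ip{\mu_i,u_\rho}^2\big)=\Tr(K_i)+\|\mu_i\|^2$, the first-order and mean-coupling terms should assemble into $-4\gamma(t)\csL(t)$ and the second-order term into $\tfrac{4\gamma(t)^2}{d}\sum_i p_i\big(\Tr K_i+\|\mu_i\|^2\big)\cL_i(t)$. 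Bounding the weighted average by the maximum, $\sum_i p_i\big(\Tr K_i+\|\mu_i\|^2\big)\cL_i(t)\le d\,\big(\max_i\tfrac1d(\Tr K_i+\|\mu_i\|^2)\big)\csL(t)$, gives
\[
\frac{\dif\cD}{\dif t}\ \le\ -4\gamma(t)\,\csL(t)\Big(1-\gamma(t)\max_{i\in[\ls]}\tfrac1d\big(\Tr K_i+\|\mu_i\|^2\big)\Big)
\]
(the leading constant being tied to the normalization of $f_i$). One can read the same inequality off the recursion $\Delta_{k+1}=(I-\tfrac{2\gamma_k}{d}a_{k+1}a_{k+1}^\top)\Delta_k$, where the coefficient $\Tr(K_i+\mu_i\mu_i^\top)$ arises from the fourth-moment identity $\EE[\|a_i\|^2 a_i a_i^\top\mid i]=\Tr(K_i+\mu_i\mu_i^\top)(K_i+\mu_i\mu_i^\top)+(\text{lower order})$.

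Under the hypothesis $\gamma(t)<\big(\max_i\tfrac1d(\Tr K_i+\|\mu_i\|^2)\big)^{-1}$ (taken, as in the examples, bounded away from $0$) the bracket is bounded below by a constant $\delta>0$, so $\cD$ is nonincreasing; hence $\cD(t)\le\cD(0)$, and integrating gives $4\gamma\delta\int_0^\infty\csL(t)\,\dif t\le\cD(0)<\infty$. Since $\csL\ge 0$ and the trajectory of \eqref{eq:V_m_rho_2_soft} stays bounded (so $t\mapsto\csL(t)$ has bounded derivative, i.e.\ is Lipschitz), a nonnegative integrable Lipschitz function on $[0,\infty)$ must tend to $0$; thus $\csL(t)\to 0$, which by Theorem~\ref{thm:main_risk_m_v} is the limiting risk of the SGD iterates. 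Components of $\Delta$ lying in $\ker\!\big(\sum_i p_i(K_i+\mu_i\mu_i^\top)\big)$ are frozen by the dynamics but invisible to $\csL$, so strict positive-definiteness of the covariances plays no role.

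The step I expect to be the main obstacle is the one producing the displayed differential inequality: carrying the change of variables to $\Delta=X-X^\star$ through the block structure $\hat X=X\oplus X^\star$ in \eqref{eq:V_m_rho_2_soft}, verifying that the first-order and mean-overlap contributions reorganize exactly into $-4\gamma(t)\csL(t)$, and pinning down the coefficient $\Tr(K_i+\mu_i\mu_i^\top)$ on the fluctuation term — this is precisely where the form of the threshold is fixed. It rests on the two elementary facts $\sum_\rho(\lambda_\rho^{(i)}+\ip{\mu_i,u_\rho}^2)=\Tr(K_i)+\|\mu_i\|^2$ and that the only second-moment functional of $\Delta$ entering $\csL$ beyond the $\csV^\Delta_\rho$ is $\|\Delta^\top\mu_i\|^2$, which in the deterministic limit equals the square of the first-moment overlap — both consequences of the concentration already contained in Theorem~\ref{thm:main_risk_m_v}.
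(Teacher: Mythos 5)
Your approach is essentially the one the paper takes in Section~\ref{sec:linear_model}: derive an autonomous ODE for $\cD(t)$, the deterministic equivalent of $\|X_{\lfloor td\rfloor}-X^\star\|^2$, by summing the spectrally-decomposed equations over $\rho$, observe that the learning-rate condition makes $\dif\cD/\dif t\le 0$, and conclude. Concretely, the paper writes the ODE for $\csD_\rho$ in Eq.~\eqref{eq:D_rho}, sums over $\rho$, divides by $d$, and uses $\csL_i(t)=\frac{1}{2d}\sum_\rho\lambda_\rho^{(i)}\csD_\rho(t)+\frac12\sum_j\csm_{i,j}^2(t)$ to identify the leading- and fluctuation-order pieces; it then reads off the stability threshold. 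You did the same, starting one step earlier from \eqref{eq:V_m_rho_2_soft}, which is why you correctly flag the $\hat{X}=X\oplus X^\star$ change of variables as the step needing care — the paper has already absorbed that into \eqref{eq:D_rho}.

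Two places where you are actually a bit more careful than the paper, and should keep them: (1) the paper asserts ``The distance to optimality converges then'' without saying why a bounded-below nonincreasing quantity should approach anything in particular, whereas you close the argument with the integrability-plus-Lipschitz step showing $\csL(t)\to0$; and (2) you note explicitly that any component of $\Delta=X-X^\star$ in $\ker\!\big(\sum_i p_i(K_i+\mu_i\mu_i^\top)\big)$ is frozen but invisible to $\csL$, so only the loss (not $\cD$ itself) need vanish. On the coefficient: your factor $-4\gamma(t)\csL(t)$ on the first-order term is the right one. Since $2\csL_i=\frac1d\sum_\rho\lambda_\rho^{(i)}\csD_\rho+\sum_j\csm_{i,j}^2$ when $\sigma=0$, summing \eqref{eq:D_rho} over $\rho$ gives $-4\gamma_t\csL(t)+2\gamma_t^2\sum_ip_i\frac1d(\Tr K_i+\|\mu_i\|^2)\csL_i(t)$; the paper's displayed equation $-2\gamma_t\sum_ip_i\csL_i(1-\gamma_t\frac1d(\Tr K_i+\|\mu_i\|^2))$ has a factor-of-two slip tied to the inconsistent normalization of $f_i$ between Sections~\ref{sec:linear_main} and~\ref{sec:linear_model}, though the resulting threshold in the lemma remains (conservatively) valid under either convention.
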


\begin{proof}
Summing over $\rho$ in Eq.~\eqref{eq:D_rho} and dividing by $d$, for $\sigma =0$, we have
\begin{align}
 \frac{d \mathscr{D}(t)}{dt}  =   -2\gamma_t \sum_{i=1}^{\ell^*} p_i \csL_i(t)\left(1-\gamma_t \frac{1}{d}(\text{Tr}(K_i) + \|\mu_i\|^2)\right).
\end{align}
where $\scrD(t) = \frac{1}{d} \sum_\rho \scrD_\rho(t)$, $\csm_{i,u}(t) = \frac{1}{d} \sum_\rho \csm_{\rho,i, u}(t),$ and using the definition of the loss per class $\csL_i(t) =\frac{1}{2d}\sum_\rho \lambda_\rho^{(i)}\csD_\rho(t) +\frac{1}{2}\sum_{j=1}^{\ls} \csm_{i,j}^2.$ The distance to optimality converges for learning rates satisfying the stated bound.
\end{proof}
\begin{remark}
Unlike standard worst-case convergence analyses in optimization that depend on $\|K_i\|_{\text{op}}$ (the largest eigenvalue), Lemma~\ref{lem:linear_lr_thresh} provides a stepsize condition depending on $\text{Tr}(K_i)/d$, which captures the average spectral content across all directions. This can be substantially less restrictive when eigenvalues are distributed diffusely in high dimensions. This extends the results in \cite{collinswoodfin2023hitting} to the multiclass setting.
\end{remark}

\subsection{Numerical Simulations and Results}

We conclude this section with numerical simulations illustrating SGD iterates and our theoretical predictions derived from ODEs~\eqref{eq:D_rho} and \eqref{eq:dm_rho_j_dt}, for a large number of classes ($\ell^*=5,10,20,100$), with $d=100$ for $\ell^*=100$ and $d=1000$ for $\ell^*=5,10,20$, $\mu_i\sim\mathcal{N}(0,I_d/d)$, $X_{ij}^\star\sim \mathcal{N}(0,1)/\sqrt{d\ell^*}$, and $p_i=1/\ell^*$. We plot loss curves for mild ($\alpha = 0.5$) and extreme ($\alpha = 1.3$) power-law settings (Figures~\ref{fig:lin_05_L} and \ref{fig:lin_13_L}).

In addition to the loss, we examine the following two ratios, which effectively demonstrate the effect of different covariance matrices and class structures:
\begin{align}
\frac{1}{\ell^*}\sum_{ij} p_i\frac{\mathfrak{m}_{ij}^2(t)}{\mathscr{D}(t)} \quad \text{and} \quad \frac{1}{\ell^* d}\sum_{ij\rho} p_i\frac{\mathfrak{m}_{\rho ij}^2(t)}{\mathscr{D}(t)}.
\end{align}

As seen in Figure \ref{fig:lin_05_m} (mild power-law) and Figure \ref{fig:lin_13_m} (extreme power-law), the first measure behaves similar in both models, as expected for random means, whereas the second measure decreases in the extreme power-law setting and first decreases then increases in the mild power-law case, indicating the model's ability to recover some information about the means and the hidden true parameter $X^\star$.

The difference $\frac{1}{\ls d}\sum_{ij\rho} p_i\frac{\csm_{\rho ij}^2(t)}{\csD(t)}-\frac{1}{\ls}\sum_{ij} p_i\frac{\csm_{ij}^2(t)}{\csD(t)}$ can be viewed as measuring the variability of $\csm_{\rho ij}^2(t)$ across the different eigenvector directions.  We see that, with the extreme power-law, this difference is monotonically increasing, suggesting that the iterates focus more in certain eigenvector directions, as we saw with logistic regression.  With the mild power-law, however, we observe that this difference increases and then decreases again, at least in some cases. This suggests that iterates initially focus in certain eigenvector directions but, as SGD learns more about the means and $X^\star$, it focuses less on particular eigenvector directions (provided that the power-law is sufficiently mild and the number of classes is sufficiently large).

We note that, although our theorem allows for a number of classes logarithmic in the dimension, in all the examples plotted, the ODE perfectly predicts the loss curve even beyond the regime of our theorem. We observe some discrepancies when examining the second moment of the mean overlap, mainly for the mild power-law case, where determining the mean is more challenging due to noise in all directions. This discrepancy becomes more apparent when $\ls\asymp d$, which exceeds the number of classes covered by our theorem. In addition, we observe that, in the mild power-law setting, the loss decreases faster toward zero, suggesting that learning is more efficient when the variance is more uniform across different directions. As the number of classes grows, the decrease in the loss is even faster (see Figure \ref{fig:lin_05_L}). This agrees with the binary logistic regression, where the convergence of the loss in the extreme power-law regime is slower than in the mild power-law regime.

\begin{figure}[t]
\centering

% --- Panel 1 ---
\begin{minipage}[t]{0.23\textwidth}
    \centering
    \includegraphics[width=\linewidth,keepaspectratio]{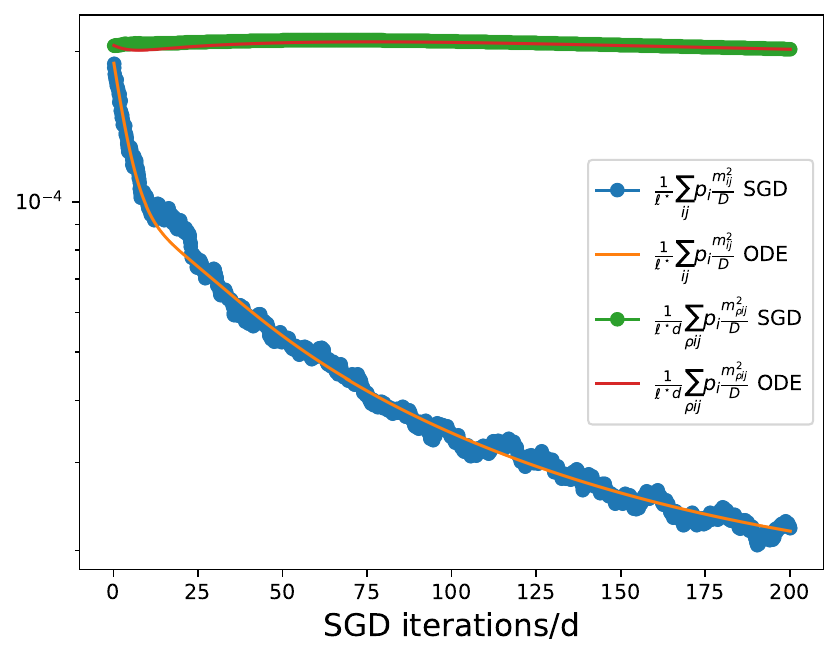}
    \medskip
    \textit{(a) $\ell_s=5$}
\end{minipage}\hspace{0.01\textwidth}%
% --- Panel 2 ---
\begin{minipage}[t]{0.23\textwidth}
    \centering
    \includegraphics[width=\linewidth,keepaspectratio]{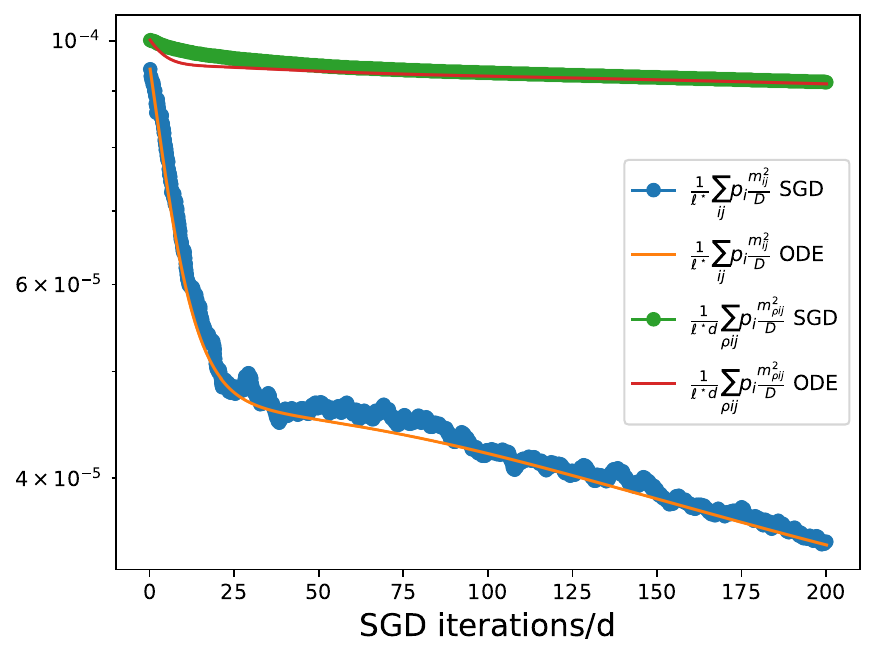}
    \medskip
    \textit{(b) $\ell_s=10$}
\end{minipage}\hspace{0.01\textwidth}%
% --- Panel 3 ---
\begin{minipage}[t]{0.23\textwidth}
    \centering
    \includegraphics[width=\linewidth,keepaspectratio]{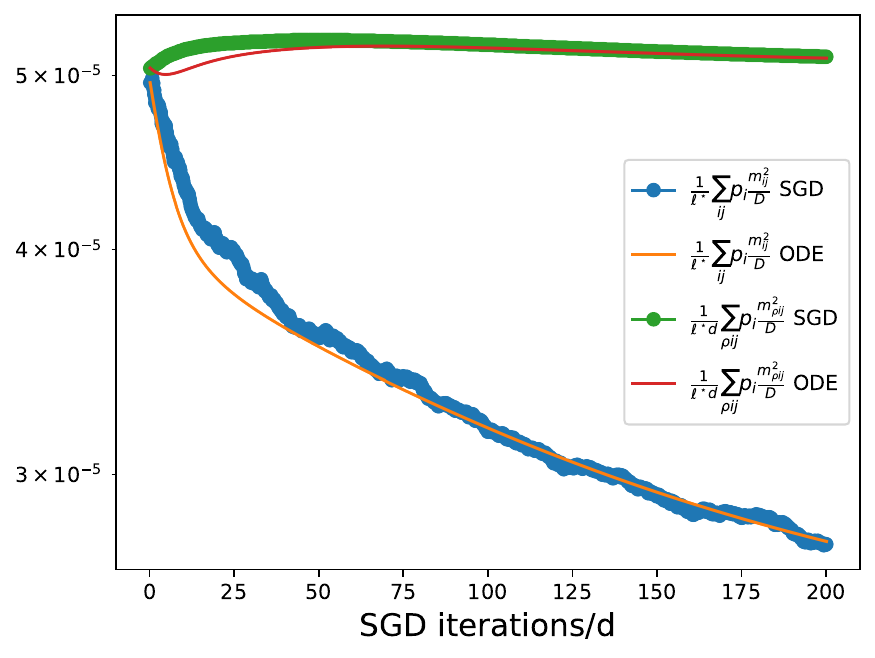}
    \medskip
    \textit{(c) $\ell_s=20$}
\end{minipage}\hspace{0.01\textwidth}%
% --- Panel 4 ---
\begin{minipage}[t]{0.23\textwidth}
    \centering
    \includegraphics[width=\linewidth,keepaspectratio]{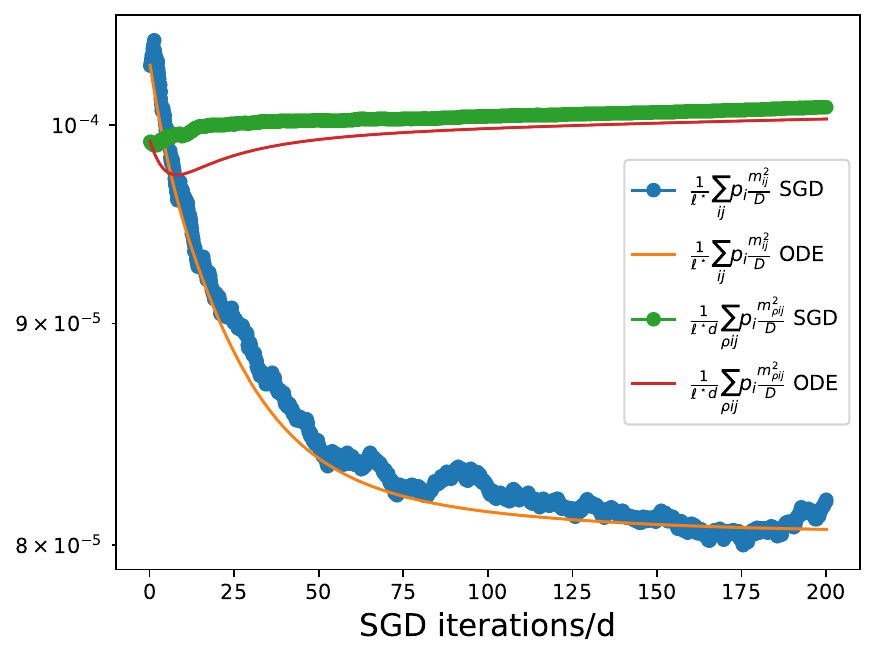}
    \medskip
    \textit{(d) $\ell_s=100$}
\end{minipage}

\caption{\textbf{Alignments for extreme power-law with MSE loss}, $\alpha = 1.3$, and random means.}
\label{fig:lin_13_m}
\end{figure}
\begin{figure}[t]
\centering

% --- Panel (a) ---
\begin{minipage}[t]{0.24\textwidth}
    \centering
    \includegraphics[width=\linewidth,keepaspectratio]{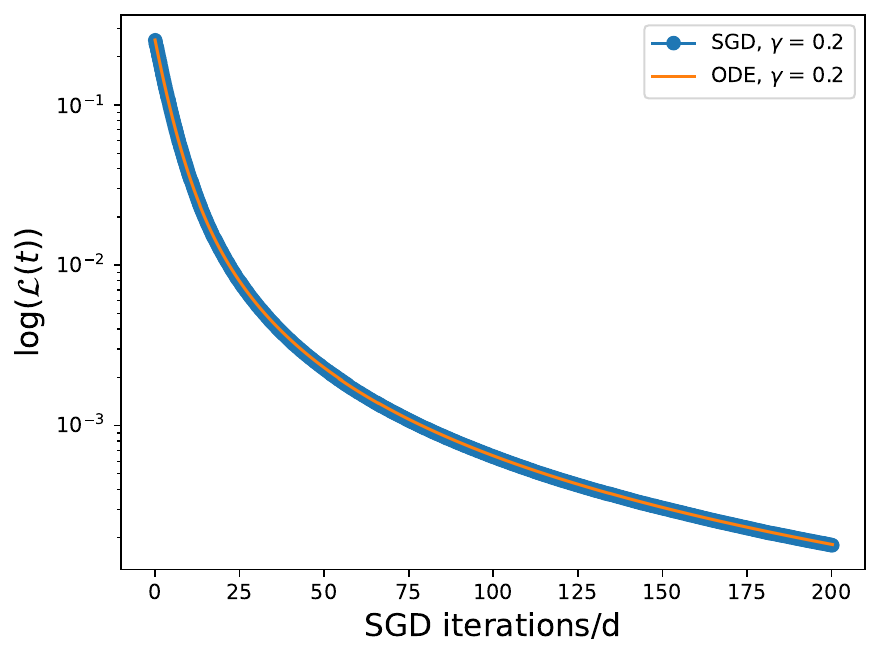}
    \medskip
    \textit{(a) $\ell_s=5$}
\end{minipage}\hspace{0.01\textwidth}%
% --- Panel (b) ---
\begin{minipage}[t]{0.24\textwidth}
    \centering
    \includegraphics[width=\linewidth,keepaspectratio]{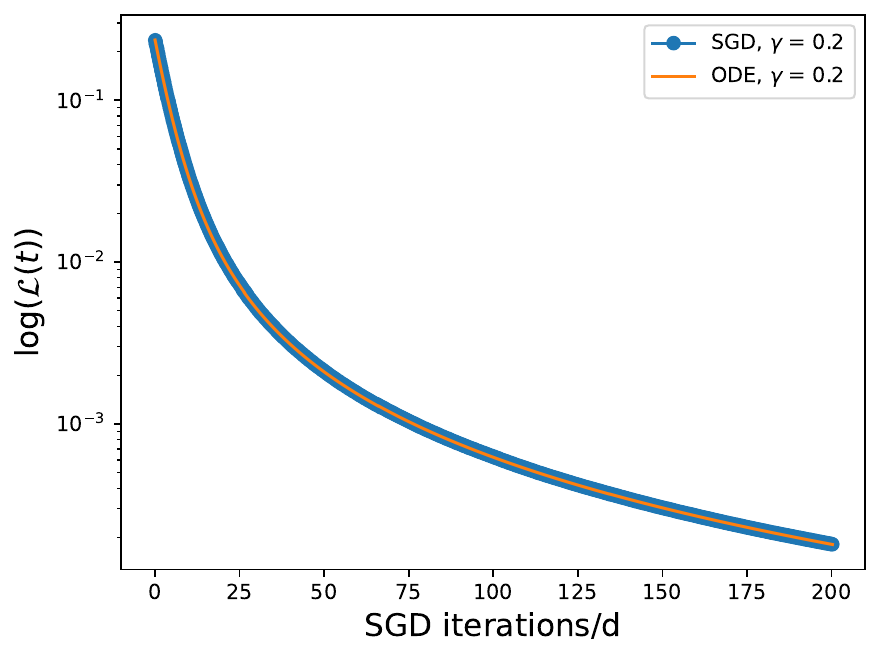}
    \medskip
    \textit{(b) $\ell_s=10$}
\end{minipage}\hspace{0.01\textwidth}%
% --- Panel (c) ---
\begin{minipage}[t]{0.24\textwidth}
    \centering
    \includegraphics[width=\linewidth,keepaspectratio]{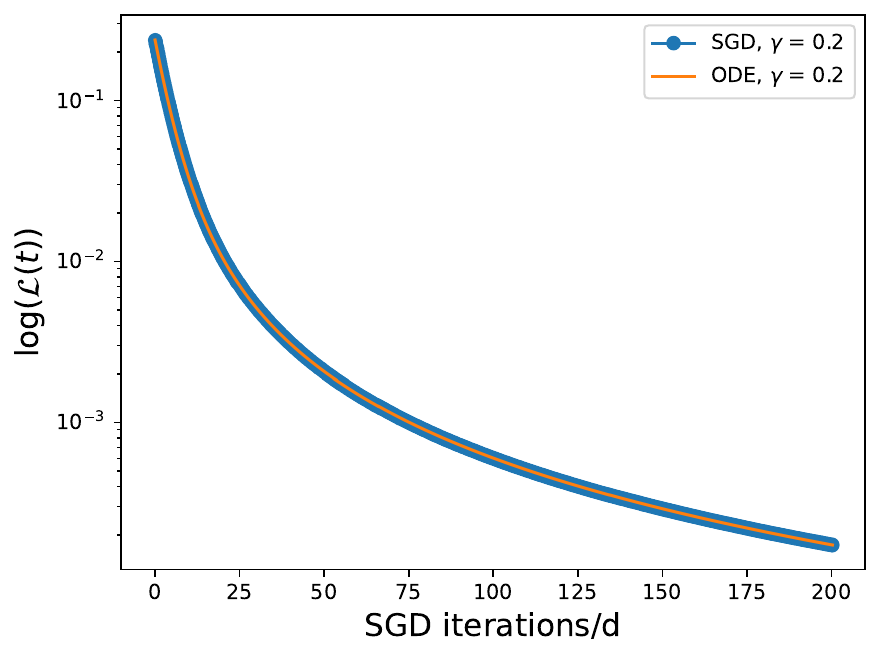}
    \medskip
    \textit{(c) $\ell_s=20$}
\end{minipage}\hspace{0.01\textwidth}%
% --- Panel (d) ---
\begin{minipage}[t]{0.24\textwidth}
    \centering
    \includegraphics[width=\linewidth,keepaspectratio]{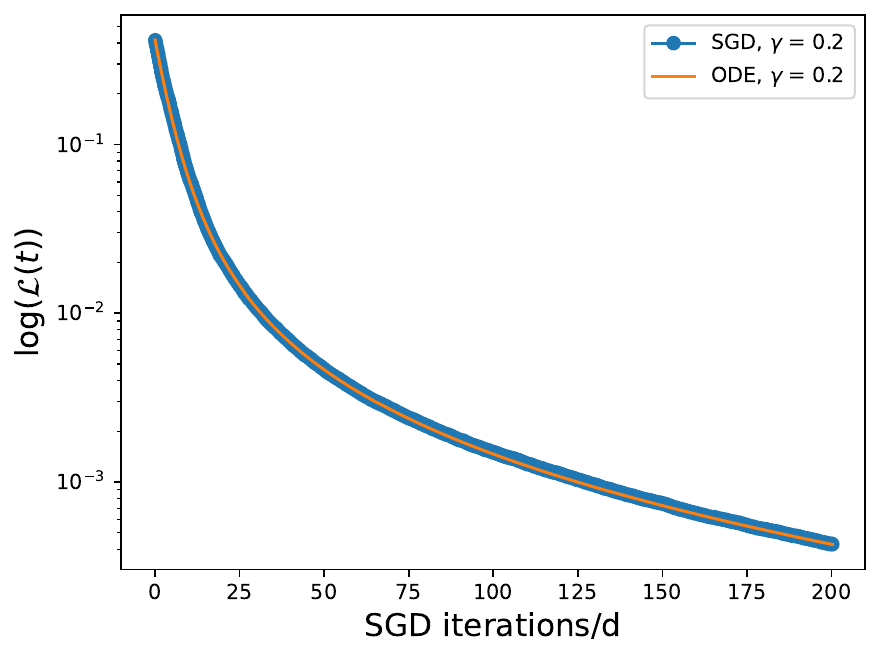}
    \medskip
    \textit{(d) $\ell_s=100$}
\end{minipage}

\caption{\textbf{Learning curves for extreme power-law with MSE loss.} $\alpha = 1.3$ and random means.}
\label{fig:lin_13_L}
\end{figure}
\begin{figure}[t]
\centering

% --- Panel (a) ---
\begin{minipage}[t]{0.23\textwidth}
    \centering
    \includegraphics[width=\linewidth,keepaspectratio]{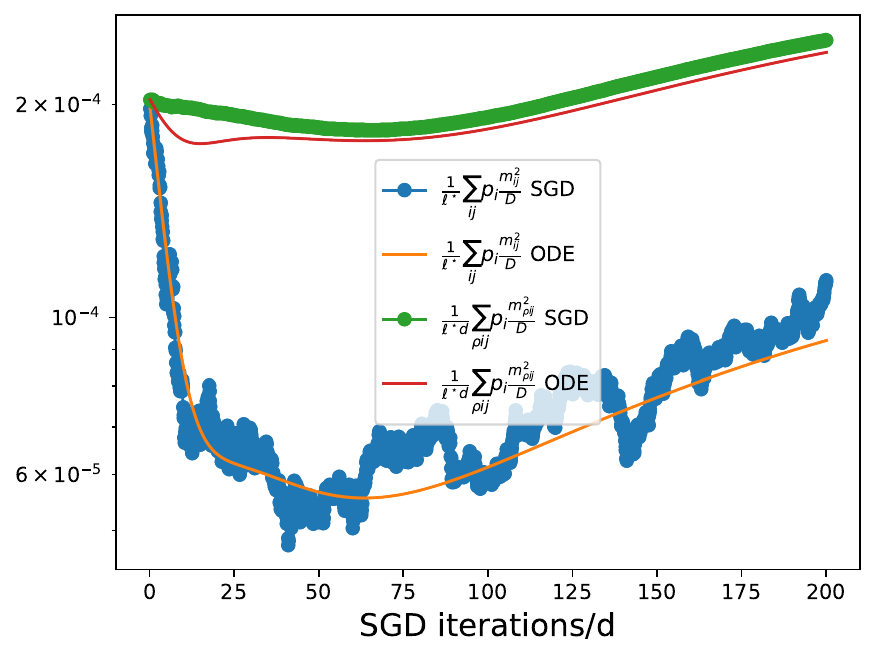}
    \medskip
    \textit{(a) $\ell_s=5$}
\end{minipage}\hspace{0.01\textwidth}%
% --- Panel (b) ---
\begin{minipage}[t]{0.23\textwidth}
    \centering
    \includegraphics[width=\linewidth,keepaspectratio]{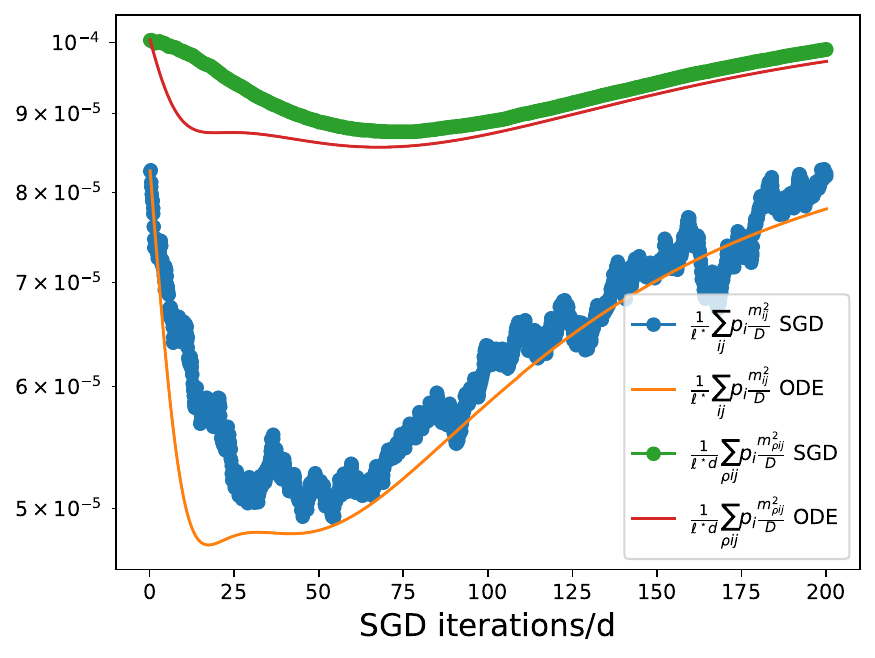}
    \medskip
    \textit{(b) $\ell_s=10$}
\end{minipage}\hspace{0.01\textwidth}%
% --- Panel (c) ---
\begin{minipage}[t]{0.23\textwidth}
    \centering
    \includegraphics[width=\linewidth,keepaspectratio]{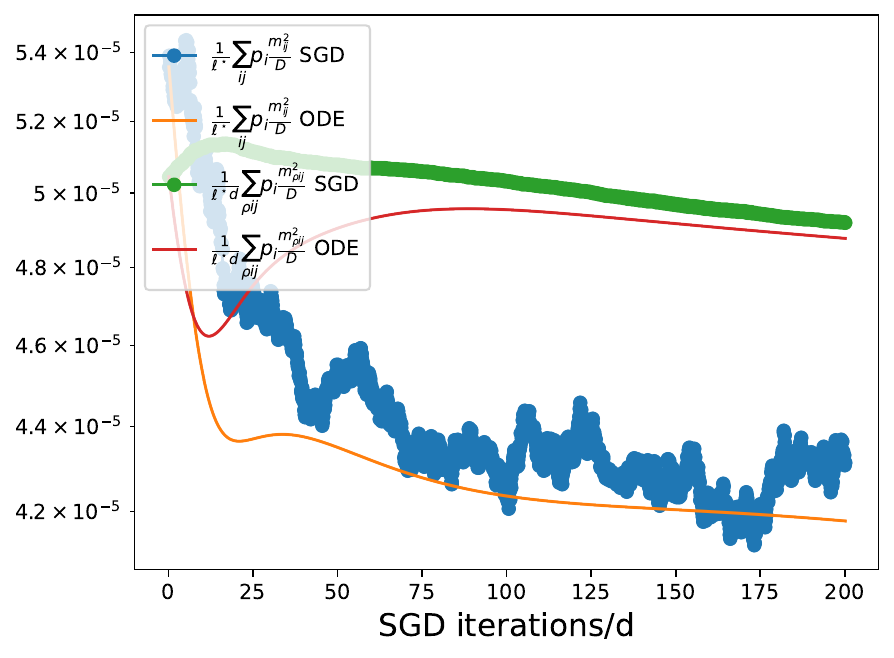}
    \medskip
    \textit{(c) $\ell_s=20$}
\end{minipage}\hspace{0.01\textwidth}%
% --- Panel (d) ---
\begin{minipage}[t]{0.23\textwidth}
    \centering
    \includegraphics[width=\linewidth,keepaspectratio]{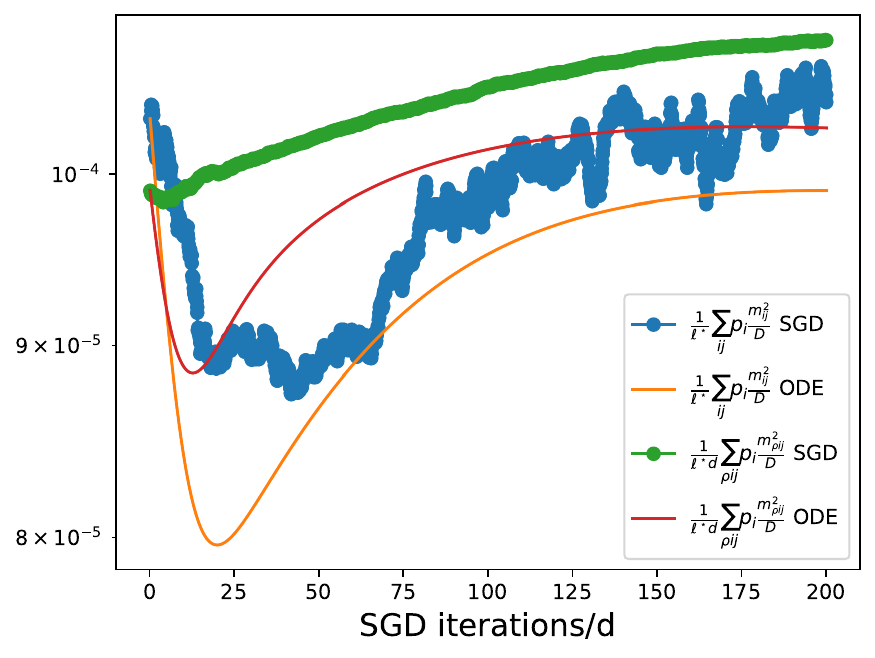}
    \medskip
    \textit{(d) $\ell_s=100$}
\end{minipage}

\caption{\textbf{Alignments for mild power-law with MSE loss.} $\alpha = 0.5$ and random means.}
\label{fig:lin_05_m}
\end{figure}
\begin{figure}[t]
\centering

% --- Panel (a) ---
\begin{minipage}[t]{0.24\textwidth}
    \centering
    \includegraphics[width=\linewidth,keepaspectratio]{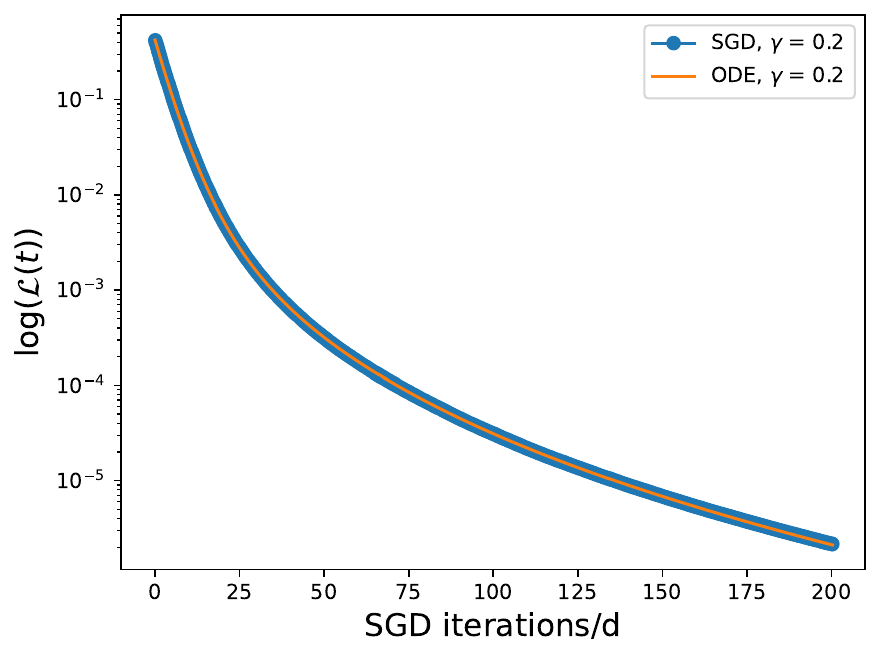}
    \medskip
    \textit{(a) $\ell_s=5$}
\end{minipage}\hspace{0.01\textwidth}%
% --- Panel (b) ---
\begin{minipage}[t]{0.24\textwidth}
    \centering
    \includegraphics[width=\linewidth,keepaspectratio]{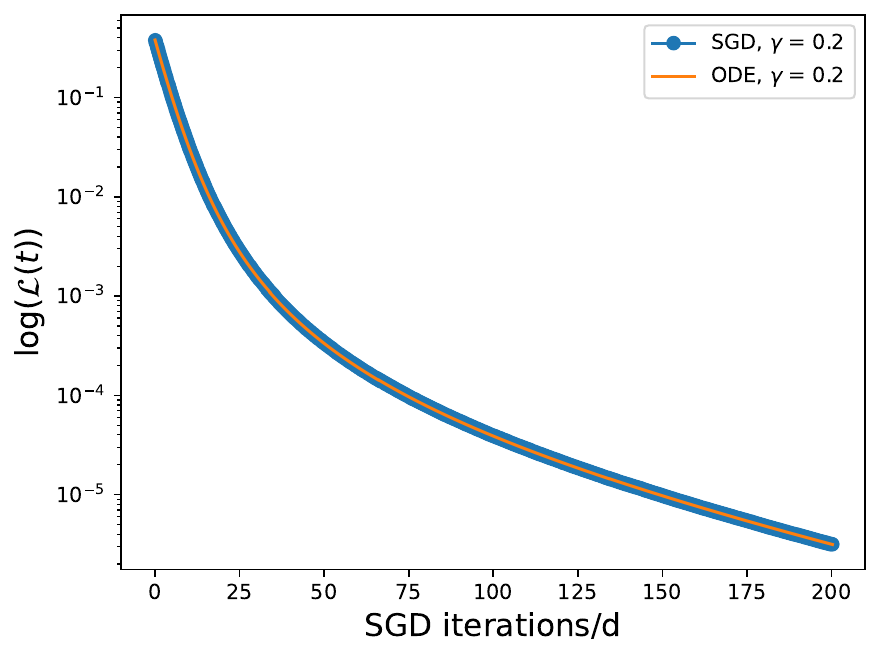}
    \medskip
    \textit{(b) $\ell_s=10$}
\end{minipage}\hspace{0.01\textwidth}%
% --- Panel (c) ---
\begin{minipage}[t]{0.24\textwidth}
    \centering
    \includegraphics[width=\linewidth,keepaspectratio]{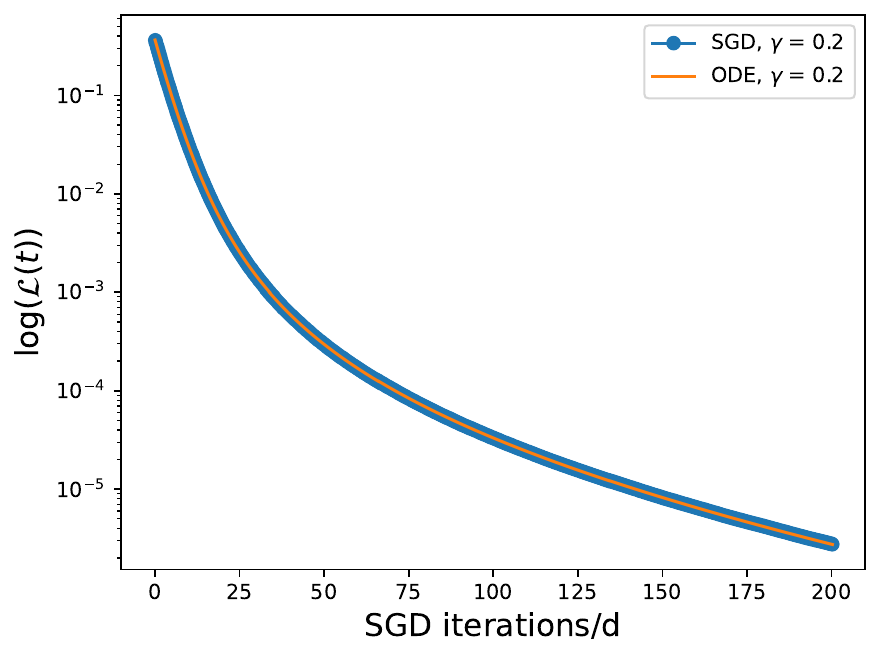}
    \medskip
    \textit{(c) $\ell_s=20$}
\end{minipage}\hspace{0.01\textwidth}%
% --- Panel (d) ---
\begin{minipage}[t]{0.24\textwidth}
    \centering
    \includegraphics[width=\linewidth,keepaspectratio]{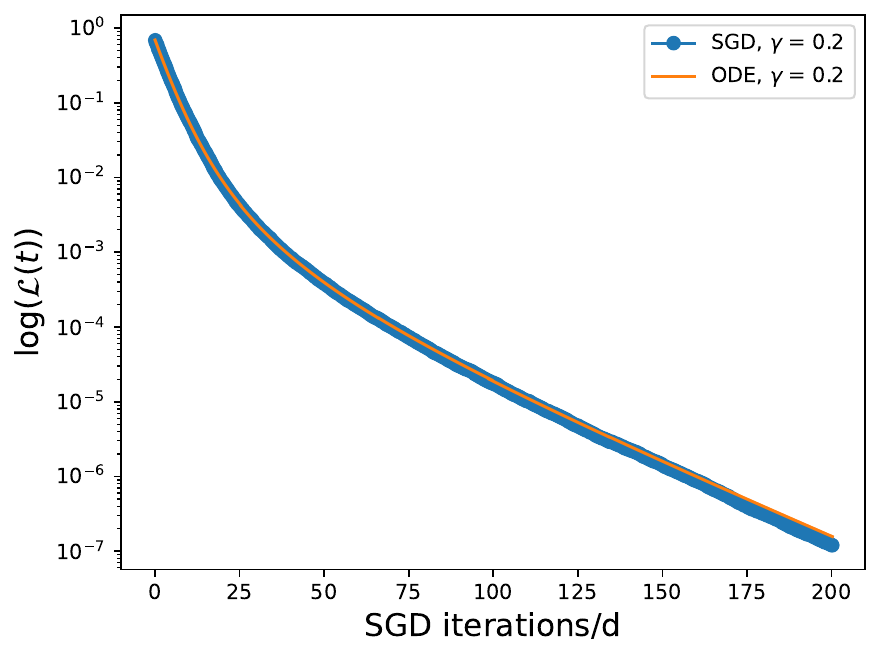}
    \medskip
    \textit{(d) $\ell_s=100$}
\end{minipage}

\caption{\textbf{Learning curves for mild power-law with MSE loss.} $\alpha = 0.5$ and random means.}
\label{fig:lin_05_L}
\end{figure}

\section{Proof of Theorem \ref{thm:main_risk_m_v}\label{sec:proof_main_thm}}
In this section, we prove the main theorem (Theorem \ref{thm:main_risk_m_v}) on the concentration of the risk curves and other statistics. Subsection \ref{sec:doob_decomp} provides the first steps for setting up the proof.  Subsection \ref{sec:integrodiff} provides the resolvent formulation of our solution, which is equivalent to system of ODEs given in Theorem \ref{thm:main_risk_m_v}, but more amenable to our proof techniques.  The heart of the argument comes in Subsection \ref{sec:approx_sol}, where we prove that SGD under the statistics gives an approximate solution to the integro-differential equation (see Proposition \ref{prop:approx_sol}).  Theorem \ref{thm:main_risk_m_v} follows from Proposition \ref{prop:approx_sol}, modulo some supporting lemmas proved in Subsections \ref{sec:supporting_lemmas} and \ref{sec:error_bds}.

\subsection{Doob decomposition}\label{sec:doob_decomp}

In this section, we aim to provide some intuition for where our solution comes from and also set up the key quantities that we will need to bound in the proof.

Our starting point is a Taylor expansion of $\varphi(W_{k+1})$ at the point $W_k$ (recall that $W_k=\hat{X}_k\oplus\mu$ where $\hat{X}_k=X_k$ in the hard label case and $\hat{X}_k=X_k\oplus X^\star$ in the soft label case).  
We begin with the simple case where $\varphi$ is quadratic, and it will turn out that this is all we need. Using the SGD update rule \eqref{eq:SGD_update_rule}, 
and setting $\Delta_{k,i} \defas  a_{k+1,i}\otimes \nabla_x f_{i}(r_{k,i},\e_{k+1})$ with $r_{k,i}\defas \ip{a_{k+1,i},\hat{X}_k}$ we obtain
\begin{align}\label{eq:Taylor_expansion}
    &\varphi(W_{k+1}) =  \varphi(W_{k})-\frac{\gamma_k}{d}\ip { \nabla\varphi(W_{k}), \Delta_{k,I_{k+1}}} 
   +  \frac{\gamma_k^2}{2d^2}\ip{\nabla^2\varphi(W_{k}),\Delta_{k,I_{k+1}}^{\otimes 2} } 
\end{align}

We define the $\sigma$-algebra $\cF_k \defas \sigma(\{W_i\}_{i=0}^k)$ generated by the iterates of SGD. We now relate this equation to its expectation by performing a Doob decomposition, involving the following martingale increments and error terms:
\begin{align}
\Delta \mathcal{M}_k^{\text{Grad}}(\varphi) &\defas  \frac{\gamma_k}{d} \ipa{\nabla \varphi(W_k),  \Delta_{k,I_{k+1}}   
      - 
     \sum_{i = 1}^{\ls} p_i \EE \big [\Delta_{k,i}  \, \mid \, \mathcal{F}_k \big ]}, \\
\Delta \mathcal{M}_k^{\text{Hess}}(\varphi) &\defas \frac{\gamma_k^2}{2d^2} \left( \ipa{\nabla^2\varphi(W_{k}),\Delta_{k,I_{k+1}}^{\otimes 2} } - \ipa{\nabla^2\varphi(W_{k}),\sum_{i=1}^{\ls}p_i\EE \big [ \Delta_{k,i}^{\otimes 2} \,  \mid  \, \mathcal{F}_k \big ]}\right) ,
\label{eq:Hessian_Martingale}\\
\EE [\mathcal{E}_{k}^{\text{Hess}}(\varphi) \, | \, \cF_{k} ]    &\defas \frac{\gamma_k^2}{2d^2} \ipa{\nabla^2\varphi(W_{k}),\sum_{i=1}^{\ell}p_i\left(\EE \big [ \Delta_{k,i}^{\otimes 2} -   (K_i+\mu_i\mu_i^\top) \otimes \nabla_x f_{i}(r_{k,i},\e_{k+1})^{\otimes2}\,  \mid  \, \mathcal{F}_k \big ]\right)}.
\end{align}
Define $\cL_i(X_k)\defas \EE[f_i(r_{k,i})|I_{k+1} = i, \cF_k]$ for any $i \in [\ls]$ the gradient term then simplifies, 
\begin{equation}
\EE\left[\ip { \nabla\varphi(W_{k}), a_{k+1,I_{k+1}}\otimes \nabla_x f_{I_{k+1}}(r_{k,I_{k+1},\e_{k+1}})} \mid \cF_k\right]=\sum_{i=1}^{\ls}p_i \ip{\nabla\varphi(W_{k}), \nabla{\cL}_i(X_k)}.
\end{equation}
We can then write
\begin{align*}
    \varphi(W_{k+1}) &= \varphi(W_k) - \frac {\gamma_k}{d}\sum_{i=1}^{\ls} p_i\ip{\nabla \varphi(W_{k}),\nabla \cL_i(X_{k})}  
    \\\nonumber &+ \frac{\gamma_k^2}{2d^2} \sum_{i=1}^{\ls}p_i\ip{\nabla^2\varphi(W_{k}), (K_i+\mu_i\mu_i^\top) \otimes \EE [\nabla_x f_{i}(r_{k,i},\e_{k+1})^{\otimes2}\,  \mid  \, \cF_k \big ]} \\ 
    &+ \Delta \mathcal{M}_k^{\text{Grad}}(\varphi) 
    + \Delta \mathcal{M}_k^{\text{Hess}}(\varphi) 
    + \EE [\mathcal{E}_{k}^{\text{Hess}}(\varphi) \, | \, \cF_{k} ]    
\end{align*}
Extending $W_k$ into continuous time by defining $W_t = W_{\lfloor t \rfloor}$ and similarly for $r_{k,i}$, we sum up (integrate).  Recall also $\gamma(s)=\gamma_{\lfloor sd\rfloor}.$ For this, we introduce the forward difference
\[
(\Delta \varphi)(W_{j}) \defas \varphi(W_{j+1}) - \varphi(W_{j}),\]
giving us 
\[\varphi(W_{td}) = \varphi(W_0) + \sum_{j=0}^{\lfloor td \rfloor-1} (\Delta \varphi)(W_j)  \defas \varphi(W_0) + \int_0^{t} d \cdot (\Delta \varphi)(W_{sd}) \, \dif s + \xi_{td}(\varphi),
\]
where $ \displaystyle |\xi_{td}(\varphi)|= \bigg | \int_{(\lfloor td\rfloor-1)/d}^t d \cdot \Delta \varphi(W_{sd})  \,\dif s \bigg | \le \max_{0 \le j \le \lfloor td \rfloor} \{ | \Delta \varphi(W_j) | \}.$ With this, we obtain the Doob decomposition for SGD:
\begin{align}\label{eq:Doob_varphi}
\varphi(W_{td}) 
&= \varphi(W_0) - \sum_{i=1}^{\ls}p_i\int_0^{t} \gamma(s) \ip{\nabla \varphi(W_{sd}), \nabla \cL_i(X_{sd})} \, \dif s  \\
&
        +\frac{1}{2}\sum_{i=1}^{\ls}p_i\int_0^{t}\gamma({s})^2 \ipa{\nabla^2\varphi(W_{sd}), \frac{1}{d}(K_i+\mu_i\mu_i^\top) \otimes \EE [\nabla f_{i}(r_{sd,i},\e)^{\otimes2}\,  \mid  \, \cF_{sd} \big ]} 
 \, \dif s  \nonumber
        \\
        & +  \sum_{j=0}^{\lfloor td \rfloor-1} \mathcal E_j^{\text{all}}(\varphi)  + \xi_{td}(\varphi),
        \nonumber 
\\
\text{with} \quad \mathcal E^{\text{all}}_j(\varphi) &= 
\Delta \mathcal{M}_{j}^{\text{Grad}}(\varphi) 
+ \Delta \mathcal{M}_{j}^{\text{Hess}}(\varphi) \label{eq:error_terms_integrated}  + \EE [\mathcal{E}_{j}^{\text{Hess}}(\varphi) \, | \, \mathcal{F}_{j} ]\nonumber. 
\end{align}

Having obtained this Doob decomposition, the remaining tasks are to show that the first two lines of \eqref{eq:Doob_varphi} are well-approximated by our deterministic representation, while the error terms $\mathcal E_j^{\text{all}}(\varphi)$ and $\xi_{td}(\varphi)$ are negligible.  The form of \eqref{eq:Doob_varphi} naturally suggests that statistics $\varphi(\cdot)$ should be expressible in terms of a differential equation involving derivatives of $\varphi,f_i,$ and their inner products with $K_i$ and $\mu_i$. It is not hard to see that, if one tries to analyze a class of all statistics arising in this way, one needs to also analyze statistics involving higher powers of $K_i$.  A concise and powerful way to control all such statistics is through resolvents.  We build on the approach used in \cite{collinswoodfin2023hitting}.  That paper required only a single resolvent, but since we have multiple covariance matrices and the dynamics generates higher-order polynomials of combinations of them, we work with a product of resolvents.  

In the next sub-section, we describe the resolvent formulation of our statistics and the resulting integro-differential equations.

\subsection{Integro-differential equation set-up}\label{sec:integrodiff}
In this section, we will introduce the integro-differential equation \eqref{eq:ODE_resolvent_2} (see below for specifics) and prove its equivalence to the differential equations for $\csV_\rho(t),\csm_{\rho,j}(t)$ that were introduced in Section \ref{sec:main_result_rho_ODEs_thm} in the main text.  While those differential equations can be easier to implement in the theorem, the representation described in this section will be an important tool for our proof.

The key idea behind our integro-differential equation \eqref{eq:ODE_resolvent_2} is that the resolvent of $K_i$ can be used to encode a variety of statistics by integrating functions of the resolvent on a contour encircling the spectrum of $K_i$.  We shall always work on a fixed contour $\Omega$ surrounding the spectra of $\{K_i\}_{i=1}^\ls$, given by $\Omega \stackrel{\text { def }}{=}\left\{w\in\mathbb{C}:|w|=\max \left\{2,2\|\cK\|_{\ls}\right\}\right\}$ with $\|\cK\|_{\ls} = \max_{i\le \ls}\|K_i\|_{\text{op}}$. We note that this contour is always a distance of at least 1 from the spectrum of any $K_i$.
When referring to complex integration over all $z_i$ we write for any function $f: \mathbb{C}^{\ls}\to \mathbb{C}.$ $$\oint f(z) \Dif z  \defas \oint_{\Omega^{\ls}} f(z)\prod_{j=1}^{\ls} \frac{-1}{2\pi \rm i} \dif z_j  $$
Define $\mathcal{K}=\{K_i\}_{i=1}^\ls$ and $z\in\mathbb{C}^\ls.$ We introduce the resolvent product
\begin{equation}\label{eq:R_ls_def}
    \cR_{\ls}(\mathcal{K},z)\defas\prod_{i=1}^\ls (K_i-z_iI_d)^{-1},\qquad z_i\in\Omega,
\end{equation}
and the mean matrix $\mu\defas\mu_1\oplus\cdots\oplus\mu_\ls$. We define two important statistics that will capture the dynamics of SGD:
\begin{equation}
    S(W,z)\defas \hat{X}^\top\cR_\ls \hat{X},\qquad M(W,z)\defas \hat{X}^\top\cR_\ls\mu.
\end{equation}
where (recalling $\lpp\defas\ell+\ls$) we have $S(W,z)\in\R^{\bar{\ell}\times\bar{\ell}}$, and $M(W,z)\in\R^{\bar{\ell}\times\ls}$, and 
\begin{equation}\label{def:Z(X,z)}
Z(W,z) = 
W^\top \cR_\ls W  = 
\begin{bmatrix}
S(W,z) 
& M(W,z)
\\
M(W,z)^\top  
 & \mu^\top \cR_\ls\mu
\end{bmatrix}\in \R^{ \lp \times \lp }
\end{equation}
 such that $W = \hat{X}\oplus \mu$, with $\hat{X} = X \oplus X^\star$ where we denote by $\lp \defas \ell+2\ls.$  
 
 We will show that $S,M,Z$ are approximate solutions to a system for $\mathscr{S},\mathscr{M},\mathscr{Z}$ described below.
\paragraph{Integro-Differential Equation for $(\mathscr{S}(t, z), \mathscr{M}(t, z))$.}
For any contour $\Omega \subset \mathbb{C}$ enclosing the eigenvalues of $\mathcal{K}$, we have an expression for the derivative of $\mathscr{Z}$ with respect to $t$:
\begin{equation}\label{eq:ODE_resolvent_2}
    \dif \mathscr{Z}(t,\cdot) 
    = \mathscr{F}(z, \mathscr{Z}(t, \cdot)) \, \dif t,\qquad 
\text{with initial condition }\mathscr{Z}(0,z)=Z(W_0,z)
\end{equation}
where $\mathscr{Z}(t, z) = \begin{bmatrix}
\mathscr{S}(t,z) &  \mathscr{M}(t,z)^\top \\
 \mathscr{M}(t,z) & \mu^\top \cR_\ls\mu
\end{bmatrix}$
and $$\mathscr{F}(z, \mathscr{Z}(t, \cdot)) \defas \begin{bmatrix}
\mathscr{F}_\mathscr{S}(z, \mathscr{Z}(t, \cdot))&  \mathscr{F}_\mathscr{M}(z, \mathscr{Z}(t, \cdot))^\top \\
 \mathscr{F}_\mathscr{M}(z, \mathscr{Z}(t, \cdot)) & 0_{\ls\times \ls}
\end{bmatrix}.$$ The maps $\mathscr{F}_\mathscr{S}, \mathscr{F}_\mathscr{M}$ are defined as follows:
\paragraph{Hard labels}
In this case, $X^\star = 0$ and $\hat{X} = X,$ therefore $\csS\in \mathbb{C}^{\ell \times \ell}$ and  $\csM\in \mathbb{C}^{\ell \times \ls}.$  We denote by $$\r_{t,i}\defas 
\sqrt{\mathscr{B}_i(t)}v +\mathcalligra{m}_i(t) \text{ with } v \sim \mathcal{N}(0, I_\ell)\text{ and }\e\sim\mathcal{N}(0,\sigma^2I_\ls)\text{ independent},$$ 
and $\mathcalligra{m}_i(t)\defas \oint \mathscr{M}_i(t,z) \Dif z$ and $\mathscr{B}_i(t)\defas \oint z_i\mathscr{S}(t,z) \Dif z$ with $\mathscr{M}_i$ is the $i$th column of $\mathscr{M}$, associated with the $\mu_i$ mean.  
In addition, we denote by $\nabla_x$ the derivative with respect to only $\theta$ but not $\e.$ The maps in the hard labels setting are:
\begin{equation}\label{eq:integro_SM_hard}\begin{split}
    \mathscr{F}_\mathscr{S}(z, \mathscr{Z}(t, \cdot))
    &\defas 
    -2\gamma(t)\sum_{i=1}^\ls p_i\left(\left(z_i \mathscr{S}(t,z)-\frac{1}{2\pi\mathrm{i}}\oint \mathscr{S}(t,z)\dif z_i\right)\EE[\nabla_x^2 f_i(\r_{t,i},\e)]+\mathscr{M}_i(t,z)\otimes\EE[\nabla_x f_i(\r_{t,i},\e)]\right)
    \\&
    \qquad\quad+\frac{\gamma(t)^2}{d}\left(\sum_{i=1}^\ls p_i\ip{K_i+\mu_i\mu_i^\top,\cR_\ls}\EE[\nabla_x f_i(\r_{t,i},\e)^{\otimes2}]\right)
    \\
    \mathscr{F}_\mathscr{M}(z, \mathscr{Z}(t, \cdot))
    & \defas -\gamma(t)\sum_{i=1}^\ls p_i\left(\left(z_i\mathscr{M}(t,z)-\frac{1}{2\pi\mathrm{i}}\oint \mathscr{M}(t,z)\dif z_i\right)\EE[\nabla_x^2f_i(\r_{t,i},\e)]+\ip{\mu_i,\cR_\ell\mu}_{\R^d}\otimes \EE[\nabla_x f_i(\r_{t,i},\e)]\right).
\end{split}\end{equation}
Note that $\EE[\nabla_x f_i(\r_{t,i},\e)]$ and $\EE[\nabla_x^2 f_i(\r_{t,i},\e)]$ are also expressible in terms of $\mathscr{S},\mathscr{M}$. 

\paragraph{Soft labels}
The vector $\r_t$ is now of size $\lpp=\ell+\ls$. We now use $\nabla_x$ to denote the derivative with respect to the first $\ell$ components of $\r$ and we use $\nabla_\star$ for the derivative with respect to the second $\ls$ components. We write  $x_{t,i}\in \R^\ell$ and $x^\star\in \R^\ls,$ such that $$\r_{t,i} = x_{t,i}\oplus x^\star \defas 
\sqrt{\mathscr{B}_i(t)}v +\mathcalligra{m}_i(t) \text{ with } v \sim \mathcal{N}(0, I_{\ell+\ls}).$$ 
 The integro-differential equation now reads as follows: 
\begin{equation}\label{eq:integro_SM_soft}\begin{split}
    \mathscr{F}_\mathscr{S}(z, \mathscr{Z}(t, \cdot))
    &\defas 
    -2\gamma(t)\sum_{i=1}^\ls p_i\left(\left(z_i \mathscr{S}(t,z)-\frac{1}{2\pi\mathrm{i}}\oint \mathscr{S}(t,z)\dif z_i\right)H_{1,i}(t;\csZ) +\mathscr{M}_i(t,z)\otimes H_{2,i}(t;\csZ)\right)
    \\&
    \qquad+\frac{\gamma(t)^2}{d}\sum_{i=1}^\ls p_iI_i(t;\csZ)
    \\
    \mathscr{F}_\mathscr{M}(z, \mathscr{Z}(t, \cdot))
    & \defas -\gamma(t)\sum_{i=1}^\ls p_i\left(H_{1,i}(t;\csZ)^\top\left(z_i\mathscr{M}(t,z)-\frac{1}{2\pi\mathrm{i}}\oint \mathscr{M}(t,z)\dif z_i\right)+\ip{\mu_i,\cR_\ell\mu}\otimes H_{2,i}(t;\csZ)\right)
\end{split}\end{equation}
with
\begin{align}
 H_{1,i}(t;\csZ) \defas \left[\begin{array}{cc}
\EE[\nabla_x^2 f(\r_{t,i})]  & 0_{\ell\times \ls}\\
\EE[\nabla_\star \nabla_x f(\r_{t,i})]  & 0_{\ls\times \ls}
\end{array}\right], \quad H_{2,i}(t;\csZ) \defas \left[\begin{array}{cc}
\EE[\nabla_x f(\r_{t,i})] \\
0_{\ls\times \ell} 
\end{array}\right] 
\end{align}
and 
\begin{align}I_i(t;\csZ) \defas \left[\begin{array}{cc}
\ip{K_i+\mu_i\mu_i^\top,\cR_\ls}\EE[\nabla_x f(\r_{t,i})^{\otimes2}] & 0_{\ell\times \ls}\\
0_{\ls\times \ell} & 0_{\ls\times \ls}
\end{array}\right]. \end{align}

\paragraph{Spectrally-decomposed solution}

The functions $\mathscr{S}(t,z)$ and $\mathscr{M}(t,z)$ can be expressed as solutions to the integro-differential equations presented above, but we can also write them in terms of the previously introduced quantities $\csV_\rho(t),\csm_{\rho,j}(t)$, which are defined on the eigenspaces of the covariance matrices $K_i$.  For hard labels, this solution was given in \eqref{eq:V_m_rho_2} and the soft label version is 
\begin{equation}\label{eq:V_m_rho_2_soft}\begin{split}
    \frac{\dif \csV_\rho}{\dif t}
    &=\sum_{i=1}^\ls p_i\left(-2\gamma(t)\left( \csV_\rho\lambda_\rho^{(i)}H_{1,i}(t;\csZ)+\csm_{\rho,i}\otimes H_{2,i}(t;\csZ)\right)+\gamma(t)^2I_i(t;\csZ)\right)\\ 
    \frac{\dif \csm_{\rho,j}}{\dif t}&=-\gamma(t)\sum_{i=1}^\ls p_i\left(\lambda_\rho^{(i)}\csm_{\rho,j}H_{1,i}(t;\csZ)+\ip{\mu_i,u_\rho}\ip{u_\rho,\mu_j}H_{2,i}(t;\csZ)d\right),
\end{split}\end{equation}

\begin{lemma}[Equivalence of spectrally-decomposed solution]\label{lem:spe_decom_m_S}
    The system \eqref{eq:ODE_resolvent_2} has solution
    \begin{equation}
    \csS(t,z)=\frac1d\sum_{\rho=1}^d\frac{\csV_\rho(t)}{\prod_{i=1}^{\ls}(\lambda_{\rho}^{(i)}-z_i)},\qquad
    \csM_j(t,z)=\frac1d\sum_{\rho=1}^d\frac{\csm_{\rho,j}(t)}{\prod_{i=1}^{\ls}(\lambda_{\rho}^{(i)}-z_i)}
    \end{equation}
    where $\csV_{\rho},\csm_{\rho,j}$ solve the system \eqref{eq:V_m_rho_2} for the hard label case or \eqref{eq:V_m_rho_2_soft} for the soft label case.
\end{lemma}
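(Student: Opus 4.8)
The plan is to verify directly that the proposed ansatz satisfies the integro-differential equation \eqref{eq:ODE_resolvent_2}, using the fact (from the theory of ODEs) that the solution is unique. First I would substitute the claimed formulas
\[
\csS(t,z)=\frac1d\sum_{\rho=1}^d\frac{\csV_\rho(t)}{\prod_{i}(\lambda_\rho^{(i)}-z_i)},\qquad
\csM_j(t,z)=\frac1d\sum_{\rho=1}^d\frac{\csm_{\rho,j}(t)}{\prod_{i}(\lambda_\rho^{(i)}-z_i)}
\]
into the left-hand side $\dif\csZ/\dif t$, which simply differentiates $\csV_\rho,\csm_{\rho,j}$ in $t$, and then replace $\dif\csV_\rho/\dif t$ and $\dif\csm_{\rho,j}/\dif t$ by the right-hand sides of \eqref{eq:V_m_rho_2} (resp.\ \eqref{eq:V_m_rho_2_soft}). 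The task is then to check that the resulting spectral sum matches $\mathscr{F}_\mathscr{S},\mathscr{F}_\mathscr{M}$ term by term.

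The key identities to deploy are the contour-integral evaluations that link the two representations. Since the contour $\Omega$ encircles all eigenvalues, each $\lambda_\rho^{(i)}$ lies inside $\Omega$, so for a function of the form $h(z)=\bigl(\prod_i(\lambda_\rho^{(i)}-z_i)\bigr)^{-1}$ the residue theorem gives $\oint h(z)\Dif z = 1$ (with the sign and $2\pi\mathrm i$ normalization built into $\Dif z$). More generally, $\oint z_i\csS(t,z)\Dif z = \frac1d\sum_\rho \lambda_\rho^{(i)}\csV_\rho(t) = \csB_i(t)$, which identifies the $\csB_i$ appearing in the definition of $\r_{t,i}$ with the one in \eqref{eq:V_m_rho_2}; similarly $\oint\csM_i(t,z)\Dif z = \frac1d\sum_\rho\csm_{\rho,i}(t)=\cm_i(t)$. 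This shows the Gaussian vector $\r_{t,i}$ is the same in both formulations, so the expectations $\EE[\nabla_x f_i(\r_{t,i},\e)]$, $\EE[\nabla_x^2 f_i(\r_{t,i},\e)]$, $\EE[\nabla_x f_i(\r_{t,i},\e)^{\otimes2}]$ (and the $H_{1,i},H_{2,i},I_i$ in the soft case) agree, and may be treated as fixed coefficients during the matching. I then match coefficients of $\bigl(\prod_i(\lambda_\rho^{(i)}-z_i)\bigr)^{-1}$: the $z_i\csS(t,z)-\frac1{2\pi\mathrm i}\oint\csS(t,z)\dif z_i$ combination in $\mathscr{F}_\mathscr{S}$ should, after a partial-fraction step in the single variable $z_i$, reproduce the $\lambda_\rho^{(i)}\csV_\rho$ factor in \eqref{eq:V_m_rho_2}; concretely $z_i/\prod_j(\lambda_\rho^{(j)}-z_j) = -1/\prod_{j\ne i}(\lambda_\rho^{(j)}-z_j) + \lambda_\rho^{(i)}/\prod_j(\lambda_\rho^{(j)}-z_j)$, and the first piece is exactly cancelled by the inner contour integral $\frac1{2\pi\mathrm i}\oint\csS\,\dif z_i$, leaving the clean $\lambda_\rho^{(i)}\csV_\rho$ term. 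The $\mathscr{M}_i\otimes\EE[\nabla_x f_i]$ term and the $\gamma(t)^2/d$ Hessian/Fisher term match their counterparts in \eqref{eq:V_m_rho_2} after using $\ip{K_i+\mu_i\mu_i^\top,\cR_\ls}=\sum_\rho(\lambda_\rho^{(i)}+\ip{\mu_i,u_\rho}^2)/\prod_j(\lambda_\rho^{(j)}-z_j)$, which follows from the spectral decomposition of $\cR_\ls$ and the commuting assumption (all $K_j$ share eigenvectors $u_\rho$). The $\mathscr{F}_\mathscr{M}$ matching is analogous, with the extra $d\ip{\mu_i,u_\rho}\ip{u_\rho,\mu_j}$ factor coming from $\ip{\mu_i,\cR_\ell\mu}$ expanded spectrally. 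Finally I would check the initial condition: at $t=0$, $\csV_\rho(0)=dX_0^\top u_\rho u_\rho^\top X_0$ and $\csm_{\rho,j}(0)=dX_0^\top u_\rho u_\rho^\top\mu_j$ reproduce $Z(W_0,z)=W_0^\top\cR_\ls W_0$ via the same spectral expansion, and invoke uniqueness (from the Lipschitz hypotheses in Assumption \ref{ass:risk_fisher}, or local uniqueness under Assumption \ref{ass:risk_fisher_U}) to conclude.

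The main obstacle I expect is bookkeeping rather than conceptual: carefully handling the partial-fraction decomposition in the multivariate complex variable $z=(z_1,\dots,z_\ls)$ and making sure the inner single-variable contour integrals $\frac1{2\pi\mathrm i}\oint(\cdot)\dif z_i$ pair correctly with the $z_i\cdot$ prefactors, especially tracking signs through the $\Dif z$ normalization $\prod_j\frac{-1}{2\pi\mathrm i}\dif z_j$. A secondary subtlety is confirming that the spectral sums converge and that the interchange of $\sum_\rho$ with $\oint$ is legitimate — but since the contour stays at distance $\ge 1$ from every $\lambda_\rho^{(i)}$ and there are only $d$ eigenmodes, this is immediate. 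One should also note the case $\ls$ growing with $d$ requires no change here, since Lemma \ref{lem:spe_decom_m_S} is an exact algebraic identity for each fixed $d$.
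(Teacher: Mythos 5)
Your proposal is correct and follows the same route as the paper: verify by substitution, using the multivariate partial-fraction step $z_i/\prod_j(\lambda_\rho^{(j)}-z_j)=\lambda_\rho^{(i)}/\prod_j(\lambda_\rho^{(j)}-z_j)-1/\prod_{j\neq i}(\lambda_\rho^{(j)}-z_j)$ together with the residue evaluation $-\frac{1}{2\pi\mathrm i}\oint\csS\,\dif z_i=\frac1d\sum_\rho\csV_\rho/\prod_{j\neq i}(\lambda_\rho^{(j)}-z_j)$ — which is precisely the paper's identity \eqref{eq:spectraldecomp_equividentity_S}, and likewise for $\csM$. One small caveat: you do not actually need the closing appeal to uniqueness of \eqref{eq:ODE_resolvent_2} — the lemma only asserts that the exhibited $\csS,\csM$ is \emph{a} solution, and the paper explicitly remarks that uniqueness of the resolvent system is neither verified nor needed; what \emph{is} used is uniqueness of the finite ODE system \eqref{eq:V_m_rho_2} (under Assumption \ref{ass:risk_fisher}), which guarantees $\csV_\rho,\csm_{\rho,j}$ are well defined.
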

\begin{proof}   We begin with the identity
\begin{equation}\label{eq:spectraldecomp_equividentity_S}\begin{split}
    \frac1d\sum_{\rho=1}^d\frac{\lambda_\rho^{(i)}\csV_\rho(t)}{\prod_{j=1}^{\ls}(\lambda_{\rho}^{(j)}-z_j)}
    &=\frac1d\sum_{\rho=1}^d\frac{\csV_\rho(t)}{\prod_{j\neq i}^{\ls}(\lambda_{\rho}^{(j)}-z_j)}
    +\frac{1}{d}\sum_{\rho=1}^d\frac{z_i\csV_\rho(t)}{\prod_{j=1}^{\ls}(\lambda_{\rho}^{(j)}-z_j)}\\
    &=-\frac{1}{2\pi\mathrm{i}}\oint\csS(t,z)\dif z_i+z_i\csS(t,z)
\end{split}\end{equation}
and by similar reasoning
\begin{equation}\label{eq:spectraldecomp_equividentity_M}
    \frac1d\sum_{\rho=1}^d\frac{\lambda_\rho^{(i)}\csm_{\rho,j}(t)}{\prod_{i=1}^{\ls}(\lambda_{\rho}^{(i)}-z_i)}
    =-\frac{1}{2\pi\mathrm{i}}\oint\csM_j(t,z)\dif z_i+z_i\csM_j(t,z).
\end{equation}
Using these identities and \eqref{eq:V_m_rho_2}, we see that $\csS(t,z),\csM(t,z)$ as defined in the lemma will solve \eqref{eq:ODE_resolvent_2}. 

\end{proof}
Note that the system \eqref{eq:V_m_rho_2} has a unique solution under Assumption \ref{ass:risk_fisher}, as the right side of \eqref{eq:V_m_rho_2} is Lipschitz. 
The lemma above tells us how to obtain a solution to \eqref{eq:ODE_resolvent_2} from the solution of \eqref{eq:V_m_rho_2}.  It is also possible to go in the reverse direction.  Given any solution to 
\eqref{eq:ODE_resolvent_2}, we see that, by computing the right hand side of \eqref{eq:spectraldecomp_equividentity_S} and \eqref{eq:spectraldecomp_equividentity_M} and performing partial fraction decomposition, one obtains a solution to the system \eqref{eq:V_m_rho_2}.  We do not attempt to verify if the system \eqref{eq:ODE_resolvent_2} has a unique solution, since this is not needed for our purposes.

\subsection{SGD is an approximate solution to the integro-differential equation}\label{sec:approx_sol}

The main task in proving Theorem \ref{thm:main_risk_m_v} is to show that SGD approximately solves \eqref{eq:ODE_resolvent_2}. 
We prove this under the more general setting of Assumption \ref{ass:risk_fisher_U}, a slightly weaker version of Assumption \ref{ass:risk_fisher} that still allows for locally unique solutions to \eqref{eq:V_m_rho_2}. In this version of the assumption, we do not require the functions $f_i$ to be differentiable everywhere, but only on some open set $\csU$ and we do not require the derivative functions to be Lipschitz, but merely $\eta$-PL. The trade-off is that, in this context, our theorem gives locally unique solutions on $\csU$ but not necessarily a unique global solution. If the derivative functions are Lipschitz ($\eta = 0$), then the set $\csU$ is the full space. 

\begin{assumption}[Risk and its derivatives]\label{ass:risk_fisher_U} 

Let $B_i\in \R^{\lpp \times \lpp}$ and $m_i\in \R^{\lpp }$, and $z\sim \mathcal{N}(0, I_{\lpp}),$ for $i\in [\ls].$ 
There exists an open set $\csU \subseteq \R^{\lpp\times \lpp}\times \R^{\lpp}, $ such that $B_i, m_i \in \csU.$
The functions   
$\E_z[\nabla f_i(\sqrt{B_i} z + m_i; \e)], $ $\E_z[\nabla f_i(\sqrt{B_i} z + m_i, \e)^{\otimes 2}]$ and $\E_z[\nabla^2 f_i(\sqrt{B_i} z + m_i;\e)], $ are $\eta$-PL with respect to $B_i, m_i$ for all $i\in[ \ls],$ 
with Lipschitz constants, $L_1, L_2, L_{22}>0$ respectively. 
\end{assumption}

To make the notion of ``approximate solution'' precise, we introduce the following definitions:

\begin{definition}[Stopping time $\tau_Q$] \label{def:stoping_time}For any $Q>0$
    \[
\tau_Q(\mathscr{Z})\defas\inf\left\{t\geq0:\mathscr{N}(t)>\lp Q \quad \textup{or} \quad \oint z\mathscr{S}(t,z)\Dif z\notin \csU \quad \textup{or} \quad \oint z \mathscr{M}(t,z)\Dif z\notin \csU \right\}
\]
where $\mathscr{N}(t)\defas\oint\Tr(\mathscr{Z}(t,z))\Dif z$. 
\end{definition}
  We remark that, in the case where $\mathscr{Z}=Z(W_{\ifl{td}},z)$, we have $\mathscr{N}(t)=\|W_{\ifl{td}}\|^2.$
  We will sometimes need an additional stopping time.  
\begin{definition}[Stopping time $\tau_{Q,\star}$] \label{def:stoping_time_star}For any $Q>0$
    \[
\tau_{Q,\star}(\mathscr{Z})\defas\inf\left\{t\geq0:\sup_{z\in \Omega^{\ls}}\|\mathscr{Z}(t,z)\|_{\textup{op}}> Q \quad \textup{or} \quad \oint z \mathscr{S}(t,z)\Dif z\notin \csU \quad \textup{or} \quad \oint z \mathscr{M}(t,z)\Dif z\notin \csU \right\}.
\]
\end{definition}
\begin{definition}[Omega-norm]
For any continuous function $A: \Omega^{\ls} \to (\mathbb C^{\lp}) ^{\otimes 2}$ 
\begin{equation}
\|A\|_{\Omega}\defas
\sup _{z \in \Omega^{\ls}}\|A(z)\|.
\end{equation}
\end{definition}
Note that this norm is slightly different from the supremum that appears in the definition of $\tau_{Q,\star},$ since that one involves operator norms.

\begin{definition}[$(\e, Q, T)$-approximate solution to the integro-differential equation]\label{def:approx_solution}
 For $Q, T, \varepsilon > 0$, we say that a continuous function $\mathscr{Z} \, : \, [0, \infty)  \times \mathbb{C}^\ls \to \mathbb R^{\lp\times \lp }$
 is an \textit{$(\e, Q, T)$-approximate solution} of \eqref{eq:ODE_resolvent_2} if 
\[
\sup_{0 \le t \le (\tau_Q \wedge T)} \big \| \mathscr{Z}(t, \cdot) -  \mathscr{Z}(0, \cdot) - \int_0^t \mathscr{F}(\cdot, \mathscr{Z}(s, \cdot) ) \, \dif s \big \|_{{\Omega}} \le \varepsilon
\]
with the initial condition  $\mathscr{Z}(0,\cdot) = W_0^\top \cR_\ls(\mathcal{K},\cdot) W_0$, and $W_0 = X_0 \oplus X^\star\oplus \mu$.  
\end{definition}

We are now ready to introduce the main result of this section:

\begin{proposition}[SGD is an approximate solution]\label{prop:approx_sol}
Fix $T, Q>0$ and $\delta \in (0,\frac{1}{2}).$ Then, $Z(W_{td},z)$ is an approximate solution w.o.p. for any $\lp = O(\log(d))$, that is 
     \begin{align}
       \sup_{0\le t\le (\htQ\wedge T)}\big \|{Z}(W_{td}, \cdot) -  {Z}(W_0, \cdot) - \int_0^t \mathscr{F}(\cdot, {Z}(W_{sd}, \cdot) ) \, \dif s \big \|_{{\Omega}} \le d^{-\delta} 
\quad \text{w.o.p.}  
   \end{align}
\end{proposition}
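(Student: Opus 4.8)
The plan is to establish the estimate first for each scalar entry of the two nontrivial blocks $S(W,z)=\hat X^\top\cR_\ls\hat X$ and $M(W,z)=\hat X^\top\cR_\ls\mu$ of $Z(W,z)$ — the third block $\mu^\top\cR_\ls\mu$ is constant in $t$, hence an exact solution of its (vanishing) part of $\mathscr{F}$ — and then to upgrade to the $\Omega$-norm by a supremum over a polynomially fine net of $\Omega^\ls$ together with a supremum over the $\lp\times\lp$ entries, collecting all exceptional events into one union bound. For a fixed $z$ and a fixed entry the statistic $\varphi$ is \emph{exactly} quadratic in $W$, so the Taylor expansion \eqref{eq:Taylor_expansion} has no remainder beyond the Hessian term and the Doob decomposition \eqref{eq:Doob_varphi} holds exactly; thus the task splits into (i) identifying the drift of \eqref{eq:Doob_varphi} with $\int_0^t\mathscr{F}(\cdot,Z(W_{sd},\cdot))\,\dif s$, and (ii) showing that $\sum_j\cE_j^{\text{all}}(\varphi)$ and $\xi_{td}(\varphi)$ are $o(d^{-\delta})$ w.o.p. on $[0,\htQ\wedge T]$.

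For (i), I would apply Gaussian integration by parts in $v_{k+1}$ (recall $a_{k+1,i}=\sqrt{K_i}v_{k+1}+\mu_i$) to write $\nabla\cL_i(X_k)=K_i\hat X_k\,\EE[\nabla_x^2 f_i(r_{k,i},\e)\mid\cF_k]+\mu_i\otimes\EE[\nabla_x f_i(r_{k,i},\e)\mid\cF_k]$ and then contract with $\nabla\varphi$. Since $\varphi$ is built from the resolvent product, the identity $\cR_\ls K_i=\prod_{j\ne i}(K_j-z_j)^{-1}+z_i\cR_\ls$, together with the fact that integrating $\cR_\ls$ around one loop of $\Omega$ deletes that factor, converts $\hat X_k^\top\cR_\ls K_i\hat X_k$ into the combination $z_iS-\tfrac{1}{2\pi\mathrm{i}}\oint S\,\dif z_i$ of $\mathscr{F}_{\mathscr{S}}$ and $\hat X_k^\top\cR_\ls\mu_i$ into $M_i$; the $\gamma^2$-line of \eqref{eq:Doob_varphi} matches the $\tfrac{\gamma^2}{d}\ip{K_i+\mu_i\mu_i^\top,\cR_\ls}\EE[\nabla_x f_i^{\otimes2}]$ term of $\mathscr{F}_{\mathscr{S}}$ directly because $\nabla^2\varphi$ is the constant resolvent tensor. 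The point that makes this an \emph{exact} match (rather than introducing an extra error) is that the conditional law of $r_{k,i}$ given $\cF_k$ is precisely $\mathcal{N}(\hat X_k^\top\mu_i,\hat X_k^\top K_i\hat X_k)$, i.e.\ exactly the law of the surrogate $\r_{t,i}$ that enters $\mathscr{F}(\cdot,Z(W_{td},\cdot))$ (whose parameters are recovered by $\oint z_iS(W_{td},z)\Dif z$ and $\oint M_i(W_{td},z)\Dif z$); so no Gaussian-replacement step is needed here. The soft-label case is identical after carrying the $X^\star$-block and the maps $H_{1,i},H_{2,i},I_i$ through the same steps. Passing from the telescoping sum to the time integral costs only $\xi_{td}(\varphi)$, which is handled in (ii).

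For (ii), all four contributions are bounded on the stopped event $\{t\le\htQ\}$, where $\|W_{\lfloor sd\rfloor}\|^2\le\lp Q$ controls the iterates and the pre-activations lie in $\csU$; with Assumption \ref{ass:risk_fisher} (or \ref{ass:risk_fisher_U}) and, when $\lpp=O(\log d)$, the Lipschitz hypothesis of Assumption \ref{ass:classes}B, this gives uniform $\mathrm{polylog}(d)$ bounds on the relevant derivative statistics of the $f_i$. A single increment then obeys $\ip{\nabla\varphi(W_k),\Delta_{k,i}}=O(\mathrm{polylog}(d))$ (the linear factor $\hat X_k^\top\cR_\ls a_{k+1,i}$ concentrates at scale $O(1)$, not $\sqrt d$, because every resolvent factor has operator norm $\le 1$ on $\Omega$), while $\ip{\nabla^2\varphi(W_k),\Delta_{k,i}^{\otimes2}}=O(d\,\mathrm{polylog}(d))$ (it is $a_{k+1,i}^\top\cR_\ls a_{k+1,i}$ times a bounded matrix). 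Hence the martingale increments $\Delta\mathcal{M}_j^{\text{Grad}}(\varphi),\Delta\mathcal{M}_j^{\text{Hess}}(\varphi)$ are $O(\mathrm{polylog}(d)/d)$ with predictable quadratic variations $O(\mathrm{polylog}(d)\,t/d)$ and $O(\mathrm{polylog}(d)\,t/d^2)$ respectively, so a Freedman-type inequality yields a deviation $d^{-\delta}$ with failure probability $\le\exp(-cd^{1-2\delta}/\mathrm{polylog}(d))$; and $\xi_{td}(\varphi)=O(\mathrm{polylog}(d)/d)$ w.o.p. by Gaussian concentration of the $a_{k+1,i}$ and a union bound over $k\le\lfloor td\rfloor$.

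The delicate term is $\EE[\cE_j^{\text{Hess}}(\varphi)\mid\cF_j]$, which isolates the failure of $\EE[\Delta_{k,i}^{\otimes2}\mid\cF_k]$ to factor as $(K_i+\mu_i\mu_i^\top)\otimes\EE[\nabla_x f_i^{\otimes2}\mid\cF_k]$. Its leading, $O(d)$-size, ``diagonal'' part equals exactly that factored tensor — which is already built into $\mathscr{F}$ — so only the remainder, produced by the correlation between $a_{k+1,i}$ and $\nabla_x f_i(\hat X_k^\top a_{k+1,i})$, survives; a second Gaussian integration by parts shows that, contracted against the resolvent tensor $\nabla^2\varphi$, this remainder produces $\hat X_k^\top K_i\cR_\ls K_i\hat X_k$ (norm $O(\lp\,\mathrm{polylog}(d))$) in place of $\Tr(\cR_\ls K_i)=O(d)$, so $\EE[\cE_j^{\text{Hess}}(\varphi)\mid\cF_j]=O(\mathrm{polylog}(d)/d^2)$ per step and $O(\mathrm{polylog}(d)/d)$ in total. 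Assembling (i) and (ii), taking the supremum over the $\lp\times\lp$ entries and over the $d^{O(\log d)}$ points of a polynomially fine net of $\Omega^\ls$ (on which $\cR_\ls$ is $\ls$-Lipschitz), the union bound over the resulting $e^{O((\log d)^2)}$ exceptional events still leaves a total failure probability $\le e^{-\omega(d)}$ with $\omega(d)/\log d\to\infty$, since each per-point bound is exponentially small in a power of $d$ (or can be made smaller than $e^{-C(\log d)^2}$ for any $C$ by raising the polylog thresholds). I expect the main obstacle to be precisely this control of $\EE[\cE_j^{\text{Hess}}(\varphi)\mid\cF_j]$ — the exact cancellation of the $O(d)$ terms — together with the bookkeeping required to keep every constant in the error estimates at most polynomial in $\lpp=O(\log d)$, so that the strong per-event concentration dominates the enlarged net and entry union bounds.
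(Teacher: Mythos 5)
Your proposal follows essentially the same strategy as the paper's proof: for a fixed $z$ and fixed entry of $Z$, the statistic is exactly quadratic so the Doob decomposition \eqref{eq:Doob_varphi} is exact; the drift is identified with $\int_0^t\mathscr{F}$ via Stein/Gaussian integration by parts together with the observation that $\oint z_iS(W_{sd},\cdot)\Dif z$ and $\oint M_i(W_{sd},\cdot)\Dif z$ recover exactly the conditional mean and covariance of $r_{\lfloor sd\rfloor,i}$; the two martingales, the Hessian error, and the time-discretization remainder are bounded on the stopped event $\{t\le\tau_Q\}$; and a polynomially fine net of $\Omega^\ls$ plus a union over entries upgrades the pointwise bounds to the $\Omega$-norm. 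This matches Sections~5.1--5.3 of the paper (Doob decomposition, Proposition~\ref{prop:approx_sol}, Lemmas~\ref{lem:net}, \ref{lem:martingale_grad_error}--\ref{lem:Hess_error}).

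Two places where your implementation differs in flavor and where you are currently a bit too glib. First, for the martingale bounds you invoke a ``Freedman-type inequality,'' asserting the increments are $O(\mathrm{polylog}(d)/d)$. They are not almost surely bounded at that scale — $\ip{\nabla\varphi(W_k),\Delta_{k,i}}$ is a quadratic-type functional of a Gaussian vector and has sub-exponential tails — so neither Azuma nor Freedman applies directly. The paper resolves this by truncating the Gaussian quantities at radius $\beta=d^\eta$, bounding the truncated martingale with Azuma, and then showing the untruncated and truncated martingales coincide w.o.p.\ (and their conditional means differ negligibly). Your plan needs this truncation step spelled out, or else replaced by a sub-exponential martingale concentration inequality; as stated it silently assumes bounded increments. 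Second, for $\EE[\cE_j^{\text{Hess}}\mid\cF_j]$ you propose a ``second Gaussian integration by parts'' producing a $\hat X_k^\top K_i\cR_\ls K_i\hat X_k$-type remainder. The paper instead conditions on the low-dimensional projection $r_{k,i}=\hat X_k^\top a_{k+1,i}$ (Lemma~\ref{lem:conditioning}), which exhibits $\EE[\Delta_{k,i}^{\otimes2}\mid\cF_k]$ as the factored tensor $(K_i+\mu_i\mu_i^\top)\otimes\EE[\nabla f_i^{\otimes2}]$ plus an explicit rank-$\lpp$ correction $E_{k,i}^{\Delta}$ involving the projection $\Pi_{k,i}$ onto the column span of $\sqrt{K_i}\hat X_k$; contracting $E_{k,i}^{\Delta}$ against the (operator-norm-bounded) resolvent tensor gives the $O(\lp^3/d^2)$ per-step bound. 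The two routes should give the same order of magnitude, but the conditioning argument gives the explicit structure of the remainder, which is what makes the ``exact cancellation of the $O(d)$ terms'' you rightly flag as delicate actually provable. Everything else — the use of the stopping time to control $\|W_{sd}\|$, the exactness of the drift identification, and the net/union-bound bookkeeping keeping $\lpp=O(\log d)$ factors polylogarithmic — matches the paper.
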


Proving this proposition is the main technical element of this paper. The proposition holds under a stopping time.  Once this proposition is proved, Theorem \ref{thm:main_risk_m_v} follows after showing that the stopping time can be removed under our assumptions (see Proposition \ref{prop:nonexplosiveness} in sub-section \ref{sec:supporting_lemmas}).

\subsection{Proof of Proposition \ref{prop:approx_sol} and supporting lemmas}\label{sec:proof_prop_approx_sol}

Before proving Proposition \ref{prop:approx_sol}, we give three pre-requisite lemmas.
We start with a bound on the norm $\|\cdot\|_{\Omega}$ in terms of the parameters. 
This is analogous to Lemma 5 in \cite{collinswoodfin2023hitting}. 
\begin{lemma}\label{lem:normequivalence}
Recall $\mathscr{N}(t) \defas   \oint\Tr (\mathscr{Z}(t, z) )\Dif z$.  Then the following hold, 
\[
C(\ls,\lp) \le
\frac{\| Z(W_{td}, \cdot ) \|_\Omega}{ \|W_{td}\|^2},
\frac{ \|\mathscr{Z}(t, \cdot )\|_{\Omega}}{\mathscr{N}(t)} \leq 1
\]
with $C(\ls,\lp) = \frac{1}{\sqrt{\lp}} \left(\frac{1}{\max\{2, 2\|\cK\|_{\ls}\}} \right)^\ls.$ 
Furthermore, we have
\[
\|\nabla Z(W_{td},z)\|_{\Omega}\leq\sqrt{\lp}\|W_{td}\|,\qquad \|\nabla^2 Z(W_{td},z)\|_{\Omega}\leq\lp.
\]
\end{lemma}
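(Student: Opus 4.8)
My plan is to reduce the whole lemma to one elementary fact: since (as already noted) the contour $\Omega$ lies at distance at least $1$ from the spectrum of every $K_i$, we have $\|(K_i-z_iI_d)^{-1}\|_{\op}\le1$ for each $z_i\in\Omega$, and hence $\sup_{z\in\Omega^{\ls}}\|\cR_\ls(\cK,z)\|_{\op}\le1$. Everything then follows by combining this with submultiplicativity of the Hilbert--Schmidt norm against the operator norm, with a crude bound on the contour integral, and with the spectrally-decomposed representation of Lemma \ref{lem:spe_decom_m_S}.

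The two \emph{upper} bounds are the easy direction. For $Z(W_{td},z)=W_{td}^\top\cR_\ls W_{td}$ one writes $\|W_{td}^\top\cR_\ls W_{td}\|\le\|W_{td}\|_{\op}\,\|\cR_\ls\|_{\op}\,\|W_{td}\|\le\|W_{td}\|^2$ and takes the supremum over $z$. For a general solution $\mathscr Z$ of \eqref{eq:ODE_resolvent_2} I would instead use $\mathscr Z(t,z)=\frac1d\sum_{\rho}\csZ_\rho(t)\prod_i(\lambda_\rho^{(i)}-z_i)^{-1}$ together with $|\lambda_\rho^{(i)}-z_i|\ge1$ on the contour, giving $\|\mathscr Z(t,z)\|\le\frac1d\sum_\rho\|\csZ_\rho(t)\|$; since each $\csZ_\rho(t)$ is positive semidefinite (it is the rank-one PSD matrix $d\,(u_\rho^\top W_{td})^\top(u_\rho^\top W_{td})$ when evaluated on SGD, and this propagates from the initial data), $\|\csZ_\rho(t)\|\le\Tr\csZ_\rho(t)$, and a one-variable residue computation ($\oint_\Omega\frac{-1}{2\pi\mathrm{i}}(\lambda-z)^{-1}\dif z=1$) identifies $\frac1d\sum_\rho\Tr\csZ_\rho(t)$ with $\mathscr N(t)$, so $\|\mathscr Z(t,\cdot)\|_\Omega\le\mathscr N(t)$.

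For the \emph{lower} bounds no further residue work is needed beyond what is already recorded: $\mathscr N(t)=\oint\Tr\mathscr Z(t,z)\,\Dif z$ by definition, and it has already been observed that this equals $\|W_{td}\|^2$ when $\mathscr Z=Z(W_{td},\cdot)$. It then suffices to estimate the integral bluntly: writing $R:=\max\{2,2\|\cK\|_\ls\}$ for the radius of $\Omega$, the total mass of $\prod_j\bigl|\tfrac{-1}{2\pi\mathrm{i}}\dif z_j\bigr|$ over $\Omega^\ls$ is $R^\ls$, and $|\Tr A|\le\sqrt{\lp}\,\|A\|$ for any $\lp\times\lp$ matrix; therefore $\mathscr N(t)=\bigl|\oint\Tr\mathscr Z(t,z)\,\Dif z\bigr|\le\sqrt{\lp}\,R^{\ls}\,\|\mathscr Z(t,\cdot)\|_\Omega$, which rearranges to $\|\mathscr Z(t,\cdot)\|_\Omega\ge C(\ls,\lp)\,\mathscr N(t)$ with $C(\ls,\lp)=\tfrac{1}{\sqrt{\lp}}R^{-\ls}$, exactly the stated constant (the $\|W_{td}\|^2$ version being the special case $\mathscr Z=Z(W_{td},\cdot)$).

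For the derivative bounds I would use that $W\mapsto Z(W,z)=W^\top\cR_\ls W$ is a homogeneous quadratic, so $\nabla Z(W,z)$ is linear in $W$ and $\nabla^2 Z(W,z)$ is a constant form; expanding in coordinates, the relevant tensor norm is a sum over the $O(\lp)$ free matrix indices of terms each controlled, via $\|\cR_\ls\|_{\op}\le1$, by $\|\cR_\ls W\|\le\|W\|$ (resp.\ by a constant), which yields $\|\nabla Z(W_{td},z)\|\le\sqrt{\lp}\,\|W_{td}\|$ and $\|\nabla^2 Z(W_{td},z)\|\le\lp$ after the supremum over $z\in\Omega^\ls$. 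None of this is deep; the only points requiring genuine care — and the main (minor) obstacle — are the index bookkeeping in the derivative bounds, so that the powers of $\lp$ come out exactly as claimed under the paper's tensor-norm conventions, and, for the general-solution form of the upper bound, verifying that positive semidefiniteness of $\csZ_\rho(t)$ is preserved by the flow \eqref{eq:V_m_rho_2}.
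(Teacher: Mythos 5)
Your argument is essentially the paper's for the bounds on $Z(W_{td},\cdot)$ and for the gradient and Hessian bounds: the same key inequality $\sup_{z\in\Omega^{\ls}}\|\cR_{\ls}(\cK,z)\|_{\op}\le1$, the same submultiplicativity argument for the upper bound, the same crude estimate $\mathscr{N}(t)\le R^{\ls}\sqrt{\lp}\,\|\mathscr{Z}\|_\Omega$ with $R=\max\{2,2\|\cK\|_{\ls}\}$ for the lower bound, and the same expansion of $\nabla Z,\nabla^2 Z$ as Kronecker-type tensors controlled via $\|\cR_{\ls}\|_{\op}\le1$. Where you diverge is the upper bound $\|\mathscr{Z}(t,\cdot)\|_\Omega\le\mathscr{N}(t)$ for a general ODE solution: the paper dismisses this as ``almost identical by definition,'' which is glossy since there is no $W$ with $\mathscr{Z}=W^\top\cR_{\ls}W$, whereas you route through the spectral representation $\mathscr{Z}(t,z)=\tfrac1d\sum_\rho\csZ_\rho(t)\prod_i(\lambda_\rho^{(i)}-z_i)^{-1}$ of Lemma \ref{lem:spe_decom_m_S}, bound $|\lambda_\rho^{(i)}-z_i|\ge1$, use $\|\csZ_\rho\|\le\Tr\csZ_\rho$ for PSD $\csZ_\rho$, and recover $\mathscr{N}(t)=\tfrac1d\sum_\rho\Tr\csZ_\rho(t)$ by a residue computation. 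This is actually a cleaner and more honest derivation — but you are right to flag the real prerequisite: that PSD-ness of $\csZ_\rho(t)$ is preserved by the flow \eqref{eq:V_m_rho_2} (it holds at $t=0$ since $\csZ_\rho(0)=dW_0^\top u_\rho u_\rho^\top W_0\succeq0$, but needs an argument for $t>0$); the paper never addresses this, so your version makes an implicit gap in the original explicit rather than introducing a new one.
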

\begin{proof} We first note that \begin{align}\label{eq:sup_R_ls}
    \sup_{z\in \Omega} \|\cR_\ell(\cK, z)\|_{\op} \le \sup_{z\in \Omega} \prod_{i=1}^\ls\|(K_i - z_iI_d)^{-1}\|_{\op}\le 1.
\end{align}
The norm of SGD iterate (with $|\Omega|$ denoting the length of each contour) is
    \[
        \|W_{td}\|^2 = \oint_{\Omega^{\ls}} \Tr (Z(W_{td}, z)) \prod_{j=1}^{\ls} \frac{-1}{2\pi i} \dif z_j \leq \left|\frac{\Omega}{2\pi}\right|^{\ls}\sqrt{\lp} \|Z \|_\Omega.
    \]
    On the other hand, 
    \[
    \| Z(W_{td}, \cdot ) \|_\Omega
    = 
    \sup_{z \in \Omega} \|\ip{ W_{td}^{\otimes 2}, \cR_\ls}_{(\R^d)^{\otimes2}}\|
    \leq \|W_{td}\|^2 \sup_{z \in \Omega} \|\cR_\ell(\cK, z)\|_\op
    \leq \|W_{td}\|^2.
    \]   
    The proof for the deterministic quantities $\mathscr{N}, \mathscr{Z}$ is almost identical by definition and, therefore, omitted. 
    For the gradient and Hessian bounds, we have
\begin{align*}
    \nabla Z(W,z)&\asymp I_{\lp}\otimes\ip{W,\cR_{\ls}(z;K)}_{\R^d}+\ip{W,\cR_{\ls}(z;K)}_{\R^d}\otimes I_{\lp},\\
    \nabla^2 Z(W,z)&\asymp I_{\lp}^{\otimes2}\otimes\cR_{\ls}(z;K)+\cR_{\ls}(z;K)\otimes I_{\lp}^{\otimes2}.
\end{align*}
Taking norms and using the bound \eqref{eq:sup_R_ls}, the lemma is proved.
\end{proof}

\begin{lemma}\label{lem:R_dif} For $z, \bar{z} \in \mathbb{C}^\ls$ the following formula holds
    \begin{align}
\|
     \cR_\ls(\cK,z)-\cR_\ls(\cK,\bar{z})    \|_{\textup{op}}&\le \sum_{i=1}^\ls \|\cR_{i}(\cK,z)\|_{\textup{op}}|\bar{z}_i-z_i|\left \|\prod_{m=i}^{\ls}\cR(K_m,\bar{z}_m) \right \|_{\textup{op}}
    \end{align} 
    with $\cR_{j}(\cK, z)$ as defined in Eq. \eqref{eq:R_ls_def} for any $j\in [\ls].$
\end{lemma}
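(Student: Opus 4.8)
The plan is to prove this by a telescoping identity. We want to express the difference of two resolvent products as a sum of terms, each of which isolates the change in a single factor. The key algebraic observation is that for invertible operators, one has the standard identity $A^{-1} - B^{-1} = A^{-1}(B - A)B^{-1}$; applied to the $i$th factor, $(K_i - z_i I_d)^{-1} - (K_i - \bar{z}_i I_d)^{-1} = (K_i - z_i I_d)^{-1}(\bar{z}_i - z_i)(K_i - \bar{z}_i I_d)^{-1}$, since $(K_i - \bar{z}_i I_d) - (K_i - z_i I_d) = z_i - \bar{z}_i$ times the identity. So the change in a single factor is exactly $(\bar{z}_i - z_i)$ times a product of two resolvents evaluated at the two different points.

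First I would write the telescoping decomposition
\[
\cR_\ls(\cK,z)-\cR_\ls(\cK,\bar{z})=\sum_{i=1}^\ls\left(\prod_{m<i}(K_m-z_m I_d)^{-1}\right)\left[(K_i-z_i I_d)^{-1}-(K_i-\bar{z}_i I_d)^{-1}\right]\left(\prod_{m>i}(K_m-\bar{z}_m I_d)^{-1}\right),
\]
which holds because adjacent terms in the telescoping sum cancel. Then I would substitute the single-factor identity above into the bracketed term, so that the $i$th summand becomes $\left(\prod_{m<i}(K_m-z_m I_d)^{-1}\right)(K_i-z_i I_d)^{-1}(\bar{z}_i-z_i)(K_i-\bar{z}_i I_d)^{-1}\left(\prod_{m>i}(K_m-\bar{z}_m I_d)^{-1}\right)$. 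Grouping the first $i$ factors (all evaluated at $z$) recovers $\cR_i(\cK,z)$ in the notation of \eqref{eq:R_ls_def}, and grouping factors $i$ through $\ls$ (all evaluated at $\bar z$) recovers $\prod_{m=i}^{\ls}\cR(K_m,\bar{z}_m)$. Finally, I would take operator norms, pull the scalar $|\bar z_i - z_i|$ out, and apply submultiplicativity of the operator norm together with the triangle inequality over the $\ls$ summands to obtain the stated bound.

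I do not expect a genuine obstacle here; the only point requiring minor care is bookkeeping — making sure the index conventions in $\cR_i(\cK,z)$ and $\prod_{m=i}^{\ls}\cR(K_m,\bar z_m)$ line up so that the factor $(K_i-z_i I_d)^{-1}$ is counted once on the left side (inside $\cR_i$) and the factor $(K_i-\bar z_i I_d)^{-1}$ is counted once on the right side (inside the product from $m=i$), matching exactly the single-factor identity after the scalar is extracted. Since all contour points lie on $\Omega$, which is distance at least $1$ from every spectrum, each individual resolvent norm is bounded by $1$ (as already recorded in \eqref{eq:sup_R_ls}), so the resulting bound is finite and, if one wanted a cruder estimate, could be simplified to $\sum_{i=1}^\ls|\bar z_i-z_i|$; but the stated form keeps the resolvent norms explicit, which is what later estimates will use.
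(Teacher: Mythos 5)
Your proof is correct and takes essentially the same route as the paper: the paper also reduces the difference of resolvent products to a telescoping sum combined with the single-factor resolvent identity, just presenting the telescoping recursively (Eq.~\eqref{eq:tel_R}) and then unrolling to the same expression $\sum_{i=1}^{\ls}\cR_i(\cK,z)(z_i-\bar z_i)\prod_{m=i}^{\ls}\cR(K_m,\bar z_m)$. One small sign typo: from your own observation $(K_i-\bar z_i I_d)-(K_i-z_i I_d)=(z_i-\bar z_i)I_d$, the scalar in the resolvent identity should be $(z_i-\bar z_i)$, not $(\bar z_i-z_i)$; this is harmless since only $|z_i-\bar z_i|=|\bar z_i-z_i|$ survives after taking norms.
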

\begin{proof}
    First, we recall the following two identities 
    \begin{align}\label{eq:tel_R}
     \cR_\ls(\cK,z)-\cR_\ls(\cK,\bar{z})    &= \cR_{\ls-1}(\cK,z)(\cR(K_\ls,z_\ls)-\cR(K_\ls,\bar{z}_\ls))
     \\\nonumber &+ ( \cR_{\ls-1}(\cK,z)- \cR_{\ls-1}(\cK,\bar{z}))\cR(K_\ls,\bar{z}_\ls)
    \end{align}
    and by the first resolvent identity for all $i\in [\ls]$: 
    \begin{align}\label{eq:R_i_dif}
    \cR(K_i,{z}_i)  - \cR(K_i,\bar{z}_i)  =    \cR(K_i,{z}_i)(z_i - \bar{z}_i)\cR(K_i,\bar{z}_i).
    \end{align}
    Applying Eq. \eqref{eq:R_i_dif} recursively in  Eq. \eqref{eq:tel_R}, we get
   \begin{align}
     \cR_\ls(\cK,z)-\cR_\ls(\cK,\bar{z})   =  \sum_{i=1}^\ls \cR_{i}(\cK,z)(z_i-\bar{z}_i)\prod_{m=i}^{\ls}\cR(K_m,\bar{z}_m)  
    \end{align} and the lemma follows.
    \end{proof}

In addition to the bounds above, the proof of Proposition \ref{prop:approx_sol} will require us to extract bounds on the norm $\|\cdot\|_\Omega$ based on bounds that hold for each fixed $z\in\Omega$ with overwhelming probability.  Since we cannot take a union probability bound over an uncountable set, we introduce a net argument, following a similar method as in \cite{collinswoodfin2023hitting}. The main difference is that the input dimension of the functions we work with is not scalar, but in $\mathbb{C}^\ls.$ In particular, the resolvent involves the multiplication of $\ls$ resolvents. We start by building a $d^{-\delta}$ mesh of $\Omega$, which we denote as $\Omega_\delta$ such that $\Omega_\delta \subset \Omega.$ In particular, for all $i\in[\ls]$ and $z_i\in \Omega$, there exists a $\bar{z}_i\in \Omega_\delta $  such that  $|z_i-\bar{z}_i|< d^{-\delta },$ with $\Omega_\delta$ having a cardinality, $|\Omega_\delta| = {C(|\Omega|)}d^{\delta }.$ (We use the notation $|\cdot|$ in two ways here: For the continuous set $\Omega$ it denotes contour length, and for the discrete set $\Omega_\delta$ it denotes cardinality).  The proof of the following lemma is deferred to the appendix.
\begin{lemma}[Net argument]\label{lem:net}
   Fix $T, Q,\delta >0.$ Suppose $\Omega_\delta$ is a $d^{-\delta}$ mesh of $\Omega$ with $|\Omega_\delta| = Cd^{\delta}$ for some $C>0$, and that the function $Z(t,z) = Z(W_{td},z)$ satisfies
   \begin{align}
       \sup_{0\le t\le ({\tau}_Q\wedge T)}\big \|{Z}(t, \cdot) -  {Z}(0, \cdot) - \int_0^t \mathscr{F}(\cdot, {Z}(s, \cdot) ) \, \dif s \big \|_{{\Omega_\delta}} \le \varepsilon
   \end{align}
   with ${\tau}_{Q}({Z})$ as in Definition \ref{def:stoping_time}. 
   Then $Z$ is an
   approximate solution to the integro-differential equation, that is  
     \begin{align}
       \sup_{0\le t\le ({\tau}_Q\wedge T)}\big \|{Z}(t, \cdot) -  {Z}(0, \cdot) - \int_0^t \mathscr{F}(\cdot, {Z}(s, \cdot) ) \, \dif s \big \|_{{\Omega}} \le 
       d^{-\delta} ({\lp})^{\alpha+3} +\e
   \end{align}
   where, recalling the Lipschitz constants $L_1,L_2,L_{22}$ from Assumption \ref{ass:risk_fisher}, the constant above can be expressed as $C =  C(T, L_1,L_2, L_{22}, \bar{\gamma}, \|\mu\|_\ls, \|\cK\|_\ls, Q)>0,$ with $\|\mu\|_\ls \defas \max_{i\in[\ls]}\|\mu_i\|.$ 
\end{lemma}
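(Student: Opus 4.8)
The plan is to control the $\|\cdot\|_\Omega$ norm of the "approximate solution defect" by its $\|\cdot\|_{\Omega_\delta}$ norm plus an error coming from the modulus of continuity of the integrand in $z$. Write $G(t,z)\defas Z(t,z)-Z(0,z)-\int_0^t\mathscr{F}(z,Z(s,\cdot))\,\dif s$. For any $z\in\Omega^\ls$, pick $\bar z\in\Omega_\delta$ with $|z_i-\bar z_i|<d^{-\delta}$ coordinatewise; then $\|G(t,z)\|\le\|G(t,\bar z)\|+\|G(t,z)-G(t,\bar z)\|$, and the first term is bounded by $\e$ by hypothesis. So everything reduces to showing that $z\mapsto G(t,z)$ is Lipschitz in $z$ (uniformly over $t\le\tau_Q\wedge T$), with Lipschitz constant at most $C(\lp)^{\alpha+3}$ times the stated constant, so that the displacement $d^{-\delta}$ in each of the $\ls$ coordinates costs at most $d^{-\delta}(\lp)^{\alpha+3}$ after absorbing $\ls\le\lp$ into the polynomial factor.

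First I would handle the two "easy" pieces of $G$. The term $Z(t,z)-Z(0,z)=W_{td}^\top(\cR_\ls(\cK,z)-\cR_\ls(\cK,\bar z))W_{td}-(\text{same at }W_0)$ is Lipschitz in $z$ via Lemma \ref{lem:R_dif}, with constant controlled by $\|W_{td}\|^2+\|W_0\|^2\le 2\lp Q$ on the event $t\le\tau_Q$ (using $\mathscr{N}(t)=\|W_{td}\|^2$ and the bound $\sup_z\|\cR_\ls\|_\op\le1$ from \eqref{eq:sup_R_ls}, so each of the $\ls$ summands in Lemma \ref{lem:R_dif} is at most $|z_i-\bar z_i|$). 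Then I would expand the integrand $\mathscr{F}(z,Z(s,\cdot))$ term by term according to \eqref{eq:integro_SM_hard} (or \eqref{eq:integro_SM_soft}): each block is a sum over $i\in[\ls]$ of products of (a) linear-in-$z$-or-contour-integral-of factors involving $\mathscr{S}(s,z),\mathscr{M}(s,z)$, multiplied by (b) expectations of derivatives of $f_i$ evaluated at $\r_{s,i}=\sqrt{\mathscr{B}_i(s)}v+\mathcalligra{m}_i(s)$, plus a $\gamma^2/d$ term carrying $\ip{K_i+\mu_i\mu_i^\top,\cR_\ls}$. The $z$-dependence enters through $\cR_\ls$ (hence Lemma \ref{lem:R_dif} again), through the explicit $z_i$ prefactors, and through $\mathscr{B}_i,\mathcalligra{m}_i$ — but those last two are contour integrals $\oint z_i\mathscr{S}\,\Dif z$, $\oint\mathscr{M}\,\Dif z$ which do not depend on the free variable $z$ at all, so the Lipschitz-in-$z$ analysis only sees them through their magnitudes, bounded via $\tau_Q$ and $\|\mathscr{S}\|_\Omega,\|\mathscr{M}\|_\Omega\le\mathscr{N}\le\lp Q$.

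The bookkeeping step is then to collect constants: each factor in a product contributes either a $\|\cR_\ls\|_\op$-type bound (at most $1$), a $z_i$-prefactor (at most $|\Omega|\asymp\max\{2,2\|\cK\|_\ls\}$), a norm-of-$\mathscr{Z}$ bound (at most $\lp Q$, and raised to the power $\alpha$ inside the PL / Lipschitz bounds on $\E[\nabla f_i],\E[\nabla^2 f_i],\E[\nabla f_i^{\otimes2}]$ from Assumption \ref{ass:risk_fisher}, contributing the $L_1,L_2,L_{22}$ and the $(\lp)^\alpha$), a $\bar\gamma$ or $\bar\gamma^2$ from the learning rate, and a $\|\mu\|_\ls^2$ from the mean terms; the outer sum over $i\in[\ls]$ and the $\int_0^t\le T$ each contribute one more factor of $\lp$ resp. $T$. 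Tallying the worst product (two $\mathscr{Z}$-factors times an $\alpha$-PL derivative gives $(\lp Q)^{2}\cdot(\lp Q)^{\alpha}$, plus the $\ls$ from the sum and one more $\lp$ from a Frobenius-vs-operator norm conversion à la Lemma \ref{lem:normequivalence}) yields the exponent $\alpha+3$, with all remaining constants absorbed into $C=C(T,L_1,L_2,L_{22},\bar\gamma,\|\mu\|_\ls,\|\cK\|_\ls,Q)$.

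The main obstacle I expect is the combinatorics of tracking the $\lp$-power in the regime $\lp=O(\log d)$: one must be careful that no step secretly introduces a factor like $\lp^{\lp}$ (e.g. from differentiating the $\ls$-fold product $\cR_\ls$, or from a naive union bound over coordinates), and that the Lipschitz bounds of Assumption \ref{ass:risk_fisher} are applied with the correct matrix arguments $\mathscr{B}_i=\oint z_i\mathscr{S}\,\Dif z$ and $\mathcalligra{m}_i=\oint\mathscr{M}_i\,\Dif z$ so that the $\tau_Q$-control genuinely bounds these arguments in Hilbert–Schmidt norm (using $\|\mathscr{Z}\|_\Omega\le\mathscr{N}$ from Lemma \ref{lem:normequivalence}). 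Once the per-term Lipschitz constants are in hand, summing the $\ls$ coordinate displacements $d^{-\delta}$ and the at-most-finitely-many additive terms of $G$ gives the claimed $d^{-\delta}(\lp)^{\alpha+3}+\e$ bound, and the restriction $t\le\tau_Q\wedge T$ is exactly what makes every intermediate quantity bounded.
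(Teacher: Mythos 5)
Your proposal follows essentially the same route as the paper's proof: triangle inequality against the nearest mesh point, Lipschitz-in-$z$ control of $Z(t,z)-Z(0,z)$ via Lemma \ref{lem:R_dif}, and the key observation that $\mathscr{B}_i=\oint z_i\mathscr{S}\,\Dif z$ and $\mathcalligra{m}_i=\oint\mathscr{M}_i\,\Dif z$ are contour integrals independent of the free variable $z$, so the expectations $\E[\nabla f_i],\E[\nabla^2 f_i],\E[\nabla f_i^{\otimes2}]$ need only be bounded in magnitude (via the $\alpha$-PL bound and $\tau_Q$), not differenced at $z$ versus $\bar z$. This matches the paper term-for-term.

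One bookkeeping slip: your tally lists $(\lp Q)^2\cdot(\lp Q)^\alpha$ plus an $\ls$ from the class-sum plus one more $\lp$ from a norm conversion, which would give $(\lp)^{\alpha+4}$, not the claimed $(\lp)^{\alpha+3}$. The $\ls$ from the sum is a double-count: $\sum_{i=1}^{\ls}p_i=1$, so the class-average costs only a $\max_i$, not a factor of $\ls$. The correct accounting is $\ls$ (from the telescoping over $\ls$ resolvents in Lemma \ref{lem:R_dif}) $\times\,\lp Q$ (from $\|W_{td}\|^2\le\lp Q$) $\times\,(\lp Q)^{\alpha+1}$ (from the $\alpha$-PL growth bound \eqref{eq:bound_E_nablaf} on $\|\E[\nabla f_i]\|$, whose argument norms $\|\cm_i\|,\|B_i\|$ are themselves $O(\lp Q)$), giving exactly $(\lp)^{\alpha+3}$. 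Worth fixing before a full write-up, but it doesn't change the argument.
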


\begin{remark}In the isotropic covariance case, as the net argument is not needed, the deterministic equivalence can be shown to hold for larger values of $\ls>d^c$ for some $c>0$ where the value of $c$ comes from the martingale bounds in Section \ref{sec:error_bds}.
\end{remark} 

Finally, we are ready to prove the main proposition.

\subsubsection{Proof of Proposition \ref{prop:approx_sol}}

\begin{proof}[Proof of Proposition \ref{prop:approx_sol}]
Denote for short $Z_{td} = Z(W_{td},z).$ Applying Eq. \eqref{eq:Doob_varphi} element-wise in $Z$, we obtain the following Doob decomposition for $Z$:
\begin{align}
Z_{td} = Z_{0} +\int_0^t\mathscr{F}(z, Z_{sd})\dif s+  \sum_{j=0}^{\lfloor td \rfloor-1} \mathcal E_j^{\text{all}}(Z)  + \xi_{td}(Z),     
\end{align}
with $\mathscr{F}$ as defined either in Eq. \eqref{eq:integro_SM_hard} or Eq. \eqref{eq:integro_SM_soft}. To show that $Z(W_{td},z)$ is an approximate solution, it amounts to bounding the above error terms. We thus have that, 
\begin{align}
&\sup_{0\le t\le (\htQ\wedge T)}\big \|Z_{td} -  Z_{0} - \int_0^t \mathscr{F}(\cdot, Z_{sd} ) \, \dif s \big \|_{{\Omega}}   
\\\nonumber & \le \sup_{0\le t\le (\htQ\wedge T)} \|\cM_{td}^{\text{Grad}}(Z)\|+\sup_{0\le t\le (\htQ\wedge T)} \|\cM_{td}^{\text{Hess}}(Z)\|
\\\nonumber & +\sup_{0\le t\le (\htQ\wedge T)} \|\sum_{j=0}^{\ifl{td} -1}\EE [\mathcal{E}_{j}^{\text{Hess}}(Z) \, | \, \mathcal{F}_{j} ]\|+ \sup_{0\le t\le (\htQ\wedge T)} \|\xi_{td}(Z)\|. 
\end{align}
The bounds on $\cM_{td}^{\text{Grad}},\cM_{td}^{\text{Hess}},\mathcal{E}_{j}^{\text{Hess}}$ are deferred to Section \ref{sec:error_bds}, where they can be found in Lemmas \ref{lem:martingale_error}, \ref{lem:Hess_error} respectively.  Next, fix a constant $\delta>0.$ Let $\Omega_\delta\subset \Omega$ such that for every $i\in [\ls]$ there exists $\bar{z}_i\in \Omega_\delta$ such that $|z_i-\bar{z}_i|\le d^{-\delta}$ with total cardinality  $|\Omega_\delta|^\ls =Cd^{\delta \ls}$ where $C>0$ depends on $\|\cK\|_\ls.$ 
By Lemmas \ref{lem:martingale_error}, \ref{lem:Hess_error} and an application of a union bound on the net, we have that, for any $\zeta>0$ for any $\ls = O(\log d)$, 
\begin{align}
\sup_{z\in \Omega_\delta^\ls}\sup_{0\le t\le (\htQ\wedge T)} \left(\|\cM_{td}^{\text{Grad}}(Z)\| +\|\cM_{td}^{\text{Hess}}(Z) \|+ \sum_{j=0}^{\ifl{td} -1}\|\EE [\mathcal{E}_{j}^{\text{Hess}}(Z) \, | \, \mathcal{F}_{j} ]\|\right)
\le d^{-\frac{1}{2}+\zeta} \quad \text{w.o.p.}
\end{align}
Next, we bound the deterministic error, 
\begin{align}\label{eq:det_error}
\sup_{0\le t\le (\htQ\wedge T)} \|\xi_{td}(Z)\|_{\Omega} \le  \frac{1}{d}\sup_{0\le t\le (\htQ\wedge T)} \|\mathscr{F}(z,Z(W_{td},z))\|_{\Omega}
\end{align}
where 
\begin{align}\label{eq:F_bounded}
\|\mathscr{F}(z,Z_{td})\|_{\Omega}\leq 2\|\mathscr{F}_{\csM}(z,Z_{td})\|_{\Omega}+\|\mathscr{F}_{\csS}(z,Z_{td})\|_{\Omega}.
\end{align}
We show the proof here for the hard labels. The proof for the soft labels is the same and therefore omitted. 
Let us start with $\mathscr{F}_{\csM}
$: 
\begin{align}
\|\mathscr{F}_{\csM}(z,Z_{td})\|_\Omega\le    \bar{\gamma}2|\Omega| \|M_{td}\|_\Omega \sup_{i\in[\ls]}\|\EE[\nabla^2f_i(r_{td})]\|+\bar{\gamma}\ls\|\mu\|_\ls^2\sup_{i\in[\ls]}\|\EE[\nabla f_i(r_{td})]\|
\end{align}
where we use \eqref{eq:sup_R_ls} and recall  $\|\mu\|_\ls = \max_{i\in [\ls]} \|\mu_i\|.$ Next, using Assumption \ref{ass:risk_fisher} , \eqref{eq:bound_E_nablaf} and Lemma \ref{lem:normequivalence} we obtain, 
\begin{align}
\|\mathscr{F}_{\csM}(z,Z_{td})\|_\Omega
&\le   C_{\csM}(\lp)^{\alpha+2}
\end{align}
with $C_{\csM} = C_{\csM}(\|\mu\|_\ls,\|\cK\|_\ls, \bar{\gamma}, Q, L_1, L_2).$
Similarly for $\mathscr{F}_{\csS},$  we obtain 
\begin{align}
 \|\mathscr{F}_{\csS}(z, Z_{td})\|_\Omega &\le    
 4\bar{\gamma}|\Omega|\|S_{td}\|_{\Omega} \sup_{i\in[\ls]} \|\EE [\nabla^2 f_i(r_{td})]\|
+ 2\bar{\gamma}\|M_{td}\|_{\Omega}\sup_{i\in[\ls]} \|\EE[\nabla f_i(r_{td})]\| \\\nonumber&+\bar{\gamma}^2(\|K\|_{\ls}+\|\mu\|^2_{\ls})\sup_{i\in[\ls]} \|\EE [\nabla f_i(r_{td})^{\otimes 2}]\|\le C_{\csS} (\lp)^{\alpha+2}
\end{align}
with $C_{\csS} = C_{\csS}(\|\mu\|_\ls,\|K\|_\ls, |\Omega|, \bar{\gamma}, Q, L_1, L_2, L_{22}).$ Therefore, plugging everything back into  Eq. \eqref{eq:det_error} and \eqref{eq:F_bounded}, there exist a constant $C>0$ such that 
\begin{align}
\sup_{0\le t\le (\htQ\wedge T)} \|\xi_{td}(Z)\|_{\Omega} \le  \frac{1}{d} C (\lp)^{\alpha+2}.
\end{align}

Combining all the errors, we deduce that, for some $C>0$ that doesn't depend on $d$,$n$ or $\lp$ and for any $\zeta>0$,
\begin{align}
       \sup_{0\le t\le (\htQ\wedge T)}\big \|{Z}_{td} -  {Z}_0 - \int_0^t \mathscr{F}(\cdot, {Z}_{sd}) \, \dif s \big \|_{{\Omega}_\delta} \le C\max(d^{-\frac{1}{2}+\zeta}, d^{-1}(\lp)^{\alpha+2})
\quad \text{w.o.p.}  
   \end{align}
Application of the net argument, Lemma \ref{lem:net}, finishes the proof after setting $\zeta  = 1-2\delta$ for any $\delta\in (0,\frac{1}{2}).$
\end{proof}

To complete the proof of Theorem \ref{thm:main_risk_m_v}, it remains only to verify the error bounds (Section \ref{sec:error_bds}) and to remove the stopping time that was in Proposition \ref{prop:approx_sol}.  In Section \ref{sec:supporting_lemmas} we prove non-explosiveness (see Proposition \ref{prop:nonexplosiveness}), which allows removal of the stopping time, and we also verify stability of solutions to our integro-differential equation.

\section{GMM Homogenized SGD (GMM-HSGD)\label{sec:GMM_homogenized}}
Our result can also be presented as a stochastic differential equation (SDE) that is close to SGD over the set of functions in Definition \ref{ass:statistic} with overwhelming probability. To be precise, one can show that $Z(\mathscr{W}_t, z)$ is an approximate solution with $\mathscr{W}_t = \hat{\mathscr{X}}_t\oplus \mu$, and $\hat{\mathscr{X}}_t = {\mathscr{X}}_t\oplus X^\star$ (soft labels) and $\hat{\mathscr{X}}_t = \mathscr{X}_t$ (hard labels). The process $\mathscr{X}$, is defined by the following SDE with $\mathscr{X}_0 = X_0$
We present here the SDE in the context of GMM. We omit the proof here, as the analysis is very similar to Section \ref{prop:approx_sol}. Having the SDE introduces an additional martingale error term, which, over the statistics, can be shown to be small (see also \cite{collinswoodfin2023hitting}, where it is done for Gaussian data from a single class).

We note that our analysis is in the setting where the learning rate is $\gamma/d$.  In this setting, one obtains an SDE in which both the drift and diffusion terms contribute and the path of the SDE concentrates in the high-dimensional limit.  One could also consider other scalings of the learning rate with $d$ which, in some cases, yield an SDE that does not concentrate (see \cite{arous2022high} for an example).  However, we do not analyze these other scalings.

\paragraph{GMM-HSGD - Hard label}

\begin{equation}\label{eq:HSGD_hard}
    \dif\csX_t=-\gamma(t)\sum_{i=1}^\ls p_i\nabla\cL_i(\csX_t)\dif t+\gamma(t)\Big\langle\sqrt{
    \frac{1}{d}\textstyle\sum_{i=1}^\ls p_i(K_i+\mu_i\mu_i^\top)\otimes\EE[\nabla f_{i}(\rho_{t,i}; \e_t)^{\otimes2}]},\dif B_t\Big\rangle_{\R^{\bar{\ell}\times d}}
\end{equation}
where the class-indexed loss functions $\{\cL_i\}_{i\leq\ell}$ are $\cL_i(X)\defas \EE[f_i(a^\top X)|a\in\text{class }i]$ and $\rho_{t,I}\defas  \hat{\csX}^\top a_I$.  The process $B_t$ is a standard Brownian motion in $\R^{\lpp\times d }$. We can further compute the gradient term (using Stein's lemma) as
\begin{equation}
    \nabla\cL_i(\csX_t)
    =K_i\csX_t\EE[\nabla^2 f_i(\rho_{t,i}; \e_t)]+\mu_i\EE[\nabla f_i(\rho_{t,i}; \e_t)].
\end{equation}

\paragraph{GMM-HSGD - Soft label}
\begin{equation}\label{eq:HSGD_soft}
    \dif\csX_t=-\gamma(t)\sum_{i=1}^\ls p_i\nabla\cL_i(\csX_t)\dif t+\gamma(t)\Big\langle\sqrt{
    \frac{1}{d}\textstyle\sum_{i=1}^\ls p_i(K_i+\mu_i\mu_i^\top)\otimes I_{i,t}},\dif B_t\Big\rangle_{\R^{\bar{\ell}\times d}}
\end{equation}
with $\cL_i(\csX)=\EE [f(a^\top \hat{\csX})\mid a\in \text{class $i$} ]$ and
\begin{equation}
    \nabla\cL_i(\csX_t)
    = K_i\csX_t H_{1,i,t}+\mu_i H_{2,i,t}.
\end{equation} such that 
\begin{align}
 H_{1,i,t} \defas \left[\begin{array}{cc}
\EE[\nabla_x^2 f(\rho_{t,i}; \e_t)]  & 0_{\ell\times \ls}\\
\EE[\nabla_\star \nabla_x f(\rho_{t,i};\e_t)]  & 0_{\ls\times \ls}
\end{array}\right], \quad H_{2,i,t} \defas \left[\begin{array}{cc}
\EE[\nabla_x f(\rho_{t,i}; \e_t)] \\
0_{\ls\times \ell} 
\end{array}\right] 
\end{align}
and 
\begin{align}I_{i,t} \defas \left[\begin{array}{cc}
\ip{K_i+\mu_i\mu_i^\top,\cR_\ls}_{\R^{d\times d}}\EE[\nabla_x f(\rho_{t,i};\e_t)^{\otimes2}] & 0_{\ell\times \ls}\\
0_{\ls\times \ell} & 0_{\ls\times \ls}
\end{array}\right]. \end{align}

%%%%%%%%%%%%%%

\appendix

\section*{Acknowledgments}
We thank Courtney Paquette and Elliot Paquette for fruitful discussions.  %We thank an anonymous reviewer for valuable suggestions that have improved the quality of this paper.

\section*{Funding}
The work of Elizabeth Collins-Woodfin was supported by Fonds de recherche du Qu\'ebec – Nature et technologies
(FRQNT) postdoctoral training scholarship (DOI https://doi.org/10.69777/344253), Centre de recherches math\'ematiques (CRM) Applied
math postdoctoral fellowship, and Institut Mittag-Leffler (IML) Junior Fellowship (Swedish Research Council, grant no. 2021-06594). The work
of Inbar Seroussi is supported by the Israel Science Foundation grant no. 777/25, the NSF-BSF grant no. 0603624011, and by the Alon fellowship.
%\section*{Data availability} The data underlying this article are available in the article. The figures and numerical experiments were all performed using synthetic Gaussian data with specified covariance. The exact details of the experiments are clearly explained in the figure captions. 
  
\bibliographystyle{abbrv}
\bibliography{refs}

@article{balasubramanian2023high,
  title={High-dimensional scaling limits and fluctuations of online least-squares {SGD} with smooth covariance},
  author={Balasubramanian, Krishnakumar and Ghosal, Promit and He, Ye},
  journal={arXiv preprint arXiv:2304.00707},
  year={2023}
}

@article {chandrasekher2021sharp,
    AUTHOR = {Chandrasekher, Kabir Aladin and Pananjady, Ashwin and
              Thrampoulidis, Christos},
     TITLE = {Sharp global convergence guarantees for iterative nonconvex
              optimization with random data},
   JOURNAL = {Ann. Statist.},
  FJOURNAL = {The Annals of Statistics},
    VOLUME = {51},
      YEAR = {2023},
    NUMBER = {1},
     PAGES = {179--210},
      ISSN = {0090-5364,2168-8966},
   MRCLASS = {62J02 (90C06 90C26)},
  MRNUMBER = {4564853},
       DOI = {10.1214/22-aos2246},
       URL = {https://doi.org/10.1214/22-aos2246},
}

@article{loureiro2021learning,
  title={Learning curves of generic features maps for realistic datasets with a teacher-student model},
  author={Loureiro, Bruno and Gerbelot, Cedric and Cui, Hugo and Goldt, Sebastian and Krzakala, Florent and Mezard, Marc and Zdeborov{\'a}, Lenka},
  journal={Advances in Neural Information Processing Systems},
  volume={34},
  pages={18137--18151},
  year={2021}
}

@article{celentano2021highdimensional,
  title={The high-dimensional asymptotics of first order methods with random data},
  author={Celentano, Michael and Cheng, Chen and Montanari, Andrea},
  journal={arXiv preprint arXiv:2112.07572},
  year={2021}
}

@inproceedings{goldt2022gaussian,
  title={The gaussian equivalence of generative models for learning with shallow neural networks},
  author={Goldt, Sebastian and Loureiro, Bruno and Reeves, Galen and Krzakala, Florent and M{\'e}zard, Marc and Zdeborov{\'a}, Lenka},
  booktitle={Mathematical and Scientific Machine Learning},
  pages={426--471},
  year={2022},
  organization={PMLR}
}

@article{wang2019solvable,
  title={A solvable high-dimensional model of {GAN}},
  author={Wang, Chuang and Hu, Hong and Lu, Yue},
  journal={Advances in Neural Information Processing Systems},
  volume={32},
  year={2019}
}

@article{saad1995exact,
  title={Exact solution for on-line learning in multilayer neural networks},
  author={Saad, David and Solla, Sara A},
  journal={Physical Review Letters},
  volume={74},
  number={21},
  pages={4337},
  year={1995},
  publisher={APS}
}

@article{biehl1994line,
  title={On-line learning with a perceptron},
  author={Biehl, Michael and Riegler, Peter},
  journal={Europhysics Letters},
  volume={28},
  number={7},
  pages={525},
  year={1994},
  publisher={IOP Publishing}
}

@article{biehl1995learning,
  title={Learning by on-line gradient descent},
  author={Biehl, Michael and Schwarze, Holm},
  journal={Journal of Physics A: Mathematical and general},
  volume={28},
  number={3},
  pages={643},
  year={1995},
  publisher={IOP Publishing}
}

@article{goldt2019dynamics,
  title={Dynamics of stochastic gradient descent for two-layer neural networks in the teacher-student setup},
  author={Goldt, Sebastian and Advani, Madhu and Saxe, Andrew M and Krzakala, Florent and Zdeborov{\'a}, Lenka},
  journal={Advances in neural information processing systems},
  volume={32},
  year={2019}
}

@article{PPAP01,
  title={Homogenization of {SGD} in high-dimensions: Exact dynamics and generalization properties},
  author={Paquette, Courtney and Paquette, Elliot and Adlam, Ben and Pennington, Jeffrey},
  journal={Mathematical Programming},
  pages={1--90},
  year={2024},
  publisher={Springer}
}

@article{LeeChengPaquettePaquette,
  title={Trajectory of mini-batch momentum: batch size saturation and convergence in high dimensions},
  author={Lee, Kiwon and Cheng, Andrew and Paquette, Elliot and Paquette, Courtney},
  journal={Advances in Neural Information Processing Systems},
  volume={35},
  pages={36944--36957},
  year={2022}
}

@ARTICLE{CollinsWoodfinPaquette01,
       author = {{Collins-Woodfin}, Elizabeth and {Paquette}, Elliot},
        title = "{High-dimensional limit of one-pass SGD on least squares}",
      journal = {Electronic Communications in Probability},
     keywords = {Mathematics - Probability, Mathematics - Statistics Theory, 60H30},
         year = 2024,
        volume = {29},
        pages = {1-15},
          doi = {10.1214/23-ECP571},
archivePrefix = {arXiv},
       eprint = {2304.06847},
 primaryClass = {math.PR},
       adsurl = {https://ui.adsabs.harvard.edu/abs/2023arXiv230406847C},
      adsnote = {Provided by the SAO/NASA Astrophysics Data System},
      abstract = {We give a description of the high-dimensional limit of one-pass single-batch stochastic gradient descent (SGD) on a least squares problem. This limit is taken with non-vanishing step-size, and with proportionally related number of samples to problem-dimensionality. The limit is described in terms of a stochastic differential equation in high dimensions, which is shown to approximate the state evolution of SGD. As a corollary, the statistical risk is shown to be approximated by the solution of a convolution-type Volterra equation with vanishing errors as dimensionality tends to infinity. The sense of convergence is the weakest that shows that statistical risks of the two processes coincide. This is distinguished from existing analyses by the type of high-dimensional limit given as well as generality of the covariance structure of the samples.}
}

@article{arous2021online,
  title={Online stochastic gradient descent on non-convex losses from high-dimensional inference},
  author={Ben Arous, Gerard and Gheissari, Reza and Jagannath, Aukosh},
  journal={The Journal of Machine Learning Research},
  volume={22},
  number={1},
  pages={4788--4838},
  year={2021},
  publisher={JMLRORG}
}

@article{arous2022high,
  title={High-dimensional limit theorems for {SGD}: Effective dynamics and critical scaling},
  author={Ben Arous, Gerard and Gheissari, Reza and Jagannath, Aukosh},
  journal={Advances in Neural Information Processing Systems},
  volume={35},
  pages={25349--25362},
  year={2022}
}

@article{saad1995dynamics,
  title={Dynamics of on-line gradient descent learning for multilayer neural networks},
  author={Saad, David and Solla, Sara},
  journal={Advances in neural information processing systems},
  volume={8},
  year={1995}
}

@article{Goldt,
  title={Modeling the influence of data structure on learning in neural networks: The hidden manifold model},
  author={Goldt, Sebastian and M{\'e}zard, Marc and Krzakala, Florent and Zdeborov{\'a}, Lenka},
  journal={Physical Review X},
  volume={10},
  number={4},
  pages={041044},
  year={2020},
  publisher={APS}
}

@article{Yoshida,
  title={Data-dependence of plateau phenomenon in learning with neural network---Statistical mechanical analysis},
  author={Yoshida, Yuki and Okada, Masato},
  journal={Advances in Neural Information Processing Systems},
  volume={32},
  year={2019}
}

@article{mignacco2020dynamical,
  title={Dynamical mean-field theory for stochastic gradient descent in gaussian mixture classification},
  author={Mignacco, Francesca and Krzakala, Florent and Urbani, Pierfrancesco and Zdeborov{\'a}, Lenka},
  journal={Advances in Neural Information Processing Systems},
  volume={33},
  pages={9540--9550},
  year={2020}
}

@article{dobriban2018high,
  title={High-dimensional asymptotics of prediction: Ridge regression and classification},
  author={Dobriban, Edgar and Wager, Stefan},
  journal={The Annals of Statistics},
  volume={46},
  number={1},
  pages={247--279},
  year={2018},
  publisher={JSTOR}
}

@article{azizyan2013minimax,
  title={Minimax theory for high-dimensional gaussian mixtures with sparse mean separation},
  author={Azizyan, Martin and Singh, Aarti and Wasserman, Larry},
  journal={Advances in Neural Information Processing Systems},
  volume={26},
  year={2013}
}

@inproceedings{lesieur2016phase,
  title={Phase transitions and optimal algorithms in high-dimensional gaussian mixture clustering},
  author={Lesieur, Thibault and De Bacco, Caterina and Banks, Jess and Krzakala, Florent and Moore, Cris and Zdeborov{\'a}, Lenka},
  booktitle={2016 54th Annual Allerton Conference on Communication, Control, and Computing (Allerton)},
  pages={601--608},
  year={2016},
  organization={IEEE}
}

@article{arous2025local,
  title={Local geometry of high-dimensional mixture models: Effective spectral theory and dynamical transitions},
  author={Ben Arous, Gerard and Gheissari, Reza and Huang, Jiaoyang and Jagannath, Aukosh},
  journal={arXiv preprint arXiv:2502.15655},
  year={2025}
}

@inproceedings{refinetti2021classifying,
  title={Classifying high-dimensional gaussian mixtures: Where kernel methods fail and neural networks succeed},
  author={Refinetti, Maria and Goldt, Sebastian and Krzakala, Florent and Zdeborov{\'a}, Lenka},
  booktitle={International Conference on Machine Learning},
  pages={8936--8947},
  year={2021},
  organization={PMLR}
}

@article{dandi2024universality,
  title={Universality laws for gaussian mixtures in generalized linear models},
  author={Dandi, Yatin and Stephan, Ludovic and Krzakala, Florent and Loureiro, Bruno and Zdeborov{\'a}, Lenka},
  journal={Advances in Neural Information Processing Systems},
  volume={36},
  year={2024}
}

@inproceedings{seddik2020random,
  title={Random matrix theory proves that deep learning representations of GAN-data behave as gaussian mixtures},
  author={Seddik, Mohamed El Amine and Louart, Cosme and Tamaazousti, Mohamed and Couillet, Romain},
  booktitle={International Conference on Machine Learning},
  pages={8573--8582},
  year={2020},
  organization={PMLR}
}

@inproceedings{chen2024achieving,
  title={Achieving optimal clustering in Gaussian mixture models with anisotropic covariance structures},
  author={Chen, Xin and Zhang, Anderson Ye},
  booktitle={The Thirty-eighth Annual Conference on Neural Information Processing Systems},
  year={2024}
}

@article{huang2025minimax,
  title={Minimax-Optimal Covariance Projected Spectral Clustering for High-Dimensional Nonspherical Mixtures},
  author={Huang, Chengzhu and Gu, Yuqi},
  journal={arXiv preprint arXiv:2502.02580},
  year={2025}
}

@article{mai2019high,
  title={High Dimensional Classification via Regularized and Unregularized Empirical Risk Minimization: Precise Error and Optimal Loss},
  author={Mai, Xiaoyi and Liao, Zhenyu},
  journal={stat},
  volume={1050},
  pages={25},
  year={2020}
}

@article{loffler2021optimality,
  title={Optimality of spectral clustering in the Gaussian mixture model},
  author={L{\"o}ffler, Matthias and Zhang, Anderson Y and Zhou, Harrison H},
  journal={The Annals of Statistics},
  volume={49},
  number={5},
  pages={2506--2530},
  year={2021},
  publisher={Institute of Mathematical Statistics}
}

@article{thrampoulidis2020theoretical,
  title={Theoretical insights into multiclass classification: A high-dimensional asymptotic view},
  author={Thrampoulidis, Christos and Oymak, Samet and Soltanolkotabi, Mahdi},
  journal={Advances in Neural Information Processing Systems},
  volume={33},
  pages={8907--8920},
  year={2020}
}

@article{loureiro2021learningGMM,
  title={Learning gaussian mixtures with generalized linear models: Precise asymptotics in high-dimensions},
  author={Loureiro, Bruno and Sicuro, Gabriele and Gerbelot, C{\'e}dric and Pacco, Alessandro and Krzakala, Florent and Zdeborov{\'a}, Lenka},
  journal={Advances in Neural Information Processing Systems},
  volume={34},
  pages={10144--10157},
  year={2021}
}

@inproceedings{ji2021fast,
  title={Fast margin maximization via dual acceleration},
  author={Ji, Ziwei and Srebro, Nathan and Telgarsky, Matus},
  booktitle={International Conference on Machine Learning},
  pages={4860--4869},
  year={2021},
  organization={PMLR}
}

@article{soudry2018implicit,
  title={The implicit bias of gradient descent on separable data},
  author={Soudry, Daniel and Hoffer, Elad and Nacson, Mor Shpigel and Gunasekar, Suriya and Srebro, Nathan},
  journal={Journal of Machine Learning Research},
  volume={19},
  number={70},
  pages={1--57},
  year={2018}
}

@article{bartlett2017spectrally,
  title={Spectrally-normalized margin bounds for neural networks},
  author={Bartlett, Peter L and Foster, Dylan J and Telgarsky, Matus J},
  journal={Advances in neural information processing systems},
  volume={30},
  year={2017}
}

@inproceedings{ji2021characterizing,
  title={Characterizing the implicit bias via a primal-dual analysis},
  author={Ji, Ziwei and Telgarsky, Matus},
  booktitle={Algorithmic Learning Theory},
  pages={772--804},
  year={2021},
  organization={PMLR}
}

@inproceedings{nacson2019convergence,
  title={Convergence of gradient descent on separable data},
  author={Nacson, Mor Shpigel and Lee, Jason and Gunasekar, Suriya and Savarese, Pedro Henrique Pamplona and Srebro, Nathan and Soudry, Daniel},
  booktitle={The 22nd International Conference on Artificial Intelligence and Statistics},
  pages={3420--3428},
  year={2019},
  organization={PMLR}
}

@article{Bruno,
  title={Escaping mediocrity: how two-layer networks learn hard single-index models with {SGD}},
  author={Arnaboldi, Luca and Krzakala, Florent and Loureiro, Bruno and Stephan, Ludovic},
  journal={CoRR},
  year={2023}
}

@inproceedings{arnaboldi2023highdimensional,
  title={From high-dimensional \& mean-field dynamics to dimensionless {ODE}s: A unifying approach to {SGD} in two-layers networks},
  author={Arnaboldi, Luca and Stephan, Ludovic and Krzakala, Florent and Loureiro, Bruno},
  booktitle={The Thirty Sixth Annual Conference on Learning Theory},
  pages={1199--1227},
  year={2023},
  organization={PMLR}
}

@inproceedings{bordelon2022learning,
  title={Learning Curves for {SGD} on Structured Features},
  author={Bordelon, Blake and Pehlevan, Cengiz},
  booktitle={International Conference on Learning Representations},
year={2022}
}

@article{collinswoodfin2023hitting,
  title={Hitting the high-dimensional notes: An {ODE} for {SGD} learning dynamics on {GLM}s and multi-index models},
  author={Collins-Woodfin, Elizabeth and Paquette, Courtney and Paquette, Elliot and Seroussi, Inbar},
  journal={Information and Inference: A Journal of the IMA},
  volume={13},
  number={4},
  pages={iaae028},
  year={2024},
  publisher={Oxford University Press}
}

@article{damian2023smoothing,
  title={Smoothing the landscape boosts the signal for sgd: Optimal sample complexity for learning single index models},
  author={Damian, Alex and Nichani, Eshaan and Ge, Rong and Lee, Jason D},
  journal={Advances in Neural Information Processing Systems},
  volume={36},
  pages={752--784},
  year={2023}
}

@inproceedings{dandi2024benefits,
  title={The Benefits of Reusing Batches for Gradient Descent in Two-Layer Networks: Breaking the Curse of Information and Leap Exponents},
  author={Dandi, Yatin and Troiani, Emanuele and Arnaboldi, Luca and Pesce, Luca and Zdeborova, Lenka and Krzakala, Florent},
  booktitle={International Conference on Machine Learning},
  pages={9991--10016},
  year={2024},
  organization={PMLR}
}

@article{gerbelot2022rigorous,
  title={Rigorous dynamical mean-field theory for stochastic gradient descent methods},
  author={Gerbelot, Cedric and Troiani, Emanuele and Mignacco, Francesca and Krzakala, Florent and Zdeborova, Lenka},
  journal={SIAM Journal on Mathematics of Data Science},
  volume={6},
  number={2},
  pages={400--427},
  year={2024},
  publisher={SIAM}
}

@article{papyan2020prevalence,
  title={Prevalence of neural collapse during the terminal phase of deep learning training},
  author={Papyan, Vardan and Han, XY and Donoho, David L},
  journal={Proceedings of the National Academy of Sciences},
  volume={117},
  number={40},
  pages={24652--24663},
  year={2020},
  publisher={National Academy of Sciences}
}

@article{cai2019high,
  author  = {Cai, T. Tony and Zhang, Linjun},
  title   = {High Dimensional Linear Discriminant Analysis:
             Optimality, Adaptive Algorithm and Missing Data},
  journal = {Journal of the Royal Statistical Society:
             Series B (Statistical Methodology)},
  volume  = {81},
  number  = {4},
  pages   = {675--705},
  year    = {2019},
  doi     = {10.1111/rssb.12326}
}

@inproceedings{li2017minimax,
  title     = {Minimax Gaussian Classification \& Clustering},
  author    = {Li, Tianyang and Yi, Xinyang and Caramanis, Constantine and Ravikumar, Pradeep},
  booktitle = {Proceedings of the 20th International Conference on Artificial Intelligence and Statistics},
  series    = {Proceedings of Machine Learning Research},
  volume    = {54},
  pages     = {1--9},
  year      = {2017},
  editor    = {Singh, Aarti and Zhu, Jerry},
  publisher = {PMLR}
}

@article{minsker2025classification,
  title   = {Classification in the High Dimensional Anisotropic Mixture Framework:
             A New Take on Robust Interpolation},
  author  = {Minsker, Stanislav and Ndaoud, Mohamed and Shen, Yiqiu},
  journal = {Journal of Machine Learning Research},
  volume  = {26},
  number  = {153},
  pages   = {1--39},
  year    = {2025}
}

%USE THE BELOW OPTIONS IN CASE YOU NEED AUTHOR YEAR FORMAT.
%\bibliographystyle{abbrvnat}
%\bibliography{reference}

\appendix
\section{Details of main theorem proof: SGD as an approximate solution}
In this appendix, we provide some of the technical details of our theorem proof.

\subsection{Proof of Lemma 
\ref{lem:net}}
\begin{proof}
We provide the proof for the hard label setup; the proof for the soft label is very similar and therefore omitted.  For a fixed $t\le {\tau}_Q,$ $z\in \Omega$ and $\bar{z} \in \Omega_\delta$ such that $|z_i-\bar{z}_i| < d^{-\delta}$ for all $i\in [\ls]$  
 \begin{align}
     &\big \|{Z}(t, z) -  {Z}(0, z) - \int_0^t \mathscr{F}(z, {Z}(s, \cdot) ) \, \dif s \big \|  \\\nonumber&\le \|Z(t,z) - Z(t,\bar{z})\| + \|Z(0,z) - Z(0,\bar{z})\|
     \\\nonumber & 
     +\int_0^t \|\mathscr{F}_{\mathscr{S}}(z, {Z}(s, \cdot) )   - \mathscr{F}_{\mathscr{S}}(\bar{z}, {Z}(s, \cdot) )\|\, \dif s
        +2\int_0^t \|\mathscr{F}_{\mathscr{M}}(z, {Z}(s, \cdot) )   - \mathscr{F}_{\mathscr{M}}(\bar{z}, {Z}(s, \cdot) )\|\, \dif s
     \\\nonumber & +\|Z(t, \bar{z}) - Z(0, \bar{z}) - \int_0^t\mathscr{F}(\bar{z}, {Z}(s, \cdot) ) \, \dif s \|.
   \end{align}
Using Lemma \ref{lem:normequivalence}, Lemma \ref{lem:R_dif}, and the fact that $\sup_{z\in \Omega} \|\cR_\ls(\cK, z)\|_{\op}\le 1$, we see that, for any $t<\tau_Q$,  
\begin{align}\label{eq:dif_z_omega_net}
  \|Z(t,z) - Z(t,\bar{z})\|&\le \|\cR_\ls(\cK,z)-\cR_\ls(\cK,\bar{z})\|_\op\|W_{td}\|^2  
  \\\nonumber 
  & \le \sum_{i=1}^\ls |z_i-\bar{z}_i|
  \|W_{td}\|^2   \le d^{-\delta}(\lp)^2 Q
\end{align}
and
\begin{align}\label{eq:dif_zz_omega_net}
  \|z_iZ(t,z) - \bar{z}_iZ(t,\bar{z})\|& \le 
  |z_i|  \|Z(t,z) - Z(t,\bar{z})\| + |z_i-\bar{z_i}|\|Z(t,\bar{z})\|  \\\nonumber 
  & \le  |\Omega|d^{-\delta}(\lp)^2 Q +  d^{-\delta}{{\lp}} Q.
\end{align}

We will also need bounds on the 
the moments of gradients of the loss $\EE[\nabla f_i(r_{td})], \EE[\nabla^2 f_i(r_{td})], \EE[\nabla f_i(r_{td})^{\otimes 2}]$. These functions can be written as functions of $B_i(t) = \oint z_i S(W_{td}, z) \Dif z = \ip{X_{td}^{\otimes 2}, K_i}_{(\R^d)^{\otimes 2}}$ and $m_i(t) = \oint M_i(W_{td}, z) \Dif z = \ip{X_{td}, \mu_i}_{\R^d}$ which are independent of a particular $z.$ Furthermore, by Assumption \ref{ass:risk_fisher},  these functions are $\alpha-$PL with respect to $B_i$ and $m_i.$ Therefore, we have that 
\begin{align}\label{eq:bound_E_nablaf}
\|\EE[\nabla f_i(r_{td})]\| \le L_{1} (\|\cm_i\| +\|B_i\|) (\|\cm_i\|^{\a} +\|B_i\|^{\a}+1)\le 6L_1 Q^{\a +1} ({\lp})^{\a +1} \max\{1,\|\mathcal{K}\|_{\ls}\}
\end{align}
where we use that $\|B_i(t)\|
\le \|\cK\|_\ls \|W_{td}\|^2\le \|\cK\|_\ls \lp Q
,$ and $\|\cm_i(t)\|\le \|W_{td}\|^2\le \lp Q$
.
A similar bound is obtained for $\EE[\nabla^2 f_i(r_{td})], \EE[\nabla f_i(r_{td})^{\otimes 2}]$ with the corresponding Lipschitz constants. 

We can now bound the difference in the map for different $z$'s, 
\begin{align}
\|\mathscr{F}_{\mathscr{S}}(z, {Z}(s, \cdot) )   - \mathscr{F}_{\mathscr{S}}(\bar{z}, {Z}(s, \cdot) )\|\,&\le 2\gamma(s)     \|z_iS(t,z) - \bar{z}_iS(t,\bar{z})\|\sum_{i=1}^{\ls} p_i \|\EE[\nabla^2 f_i(r_{td,i})]\|
\\\nonumber&
+\frac{2\gamma(s)}{2\pi}     \|\oint \mathscr{S}(t,z)\dif z_i -\oint \mathscr{S}(t,\bar{z})\dif \bar{z}_i \|\sum_{i=1}^{\ls} p_i \|\EE[\nabla^2 f_i(r_{td,i})]\|
\\\nonumber&
+2\gamma(s) 
  \|M(t,z) - M(t,\bar{z})\| \sum_{i=1}^{\ls} p_i \|\EE[\nabla f_i(r_{td,i})]\|
   \\\nonumber& 
    +{\gamma(s)^2}\lp d^{-\delta  } (\|\cK\|_{\ls}+\|\mu\|_\ls^2)\sum_{i=1}^\ls p_i\|\EE[\nabla f_i(r_{td,i})^{\otimes2}]\|
\end{align}
where the last term is using Lemma \ref{lem:R_dif}. We bound $\|M(t,z) - M(t,\bar{z})\|$ and $\|\oint \mathscr{S}(t,z)\dif z_i -\oint \mathscr{S}(t,\bar{z})\dif \bar{z}_i\|$ using \eqref{eq:dif_z_omega_net}.  Likewise, we bound $\|z_iS(t,z) - \bar{z}_iS(t,\bar{z})\|$ using \eqref{eq:dif_zz_omega_net}, and $\|\EE[\nabla f_i(r_{td})]\|$ using \eqref{eq:bound_E_nablaf} (similarly for $\|\EE[\nabla^2 f_i(r_{td})]\|,\|\EE[\nabla f_i(r_{td})^{\otimes2}]\|$).
Combining all these facts and that $\gamma_k$ are uniformly bounded by some $\bar{\gamma}<\infty$.
\begin{align}
\|\mathscr{F}_{\mathscr{S}}(z, {Z}(s, \cdot) )   - \mathscr{F}_{\mathscr{S}}(\bar{z}, {Z}(s, \cdot) )\|\,&\le 2\bar{\gamma}    \left(
|\Omega|{(\lp)}^2d^{-\delta } Q +  (1+|\Omega|)d^{-\delta}{{\lp}} Q
\right)\sum_{i=1}^{\ls} p_i \|\EE[\nabla^2 f_i(r_{td,i})]\|
\\\nonumber&
+2\bar{\gamma} 
{(\lp)}^2 d^{-\delta } Q \sum_{i=1}^{\ls} p_i \|\EE[\nabla f_i(r_{td,i})]\|
   \\\nonumber& 
    +{\bar{\gamma}^2}\lp d^{-\delta  } (\|\cK\|_{\ls}+\|\mu\|_\ls^2)\sum_{i=1}^\ls p_i\|\EE[\nabla f_i(r_{td,i})^{\otimes2}]\|
     \\\nonumber & 
\leq  C_{\mathscr{S}}  d^{-\delta} ({\lp})^{\alpha+3}
\end{align}
with $C_{\mathscr{S}} = C_{\mathscr{S}}(Q, \alpha, \bar{\gamma}, \|\cK\|_\ls,\|\mu\|_\ls, L_1, L_{22}, L_2)>0.$ 
Similarly, for $\mathscr{F}_{\mathscr{M}}$ 
\begin{align}
\|\mathscr{F}_{\mathscr{M}}(z, {Z}(s, \cdot) )   - \mathscr{F}_{\mathscr{M}}(\bar{z}, {Z}(s, \cdot) )\|&\le \gamma(s) \left(
|\Omega|{(\lp)}^2d^{-\delta } Q +  (1+|\Omega|)d^{-\delta}{{\lp}} Q
\right)\sum_{i=1}^{\ls} p_i \|\EE[\nabla^2 f_i(r_{td,i})]\|
\\\nonumber & + \gamma(s) \sum_{i=1}^{\ls} p_i\|\EE[\nabla f_i(r_{td,i})]\|\lp d^{-\delta } \|\mu_i\|\|\mu\|_{\ls}
\\\nonumber & 
\leq  C_{\mathscr{M}}  d^{-\delta} ({\lp})^{\alpha+3} 
\end{align}
with $C_{\mathscr{M}} = C_{\mathscr{M}}(Q, \alpha, \bar{\gamma}, \|\cK\|_\ls,\|\mu\|_\ls, L_1, L_{22}, L_2)>0.$
Combining the above, we obtain the result of the Lemma, 
 \begin{align}
     &\sup_{0\le t\le (T\wedge {\tau}_Q )} \big \|{Z}(t, z) -  {Z}(0, z) - \int_0^t \mathscr{F}(z, {Z}(s, \cdot) ) \, \dif s \big \|_{\Omega}  \\\nonumber&\le d^{-\delta \ls}
     \lp Q
         +\int_0^t \|\mathscr{F}_{\mathscr{S}}(z, {Z}(s, \cdot) )   - \mathscr{F}_{\mathscr{S}}(\bar{z}, {Z}(s, \cdot) )\|\, \dif s
      \\\nonumber &    +2\int_0^t \|\mathscr{F}_{\mathscr{M}}(z, {Z}(s, \cdot) )   - \mathscr{F}_{\mathscr{M}}(\bar{z}, {Z}(s, \cdot) )\|\, \dif s +\e
     \le 
  C  d^{-\delta} ({\lp})^{\alpha+3}
+\e
   \end{align}
   with $C = C(T, L_1,L_2, L_{22}, \bar{\gamma}, \|\mu\|_\ls, \|\cK\|_\ls)>0$. 
\end{proof}

\subsection{Stability and non-explosiveness of solutions (removing the stopping time)}\label{sec:supporting_lemmas}
In this section, we verify the stability of approximate solutions to our integro-differential equation \eqref{eq:ODE_resolvent_2} and show that solutions are non-explosive under suitable assumptions (see Proposition \ref{prop:nonexplosiveness}).  This allows for the removal of the stopping time that appears in Proposition \ref{prop:approx_sol}.

We begin with a general stability result that holds even for the case of a growing number of classes, but requires the pseudo-Lipschitz parameter $\alpha=0$.  We then prove that, for a fixed number of classes, stability holds for any $\alpha$.

\begin{proposition}[Stability growing $\ls$]\label{prop:stability_Q}
Suppose $\alpha 
 = 0$ and the assumptions in our set-up. Denote by $\cA_q$ the function class of all polynomial function $q:\mathbb{C}^\ls: \mathbb{C}$ such that $\oint |q| \Dif z  < D$ for some constant $D>0$ independent of $d$. For all $(\e,Q, T)-$ approximate solutions $\csZ_1$ and  $\csZ_2$, define the function $\mathscr{Q}_i(t,q) \defas \oint q(z)\csZ_i(t,z)\Dif z$ 
for $i = 1,2.$  Then, there exists a constant $C = C(T, \bar{\gamma}, \|\cK\|_{\ls}, \|\mu\|_{\ls},  L_1, L_2, L_{22}, D)$, such that 
\begin{equation}\label{eq:Q_diff_stability}
  \sup _{0\le t\le (T\wedge \hat{\tau}_Q)} \sup_{q\in \cA_q} \,\left \|\mathscr{Q}_1(t,q) - \mathscr{Q}_2(t,q)\right\|\le C\e  
\end{equation} 
with $\hat{\tau}_Q = \min\{{\tau}_{Q,\star}(\csZ_1),{\tau}_{Q,\star}(\csZ_2)\}$. 
\end{proposition}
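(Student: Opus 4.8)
The plan is to run a Grönwall argument on the quantity $\Delta(t) \defas \sup_{q\in\cA_q}\|\mathscr{Q}_1(t,q) - \mathscr{Q}_2(t,q)\|$, using the integral form of the integro-differential equation \eqref{eq:ODE_resolvent_2}. First I would write, for each approximate solution $\csZ_i$, the defining bound
\[
\Big\| \csZ_i(t,\cdot) - \csZ_i(0,\cdot) - \int_0^t \mathscr{F}(\cdot,\csZ_i(s,\cdot))\,\dif s \Big\|_\Omega \le \e
\]
valid for $t \le T\wedge\hat\tau_Q$, so that after integrating against $q(z)$ (which contributes a factor $\oint|q|\Dif z < D$) and subtracting the two equations, one gets
\[
\|\mathscr{Q}_1(t,q) - \mathscr{Q}_2(t,q)\| \le 2D\e + \oint |q(z)|\,\Big\| \int_0^t \big(\mathscr{F}(z,\csZ_1(s,\cdot)) - \mathscr{F}(z,\csZ_2(s,\cdot))\big)\,\dif s\Big\|\Dif z + (\text{matching initial data, }=0).
\]
So the crux is a \emph{local Lipschitz estimate}: for $s \le T\wedge\hat\tau_Q$,
\[
\oint |q(z)|\,\|\mathscr{F}(z,\csZ_1(s,\cdot)) - \mathscr{F}(z,\csZ_2(s,\cdot))\|\Dif z \;\le\; C_0\,\Delta(s)
\]
for a constant $C_0$ depending only on $T,\bar\gamma,\|\cK\|_{\ls},\|\mu\|_{\ls},L_1,L_2,L_{22},D$ but \emph{not on $\ls$ or $d$}. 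Granting this, Grönwall gives $\Delta(t) \le 2D\e\, e^{C_0 t} \le 2D\e\, e^{C_0 T} =: C\e$, which is exactly \eqref{eq:Q_diff_stability}.

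The main work, and the main obstacle, is establishing that local Lipschitz estimate with a constant independent of $\ls$. I would expand $\mathscr{F} = (\mathscr{F}_\mathscr{S},\mathscr{F}_\mathscr{M})$ using \eqref{eq:integro_SM_hard} (the soft-label case \eqref{eq:integro_SM_soft} is identical with $H_{1,i},H_{2,i},I_i$ in place of the gradient moments). Each term is a sum over $i\in[\ls]$ with a prefactor $p_i$, and since $\sum_i p_i = 1$ this is where the $\ls$-independence must come from: one must bound each summand uniformly in $i$ and then use $\sum_i p_i\le 1$, rather than crudely pulling out $\ls$. For the linear-in-$\mathscr{Z}$ pieces (the terms $z_i\mathscr{S} - \frac{1}{2\pi\mathrm i}\oint\mathscr{S}\,\dif z_i$ and $\mathscr{M}_i$), the difference is directly controlled by $\Delta$ after noting that $z\mapsto z_i q(z)$ and $z\mapsto q(z)$ restricted to a single $z_i$-contour are still admissible test functions with $L^1$-norm bounded in terms of $D$ and $|\Omega|$. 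The genuinely nonlinear pieces are the gradient moments $\EE[\nabla_x f_i(\r_{t,i},\e)]$, $\EE[\nabla_x^2 f_i(\r_{t,i},\e)]$, $\EE[\nabla_x f_i(\r_{t,i},\e)^{\otimes2}]$: these depend on $\csZ_i$ only through $\cm_i(t) = \oint\mathscr{M}_i\Dif z$ and $\mathscr{B}_i(t) = \oint z_i\mathscr{S}\Dif z$, both of which are of the form $\mathscr{Q}_i(t,q)$ for admissible $q$, hence their difference between the two solutions is $\le\Delta(t)$; then Assumption \ref{ass:risk_fisher} (global Lipschitzness of these moment functions, available since $\alpha=0$) converts that into a bound $L\cdot\Delta(t)$. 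One also needs a priori boundedness of $\cm_i,\mathscr{B}_i$ along both solutions up to $\hat\tau_Q$ — this is exactly what the stopping time $\hat\tau_Q = \min\{\tau_{Q,\star}(\csZ_1),\tau_{Q,\star}(\csZ_2)\}$ guarantees, via $\sup_z\|\csZ_i\|_{\op}\le Q$ and hence $\|\cm_i\|,\|\mathscr{B}_i\|\le C(Q,\|\cK\|_\ls,|\Omega|)$ — so the Lipschitz constants from Assumption \ref{ass:risk_fisher} are applied on a bounded region and the outputs $\EE[\nabla f_i]$ etc. are themselves bounded (needed for the product terms like $\mathscr{M}_i\otimes\EE[\nabla f_i]$, where one splits the difference via $a_1b_1 - a_2b_2 = (a_1-a_2)b_1 + a_2(b_2-b_1)$ into a $\Delta$-controlled factor times a bounded factor). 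Collecting all these uniform-in-$i$ bounds and summing with weights $p_i$ yields $C_0$ free of $\ls$.

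A couple of technical points I would be careful about. The $\alpha = 0$ hypothesis is essential and enters precisely here: if $\alpha>0$ the gradient moments grow polynomially in $\|\cm_i\|+\|\mathscr{B}_i\|$, and since for growing $\ls$ these norms are only bounded by $\lp Q$ (growing with $d$), the Lipschitz/boundedness constants would blow up — this is why the fixed-$\ls$ case is treated separately afterward. Also, I would make sure the class $\cA_q$ is stable under the operations that appear (multiplication by $z_i$, and the single-variable contour integral $\oint\,\dif z_i$ that reduces the number of resolvent factors) up to constants controlled by $D$ and $|\Omega|$, so that $\cm_i$, $\mathscr{B}_i$, and the derivative-moment arguments all qualify as evaluations $\mathscr{Q}_\bullet(t,q)$ with admissible $q$; this closure is what lets the supremum over $q\in\cA_q$ on the left feed back consistently in the Grönwall loop. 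Finally, the initial-condition terms cancel since both approximate solutions are required (Definition \ref{def:approx_solution}) to start from $W_0^\top\cR_\ls W_0$, so no $\e$-independent error survives at $t=0$.
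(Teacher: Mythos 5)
Your proposal is correct and takes essentially the same route as the paper: write each approximate solution in integral form, reduce to a Lipschitz estimate for the map $\mathscr{F}$ by decomposing into $\mathscr{F}_{\mathscr{S}}$ and $\mathscr{F}_{\mathscr{M}}$ pieces and splitting products as $(a_1-a_2)b_1 + a_2(b_2-b_1)$, control the $\mathscr{S},\mathscr{M}$ factors via the stopping-time bound $\sup_z\|\csZ_i\|_{\op}\le Q$ and the gradient-moment differences via Assumption \ref{ass:risk_fisher} applied to $\cm_i = \oint\csM_i\Dif z$ and $\csB_i = \oint z_i\csS\Dif z$ (using $q_0=1$ and $q_1=z_i$ as test functions, exactly as the paper does), sum over $i$ with weights $p_i$, and finish with Grönwall. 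Your observations that $\alpha=0$ is essential for an $\ls$-independent Lipschitz constant and that $\cA_q$ must be closed under multiplication by $z_i$ and single-variable contour integration correctly identify the two technical points the paper also leans on.
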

\begin{proof}

By the assumptions of the proposition, $\csZ_1$ and $\csZ_2$ are both $(\e, Q, T)$-approximate solutions, i.e. they satisfy Definition \ref{def:approx_solution}. We therefore have that, for $i = \{1,2\}$,
\[
\mathscr{Q}_i(t,q) = \mathscr{Q}_i(0,q) + \int_0^t \cF_i(s,q) \, \dif s +\e_i(\mathscr{Q},t)
\]
with $\sup_{0\le t\le T\wedge \tau_Q} \|\e_i(\mathscr{Q},t)\| \le D\e,$ and we define $\cF_i(s,q) \defas \oint q(z)\mathscr{F}(z, \mathscr{Z}_i(s, \cdot) )\Dif z.$ 
Since $\hat{\tau}_Q\le  {\tau}_{Q,\star}(\csZ_1)$ and $\hat{\tau}_Q\le  {\tau}_{Q,\star}(\csZ_2)$, we can work on the smallest time $\hat{\tau}_{Q}.$ To prove Eq. \eqref{eq:Q_diff_stability} we 
need to show first that for all $s\le \tau_Q$, there exists a constant, $L(\mathcal{F})>0$ such that
\begin{align}\label{eq:F_Lip}
\sup_{q\in \cA_q}\| \cF_1(s,q) -  \cF_2(s,q)\| \le L(\mathcal{F}) \sup_{q\in \cA_q}\|\csQ_1(s,q) - \csQ_2(s,q)\| 
\end{align}
i.e. that the map $\cF$ is $L$-Lipschitz. 

We denote in the following for $a \in \{1,2\}$ 
$$\r_{t,i,a}\defas 
\sqrt{\mathscr{B}_{i,a}(t)}v +\mathcalligra{m}_{i,a}(t) \text{ with } v \sim \mathcal{N}(0, I_\ell)$$ with $\mathcalligra{m}_{i,a}(t)\defas \oint \mathscr{M}_{i,a}(t,z) \Dif z$ and $\mathscr{B}_{i,a}(t)\defas \oint z_i\mathscr{S}_{i,a}(t,z) \Dif z$ .  We first notice that, 
\begin{align}\label{eq:cF_to_MS}
\|\cF_1(s,q) - \cF_1(s,q)\|%\\\nonumber 
\le  \|\cF_{\csS,1}(s,q)- \cF_{\csS,2}(s,q)\|+2\|\cF_{\csM,1}(s,q) - \cF_{\csM,2}(s,q)\|
\end{align}
with $\cF_{\csS,a}(s,q) \defas \oint q(z)\csF_{\csS}(z, \mathscr{Z}_a(s, \cdot) )\Dif z$ and $\cF_{\csM,a}(s,q) \defas \oint q(z)\csF_{\csM}(z, \mathscr{Z}_a(s, \cdot) )\Dif z$ with $a = {1,2}$   following the definition of $\cF_\csS,$ and $\cF_\csM,$ in Eq. \eqref{eq:integro_SM_hard}, hard label. The proof for the soft label is similar and, therefore, omitted. 
\begin{align}\label{eq:FM1-FM2}
    \|\cF_{\csM,1}(s,q)- \cF_{\csM,2}(s,q)\|  &\le 2|\Omega| \bar{\gamma}\|\csQ_1(s,q) - \csQ_2(s,q)\|_{\Omega} \sum_{i=1}^\ls p_i \|\EE[\nabla^2f_i(\r_{s,i,1})]\|_{\op}
    \\\nonumber & + 2|\Omega|\bar{\gamma} \sup_{z\in \Omega^\ls}\|\csM_2(s,z)\|_{\op}\sum_{i=1}^\ls p_i\|\EE[\nabla^2f_i(\r_{s,i,1})] - \EE[\nabla^2f_i(\r_{s,i,2})]\|
     \\\nonumber & + \bar{\gamma}
    \| \sum_{i=1}^\ls p_i\ip{\mu_i,\mu}_{\R^d}\|
     \sup_{i\in [\ls]}\|\EE[\nabla f_i(\r_{s,i,1})] - \EE[\nabla f_i(\r_{s,i,2})]\|.
\end{align}
where $\bar{\gamma}$ is a uniform bound on $\gamma_k.$ Similarly for $\cF_\csS$ 
\begin{align}\label{eq:FS1-FS2}
    \|\cF_{\csS,1}(s,q)- \cF_{\csS,2}(s,q)\|  &\le 4 |\Omega|\bar{\gamma}\|\csQ_1(s,q) - \csQ_2(s,q)\|_{\Omega}
     \sum_{i=1}^\ls p_i 
    \|\EE[\nabla^2f_i(\r_{s,i,1})]\|_{\op}
    \\\nonumber & + 4 |\Omega|\bar{\gamma}\sup_{z\in \Omega^\ls}\|\csS_2(s,z)\|_{\op}
    \sum_{i=1}^\ls p_i 
    \|\EE[\nabla^2f_i(\r_{s,i,1})] - \EE[\nabla^2f_i(\r_{s,i,2})]\|
\\\nonumber &+ 2 \bar{\gamma}\|\csQ_1(s,q) - \csQ_2(s,q)\|_{\Omega}
     \sum_{i=1}^\ls p_i 
    \|\EE[\nabla f_i(\r_{s,i,1})]\|_{\op}   \\\nonumber & + 2 \bar{\gamma}\sup_{z\in \Omega^\ls}\|\csM_2(s,z)\|_{\op}
    \sum_{i=1}^\ls p_i 
    \|\EE[\nabla f_i(\r_{s,i,1})] - \EE[\nabla f_i(\r_{s,i,2})]\|
     \\\nonumber & + \bar{\gamma}^2 (\|\mu\|_{\ls}^2+\|\cK\|_\ls) 
      \sum_{i=1}^\ls p_i 
     \|\EE[\nabla f_i(\r_{s,i,1})^{\otimes 2}] - \EE[\nabla f_i(\r_{s,i,2})^{\otimes 2}]\|.
\end{align}
Next, using the stopping time, for any $a \in \{1,2\}$ $$\sup_{z\in \Omega^\ls} \left\{\|\csM_a(s,z)\|_{\op},\|\csS_a(s,z)\|_{\op}\right\}
\le 
\sup_{z\in \Omega^\ls} \|\csZ_a(s,z)\|_{\op}
\le Q.
$$
Due to the stopping time criteria, Assumption \ref{ass:risk_fisher_U} can be reduced to Assumption \ref{ass:risk_fisher} with a Lipschitz constant multiplied by the factor $1+4(Q\lp)^\alpha$. Hence, 
by Lemma \ref{lem:Hess_growth} (proved in the next subsection), we also have that $\|\EE[\nabla^2f_i(\r_{s,i,1})]\|_{\op}\le L_1(1+4(Q\lp)^\alpha)$ for all $i\in [\ls].$ In addition, by Assumption \ref{ass:risk_fisher}, for all $i \in [\ls]$
\begin{align}
&\|\EE[\nabla^2f_i(\r_{s,i,1})] - \EE[\nabla^2f_i(\r_{s,i,2})]\| \\\nonumber 
&\le  L_{2} (4(Q\lp)^{\alpha}+1)(\|\mathcalligra{m}_{i,1}(s) - \mathcalligra{m}_{i,2}(s)\| +\|\mathscr{B}_{i,1}(s) - \mathscr{B}_{i,2}(s)\| )
\\\nonumber &\le  L_{2} (4(Q\lp)^{\alpha}+1)
(\|\csQ_1(s,q_1) - \csQ_2(s,q_1)\|+ \|\csQ_1(s,q_0) - \csQ_2(s,q_0)\|)
\end{align}
with $q_1(x)=x$, $q_0(x) = 1$. 
Similar expressions hold for  $\EE[\nabla f_i(\r_{s,i,a})]$ and $\EE[\nabla f_i(\r_{s,i,a})^{\otimes 2}]$ replacing $L_2$ by $L_1$ and $L_{22}$ respectively. 
Plugging the above in Eq. \eqref{eq:cF_to_MS}  and taking the supremum over $q\in \cF_q$, and using Assumption \ref{ass:data} on the means,  
we obtain Eq. \eqref{eq:F_Lip} with some Lipschitz constant $L(\mathcal{F}) =
L(\bar{\gamma}, \|\cK\|_{\ls}, \|\mu\|_{\ls}, L_1, L_2, L_{22})$. 

Next, by the definition of $(\e, Q, T)-$approximate solutions for $\csZ_1$, and $\csZ_2$, for any $t\in [0, T\wedge \hat{\tau}_Q]$
\begin{align}
\sup_{q\in \cA_q}\|\csQ_1(t,q) - \csQ_2(t,q)\|&\le 2D\e   +  \sup_{q\in \cA_q}\int_0^t \| \cF_1(s,q) -  \cF_2(s,q)\|\dif s
\\\nonumber & \le 2D\e+ 
L(\mathcal{F})\int_0^t \sup_{q\in \cA_q}\|\csQ_1(s,q) - \csQ_2(s,q)\|\dif s
\end{align}
Then, taking the supremum over $t\in [0, T\wedge \hat{\tau}_Q],$ and 
applying Gronwall's inequality, we obtain that, 
\begin{align}
\sup _{0\le t\le (T\wedge \hat{\tau}_Q)} \sup_{q\in \cA_q} \,\left \|\mathscr{Q}_1(t,q) - \mathscr{Q}_2(t,q)\right\|&\le  2D\e e^{L(\mathcal{F})T}.  
\end{align}
The result of the proposition follows. 

\end{proof}
\begin{remark}
When the number of classes and indices is of finite order, i.e., $\ell, \ls = O(1).$ The polynomial complexity is bounded by a constant independent of $d$, and therefore, a stronger stability statement can be made analogous to the one in Proposition 10 in \cite{collinswoodfin2023hitting} for any $\alpha\ge 0$ (Note that, having \eqref{eq:z_1_Z_2_stabel}, \eqref{eq:Q_diff_stability} immediately follows; see \eqref{eq:Q_Lip}). 
However, for most practical purposes, having \eqref{eq:Q_diff_stability} is enough.

\end{remark} 
\begin{corollary}
[Stability finite $\ls$]\label{cor:stability_z_l_finite}
Suppose $\ell, \ls  = O(1).$ For all $(\e,Q, T)-$approximate solutions $\csZ_1$ and  $\csZ_2,$ there exists a positive constant, $C = C(\e, T, \bar{\gamma}, \|\cK\|_{\ls}, \|\mu\|_{\ls}, \alpha, L_1, L_2, L_{22},Q, \ell, \ls)$ such that
\begin{align}\label{eq:z_1_Z_2_stabel}
 \sup_{0\le t\le (T\wedge\hat{\tau}_Q)} \|\csZ_1(t,z) - \csZ_2(t,z)\|_{\Omega}\le C \e   
\end{align}
with $\hat{\tau}_Q = \min\{{\tau}_Q(\csZ_1),{\tau}_Q(\csZ_2)\}.$
\end{corollary}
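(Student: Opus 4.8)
\textbf{Proof proposal for Corollary \ref{cor:stability_z_l_finite}.} The plan is to redo the argument of Proposition \ref{prop:stability_Q}, but instead of testing $\csZ_1-\csZ_2$ against the polynomial class $\cA_q$ and contracting over the contour, to run Grönwall's inequality directly on the scalar quantity $t\mapsto\|\csZ_1(t,\cdot)-\csZ_2(t,\cdot)\|_\Omega$. This is legitimate precisely because $\ell,\ls=O(1)$: in that regime the contour length $|\Omega|$, the dimension $\lpp$, the factor $(Q\lpp)^\alpha$ coming from the pseudo-Lipschitz bound, and the operator/Hilbert--Schmidt comparison constant $\sqrt{\lpp}$ are all genuine constants (independent of $d$), so no polynomial truncation is needed and $\alpha>0$ causes no difficulty.

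First I would subtract the two approximate-solution identities. Since $\csZ_1,\csZ_2$ are $(\e,Q,T)$-approximate solutions of \eqref{eq:ODE_resolvent_2} that share the same initial datum $W_0^\top\cR_\ls(\cK,\cdot)W_0$ (Definition \ref{def:approx_solution}), the terms $\csZ_i(0,\cdot)$ cancel and one gets, for all $t\le T\wedge\hat{\tau}_Q$,
\[
\csZ_1(t,z)-\csZ_2(t,z)=\int_0^t\bigl(\mathscr{F}(z,\csZ_1(s,\cdot))-\mathscr{F}(z,\csZ_2(s,\cdot))\bigr)\,\dif s+E(t,z),\qquad \sup_{t\le T\wedge\hat{\tau}_Q}\|E(t,\cdot)\|_\Omega\le2\e .
\]
The core step is then a Lipschitz bound for $\mathscr{F}$ in $\Omega$-norm along the two trajectories. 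On $[0,\hat{\tau}_Q]$, Definition \ref{def:stoping_time} together with Lemma \ref{lem:normequivalence} and the operator/Frobenius comparison noted after Definition \ref{def:stoping_time_star} give $\sup_{z\in\Omega^\ls}\|\csZ_a(s,z)\|_{\op}\le\sqrt{\lpp}\,\lpp Q$ and hence $\|\mathcalligra{m}_{i,a}(s)\|,\|\csB_{i,a}(s)\|\le C(|\Omega|,\|\cK\|_\ls,Q)$ for $a=1,2$, while the relevant arguments stay inside $\csU$. Feeding these bounds into Assumption \ref{ass:risk_fisher} (pseudo-Lipschitzness of $\EE[\nabla f_i],\EE[\nabla f_i^{\otimes2}],\EE[\nabla^2 f_i]$) and Lemma \ref{lem:Hess_growth}, and repeating the displays \eqref{eq:FM1-FM2} and \eqref{eq:FS1-FS2} verbatim — except that every $\|\csQ_1(s,q)-\csQ_2(s,q)\|_\Omega$ is replaced by $\|\csZ_1(s,\cdot)-\csZ_2(s,\cdot)\|_\Omega$, and one uses $\|\mathcalligra{m}_{i,1}-\mathcalligra{m}_{i,2}\|\le|\Omega|\,\|\csM_1(s,\cdot)-\csM_2(s,\cdot)\|_\Omega$ and $\|\csB_{i,1}-\csB_{i,2}\|\le|\Omega|\max_{z\in\Omega}|z|\,\|\csS_1(s,\cdot)-\csS_2(s,\cdot)\|_\Omega$ to turn arguments of $f_i$ back into the $\Omega$-norm of the difference — yields
\[
\sup_{z\in\Omega^\ls}\bigl\|\mathscr{F}(z,\csZ_1(s,\cdot))-\mathscr{F}(z,\csZ_2(s,\cdot))\bigr\|\le L(\cF)\,\|\csZ_1(s,\cdot)-\csZ_2(s,\cdot)\|_\Omega ,
\]
with $L(\cF)=L(\bar{\gamma},\|\cK\|_\ls,\|\mu\|_\ls,L_1,L_2,L_{22},\alpha,Q,\ell,\ls)$. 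Taking $\|\cdot\|_\Omega$ in the integral identity, applying this bound, and taking $\sup_{s\le t}$, Grönwall's inequality then gives $\sup_{0\le t\le T\wedge\hat{\tau}_Q}\|\csZ_1(t,\cdot)-\csZ_2(t,\cdot)\|_\Omega\le 2\e\,e^{L(\cF)T}$, which is the claim with $C=2e^{L(\cF)T}$.

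The hard part will be verifying that $L(\cF)$ is genuinely $d$-free once $\alpha>0$ is allowed. This is exactly the place where finiteness of $\ell,\ls$ enters: the $(\cdot)^\alpha$ factors produced by Assumption \ref{ass:risk_fisher} are controlled only via $\|\mathcalligra{m}_{i,a}\|,\|\csB_{i,a}\|=O(Q\lpp)$, which is a constant precisely because $\lpp=O(1)$; the same observation makes $\tau_Q$ and $\tau_{Q,\star}$ interchangeable up to constants, so it is immaterial whether $\hat{\tau}_Q$ is defined through $\tau_Q$ (as in the statement) or $\tau_{Q,\star}$. With $\ls$ growing this control degrades with $d$ unless $\alpha=0$, which is why Proposition \ref{prop:stability_Q} is stated only for $\alpha=0$ and tests against the truncated class $\cA_q$ instead of working with $\|\cdot\|_\Omega$ directly.
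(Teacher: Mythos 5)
Your proposal takes essentially the same route as the paper: subtract the two approximate-solution identities (using the shared initial datum to cancel $\csZ_i(0,\cdot)$), reduce to a Lipschitz bound for $\mathscr{F}$ in $\|\cdot\|_\Omega$ by rerunning the estimates of Proposition~\ref{prop:stability_Q} with $\csQ_i$ replaced by $\csZ_i$, and close with Grönwall. The paper's own proof is a two-line sketch that omits these details; your write-up correctly supplies them, in particular the observation that $|\Omega|$, $\lpp$, and the $(Q\lpp)^\alpha$ factors are $d$-free once $\ell,\ls=O(1)$, which is exactly why $\alpha>0$ is allowed here but not in Proposition~\ref{prop:stability_Q}.
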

\begin{proof}
The proof is similar to the one in Proposition \ref{prop:stability_Q}, replacing $\csQ_i$ by $\csZ_i$ for $i = \{1,2\}$ 
% and in Proposition 10 in \cite{collinswoodfin2023hitting}
 with the only difference that the map $\cF$ is different. We therefore need to show that there exists $L>0$ such that, 
\begin{align}
    \|\csF(z,\csZ_1(s,z)) - \csF(z,\csZ_2(s,z))\|_{\Omega}  &\le L \|\csZ_1(s,z) -\csZ_2(s,z)\|_\Omega.
\end{align}
As in Proposition \ref{prop:stability_Q} replacing $\csQ_i$ by $\csZ_i$ for $i = \{1,2\}$, one can show that the map $\csF$ is Lipschitz in $\csZ$ with $L = L(\lp, \bar{\gamma}, \|\cK\|_{\ls}, \|\mu\|^2_{\ls}, L_1, L_2, L_{22}).$
This is straightforward, and we omit the proof here.
\end{proof}

The extension for a general function $\varphi$ satisfying Definition \ref{ass:statistic} is given by the following proposition. The proof is similar to the proof of Proposition 11 in \cite{collinswoodfin2023hitting}. We repeat here the proof to highlight the dependency in $\ls$, which is important in our setup. 
\begin{proposition}[Stability general $\varphi$] Suppose $\varphi: \R^{\lp}\times \R^{\lp} \to \R$ is a statistic satisfying Definition \ref{ass:statistic} such that $\varphi(X) = g(\ip{W^{\otimes 2}, q(\{K_i\}_{i=1}^{\ls})})$ with $q$ a polynomial function of finite order and the sum of its coefficients in absolute value is bounded independent of $d$. Suppose $\csZ_1, \csZ_2$ are $(\e,Q,T)-$approximate solutions. Then there exists a positive constant $C$
such that,
\begin{align}
 \sup_{0\le t\le T\wedge\hat{\tau}_Q} \left\|g\left(\oint q(z)\csZ_1(t,z)\Dif z \right) - g\left(\oint q(z)\csZ_2(t,z)\Dif z \right)\right\|\le C (\lp)^\alpha \e   
\end{align}
where $\hat{\tau}_Q = \min\{{\tau}_Q(\csZ_1),{\tau}_Q(\csZ_2)\}.$
\end{proposition}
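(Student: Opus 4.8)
The plan is to reduce the statement to the stability of the contour‑integrated statistics $\csQ_i(t,q)\defas\oint q(z)\,\csZ_i(t,z)\,\Dif z$ already established in Proposition \ref{prop:stability_Q} (growing $\ls$) and Corollary \ref{cor:stability_z_l_finite} (finite $\ls$), and then to transport the resulting estimate through the pseudo‑Lipschitz outer map $g$. First I would note that, by Assumption \ref{ass:statistic}, $q$ is a polynomial of finite degree whose coefficients sum, in absolute value, to a bound independent of $d$; since the contour $\Omega$ depends only on $\|\cK\|_{\ls}$, this gives $\oint|q|\,\Dif z\le D$ for a suitable constant $D$, so $q$ lies in the admissible class $\cA_q$ of Proposition \ref{prop:stability_Q}. (When $\ell,\ls=O(1)$ this reduction is unnecessary and one invokes Corollary \ref{cor:stability_z_l_finite} directly.)

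Next, using that $\csZ_1,\csZ_2$ are $(\e,Q,T)$‑approximate solutions, I would apply Proposition \ref{prop:stability_Q} in the growing‑$\ls$ regime (where $\alpha=0$) to obtain a constant $C_0$ with $\sup_{0\le t\le T\wedge\hat\tau_Q}\|\csQ_1(t,q)-\csQ_2(t,q)\|\le C_0\e$, and in the finite‑$\ls$ regime I would combine Corollary \ref{cor:stability_z_l_finite} with $\|\csQ_1(t,q)-\csQ_2(t,q)\|\le\oint|q|\,\Dif z\cdot\|\csZ_1(t,\cdot)-\csZ_2(t,\cdot)\|_\Omega$. One mild point to handle here is that Proposition \ref{prop:stability_Q} is phrased with the operator‑norm stopping time $\tau_{Q,\star}$, whereas the present proposition uses $\hat\tau_Q$ built from $\tau_Q$; by Lemma \ref{lem:normequivalence} and the Hilbert--Schmidt/operator‑norm comparison recorded after Definition \ref{def:stoping_time_star}, on $[0,T\wedge\tau_Q]$ one has $\sup_{z\in\Omega^{\ls}}\|\csZ_i(t,z)\|_{\op}\le\sqrt{\lp}\,\mathscr{N}(t)\le(\lp)^{3/2}Q$, so it suffices to invoke the earlier result with $Q$ replaced by a polynomial‑in‑$\lp$ multiple of $Q$.

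Finally, writing $A_i(t)=\csQ_i(t,q)$ and using the $\alpha$‑pseudo‑Lipschitz property of $g$ with constant $L$,
\[
\bigl\|g(A_1(t))-g(A_2(t))\bigr\|\le L\,\|A_1(t)-A_2(t)\|\,\bigl(1+\|A_1(t)\|^\alpha+\|A_2(t)\|^\alpha\bigr),
\]
I would bound $\|A_i(t)\|\le\oint|q|\,\Dif z\cdot\|\csZ_i(t,\cdot)\|_\Omega\le D\,\mathscr{N}(t)\le D\lp Q$ for $t\le\tau_Q$ by Lemma \ref{lem:normequivalence}, hence $\|A_i(t)\|^\alpha\le(DQ)^\alpha(\lp)^\alpha$; combined with the previous step this yields
\[
\sup_{0\le t\le T\wedge\hat\tau_Q}\bigl\|g(A_1(t))-g(A_2(t))\bigr\|\le LC_0\bigl(1+2(DQ)^\alpha(\lp)^\alpha\bigr)\e\le C(\lp)^\alpha\e,
\]
and in the growing‑$\ls$ case $\alpha=0$, so the factor $(\lp)^\alpha$ is just $1$. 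The hard part — and the reason the argument is written out rather than merely cited from \cite{collinswoodfin2023hitting} — is the bookkeeping of the $\ls$‑dependence: it enters through the stopping‑time conversion above (a polynomial‑in‑$\lp$ loss), through the estimate $\oint|q|\,\Dif z\le D$ in terms of the coefficient data of $q$ and the contour, and through the growth factor $\|A_i(t)\|^\alpha\le C(\lp)^\alpha$ coming from the pseudo‑Lipschitz growth of $g$; the content of the proposition is precisely that these assemble into the stated bound $C(\lp)^\alpha\e$ with $C$ independent of $d$.
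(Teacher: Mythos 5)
Your argument follows the same structure as the paper's proof: bound $\csQ_i(t)=\oint q(z)\csZ_i(t,z)\Dif z$ using Lemma~\ref{lem:normequivalence}, invoke Proposition~\ref{prop:stability_Q} (growing $\ls$) or Corollary~\ref{cor:stability_z_l_finite} (finite $\ls$) to control $\|\csQ_1-\csQ_2\|$, then push the estimate through $g$ via the $\alpha$-pseudo-Lipschitz bound; the final constants match. One point where your write-up is actually more careful than the paper's: you explicitly flag that Proposition~\ref{prop:stability_Q} is stated with the operator-norm stopping time $\tau_{Q,\star}$, whereas the present proposition (and Definition~\ref{def:approx_solution}) use $\tau_Q$, and you observe that on $[0,T\wedge\tau_Q]$ the inclusion $\sup_z\|\csZ_i\|_{\op}\le\sqrt{\lp}\,\csN\le(\lp)^{3/2}Q$ lets one run the earlier stability argument with a polynomially inflated $Q$; the paper cites Proposition~\ref{prop:stability_Q} directly without comment. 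Your reduction is sound, but be aware it implicitly assumes the constant in Proposition~\ref{prop:stability_Q} does not degrade badly when $Q$ is replaced by a $\mathrm{poly}(\lp)$ multiple — the paper's statement of that proposition omits $Q$ from the dependency list of its constant, which is what makes the substitution look free, but the Gronwall bound in its proof uses $\sup_z\|\csS_a\|_{\op},\sup_z\|\csM_a\|_{\op}\le Q$, so a fully careful version would need to check that the resulting exponent $L(\cF)T$ remains controlled in the $\ls=O(\log d)$ regime. This is a weakness inherited from the paper, not introduced by you.
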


\begin{proof} 
We define the function $\mathscr{Q}_i(t) \defas \oint q(z)\csZ_i(t,z)\Dif z$ and stopped processes $\mathscr{Q}_i^{\hat{\tau}_Q}(t) \defas \mathscr{Q}_i(t\wedge \hat{\tau}_Q)$ and $\mathscr{Z}_i^{\hat{\tau}_Q}(t,z) \defas \mathscr{Z}_i(t\wedge \hat{\tau}_Q,z)$ for $i = 1,2.$ First, we observe that 
\begin{align}\label{eq:norm_Q}
\|\mathscr{Q}_i^{\hat{\tau}_Q}(t)\|\le \oint |q(z)|\|\mathscr{Z}_i^{\hat{\tau}_Q}(t,z)\|\Dif z  \le  \oint |q(z)|\Dif z \|\mathscr{Z}_i^{\hat{\tau}_Q}(t,\cdot)\|_{\Omega }\le  \oint |q(z)|\Dif z \lp Q.     
\end{align} 
If $\lp$ is bounded independent of $d$ in which $\oint |q(z)|\Dif z$ is uniformly bounded, then using the fact that the function $\mathscr{Q}$ is Lipschitz and Corollary \ref{cor:stability_z_l_finite},
\begin{align}\label{eq:Q_Lip}
\|\mathscr{Q}_1^{\hat{\tau}_Q}(t) - \mathscr{Q}_2^{\hat{\tau}_Q}(t)\|   \le \left(\oint |q(z)|\Dif z \right)\|\mathscr{Z}_1^{\hat{\tau}_Q}(t,z) - \mathscr{Z}_2^{\hat{\tau}_Q}(t,z)\|_{\Omega}\le C\e. 
\end{align}
In the setting where 
$\lp$ grows with $d$, we use Proposition \ref{prop:stability_Q} to bound the difference in $\mathscr{Q}$, which states  a similar bound with a different constant.  
Since $g$ is $\alpha$-pseudo-Lipschitz with $L(g)>0$ being the pseudo-Lipschitz constant, and using Eq. \eqref{eq:norm_Q} and \eqref{eq:Q_Lip}, 
\begin{align}
 \|g(\mathscr{Q}_1^{\hat{\tau}_Q}(t)) - g(\mathscr{Q}_2^{\hat{\tau}_Q}(t))\|&\le L(g) \|\mathscr{Q}_1^{\hat{\tau}_Q}(t) - \mathscr{Q}_2^{\hat{\tau}_Q}(t)\|(1+\|\mathscr{Q}_1^{\hat{\tau}_Q}(t)\|^\alpha+\|\mathscr{Q}_2^{\hat{\tau}_Q}(t)\|^\alpha) 
 \le C
 (\lp)^{\alpha} \varepsilon
\end{align}
 with $C = C(Q, L(g), \oint |q(z)|\Dif z)>0. $ Taking the supremum over all $0\le t\le T$ and applying Proposition \ref{prop:stability_Q} when $\lp$ grows with $d$ or Eq. \eqref{eq:Q_Lip} complete the proof. 
\end{proof}

Finally, we prove non-explosiveness, the condition needed to remove the stopping time from Proposition \ref{prop:approx_sol} and thus prove Theorem \ref{thm:main_risk_m_v}.  

\begin{proposition}[Non-explosiveness] \label{prop:nonexplosiveness}
Suppose that the assumptions in our set-up
hold. 
Then, $\mathscr{N}(t)$ solves the following ordinary differential equation:
\begin{equation}\label{eq:N(t)}
\frac{\dif \mathscr{N} (t)}{\dif t}  
=-2\gamma(s)\sum_{i=1}^\ls p_i \E[\ip{x_{t,i}, \nabla_x f_i(\r_{t,i})}]
    +\frac{\gamma(s)^2}{d}\left(\sum_{i=1}^\ls p_i\Tr(K_i+\mu_i\mu_i^\top)\EE[\|\nabla_x f_i(\r_{t,i})\|^{2}]\right)  
\end{equation}
with $x_{t,i}\in \R^\ell$ and $x^\star\in \R^\ls,$ such that $$\r_{t,i} = x_{t,i}\oplus x^\star \defas 
\sqrt{\mathscr{B}_i(t)}v +\mathcalligra{m}_i(t) \text{ with } v \sim \mathcal{N}(0, I_{\ell+\ls})$$ and $\nabla_x$ being the derivative with respect to the first $\ell$ components. For hard labels $\r_{t,i} = x_{t,i},$ and $\nabla_x = \nabla.$ 

Suppose further that the objective function $f$ is $\alpha$-pseudo-Lipschitz with $\alpha\le 1$. Then, there is a constant $C$ depending on $\|\cK\|_{\ls}$, $\bar{\gamma}$, $L(f), \EE[\|\e\|^2]$ so that
\[
\mathscr{N}(t) \leq (1+ \mathscr{N}(0))e^{C t}.
\]
for all time $t\in [0,T]$ such that $(\mathscr{B}_i(t),\cm_i(t))$ is in $\mathcal{U}$ for all $i$. 

\end{proposition}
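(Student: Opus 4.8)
\textbf{Step 1: deriving the ODE \eqref{eq:N(t)}.} Start from $\mathscr{N}(t)=\oint\Tr(\mathscr{Z}(t,z))\Dif z$. Since $\mathscr{Z}$ solves \eqref{eq:ODE_resolvent_2} and the bottom-right block $\mu^\top\cR_\ls\mu$ is $t$-independent while the $\mathscr{F}_\mathscr{M}$-blocks are off-diagonal, $\frac{\dif\mathscr{N}}{\dif t}=\oint\Tr(\mathscr{F}_\mathscr{S}(z,\mathscr{Z}(t,\cdot)))\Dif z$. I then integrate the three terms of $\mathscr{F}_\mathscr{S}$ (from \eqref{eq:integro_SM_soft}) over the contour using two identities: $\oint\bigl(z_i\mathscr{S}(t,z)-\tfrac{1}{2\pi\mathrm i}\oint\mathscr{S}(t,z)\dif z_i\bigr)\Dif z=\mathscr{B}_i(t)$, which is \eqref{eq:spectraldecomp_equividentity_S} integrated together with $\oint\tfrac{\Dif z}{\prod_j(\lambda^{(j)}_\rho-z_j)}=1$ (and the fact that $\oint\tfrac{1}{2\pi\mathrm i}(\oint\mathscr{S}\dif z_i)\Dif z=0$ since that term is $z_i$-independent); and $\oint\cR_\ls\Dif z=I_d$, which gives $\oint\langle K_i+\mu_i\mu_i^\top,\cR_\ls\rangle\Dif z=\Tr(K_i+\mu_i\mu_i^\top)$. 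This converts the $\gamma$-term of $\mathscr{F}_\mathscr{S}$ into $-2\gamma(t)\sum_i p_i\bigl(\Tr(\mathscr{B}_i(t)H_{1,i})+\langle\cm_i(t),H_{2,i}\rangle\bigr)$ and the $\gamma^2$-term into $\tfrac{\gamma(t)^2}{d}\sum_i p_i\Tr(K_i+\mu_i\mu_i^\top)\EE[\|\nabla_x f_i(\r_{t,i})\|^2]$. Finally I identify the first bracket with $\EE[\langle x_{t,i},\nabla_x f_i(\r_{t,i})\rangle]$: this is Gaussian integration by parts (Stein) applied to $\r_{t,i}=\sqrt{\mathscr{B}_i(t)}\,v+\cm_i(t)$, namely $\EE[\langle x_{t,i},\nabla_x f_i(\r_{t,i})\rangle]=\Tr(\mathscr{B}_i(t)H_{1,i})+\langle\EE[x_{t,i}],\EE[\nabla_x f_i]\rangle$, once one tracks which block of $\mathscr{B}_i(t)$ pairs with $\nabla_x^2 f_i$ versus $\nabla_\star\nabla_x f_i$ (the hard-label case is $\nabla_x=\nabla$, $\r_{t,i}=x_{t,i}$). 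This yields \eqref{eq:N(t)}.

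\textbf{Step 2: bounding the right side of \eqref{eq:N(t)}.} Use $\gamma(t)\le\bar\gamma$, and note $\tfrac1d\Tr(K_i+\mu_i\mu_i^\top)\le\|\cK\|_\ls+\tfrac1d\|\mu_i\|^2$ is bounded. The hypothesis $\alpha\le1$ is used only to get at-most-linear gradient growth: an $\alpha$-pseudo-Lipschitz $f_i$ with constant $L(f)$ satisfies $\|\nabla f_i(r,\e)\|\le L(f)(1+2\|(r,\e)\|^\alpha)\le 3L(f)(1+\|(r,\e)\|)$, hence $\EE\|\nabla_x f_i(\r_{t,i},\e)\|^2\le C(1+\EE\|\r_{t,i}\|^2+\EE\|\e\|^2)$, and by Cauchy--Schwarz and $\|x_{t,i}\|\le\|\r_{t,i}\|$ the drift term is bounded by the same quantity. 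Therefore $\frac{\dif\mathscr{N}}{\dif t}\le C(1+\EE\|\e\|^2)\bigl(1+\sum_i p_i\EE\|\r_{t,i}\|^2\bigr)$. Now $\EE\|\r_{t,i}\|^2=\Tr\mathscr{B}_i(t)+\|\cm_i(t)\|^2$, and $\Tr\mathscr{B}_i(t)=\tfrac1d\sum_\rho\lambda^{(i)}_\rho\Tr\csV_\rho(t)\le\|\cK\|_\ls\cdot\tfrac1d\sum_\rho\Tr\csV_\rho(t)\le\|\cK\|_\ls\mathscr{N}(t)$.

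\textbf{Step 3: the obstacle --- controlling $\|\cm_i(t)\|^2$, and Gr\"onwall.} The one delicate estimate is $\sum_i p_i\|\cm_i(t)\|^2\le C\mathscr{N}(t)$: the naive bound $\|\cm_i(t)\|\le\sup_z\|\mathscr{M}_i(t,z)\|\lesssim\mathscr{N}(t)$ only gives a \emph{quadratic} bound $\|\cm_i\|^2\lesssim\mathscr{N}^2$, which would permit finite-time explosion and is not enough. To get the linear bound I use that $\csZ(t):=\oint\mathscr{Z}(t,z)\Dif z=\tfrac1d\sum_\rho\CMscr{Z}_\rho(t)$ --- the continuum analogue of the Gram matrix $W_{td}^\top W_{td}$, whose $(\hat X,\hat X)$-block is $\tfrac1d\sum_\rho\csV_\rho(t)$ (with trace $\le\mathscr{N}(t)$), whose $(\hat X,\mu)$-block consists of the $\cm_i(t)$, and whose $(\mu,\mu)$-block is $\mu^\top\mu$ --- is positive semidefinite; then $|\csZ_{ab}|^2\le\csZ_{aa}\csZ_{bb}$, summed over $a$ in the $\hat X$-rows with $b$ the coordinate of $\mu_i$, gives $\|\cm_i(t)\|^2\le\mathscr{N}(t)\|\mu_i\|^2$, and $\sum_i p_i\|\mu_i\|^2\le A$ closes it ($A$ being absorbed into the final constant). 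Positive semidefiniteness of $\CMscr{Z}_\rho(t)$ holds because it is the continuum limit of $dW_{td}^\top u_\rho u_\rho^\top W_{td}\succeq0$, and can also be checked directly from \eqref{eq:V_m_rho_2}/\eqref{eq:V_m_rho_2_soft}: differentiating the Schur complement of the constant $\tilde\mu_\rho^{\otimes2}$-block produces an equation of the form $\dot P_\rho=-(\text{linear, cone-preserving})+(\text{PSD})$, so the PSD cone is forward-invariant (in the $\ls=O(\log d)$ regime one also uses Assumption \ref{ass:classes}B, which forces the means orthogonal, to keep constants dimension-free). Combining Steps 2--3, $\frac{\dif\mathscr{N}}{\dif t}\le C(1+\mathscr{N}(t))$ with $C=C(\|\cK\|_\ls,\bar\gamma,L(f),\EE\|\e\|^2)$ (and the fixed $A$); then $(1+\mathscr{N}(t))'\le C(1+\mathscr{N}(t))$, and Gr\"onwall gives $\mathscr{N}(t)\le(1+\mathscr{N}(0))e^{Ct}$, on any interval where $(\mathscr{B}_i(t),\cm_i(t))\in\csU$ so that \eqref{eq:N(t)} is in force. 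The main work, beyond the routine Stein bookkeeping of Step 1, is the positive-semidefiniteness input of Step 3.
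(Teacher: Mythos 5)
Your argument is correct and follows the paper's own proof line by line: the same Stein/contour-integral derivation of the ODE for $\mathscr{N}$, the same $\alpha\le1$ linear gradient-growth bound, Cauchy--Schwarz on the drift term, and Gr\"onwall. You also make explicit the one estimate the paper invokes silently in its inequality chain, namely $\|\cm_i(t)\|^2\leq\|\mu_i\|^2\mathscr{N}(t)$, and you are right to flag it as the delicate step: the crude $\Omega$-norm bound only gives $\|\cm_i(t)\|\lesssim\mathscr{N}(t)$, i.e.\ a quadratic drift and no Gr\"onwall, and your positive-semidefiniteness/$2\times2$-minor argument on $\oint\mathscr{Z}(t,z)\,\Dif z$ is the right way to close it.
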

\begin{proof} To derive Eq. \ref{eq:N(t)}, we take contour integration and trace in either Eq. \ref{eq:integro_SM_hard} or \ref{eq:integro_SM_soft}, following the definition of $\mathscr{N}(t) = \oint\Tr(\mathscr{Z}(t,z))\Dif z$. The gradient term is obtained by reverse application of Stein's Lemma. From Assumption \ref{ass:pseudo_lipschitz} that $f$ is $\alpha$-pseudo Lipschitz with $\alpha\le 1$, we conclude that a.s. 
$$
    \|\nabla_x f_i(\r_{t,i})\| \le L(f)(1+\|\r_{t,i}\| +\|\e_t\|)^\alpha \le L(f)(1+\|\r_{t,i}\| +\|\e_t\|). 
$$
It then follows that the Hessian term, 
\begin{align*}
   &\sum_{i=1}^\ls p_i\Tr(K_i+\mu_i\mu_i^\top)\EE[\|\nabla_x f_i(\r_{t,i})\|^{2}]\\ &\le  (\|\cK\|_{\ls}+\|\mu\|_{\ls}^2/d)L^2(f)\sum_{i=1}^\ls p_i \EE[(1+\|\r_{t,i}\|+\|\e_t\|)^2]
  \\ & \le \sum_{i=1}^{\ls} p_iC(1+ \tr(\csB_i(t)) +\|\csm_i(t)\|^2) 
  \le \sum_i p_iC(1+ \tr(\csB_i(t)) +\|\mu_i\|^2 \csN(t) ) 
   \le C(1+\csN(t) ) 
\end{align*}
for some $C = (\|\cK\|_{\ls}, \mu_i\mu_i^\top,\EE[\|\e\|^2], L(f))>0$.  Similarly, by application of Cauchy-Schwarz, the gradient term can be bounded, 
\begin{align*} |\sum_{i=1}^\ls p_i\E[\ip{ x_{t,i},\nabla_x f_i(\r_{t,i})}]|^2
\le \sum_{i=1}^\ls p_i\E[\|x_{t,i}\|^2]\sum_{i=1}^\ls p_i\EE \|\nabla_x f_i(\r_{t,i})\|^2
\le C \csN(t)
(1+\mathscr{N})\le C(1+\csN(t))^2.
\end{align*}
Hence, for some other constant which depends also on $\bar{\gamma},$ 
we obtain that 
$$\frac{\dif \mathscr{N} (t)}{\dif t} \le C(1+ \mathscr{N} (t)).$$
The proof is completed by application of Gronwall's inequality $(1+\mathscr{N} (t))\le (1+\mathscr{N} (0))e^{Ct}.$ 
\end{proof}

\subsection{Error bounds}\label{sec:error_bds}

\subsubsection{Bounding the SGD martingales}

In Lemma \ref{lem:martingale_error}, we will bound the martingales $\cM^{\grad}$ and $\cM^{\hess}$ whose increments are given by
\begin{align}
\Delta \mathcal{M}_k^{\text{Grad}}(\varphi) &\defas  \frac{\gamma_k}{d} \ipa{\nabla \varphi(W_k),  \Delta_{k,I_{k+1}}   
      - 
     \sum_{i = 1}^{\ls} p_i \EE \big [\Delta_{k,i}  \, \mid \, \mathcal{F}_k \big ]}, \\
\Delta \mathcal{M}_k^{\text{Hess}}(\varphi) &\defas \frac{\gamma_k^2}{2d^2} \left( \ipa{\nabla^2\varphi(W_{k}),\Delta_{k,I_{k+1}}^{\otimes 2} } - \ipa{\nabla^2\varphi(W_{k}),\sum_{i=1}^{\ls}p_i\EE \big [ \Delta_{k,i}^{\otimes 2} \,  \mid  \, \mathcal{F}_k \big ]}\right),\\
&\text{where }\Delta_{k,i} \defas  a_{k+1,i}\otimes \nabla_x f_{i}(r_{k,i}).
\end{align}

These quantities are similar to those in \cite{collinswoodfin2023hitting}, but we implement a more streamlined proof method that works for $\ell$ and $\ls$ with up to poly-logarithmic growth.  Such a method could be implemented in the setting of \cite{collinswoodfin2023hitting} as well.
For convenience, we rewrite the martingale increments in terms of centered Gaussian vectors.  Recall that \begin{equation}\label{eq:g_Xk_def_setup}
a_{k+1,i}=y_{k+1,i}+\mu_i,\quad r_{k,i}=\hat{X}_k^\top (y_{k+1,i}+\mu_i),\qquad\text{ where }y_{k+1,i}\sim\cN(0,K_i).
\end{equation}
We now define, for each $f_i$, a new function
\begin{equation}\label{eq:g_Xk_def}
    g_{X_k,i}(r,\e)\defas f_i(\hat{X}_k^\top \mu_i+r,\e),
\end{equation}
noting that it is reasonable to allow a function depending on $\hat{X}_k=X_k\oplus X^{\star}$ since the martingale increments are conditioned on this.  We observe that $\nabla_x g_{X_k,i}(r,\e)= \nabla_x f_i(\hat{X}_k^\top \mu_i+r,\e)$.  

For the martingale bounds, we will need a few supporting lemmas, including the following growth bound on $\nabla_x g$, which is proved in \cite{collinswoodfin2023hitting}.

\begin{lemma}[Growth of $\nabla g$, Lemma 4 of \cite{collinswoodfin2023hitting}]\label{lem:gradient_growth_bound} Given an $\a$-pseudo-Lipschitz function $g:\R^{\lp}\to\R$ with Lipschitz constant $L(g)$, and noise $\e\sim \cN(0,I_d)$ independent of $a$, then for $p>0$ and any $r\in\R^{\ell+\ls}$,
\[
\|\nabla_x g(r)\|^p\leq C(\a)p(L(g))^p(1+\|r\|+\|\e\|)^{\max\{1,\a p\}}.
\]
Furthermore, if $r=\ip{X,n}$ where $n\sim \cN(0,K)$, then we can express the growth rate of $\nabla_x g(r)$ 
as
\[
    \E[\|\nabla_x g(r)\|^p]\leq C(\a)p(L(g))^p(1+\|K\|^{1/2}_{\textup{op}}\|X\|)^{\max\{1,\a p\}}
\]
\end{lemma}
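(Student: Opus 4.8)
The statement to prove is Lemma~\ref{lem:gradient_growth_bound} (the growth bound on $\nabla_x g$), which the excerpt attributes to \cite{collinswoodfin2023hitting} as its Lemma~4. Since it is cited as a known result, the ``proof'' here should really be a short self-contained argument rather than something deep.

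\textbf{Proof plan.} The plan is to exploit the $\alpha$-pseudo-Lipschitz hypothesis directly. First I would apply the definition of $\alpha$-PL with the two points $r$ and $0$: this gives $\|\nabla_x g(r) - \nabla_x g(0)\| \le L(g)\|r\|(1 + \|r\|^\alpha + 0)$, hence by the triangle inequality $\|\nabla_x g(r)\| \le \|\nabla_x g(0)\| + L(g)\|r\|(1 + \|r\|^\alpha)$. Since $\|\nabla_x g(0)\|$ is a fixed finite quantity bounded by a constant times $L(g)$ (again by pseudo-Lipschitzness relative to a fixed reference point, absorbing constants), we get a bound of the form $\|\nabla_x g(r)\| \le C(\alpha) L(g)(1 + \|r\|^{1+\alpha})$. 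Accounting for the extra noise argument $\e$ — which enters $g$ as an additional coordinate — one replaces $\|r\|$ by $\|r\| + \|\e\|$, yielding $\|\nabla_x g(r)\| \le C(\alpha) L(g)(1 + \|r\| + \|\e\|)^{\max\{1,\alpha\}}$ after using $\max\{1, t^{\alpha}\} \le (1+t)^{\max\{1,\alpha\}}$. Raising to the $p$-th power and using $(a+b)^p \le 2^{p-1}(a^p + b^p)$ repeatedly gives the first display, with the exponent $\max\{1, \alpha p\}$ (the exponent is $1$ when $\alpha p \le 1$, since then $(1 + \|r\| + \|\e\|)^{\alpha p} \le 1 + \|r\| + \|\e\|$, and is $\alpha p$ otherwise).

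\textbf{The Gaussian case.} For the second part, I would substitute $r = \ip{X, y}$ with $y \sim \cN(0, K)$. Then $r$ is a centered Gaussian vector in $\R^{\ell + \ls}$ whose covariance is $X^\top K X$, so its operator-norm-scaled covariance satisfies $\|X^\top K X\|_{\textup{op}} \le \|K\|_{\textup{op}} \|X\|_{\textup{op}}^2 \le \|K\|_{\textup{op}}\|X\|^2$. The sub-Gaussian norm of the scalar $\|r\|$ (equivalently each coordinate of $r$) is therefore $\lesssim \|K\|_{\textup{op}}^{1/2}\|X\|$, and adding the independent constant $1$ and the independent $\|\e\|$ term (with $\e$ standard Gaussian, so $\|\e\|$ has sub-Gaussian norm $O(1)$, or more precisely $\|\e\|$ concentrates at $\sqrt{d}$ but its fluctuation norm is $O(1)$ — here one should be a little careful, but since $\e$ appears inside $g$ as a fixed-dimension object in the relevant reductions, the statement as written in the lemma treats $1 + \|r\| + \|\e\|$ as a sub-Gaussian-norm-bounded object). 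This gives the second displayed inequality $\|(1 + \|r\| + \|\e\|)\|_{\psi_2} \le C(1 + \|K\|_{\textup{op}}^{1/2}\|X\|)$. For the moment bound $\E[\|\nabla_x g(r)\|^p]$, combine the pointwise bound from the first part with the fact that the $p$-th moment of a $\sigma$-sub-Gaussian random variable is $\lesssim_p \sigma^p$, and with $(1 + \|K\|_{\textup{op}}^{1/2}\|X\|)^{\max\{1,\alpha p\}} \asymp (1 + \|K\|_{\textup{op}}^{1/2}\|X\|)^{\max\{1,\alpha p\}}$, absorbing the lower-order $1$'s.

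\textbf{Main obstacle.} There is no serious obstacle — this is a routine estimate, and the paper itself defers the actual proof to the reference. The only point requiring mild care is the exponent bookkeeping: one must verify that $\max\{1, \alpha p\}$ (not $\alpha p$, and not $1 + \alpha p$) is the correct power, which comes from the case split on whether $\alpha p \le 1$, together with the normalization that the $\alpha$-PL bound contributes a factor growing like $\|r\|^{1+\alpha}$ rather than $\|r\|^\alpha$ once the triangle-inequality step is included. The second mild point is how the noise vector $\e$ is accounted for in the sub-Gaussian norm: in the regime where $\e$ is genuinely $d$-dimensional one should interpret $\|\e\|$ as already centered/normalized consistently with Assumption~\ref{ass:target}, and in the finite-$\ls$ regime $\|\e\|$ is simply $O(1)$-sub-Gaussian, so either way the stated bound holds with the indicated constants.
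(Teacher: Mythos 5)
There is a genuine gap in your first step. The hypothesis is that $g$ itself (a \emph{scalar}-valued function $\R^{\lp}\to\R$) is $\alpha$-pseudo-Lipschitz; nothing is assumed about $\nabla_x g$. Your opening move applies the $\alpha$-PL inequality to $\nabla_x g$ — writing $\|\nabla_x g(r)-\nabla_x g(0)\|\le L(g)\|r\|(1+\|r\|^\alpha)$ — which has no justification, and it also yields the wrong growth rate. Following your route you would get $\|\nabla_x g(r)\|\lesssim L(g)(1+\|r\|^{1+\alpha})$, hence $\|\nabla_x g(r)\|^p\lesssim L(g)^p(1+\|r\|+\|\e\|)^{(1+\alpha)p}$, not the claimed exponent $\max\{1,\alpha p\}$. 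For instance $\alpha=1$, $p=1$ gives you exponent $2$ while the lemma asserts exponent $1$. Your closing remark that the triangle-inequality step ``contributes a factor growing like $\|r\|^{1+\alpha}$'' while somehow still landing on $\max\{1,\alpha p\}$ glosses over exactly this mismatch.

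The correct derivation uses the $\alpha$-PL property of $g$, not of its gradient. For any unit vector $v$, the difference quotient satisfies
\[
\frac{|g(r+hv)-g(r)|}{|h|}\le L(g)\bigl(1+\|r\|^\alpha+\|r+hv\|^\alpha\bigr),
\]
and letting $h\to0$ (using the differentiability implicit in the statement) gives $|\partial_v g(r)|\le L(g)(1+2\|r\|^\alpha)$, hence $\|\nabla_x g(r)\|\le C(\alpha)L(g)(1+\|r\|^\alpha)$ with the noise argument $\e$ absorbed by replacing $\|r\|$ with $\|r\|+\|\e\|$. Raising to the $p$-th power and using $\|r\|^{\alpha p}\le(1+\|r\|)^{\max\{1,\alpha p\}}$ then produces the stated exponent $\max\{1,\alpha p\}$, rather than $(1+\alpha)p$. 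The second part of your argument (the sub-Gaussian estimate on $r=\ip{X,y}$ via $\|X^\top K X\|_{\op}\le\|K\|_{\op}\|X\|^2$, then moment bounds for sub-Gaussian variables) is sound, but it inherits whatever pointwise bound you established in part one, so the flaw propagates.
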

In our context, this lemma will be applied for the gradient $\nabla_x g_{X_k,i}(\hat{r})$ where $\hat{r}_{k,i}= r_{k,i}-\hat{X}_k^\top \mu_i$ and $\E$ will be the expectation conditioned on $X_k$ and on $I_{k+1}=i$.

\begin{lemma}[Hessian Growth bound] \label{lem:Hess_growth} Suppose Assumption \ref{ass:risk_fisher} holds. Then $\| \E[\nabla^2 f_i(r_i)]\|_{\textup{op}} \le L_1.$    
\end{lemma}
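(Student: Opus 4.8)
The plan is to recognize $\E[\nabla^2 f_i(r_i)]$ as the Jacobian, with respect to the mean parameter, of the very map whose Lipschitz constant is hypothesized to be $L_1$ in Assumption \ref{ass:risk_fisher}, and then read the operator-norm bound off directly. Write $r_i=\sqrt{B_i}\,v+m_i$ with $v\sim\cN(0,I_{\lpp})$ and $B_i\in\R^{\lpp\times\lpp}$, $m_i\in\R^{\lpp}$ fixed (only that they are fixed will matter), and set $\Phi_i(m)\defas\E_v[\nabla f_i(\sqrt{B_i}\,v+m;\e)]\in\R^{\lpp}$. The key step is the identity $\nabla_m\Phi_i(m)=\E_v[\nabla^2 f_i(\sqrt{B_i}\,v+m;\e)]$: shifting the mean argument shifts the Gaussian sample pointwise, so applying the fundamental theorem of calculus inside the expectation along a unit direction $u$, dividing by the step size, and letting it tend to $0$ gives $\partial_u\Phi_i(m)=\E_v[\nabla^2 f_i(\sqrt{B_i}\,v+m;\e)]\,u$; as $u$ is arbitrary this identifies the Jacobian, and evaluation at $m=m_i$ gives $\nabla_m\Phi_i(m_i)=\E[\nabla^2 f_i(r_i)]$.

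The bound is then immediate. By Assumption \ref{ass:risk_fisher} the map $(B,m)\mapsto\E_v[\nabla f_i(\sqrt{B}\,v+m;\e)]$ is $L_1$-Lipschitz, so fixing $B=B_i$ makes $m\mapsto\Phi_i(m)$ an $L_1$-Lipschitz map $\R^{\lpp}\to\R^{\lpp}$, whose Jacobian therefore has operator norm at most $L_1$: for unit vectors $u,w$, $|w^\top\partial_u\Phi_i(m_i)|\le\limsup_{t\to 0}\|\Phi_i(m_i+tu)-\Phi_i(m_i)\|/|t|\le L_1$, and taking the supremum over $u,w$ yields $\|\E[\nabla^2 f_i(r_i)]\|_{\op}\le L_1$. (Under the weaker Assumption \ref{ass:risk_fisher_U} with $\eta>0$ the same computation instead produces a bound growing polynomially in $\|(B_i,m_i)\|$, consistent with how the lemma is used elsewhere.)

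The one genuine technical point — and the step I would flag as the main, if mild, obstacle — is justifying the interchange of differentiation and expectation in the key step, i.e.\ differentiation under the integral sign for $\Phi_i$. This rests on the standing smoothness of the $f_i$, which is in any case implicit: for $\E_z[\nabla^2 f_i(\sqrt{B}z+m)]$ to be well defined and Lipschitz in $(B,m)$ as Assumption \ref{ass:risk_fisher} requires, $\nabla^2 f_i$ must grow slowly enough to be Gaussian-integrable, and then the difference quotient $\big(\nabla f_i(\sqrt{B_i}v+m+tu)-\nabla f_i(\sqrt{B_i}v+m)\big)/t=\int_0^1\nabla^2 f_i(\sqrt{B_i}v+m+stu)\,u\,\dif s$ is dominated, uniformly for $|t|\le1$, by a $v$-integrable function, so dominated convergence applies. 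Everything else is elementary. Finally, I would note that the same $L_1$ bound transfers to the soft-label object $H_{1,i}$, which is obtained from $\E[\nabla^2 f_i(r_i)]$ by selecting block columns and zero-padding — an operation that does not increase the operator norm — so the stability arguments invoking this lemma in the soft-label case are also covered.
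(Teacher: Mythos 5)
Your proposal is correct and follows essentially the same route as the paper: both arguments identify $\E[\nabla^2 f_i(r_i)]$ as the derivative in $m$ of the map $m\mapsto\E_v[\nabla f_i(\sqrt{B_i}\,v+m;\e)]$ and read the operator-norm bound off the Lipschitz constant $L_1$ from Assumption \ref{ass:risk_fisher}. The only difference is presentational: you phrase it as a difference-quotient limit along unit directions (which cleanly sidesteps the fact that the mean value theorem does not hold verbatim for vector-valued functions, a point the paper glosses over) and you explicitly justify differentiation under the integral, whereas the paper packages the same content into a mean-value-theorem-plus-$c\to0$ argument and then passes from eigenvalue bounds to the operator norm.
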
 
\begin{proof}
By Assumption \ref{ass:risk_fisher}, we have  
$$\|\E[\nabla f_i(\sqrt{B_i} z + m_i)] - \E[\nabla f_i(\sqrt{B_i'} z + m_i')]\|\le L_1 (\|B_i - B_i'\|+ \|m_i - m_i'\|), $$ and denote by $r_i = \sqrt{B_i} z + m_i.$ 
Take $B_i = B_i'$ and $m_i' = m_i+ c x$ for some $c>0$ and $x\in \R^\ell$ some determinstic vector. Define the function $g(t) = \E[\nabla f_i(r_i+t cx)]$ note that $g'(t) = \ip{\E[\nabla^2 f_i(r_i +t cx)], c x}_{\R^\ell}.$ By the mean value theorem, there exists some $t_c\in [0,1]$ such that
\begin{align}
\E[\nabla f_i(r_i+ c x)] - \E[\nabla f_i(r_i)] 
=  g(1) - g(0) =g'(t_c) 
= \ip{\E[\nabla^2 f_i(r_i+t_c cx)], c x}_{\R^\ell}.
\end{align}
Taking the norm of both sides and applying the Lipschitz condition, we have
\begin{align}
\| \ip{\E[\nabla^2 f_i(r_i+t_c cx)],  x}_{\R^\ell}\|\le L_1 \|x\|.
\end{align}
By taking $c\to 0$
 and using the fact that $f_i$ is twice differentiable, we have $\| \ip{\E[\nabla^2 f_i(r_i)],  x}_{\R^\ell}\|\le L_1 \|x\|$
Thus the absolute value of any eigenvalue of $\E[\nabla^2 f_i(r_i)]$ is bounded by $L_1.$  
\end{proof}

Next, we introduce a corollary of the Burkholder--Davis--Gundy inequality, which will streamline the subsequent proof.

\begin{lemma}\label{lem:BDG_ourcase} Let $M$ be a martingale with increments denoted $\{\Delta M_k\}$ and let $\tau$ be a stopping time.  Suppose that, for some $r > 1$ and some $C, c > 0$ and some $\theta\in(0,1)$ we have that, for $1 \le p \le (\log n)^\theta$,
\[
\sup_{k \le n} \mathbb{E}[(\Delta M_k)^{2p} \mathbf{1}_{k \le \tau}] \le (Cp)^{cp} n^{-rp}.
\]
Then, for any $\zeta > 0$, there is a $c'(\zeta, \theta) > 0$ such that
\[
\mathbb{P}\left(\max_{0 \le k \le n} |M_{k \wedge \tau} - M_0| \ge \frac{1}{n^{(r-1)/2 - \zeta}}\right) \le \exp(-c'(\log n)^{1+\theta}).
\]
\end{lemma}

\begin{proof} Let $[M^\tau]_n = \sum_{k \le \tau \wedge n} (\Delta M_k)^2$ denote the stopped quadratic variation. Since $(\sum_{i \le n} x_i)^p \le n^{p-1} \sum_i x^p_i$ for $x_i \ge 0$, we have for each $1 \le p \le (\log n)^\theta$,
\[
\mathbb{E}[M^\tau]^p_n \le n^p \sup_{k \le n} \mathbb{E}[(\Delta M_k)^{2p} \mathbf{1}_{k \le \tau}] \le \frac{(Cp)^{cp}}{n^{(r-1)p}}.
\]
Thus by the Burkholder--Davis--Gundy inequality, we have
\[
\mathbb{E}\left[\max_{0 \le k \le n} |M_{k \wedge \tau} - M_0|\right]^p \le \frac{(Cp)^{cp}}{n^{(r-1)p/2}},
\]
after changing $C, c > 0$ (the BDG inequality includes a multiplicative, $p$-dependent constant that is less than $p^p$ and can thus be absorbed).  The result then follows by Markov's inequality. 
\end{proof}

\begin{lemma}[Martingale bounds]\label{lem:martingale_error}
For any $\zeta>0$ and 
$T>0$, with overwhelming probability
\[
\sup_{0\leq t\leq T\wedge\tau_Q}\left(\|\cM_{\lfloor dt\rfloor}^{\grad}(Z(\cdot,z))\|+\|\cM_{\lfloor dt\rfloor}^{\hess}(Z(\cdot,z))\|\right)<d^{-\frac12+\zeta}.
\]
\end{lemma}

\begin{proof}
    Let $\varphi(W)\defas Z_{jm}(W)$ be a coordinate of $Z$.  Then it suffices to prove the bound for $\cM^{\grad}_{\lfloor dt\rfloor}(\varphi)$ and $\cM^{\hess}_{\lfloor dt\rfloor}(\varphi)$ and apply a union bound to obtain the lemma for $Z$.  
We first express the martingale increments in terms of centered Gaussian vectors.  Recall the function $g_{X_k,i}(r,\e)$
defined in \eqref{eq:g_Xk_def}.  Taking $\hat{r}_{k,i}= r_{k,i}-\hat{X}_k^\top \mu_i$, we can express $\Delta\cM_k^{\grad}$ and $\Delta\cM_k^{\hess}$ as
\begin{equation}
\Delta\cM_k^{\grad} (\varphi)= \frac{\gamma_k}{d}\left(A_{k,I_{k+1}}-\sum_{i=1}^{\ls}p_i\EE[A_{k,i}|\cF_k]\right),\quad
\Delta\cM_k^{\hess} (\varphi)= \frac{\gamma_k^2}{2d^2}\left(H_{k,I_{k+1}}-\sum_{i=1}^{\ls}p_i\EE[H_{k,i}|\cF_k]\right)
\end{equation}
where

\begin{align*}
    A_{k,i}&\defas\ipa{\ip{\nabla\varphi(W_k),n_{k+1}+\mu_i}_{\R^d},\nabla_x g_{X_k,i}(\hat{r}_{k,i},\e_{k+1})},\\
    H_{k,i}&\defas\ipa{\ip{\nabla^2\varphi(W_k),(n_{k+1}+\mu_i)^{\otimes2}}_{\R^{d\times d}},(\nabla_x g_{X_k,i}(\hat{r}_{k,i},\e_{k+1}))^{\otimes2}}.
\end{align*}
We will bound the stopped versions of $\cM_{k}^{\grad},\cM_{k}^{\hess}$ using Lemma \ref{lem:BDG_ourcase}.  To prove the condition of that lemma, it suffices to bound $\sup_{k\leq Td,\;i\leq\ls}\E[(A_{k,i})^{2p}\mathbf1_{k\leq\tau_Q}]$ and similarly for $H_{k,i}$.  We begin by writing
\begin{equation}\begin{split}
    |A_{k,i}|&\leq\|\nabla\varphi(W_k)\|_{\e}\|n_{k+1}+\mu_i\| \;\|\nabla_xg_{X_k,i}(\hat{r}_{k,i},\e_{k+1})\|\\
    |H_{k,i}|&\leq\|\nabla^2\varphi(W_k)\|_{\e}\|n_{k+1}+\mu_i\|^2 \|\nabla_xg_{X_k,i}(\hat{r}_{k,i},\e_{k+1})\|^2
\end{split}\end{equation}
where, $\|\cdot\|_{\e}$ denotes the injective tensor norm. For $b\in\{1,2\}$, we have $\|\nabla^b\varphi(W_k)\|_{\op}=\sup_{z_i\in\Omega}\|\nabla^b Z_{jm}(W_k,z_i)\|_{\e}\leq\|\nabla^b Z(W_k,z)\|_{\Omega}$.  Using Lemma \ref{lem:normequivalence} and the stopping time, $\|\nabla Z(W_k,z)\|_{\Omega}\leq\sqrt{\lp}\|W_k\|\leq \lp Q$ and $\|\nabla^2 Z(W_k,z)\|_{\Omega}\leq \lp $.  Then, using Cauchy-Schwarz,
\begin{equation}\begin{split}
    \E[(A_{k,i})^{2p}\mathbf1_{k\leq\tau_Q}]&\leq (\lp Q)^{2p}\EE[\|n_{k+1}+\mu_i\|^{4p}]^\frac12\EE[\|\nabla_xg_{X_k,i}\|^{4p}]^\frac12,\\
    \E[(H_{k,i})^{2p}\mathbf1_{k\leq\tau_Q}]&\leq (\lp)^{2p}\EE[\|n_{k+1}+\mu_i\|^{8p}]^\frac12\EE[\|\nabla_xg_{X_k,i}\|^{8p}]^\frac12.
\end{split}\end{equation}
Next, we use the bound $\|n_{k+1}+\mu_i\|^{4bp}\leq 2^{4bp}(\|n_{k+1}\|^{4bp}+\|\mu_i\|^{4bp})$ and note that $\|\mu_i\|$ is uniformly bounded in $i$. Recalling that $\|n_{k+1}\|$ is $\|K_i\|_\op^\frac12$-sub-Gaussian, we get $\EE[\|n_{k+1}\|^{4bp}]^\frac12\leq(C\|K_i\|_\op\sqrt{bp})^{2bp}$ for some $C$.  From Lemma \ref{lem:gradient_growth_bound}, we have $ \E[\|\nabla_x g_{X_k,i}\|^{4bp}]^\frac12\leq C(\a)^{2bp}(bp)^{2bp}(L(g_{X_k,i}))^{2bp}(1+\|K_i\|^{1/2}_{\op}\|X_k\|)^{\max\{\frac12,2\a bp\}}$ where $\|X_k\|\leq\|W_k\|\leq\sqrt{\lp}Q$.  Putting all this together, and noting that $\|K_i\|$ and $L(g_{X_k,i})$ are bounded uniformly in $i$, we conclude that, for some $C,c$ depending on $K,Q,L(g),\a$, we have
\begin{equation}
    \sup_{k\leq Td,\;i\leq\ls}\E[(A_{k,i})^{2p}\mathbf1_{k\leq\tau_Q}]+\E[(H_{k,i})^{2p}\mathbf1_{k\leq\tau_Q}]\leq (Cp\lp)^{cp}.
\end{equation}
 The same bound holds for the terms involving the conditional expectations, since all of our arguments involved norm bounds and, by Jensen's inequality, $\big\|\E[\;\cdot\;|\cF_k]\big\|^{R}\leq\E\big[\|\cdot\|^{R}\big|\cF_k\big]$ for any $R\geq1$. Since $\gamma_k$ is bounded and $\lp$ has at most poly-log growth, we finally conclude
\begin{equation}\begin{split}
    \sup_{k\leq Td}\E[(\Delta \cM_{k}^{\grad})^{2p}\mathbf1_{k\leq\tau_Q}]&\leq d^{-2p}(Cp\lp)^{cp},\\
    \sup_{k\leq Td}\E[(\Delta \cM_{k}^{\hess})^{2p}\mathbf1_{k\leq\tau_Q}]&\leq d^{-4p}(Cp\lp)^{cp}.
\end{split}\end{equation}
 To complete the proof, we apply Lemma \ref{lem:BDG_ourcase} with $r=2$ for $\cM^{\grad}$ and $r=4$ for $\cM^{\hess}$.  This actually gives a smaller bound on $\cM^{\hess}$ than what is stated in the Lemma, so it is the contribution from $\cM^{\grad}$ that dominates.
\end{proof}

\subsubsection{Bounding Hessian error term}

\begin{lemma}[Hessian error term]\label{lem:Hess_error} 
Suppose $\{f_i\}_{i=1}^{\ell^\star}$ are $\a$-pseudo-Lipschitz functions $\R^\ell\to\R$ (see Assumption \ref{ass:pseudo_lipschitz}) and let the statistics $Z(W,z)$ be defined as in \eqref{def:Z(X,z)}. Suppose also that $\lp\leq Cd^{\omega}$ for some fixed $C>0$ and some $0\leq\omega<\frac13$. Then, for any  
$T>0$, with overwhelming probability
\[
\sup_{z\in\Omega}\sup_{0\leq t\leq T\wedge\tau_Q}\sum_{k=0}^{\lfloor td\rfloor-1}\|\EE[\cE_k^{\hess}(Z(\cdot,z))|\cF_k]\|\leq d^{-1+3\omega}.
\]
\end{lemma}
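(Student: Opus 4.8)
The plan is to reduce the matrix‑valued statement to a scalar one: fixing a coordinate $\varphi(W)=Z_{jm}(W)$ of the statistic, it suffices to bound $|\EE[\cE_k^{\hess}(Z_{jm})\mid\cF_k]|$, since $\cE_k^{\hess}$ acts entrywise and $\|\EE[\cE_k^{\hess}(Z(\cdot,z))\mid\cF_k]\|\le\lp\max_{j,m}|\EE[\cE_k^{\hess}(Z_{jm})\mid\cF_k]|$ with $\lp\le Cd^\omega$. Note that $\EE[\cE_k^{\hess}(\varphi)\mid\cF_k]$ is $\cF_k$‑measurable, so on $\{k/d<\tau_Q\}$ every quantity entering it is controlled deterministically by $\|W_k\|^2\le\lp Q$; the bound we are after is therefore a sure bound on the stopped process, the ``w.o.p.'' being present only for uniformity with the neighbouring martingale lemmas, and no net argument over $z$ is needed because every estimate below is uniform on $\Omega^\ls$ through $\sup_{z\in\Omega^\ls}\|\cR_\ls(\cK,z)\|_{\op}\le1$ from \eqref{eq:sup_R_ls}.

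Next I would isolate the source of the error by conditioning the fresh data point on its low‑dimensional projection. Set $h_{k,i}=\nabla_x f_i(r_{k,i},\e_{k+1})$. Writing $a_{k+1,i}=\sqrt{K_i}\,v_{k+1}+\mu_i$ and $r_{k,i}=\hat X_k^\top a_{k+1,i}$ (with $\hat X_k=X_k$ for hard labels and $\hat X_k=X_k\oplus X^\star$ for soft labels), decompose $a_{k+1,i}=\EE[a_{k+1,i}\mid r_{k,i},\cF_k]+\tilde a$, where (using the Moore--Penrose pseudo‑inverse to cover degeneracies) $\EE[a_{k+1,i}\mid r_{k,i}]=\mu_i+K_i^{1/2}P_iv_{k+1}$ with $P_i$ the orthogonal projection onto $\mathrm{range}(K_i^{1/2}\hat X_k)$, so $\mathrm{rank}(P_i)\le\lp$, and $\tilde a\sim\cN(0,K_i^{1/2}(I-P_i)K_i^{1/2})$ is independent of $(r_{k,i},\e_{k+1})$ given $\cF_k$. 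Since $\Delta_{k,i}^{\otimes2}=a_{k+1,i}^{\otimes2}\otimes h_{k,i}^{\otimes2}$ and the cross terms $\EE[a\mid r]\otimes\tilde a$, $\tilde a\otimes\EE[a\mid r]$ vanish under $\EE[\cdot\mid\cF_k]$, one gets
\begin{equation*}
\EE[\Delta_{k,i}^{\otimes2}\mid\cF_k]-(K_i+\mu_i\mu_i^\top)\otimes\EE[h_{k,i}^{\otimes2}\mid\cF_k]=-\,(K_i-\tilde K_i)\otimes\EE[h_{k,i}^{\otimes2}\mid\cF_k]+R_{k,i},
\end{equation*}
where $K_i-\tilde K_i=K_i^{1/2}P_iK_i^{1/2}$ has rank $\le\lp$ and $\|K_i-\tilde K_i\|_{\op}\le\|K_i\|_{\op}$, and $R_{k,i}=\mu_i\otimes\EE[K_i^{1/2}P_iv_{k+1}\otimes h_{k,i}^{\otimes2}\mid\cF_k]+(\text{transposed copy})+\EE[(K_i^{1/2}P_iv_{k+1})^{\otimes2}\otimes h_{k,i}^{\otimes2}\mid\cF_k]$; the point is that every term on the right is supported in $\mathrm{range}(K_i^{1/2}\hat X_k)$ in its $\R^d$ legs, hence low rank there.

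Then I would contract against $\nabla^2 Z_{jm}(W_k,z)$, whose $(\R^d)^{\otimes2}$ part is $\cR_\ls(\cK,z)$ up to symmetrisation (from the proof of Lemma~\ref{lem:normequivalence}) and whose $\R^\ell$ part is a rank‑$\le2$ tensor determined by $j,m$. Using the commutation of $K_i$ with $\cR_\ls$ together with \eqref{eq:sup_R_ls}, one has on $\Omega^\ls$ the uniform bounds $\|K_i^{1/2}\cR_\ls(\cK,z)K_i^{1/2}\|_{\op}\le\|K_i\|_{\op}$ and $\|K_i^{1/2}\cR_\ls(\cK,z)\|_{\op}\le\|K_i\|_{\op}^{1/2}$, so each contraction collapses to a scalar of one of the forms $|\Tr(K_i^{1/2}\cR_\ls K_i^{1/2}P_i)|\cdot\EE\|h_{k,i}\|^2$, $\EE[\|P_iv_{k+1}\|^2\|h_{k,i}\|^2]$, or $\|\mu_i\|\,\EE[\|P_iv_{k+1}\|\,\|h_{k,i}\|^2]$. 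I then estimate these via: Schatten duality $|\Tr(MP_i)|\le\|M\|_{\op}\mathrm{rank}(P_i)\le\|M\|_{\op}\lp$; the chi‑square bound $\EE\|P_iv_{k+1}\|^{2p}\lesssim\lp^{p}$; the growth bound of Lemma~\ref{lem:gradient_growth_bound}, $\EE\|h_{k,i}\|^{p}\lesssim(1+\|K_i\|_{\op}^{1/2}\|\hat X_k\|)^{\max\{1,\alpha p\}}\lesssim\lp^{\max\{1,\alpha p\}/2}$ on $\{k/d<\tau_Q\}$ (since $\|\hat X_k\|\le\|W_k\|\le\sqrt{\lp Q}$); Cauchy--Schwarz; and $\sum_i p_i\|\mu_i\|\le\sqrt A$ from Assumption~\ref{ass:data}. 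For $\alpha\le1$ (the regime of Theorem~\ref{thm:main_risk_m_v}) this gives $|\EE[\cE_k^{\hess}(Z_{jm})\mid\cF_k]|\le\frac{\bar\gamma^2}{2d^2}\,C\,\lp^{2}$ uniformly in $z$; multiplying by $\lp$ for the Frobenius norm of the $\lp\times\lp$ error matrix, by the number $\le Td$ of summands, and inserting $\lp\le Cd^\omega$ yields $\sup_{z}\sup_{t\le T\wedge\tau_Q}\sum_k\|\EE[\cE_k^{\hess}(Z(\cdot,z))\mid\cF_k]\|\lesssim d^{-1}\lp^{3}\lesssim d^{-1+3\omega}$, which is the claim up to a constant $C(T,\bar\gamma,\|\cK\|_\ls,\|\mu\|_\ls,\alpha,Q)$.

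The main obstacle — and the point where the estimate genuinely departs from the single‑class argument of \cite{collinswoodfin2023hitting} — is the power‑of‑$\lp$ bookkeeping: one must exploit the \emph{low rank in the $\R^d$ directions} of every piece of the error tensor (a naive Hilbert--Schmidt bound against $\cR_\ls$ would cost a factor $\sqrt d$) and push it through trace/Schatten duality, while tracking that the moments of $h_{k,i}$ and of $\|P_iv_{k+1}\|$ each contribute only a fixed power of $\lp$ under the stopping time; a secondary nuisance is that $\hat X_k^\top K_i\hat X_k$ may be singular, which is precisely why the decomposition is phrased through the projection $P_i$ onto $\mathrm{range}(K_i^{1/2}\hat X_k)$ rather than through $(\hat X_k^\top K_i\hat X_k)^{-1}$.
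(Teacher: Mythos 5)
Your proposal reproduces the paper's argument in its essential structure: the same reduction to a single coordinate $Z_{jm}$, the same conditioning decomposition of $a_{k+1,i}$ on $r_{k,i}$ via a rank-$\le\lp$ projection (your $P_i$ is the paper's $\Pi_{k,i}=Q_{k,i}Q_{k,i}^\top$ from the QR decomposition of $\sqrt{K_i}\hat X_k$), the same identification of the error tensor $E^\Delta_{k,i}$, the same Schatten/nuclear--injective duality to exploit the low-rank $\R^d$ legs against $\cR_\ls$, the same use of Lemma~\ref{lem:gradient_growth_bound} and the stopping time to control $\EE\|\nabla_x f_i\|^p$, and the same final $\lp$-power bookkeeping giving $d^{-1}(\lp)^3\lesssim d^{-1+3\omega}$ after summing $O(Td)$ deterministic terms. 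Your observation that the bound is actually sure on the stopped process (uniformly in $z\in\Omega^\ls$, so no net is needed within this lemma) is also correct and matches the role the lemma plays in the paper.
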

In the proof of the lemma above, we make use of the following basic result of the orthogonality property of Gaussian random vectors, which we present here without proof.
   \begin{lemma}[Gaussian Conditioning] \label{lem:conditioning} Let $\ell < d$. Suppose $v \in \R^d$ is distributed $\cN(0, I_d)$ and $U \in \R^{d\times\ell}$ has orthonormal columns. Then 
\begin{equation}
 v \, | \, \ip{U, v}_{\R^d} \sim v - U U^\top  v + U U^\top  v, 
 \end{equation}
where $v - U U^\top  v \sim \cN(0, I_d - U U^\top )$ and $U U^\top  v \sim N(0, U U^\top )$ with $v - U (U^\top  v)$ independent of $U U^\top  v$. 
\end{lemma}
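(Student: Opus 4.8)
The plan is to recognize this as the standard decomposition of a centered Gaussian vector into its orthogonal projections onto a subspace and onto its complement. Write $P\defas UU^\top$. Since $U$ has orthonormal columns we have $U^\top U=I_\ell$, hence $P^\top=P$ and $P^2=U(U^\top U)U^\top=UU^\top=P$; thus $P$ is the orthogonal projection onto the column space of $U$ and $I_d-P$ is the complementary orthogonal projection. The identity $v=Pv+(I_d-P)v$ is purely algebraic — the content of the lemma is distributional. Note also that $Pv=U(U^\top v)=U\ip{U,v}_{\R^d}$, so $Pv$ and $\ip{U,v}_{\R^d}$ determine each other.

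First I would establish that $Pv$ and $(I_d-P)v$ are independent. The pair $(Pv,(I_d-P)v)$ is jointly Gaussian, being a linear image of $v\sim\cN(0,I_d)$, and its cross-covariance is $\EE[(Pv)((I_d-P)v)^\top]=P\,I_d\,(I_d-P)^\top=P(I_d-P)=0$ by idempotency of $P$. For jointly Gaussian vectors, vanishing cross-covariance implies independence, so $Pv\perp(I_d-P)v$. Computing the individual covariances gives $\EE[(Pv)(Pv)^\top]=PP^\top=P=UU^\top$ and $\EE[((I_d-P)v)((I_d-P)v)^\top]=(I_d-P)(I_d-P)^\top=I_d-P=I_d-UU^\top$, so $Pv\sim\cN(0,UU^\top)$ and $(I_d-P)v\sim\cN(0,I_d-UU^\top)$, each a (possibly degenerate) Gaussian supported on the respective subspace.

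Finally I would argue that conditioning on $\ip{U,v}_{\R^d}=U^\top v$ is the same as conditioning on $Pv$: indeed $Pv=U(U^\top v)$ is a deterministic function of $U^\top v$, and conversely $U^\top v=U^\top(Pv)$ recovers $U^\top v$ from $Pv$ using $U^\top U=I_\ell$, so the two generate the same $\sigma$-algebra. Combining this with the independence $Pv\perp(I_d-P)v$, the conditional law of $(I_d-P)v$ given $\ip{U,v}_{\R^d}$ equals its unconditional law $\cN(0,I_d-UU^\top)$. Hence, conditioned on $\ip{U,v}_{\R^d}$, the vector $v=Pv+(I_d-P)v$ is the deterministic quantity $UU^\top v$ plus an independent draw from $\cN(0,I_d-UU^\top)$, which is exactly the asserted decomposition $v\mid\ip{U,v}_{\R^d}\sim (v-UU^\top v)+UU^\top v$ together with the stated marginals and independence. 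There is no serious obstacle here; the only points requiring a line of care are verifying that $P=UU^\top$ is genuinely an orthogonal projection from the orthonormality of the columns of $U$, invoking ``uncorrelated $\Rightarrow$ independent'' for jointly Gaussian vectors, and observing that conditioning on $U^\top v$ and on $UU^\top v$ coincide. One should also remark that $I_d-UU^\top$ has rank $d-\ell$, so $\cN(0,I_d-UU^\top)$ is a degenerate Gaussian living on $(\operatorname{col}U)^{\perp}$; this is harmless and is precisely the form needed in the applications.
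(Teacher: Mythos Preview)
Your proof is correct and is the standard orthogonal-projection argument for this fact. The paper does not actually prove this lemma; it simply quotes it from \cite{collinswoodfin2023hitting} (Lemma A2) and uses it as a black box in the proof of Lemma~\ref{lem:Hess_error}, so there is no in-paper proof to compare against.
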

\begin{proof}[Proof of Lemma \ref{lem:Hess_error}]
    The terms that we are trying to bound involve an expectation, conditional on $\cF_{k,i}\defas\sigma(\{W_j\}_{j=0}^k,\{I_{j}\}_{j=1}^{k},I_{k+1}=i)$.  In order to bound these terms, it is useful to first compute expectations considitioned on a larger $\sigma$-algebra, which we define as $\cG_{k,i}\defas\sigma(\cF_{k,i},\{r_{j,I_{j+1}}\}_{j=0}^k)$, and later take the expectation conditioned only on $\cF_{k,i}$. 
We begin by computing the quantity
\begin{equation}\label{eq:Delta2_conditional}\begin{split}
\EE[\Delta_{k,i}^{\otimes2}|\cG_{k,i}]&=\EE[(a_{k+1,i}\otimes\nabla_xf(r_{k,i},\e_{k+1}))^{\otimes2}|\cG_{k,i}]\\
&=\EE[(a_{k+1,i}-\EE[a_{k+1,i}|\cG_{k,i}])^{\otimes2}|\cG_{k,i}]\otimes\EE_{\e_{k+1}}[(\nabla_xf_i(r_{k,i},\e_{k+1}))^{\otimes2}]\\
&\qquad\quad+\EE[a_{k+1,i}|\cG_{k,i}]^{\otimes2}\otimes\EE_{\e_{k+1}}[(\nabla_xf_i(r_{k,i},\e_{k+1}))^{\otimes2}].
\end{split}\end{equation}
In order to bound the condition expectation and variance appearing above, we first need to understand the conditional distribution of $a_{k+1,i}$ and $a_{k+1,i}a_{k+1,i}^\top $.  Recall that \[
a_{k+1,i}=\sqrt{K_i}v_{k+1}+\mu_i,\quad v_{k+1}\sim \cN(0,I_d),
\]
so we have
\[
v_{k+1}|_{r_{k,i},W_k}\eqd v_{k+1}|_{\hat{X}_k^\top (\sqrt{K_i}v_{k+1}+\mu_i),W_k}
\eqd v_{k+1}|_{\hat{X}_k^\top \sqrt{K_i}v_{k+1}}.
\]
We want to be able to apply the conditioning lemma, so we consider the QR-decomposition $\sqrt{K_i}\hat{X}_k=Q_{k,i}R_{k,i}$ where $R_{k,i}\in\R^{\lpp \times\lpp }$ is upper-triangular and invertible while $Q_{k,i}\in\R^{d\times\lpp }$ has columns that are orthonormal (or possibly 0 if not full rank).  Then we have
\begin{equation}
    a_{k+1,i}|r_{k,i},W_k
    =\sqrt{K_i}v_{k+1}+\mu_i|R_{k,i}^\top Q_{k,i}^\top v_{k+1}
    \eqd \sqrt{K_i}v_{k+1}+\mu_i|Q_{k,i}^\top v_{k+1}
\end{equation}
where the last equality comes from invertibility of $R_{k,i}$.
Now, applying Lemma \ref{lem:conditioning} with $\Pi_{k,i}\defas Q_{k,i}Q_{k,i}^\top $, we get
\begin{equation}
    a_{k+1,i}|r_{k,i},W_k
    \eqd \sqrt{K_i}(v_{k+1}-\Pi_{k,i}v_{k+1})+\sqrt{K_i}\Pi_{k,i}v_{k+1}+\mu_i
\end{equation}
where $(I_d-\Pi_{k,i})v_{k+1}\sim \cN(0,I_d-\Pi_{k,i})$ and $\Pi_{k,i}v_{k+1}\sim N(0,\Pi_{k,i})$ with $(I_d-\Pi_{k,i})v_{k+1}$ independent of $\Pi_{k,i}v_{k+1}$.  Thus we get
\begin{equation}
    \EE[a_{k+1,i}|\cG_{k,i}]=\sqrt{K_i}\Pi_{k,i}v_{k+1}+\mu_i,\qquad v_{k+1}\sim \cN(0,I_d).
\end{equation} 
Furthermore, for the conditional variance, we have
\begin{equation}
    \EE[(a_{k+1,i}-\EE[a_{k+1,i}|\cG_{k,i}])^{\otimes2}|\cG_{k,i}]
    =\sqrt{K_i}(I_d-\Pi_{k,i})\sqrt{K_i}.
\end{equation}
Plugging these computations into \eqref{eq:Delta2_conditional}, we get 
\begin{align*}
    \EE[\Delta_{k,i}^{\otimes2}|\cG_{k,i}]&=
    \left(\sqrt{K_i}(I_d-\Pi_{k,i})\sqrt{K_i}+(\sqrt{K_i}\Pi_{k,i}v_{k+1}+\mu_i)^{\otimes2}\right)\otimes\EE_{\e_{k+1}}[(\nabla_xf_i(r_{k,i},\e_{k+1}))^{\otimes2}]\\
    &=(D_{k,i}^{\Delta}+E_{k,i}^{\Delta})\otimes\EE_{\e_{k+1}}[(\nabla_xf_i(r_{k,i},\e_{k+1}))^{\otimes2}],\\
\end{align*}
where
\begin{align}
    D_{k,i}^{\Delta},&\defas K_i+\mu_i\mu_i^\top \\
    E_{k,i}^{\Delta}&\defas
    -\sqrt{K_i}\Pi_{k,i}\sqrt{K_i}+(\sqrt{K_i}\Pi_{k,i}v_{k+1})^{\otimes2}+\sqrt{K_i}\Pi_{k,i}v_{k+1}\mu_i^\top +\mu_iv_{k+1}^\top \Pi_{k,i}\sqrt{K_i}.
\end{align}
The term $D_{k,i}^{\Delta}$ gives the dominant contribution (which appears in the leading order of the Doob's decomposition of SGD) and we see that
\begin{equation}\label{eq:mart_hesserror_singlesummand}
    \EE[\cE_k^{\hess}|\cF_k]
    = \frac{\gamma_k^2}{2d^2} \ipa{\nabla^2\varphi(W_{k}),\sum_{i=1}^{\ls}p_i\left(\EE \big [ E_{k,i}^{\Delta} \otimes \nabla_x f_{i}(r_{k,i},\e_{k+1})^{\otimes2}\,  \mid  \, \mathcal{F}_k \big ]\right)},
\end{equation}
so it remains to control the contributions from $E_{k,i}^{\Delta}$.
This follows a very similar procedure to that of \cite{collinswoodfin2023hitting}, Proposition A4.  The primary differences are the inclusion of $\mu$-dependent terms and that we allow for $\lp$ slowly growing with $d$.  Before proceeding with these bounds, we recall the injective tensor norm $\|\cdot\|_\e$ and introduce its dual, the \textit{projective tensor norm} (sometimes also called the nuclear norm), given by 
    $\|A\|_{\pi}\defas\sup_{\|B\|_{\e}=1}\ip{A,B}$.
  For the contribution from the first term of $E_{k,i}^{\Delta},$ we have
\begin{equation}\begin{split}
    &\left|\ipa{\nabla^2\varphi(X_k),\sqrt{K_i}\Pi_{k,i}\sqrt{K_i}\otimes\nabla_xf_i(r_{k,i},\e_{k+1})^{\otimes2}}\right|\\
    &=\left|\ipa{\ip{\nabla^2\varphi(X_k),\nabla_xf_i(r_{k,i},\e_{k+1})^{\otimes2}}_{\R^{\ell\times\ell}},\sqrt{K_i}\Pi_{k,i}\sqrt{K_i}}\right|\\
&\leq\|\sqrt{K_i}\Pi_{k,i}\sqrt{K_i}\|_{\pi}\|\ip{\nabla^2\varphi(X_k),\nabla_xf_i(r_{k,i},\e_{k+1})^{\otimes2}}_{\R^{\ell\times\ell}}\|_{\e}\\
&\leq\|K\|_{\e}\|\Pi_{k,i}\|_{\pi}\|\nabla^2\varphi(X_k)\|_{\e}\|\nabla_xf_i(r_{k,i},\e_{k+1})\|^2.
\end{split}\end{equation}
From Lemma \ref{lem:gradient_growth_bound}, we have $\EE[\|\nabla_xf_i(r_{k,i},\e_{k+1})\|^2|\cF_k]\leq CL(f)^2(1+\|K\|_{\op}^{1/2}\|W_k\|)^{\max\{1,2\a\}}$. Using Lemma \ref{lem:normequivalence}, we also have $\|\nabla^2\varphi(X_k)\|_{\e}\leq\|\nabla^2Z(W_k,z)\|_{\Omega}\leq\lp.$  We also note that $\|\Pi_{k,i}\|_{\pi}=\text{rank}(\Pi_{k,i})\leq\lp.$Putting it all together, we get
\begin{equation}\begin{split}
    \EE&\left[ \left|\ipa{\nabla^2\varphi(X_k),\sqrt{K_i}\Pi_{k,i}\sqrt{K_i}\otimes\nabla_xf_i(r_{k,i},\e_{k+1})^{\otimes2}}\right|\Big|\cF_k\right]\\
    &\leq (\lp)^2 CL^2(f)(1+\|K\|_{\op}^{1/2}\|W_k\|)^{\max\{1,2\a\}}=O((\lp)^3).
\end{split}\end{equation}
Similarly, for the second term of $E_{k,i}^{\Delta},$ we have
\begin{equation}\label{eq:mart_hesserror_term2bound}\begin{split}
    &\left|\ipa{\nabla^2\varphi(X_k),(\sqrt{K_i}\Pi_{k,i}v_{k+1})^{\otimes2}\otimes\nabla_xf_i(r_{k,i},\e_{k+1})^{\otimes2}}\right|\\
    &\leq \|\nabla^2\varphi(X_k)\|_{\e}\|\nabla_xf_i(r_{k,i},\e_{k+1})\|^2\|\sqrt{K_i}\Pi_{k,i}v_{k+1}\|^2\\
    &\leq \|\nabla^2\varphi(X_k)\|_{\e}\|\nabla_xf_i(r_{k,i},\e_{k+1})\|^2\|K_i\|_{\op}\|\Pi_{k,i}v_{k+1}\|^2.
\end{split}\end{equation}
We can bound $\|\nabla^2\varphi(X_k)\|_{\e}$ and $\EE[\|\nabla_xf_i(r_{k,i},\e_{k+1})\|^2|\cF_k]$ as before and, using the fact that $\Pi_{k,i}$ is a projection, we have $\EE[\|\Pi_{k,i}v_{k+1}\|^2|\cF_{k,i}]=\|\Pi_{k,i}\|^2\leq\lp.$ Then, applying Cauchy-Schwarz, we get the same bound as we did for the first term, namely,
\begin{equation}\begin{split}
    \EE&\left[ \left|\ipa{\nabla^2\varphi(X_k),(\sqrt{K_i}\Pi_{k,i}v_{k+1})^{\otimes2}\otimes\nabla_xf_i(r_{k,i},\e_{k+1})^{\otimes2}}\right|\Big|\cF_{k,i}\right]\\
    &\leq (\lp)^2 CL^2(f)(1+\|K\|_{\op}^{1/2}\|W_k\|)^{\max\{1,2\a\}}=O((\lp)^3).
\end{split}\end{equation}
The third and fourth terms of $E_{k,i}^{\Delta}$ can be bounded in a similar manner to the second term.  The only change is that, in \eqref{eq:mart_hesserror_term2bound}, the quantity $\|\sqrt{K_i}\Pi_{k,i}v_{k+1}\|^2$ is replaced with
$$\|\sqrt{K_i}\Pi_{k,i}v_{k+1}\|\cdot\|\mu_i\|\leq\|K_i\|_{\op}^{1/2}\|\Pi_{k,i}v_{k+1}\|\cdot\|\mu_i\|=O(\sqrt{\lp})$$
and we arrive at 
\begin{equation}
    \EE\left[ \left|\ipa{\nabla^2\varphi(X_k),\sqrt{K_i}\Pi_{k,i}v_{k+1}\mu_i^\top \otimes\nabla_xf_i(r_{k,i},\e_{k+1})^{\otimes2}}\right|\Big|\cF_{k,i}\right]
    =O((\lp)^{5/2}),
\end{equation}
and similarly for the last term of $E_{k,i}^{\Delta}.$ Since all of these bounds hold uniformly in $i$, they hold for the weighted average $\sum_{i=1}^{\ls}p_i(\cdot)$.  This gives us a bound of order $d^{-2}(\lp)^3$ for $\EE[\cE_k^{\hess}|\cF_k]$ in equation \eqref{eq:mart_hesserror_singlesummand}.  Summing over $k$, Lemma \ref{lem:Hess_error} is proved.
\end{proof}

\section{Proof of example: Binary logistic regression}\label{sec:proof_binarylogistic}

We begin by recalling the set-up of the binary case $\ls=2$.  By the symmetry of the problem, we simplify the analysis by taking $X=\left[\begin{array}{cc}
x &
0
\end{array}\right]$ with $x\in \R ^d. $  We then have that 
\begin{equation}
    f_i(r_i)=-y_ir_i+\log(1+\exp(r_i))\quad\text{where }r_{i}=x^\top a|(a\in\text{class }i),\quad y_i=\mathbf{1}_{\{i=1\}}.
\end{equation}
Furthermore, the $\w$ variables become
\[
\w_{11}=\frac{e^{r_1}}{e^{r_1}+1},\quad
\w_{12}=\frac{1}{e^{r_1}+1},\quad
\w_{21}=\frac{e^{r_2}}{e^{r_2}+1},\quad
\w_{22}=\frac{1}{e^{r_2}+1}.
\]
where we note that $\w_{11}+\w_{12}=\w_{21}+\w_{22}=1.$
Using this, the derivatives of $f_i$ are 
\begin{align}
    f'_i(r_i)=-y_i+\w_{i1} 
=\begin{cases}
    -\w_{12}& i=1,\\
    \w_{21}& i=2,
\end{cases}\qquad
    f''_i(r_i)
    = \w_{i1} \w_{i2}.
\end{align}
Following the above simplification, we obtain that, $\csm_{\rho,1} = - \csm_{\rho,2} = \csm_{\rho}  = d x^\top u_{\rho} u_{\rho}^\top  \mu  \in \R$, similarly we reduce $V_\rho  = \left[\begin{array}{cc}
\csV_\rho & 0 \\
0 & 0
\end{array}\right]$ with  $\csV_\rho  =d x^\top  u_\rho u_\rho^\top  x\in  \R$, and $r_i = x^\top  a_i.$

For the risk, we get 
\begin{align}
\csL(t)&= \EE[f_i(r_i)]=-p_1 \csm(t)+p_1\EE_z[\log(1+\exp(\csm(t)+\sqrt{\csB_1(t)}z))] \\\nonumber &+p_2\EE_z[\log(1+\exp(-\csm(t)+\sqrt{\csB_2(t)}z))]   .
\end{align}
with $\csm(t) = \ip{\mu, X}$, and  $\csB_i(t) = \ip{K_i, X^{\otimes 2}}.$ 
Thus, the system of equations \eqref{eq:V_m_rho_2} reduces to
\begin{align} \label{eq:V_m_rho_1}
    \frac{\dif \csV_\rho}{\dif t}
    &=-2\gamma \csV_\rho\sum_{i=1}^2 p_i \lambda_\rho^{(i)}\EE[w_{i1}w_{i2}]+ 2\gamma \csm_{\rho}(p_1\EE[w_{12}]+p_2\EE[w_{21}])+{\gamma^2}\sum_{i=1}^2 p_i\left(\lambda_\rho^{(i)}+(\ip{\mu,u_\rho})^2\right)\EE[w_{i(\sim i)}^2]\\ \nonumber
    \frac{\dif \csm_{\rho}}{\dif t}&=-\gamma \csm_{\rho}\sum_{i=1}^2 p_i\lambda_\rho^{(i)}\EE[w_{i1}w_{i2}]+ {\ip{\mu, u_\rho}^2}\gamma (p_1 \EE[w_{12}]+p_2 \EE[w_{21}])d
\end{align}
where $w_{ij}$ denotes the deterministic equivalent of $\w_{ij}$ and can be written as $w_{12} = 
\left(1+e^{\csm(t)+\sqrt{\csB_1(t)}z}\right)^{-1}$ and $w_{11}
= 1 -w_{12}$ where the expectation is with respect to $z\sim \mathcal{N}(0,1)$. Similarly, $w_{22} = 
\left(1+e^{-\csm(t)+\sqrt{\csB_2(t)}z}\right)^{-1}$ and, $w_{21}=1-w_{22}$. We note that in the identity covariance case $w_{12}\overset{d}{=}w_{22}, $ and $w_{21}\overset{d}{=}w_{11}.$ Finally, we use $\sim i$ to denote ``not $i$'' (so $w_{i(\sim i)}=w_{12},w_{21}$ for $i=1,2$ respectively).

\paragraph{Bounds on $w_{ij}$}
It will be useful to have upper and lower bounds for $\EE w_{12}$ and $\EE w_{21}$, which we provide in the following lemma.
\begin{lemma}\label{lem:w12bounds}
    If $\csm(t)\geq0$, then
    \begin{equation}
        \frac{1}{1+e^{\csm(t)}}\leq\EE w_{12}\leq\min\left\{\frac12,\frac{2}{3+e^{\csm(t)-\csB_1(t)/2}}\right\}.
    \end{equation}
    Likewise, if $\csm(t)\leq0$, then
    $ \max\left\{\frac12,\frac{2}{3+e^{\csm(t)-\csB_1(t)/2}}\right\}\leq\EE w_{12}\leq \frac{1}{1+e^{\csm(t)}}.$
    Finally, the same bounds hold for $\EE w_{21}$ if $\csB_1(t)$ is replaced by $\csB_2(t)$.
\end{lemma}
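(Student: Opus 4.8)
The plan is to collapse the two–dimensional Gaussian average into a one–variable question via the reflection $z\mapsto -z$. Write $p\defas e^{\csm(t)}$ and $B\defas\csB_1(t)$, and note $B\ge 0$ since it is the limiting value of $x^\top K_1x$; then $w_{12}=(1+p\,e^{\sqrt{B}z})^{-1}$. Averaging $w_{12}$ with its $z\mapsto-z$ copy and using $1+e^{2\sqrt{B}z}=2e^{\sqrt{B}z}\cosh(\sqrt{B}z)$ yields the exact representation
\begin{equation*}
\EE[w_{12}]=\EE\!\left[F\big(\cosh(\sqrt{B}z)\big)\right],\qquad F(c)\defas\frac{1+pc}{(1+p^2)+2pc}\quad(c\ge 1),
\end{equation*}
where the expectation is over $z\sim\mathcal N(0,1)$. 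The first step is to record the elementary facts about $F$ on $[1,\infty)$: $F(1)=\tfrac{1}{1+p}$, $\lim_{c\to\infty}F(c)=\tfrac12$, $F'(c)=p(p^2-1)\big((1+p^2)+2pc\big)^{-2}$ and $F''(c)=-4p^2(p^2-1)\big((1+p^2)+2pc\big)^{-3}$; in particular the slope of $F$ has sign $\operatorname{sgn}(\csm(t))$ and the concavity of $F$ has the opposite sign.

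For $\csm(t)\ge 0$ (so $p\ge 1$), $F$ is increasing and concave on $[1,\infty)$. Monotonicity gives immediately $\tfrac{1}{1+e^{\csm(t)}}=F(1)\le\EE[w_{12}]\le\sup_{c\ge 1}F(c)=\tfrac12$. For the remaining bound I would apply Jensen's inequality, using concavity of $F$ and the identity $\EE[\cosh(\sqrt{B}z)]=e^{B/2}$:
\begin{equation*}
\EE[w_{12}]=\EE\!\left[F\big(\cosh(\sqrt{B}z)\big)\right]\le F\big(e^{B/2}\big),
\end{equation*}
after which it remains only to check the algebraic inequality $F(e^{B/2})\le\frac{2}{3+e^{\csm(t)-B/2}}$. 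Clearing the two (positive) denominators, this reduces to $(p^2-1)+p\,(e^{B/2}-e^{-B/2})\ge 0$, which holds because $p\ge 1$ and $B\ge 0$. Combining, $\EE[w_{12}]\le\min\{\tfrac12,\frac{2}{3+e^{\csm(t)-\csB_1(t)/2}}\}$, which completes the case $\csm(t)\ge 0$.

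The case $\csm(t)\le 0$ then follows by symmetry: $w_{12}(\csm(t),B,z)=1-w_{12}(-\csm(t),B,-z)$ and $-z\eqd z$, so $\EE[w_{12}]=1-\EE[w_{12}]\big|_{\csm(t)\mapsto-\csm(t)}$; substituting the bounds just proved (now with exponent $-\csm(t)\ge 0$) yields the two–sided estimate asserted for $\csm(t)\le 0$, the $\min$ in the upper bound turning into the $\max$ in the lower bound. Finally, the statement for $\EE[w_{21}]$ is free: since $w_{21}=1-w_{22}=(1+e^{\csm(t)-\sqrt{\csB_2(t)}\,z})^{-1}$ and $-z\eqd z$, $\EE[w_{21}]$ equals $\EE[w_{12}]$ with $\csB_1(t)$ replaced by $\csB_2(t)$, so the same bounds apply verbatim. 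There is no serious obstacle here: the only genuine computation is the one–line inequality $(p^2-1)+p(e^{B/2}-e^{-B/2})\ge 0$, and the only things to watch are the positivity $\csB_1(t)\ge 0$ (used in that step) and the bookkeeping of $\min$ versus $\max$ when reflecting the $\csm(t)\ge 0$ bounds to $\csm(t)\le 0$.
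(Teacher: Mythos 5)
Your argument for $\csm(t)\ge 0$ is correct and is, at bottom, the same proof as the paper's: both fold $z\mapsto -z$ to reduce to a function of $\cosh(\sqrt{\csB_1}z)$ and both apply Jensen together with $\EE\cosh(\sqrt{\csB_1}z)=e^{\csB_1/2}$. Your phrasing through $F(c)=\frac{1+pc}{(1+p^2)+2pc}$ is cleaner---the paper reaches exactly the same function in the guise $\frac12\bigl(1-\frac{e^{2m}-1}{e^{2m}+1+e^m(e^{bz}+e^{-bz})}\bigr)$ and proves the lower bound by a less transparent sign check on the folded remainder, where you just observe $F$ is increasing and evaluate at $c=1$. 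The algebraic verification $F(e^{\csB_1/2})\le\frac{2}{3+e^{\csm-\csB_1/2}}\iff(p^2-1)+p(e^{\csB_1/2}-e^{-\csB_1/2})\ge0$ checks out, and the reduction of $\EE w_{21}$ to the $\EE w_{12}$ computation is right.

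There is, however, a real gap in the $\csm(t)\le 0$ case. Your reflection $\EE w_{12}(m)=1-\EE w_{12}(-m)$ is correct, but it does \emph{not} yield the inequality as stated in the lemma. Applying the $\csm\ge0$ upper bound to $-m$ and subtracting from $1$ gives
\begin{equation*}
\EE w_{12}\;\ge\;\max\Bigl\{\tfrac12,\;1-\tfrac{2}{3+e^{-\csm(t)-\csB_1(t)/2}}\Bigr\}
=\max\Bigl\{\tfrac12,\;\tfrac{1+e^{-\csm(t)-\csB_1(t)/2}}{3+e^{-\csm(t)-\csB_1(t)/2}}\Bigr\},
\end{equation*}
which is generically \emph{not} equal to the lemma's stated lower bound $\max\bigl\{\tfrac12,\frac{2}{3+e^{\csm(t)-\csB_1(t)/2}}\bigr\}$. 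In fact the latter appears to be a misprint in the lemma itself: with $\csm(t)=0$ and $\csB_1(t)=1$ the true value is $\EE w_{12}=\tfrac12$ by symmetry, yet the stated lower bound evaluates to $\tfrac{2}{3+e^{-1/2}}\approx0.55>\tfrac12$. So the sentence ``yields the two-sided estimate asserted'' is not justified; had you actually computed the reflected bound, you would have found a mismatch rather than a match. Since the paper only ever invokes this lemma after first reducing (WLOG or by a direct argument) to $\csm(t)\ge 0$, the defect is harmless for the downstream results, but a correct proof should either state the reflected bound explicitly or flag that the printed $\csm(t)\le0$ inequality does not follow from the $\csm(t)\ge0$ case by reflection.
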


\begin{proof}
We will work under the assumption that $\csm(t)\geq0$ and prove the bounds for $\EE w_{12}$.  By reversing signs in the argument, one gets the case for $\csm(t)\leq0$ and the proof is similar for $\EE w_{21}.$  Using the shorthand $b=\sqrt{\csB_1(t)},\;m=\csm(t)$, we get
\begin{align}
    &\EE w_{12}=\frac{1}{\sqrt{2\pi}}\int_{-\infty}^\infty\frac{e^{-x^2/2}}{1+e^{bx+m}}\dif x\\
    &=\frac{1}{\sqrt{2\pi}}\int_0^\infty e^{-\frac{x^2}{2}}\left(\frac{1}{1+e^{bx+m}}+\frac{1}{1+e^{-bx+m}}\right)\dif x\\
    &=\frac{1}{1+e^m}+\frac{1}{\sqrt{2\pi}}\int_0^\infty e^{-\frac{x^2}{2}}\left(\left(\frac{1}{1+e^{-bx+m}}-\frac{1}{1+e^m}\right)-\left(\frac{1}{1+e^m}-\frac{1}{1+e^{bx+m}}\right)\right)\dif x\\
    &=\frac{1}{1+e^m}+\frac{1}{\sqrt{2\pi}}\int_0^\infty e^{-\frac{x^2}{2}}\frac{e^m}{1+e^m}(1-e^{-bx})\left(\frac{1}{1+e^{-bx+m}}-\frac{1}{e^{-bx}+e^m}\right)\dif x.
\end{align}
To obtain the lower bound, it suffices to show that the integrand in the last line is positive, so we need to show that $1+e^{-bx+m}\leq e^{-bx}+e^m$ for all $x\geq0$.  This follows from the fact that these two quantities are equal for $x=0$ and $\frac{\dif}{\dif x}(1+e^{-bx+m})\leq \frac{\dif}{\dif x}(e^{-bx}+e^m)$.

For the upper bound, we have
\beq\begin{split}
\EE w_{12}=\EE\frac{1}{1+e^{bx+m}}=\EE\frac{1}{1+e^{-bx+m}}&=\frac12\EE\left(\frac{1}{1+e^{bx+m}}+\frac{1}{1+e^{-bx+m}}\right)\\
&=\frac12\EE\left(1-\frac{e^{2m}-1}{e^{2m}+1+e^m(e^{bx}+e^{-bx})}\right).
\end{split}\eeq
By Jensen's inequality, $\EE\frac{1}{C+Y}\geq\frac{1}{C+\EE Y}$ for any constant $C>0$ and random $Y>0$,.  Thus,
\beq
\EE w_{12}\leq\frac12\left( 1-\frac{e^{2m}-1}{e^{2m}+1+2e^{m+b^2/2}}\right)
=\frac{1+e^{m+b^2/2}}{1+2e^{m+b^2/2}+e^{2m}}.
\eeq
 For all $m\geq0$, this is bounded above by the value at $m=0$, which yields $\EE w_{12}\leq\frac12$.  To obtain a bound depending on $m,b$, we write
\beq\begin{split}
\EE w_{12}\leq\frac{e^{-m}+e^{b^2/2}}{e^{-m}+2e^{b^2/2}+e^m}
&\leq\frac{1+e^{b^2/2}}{1+2e^{b^2/2}+e^m}
=\frac{1}{1+e^m\left(\frac{1+e^{b^2/2-m}}{1+e^{b^2/2}}\right)}\\
&\leq\frac{1}{1+\frac12 e^m(e^{-b^2/2}+e^{-m})}
=\frac{2}{3+e^{m-b^2/2}}.
\end{split}\eeq
\end{proof}

The next assumption controls the distribution of the logistic weights along
the limiting trajectory.
\begin{assumption}[Logistic weight concentration]\label{ass:W_1W_2ab}
    Suppose there 
        exists $C_w(\gamma)\ge 1$  such that\;
        $a(t)\defas \dfrac{\EE w_{12}}{\EE w_{12}-\EE w_{12}^2}<C_w(\gamma)$
\end{assumption}

\begin{figure}[t]
\centering

% --- Row 1 ---
\begin{minipage}[t]{0.3\textwidth}
    \centering
    \includegraphics[width=\linewidth,keepaspectratio]{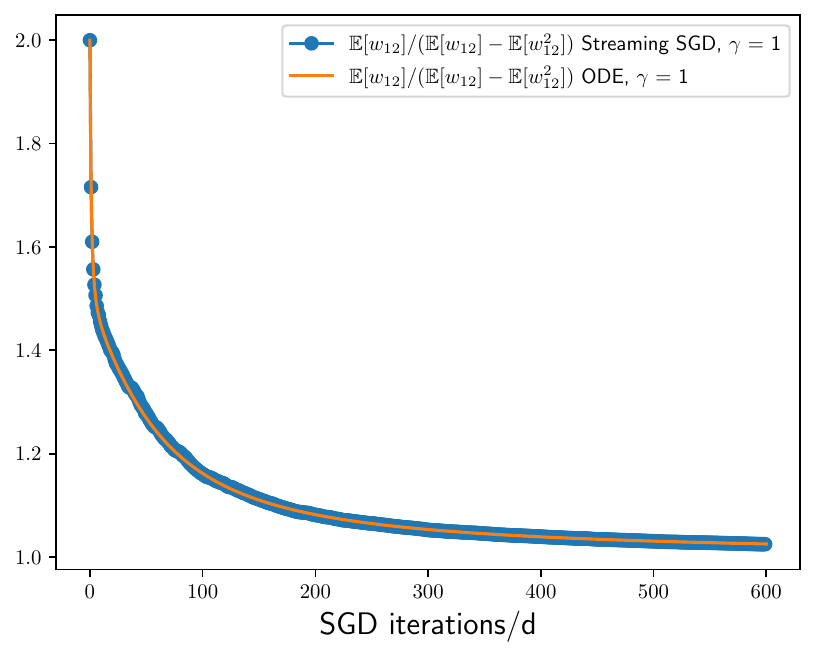}
    \medskip
    \textit{(a) Zero-one model}
\end{minipage}\hspace{0.04\textwidth}%
\begin{minipage}[t]{0.3\textwidth}
    \centering
    \includegraphics[width=\linewidth,keepaspectratio]{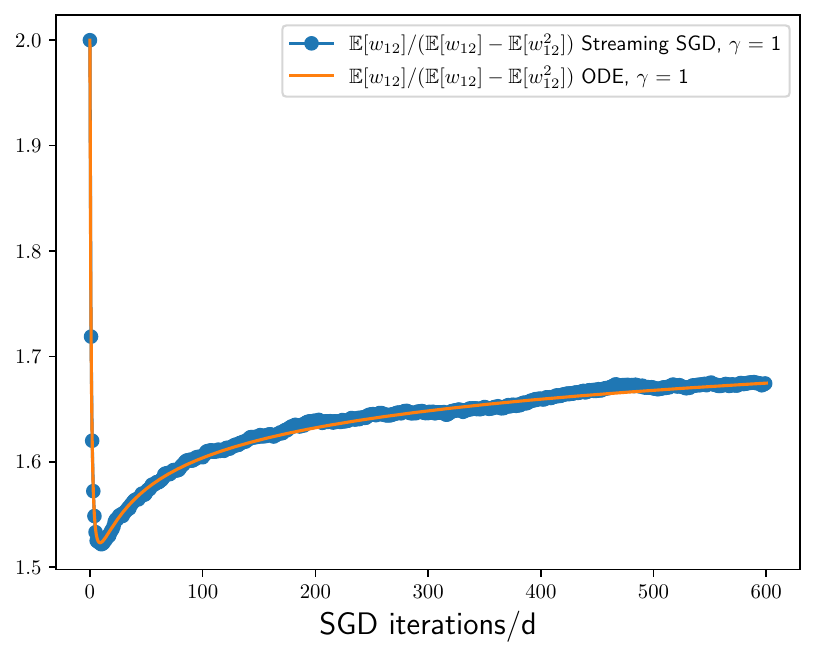}
    \medskip
    \textit{(b) Power-law model with $\alpha_1=1.3,\, \alpha_2=1.3,\, \beta=0.5$}
\end{minipage}

\vspace{0.8em} % vertical space between rows

% --- Row 2 ---
\begin{minipage}[t]{0.3\textwidth}
    \centering
    \includegraphics[width=\linewidth,keepaspectratio]{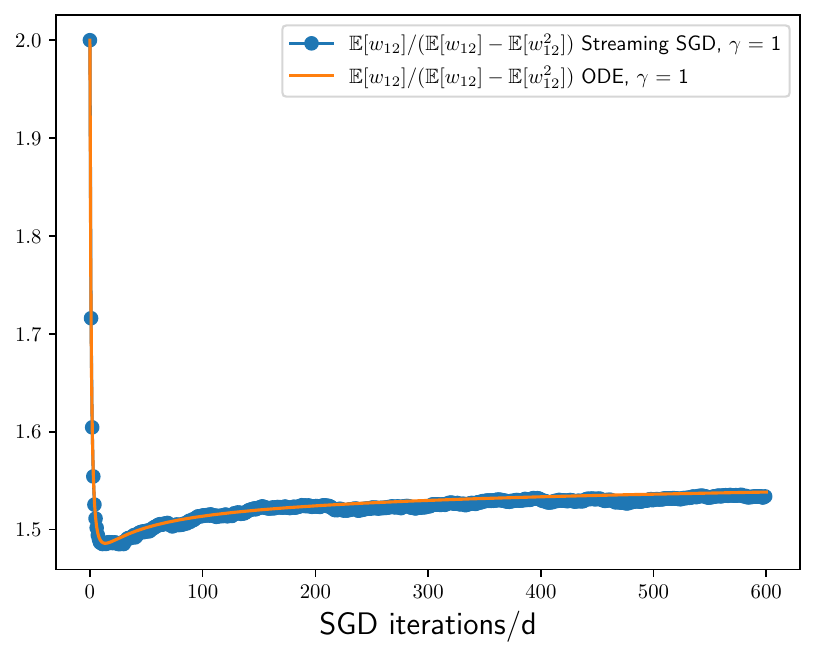}
    \medskip
    \textit{(c) Power-law model with $\alpha_1=1.5,\, \alpha_2=1.5,\, \beta=0.5$}
\end{minipage}\hspace{0.04\textwidth}%
\begin{minipage}[t]{0.3\textwidth}
    \centering
    \includegraphics[width=\linewidth,keepaspectratio]{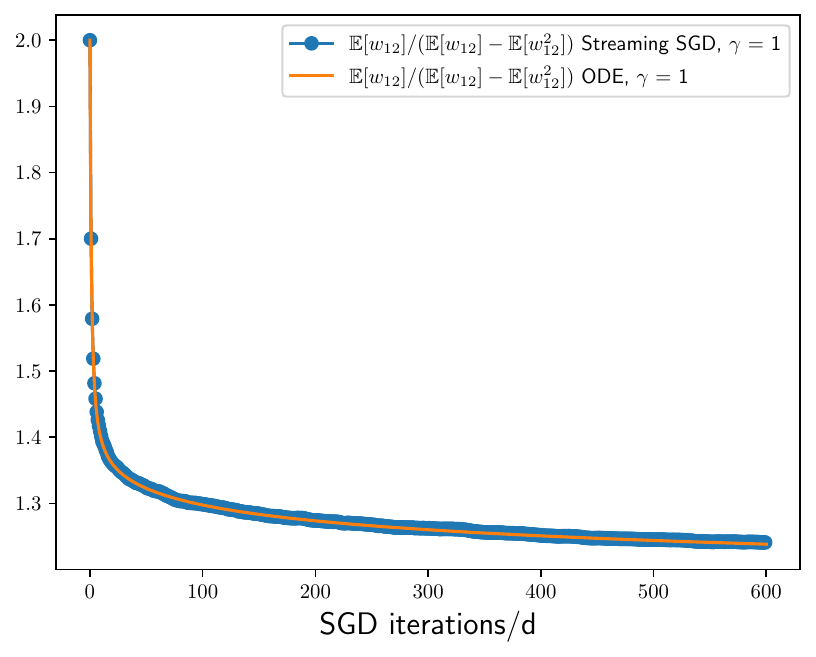}
    \medskip
    \textit{(d) Power-law model with $\alpha_1=1.5,\, \alpha_2=1.5,\, \beta=0$}
\end{minipage}
\vspace{0.1cm}

\caption{\textbf{Visualization of $a(t)$.} Comparison of $a(t)\defas\mathbb{E}[w_{12}]/(\mathbb{E}[w_{12}] - \mathbb{E}[w_{12}^2])$ between Streaming SGD and ODE for various data models with $\gamma=1$.
% The figures for other values of $\gamma$ exhibit similar long-term behavior, although the rates of convergence are different.
}
\label{fig:combined_w12_plots}
\vspace{-0.4cm}
\end{figure}

\begin{remark}
We note that  $$a(t) = \frac{W_1(t)}{W_1(t)-W_2(t)}\ge\frac{1}{1-W_1(t)}\ge\frac{1+e^{\csm(t)}}{e^{\csm(t)}}= 1+e^{-\csm(t)}$$ 
using the bound in Lemma \ref{lem:w12bounds}. 
This inequality together with Assumption \ref{ass:W_1W_2ab}(b) implies $\csm(t)\ge -\log(1-\e)$ for $t>t_0$. This bound of the overlap away from zero corresponds to some weak recovery of the true direction $\mu$.
\end{remark}

\paragraph{Heuristic/numeric support for Assumption \ref{ass:W_1W_2ab}}

The assumption says that the ratio $a(t)=\frac{\EE w_{12}}{\EE w_{12}-\EE w_{12}^2}$ remains bounded as $t$ grows.  This assumption serves as a convenient tool for analyzing loss trajectories in the identity, zero-one, and power-law models.  However, we suspect that it actually holds generically in these models (i.e., there should be a way to prove it rather than assuming it).  Numerical simulations suggest the following behavior for the ratio $a(t)$:
\begin{itemize}
    \item \textit{Identity model:} $a(t)$ converges to some $C(\gamma)$, which is typically at least 2 and is larger for larger $\gamma$ (see Figure \ref{fig:combined_plots_gamma_ID_a_loss}).
     \item \textit{Zero-one:} $a(t)$ converges to 1 and the rate of convergence depends on $\gamma$ (see Figure \ref{fig:combined_w12_plots}). 
    \item \textit{Power-law:} $a(t)$ converges to some value between 1 and 2 (see Figure \ref{fig:combined_w12_plots}).
\end{itemize}
By way of a heuristic explanation for this, we recall that, in the context of our original optimization problem,
\[ w_{12}=- f_1'(r_1)=\frac{1}{1+e^{r_1}},\qquad
w_{12}-w_{12}^2=f_1''(r_1)=\frac{e^{r_1}}{(1+e^{r_1})^2}
 \]
where $r_1$ is the inner product of the parameter vector $X$ with a random data point from class 1.  Based on this, $f$ should be optimized when $r_1$ tends toward infinity (which can only happen if $\|X\|$ is unbounded and grows in a direction that has positive overlap with $\mu$).  Then, things roughly break into two cases:
\begin{enumerate}[(C{a}se 1)]
    \item Large variance in the $\mu$ direction causes $X$ to remain bounded and thus $r_1,\EE w_{12}, a(t)$ all remain bounded as well.
    \item There is a direction of zero variance (or small variance) that has positive overlap with $\mu$ and $X$ grows unboundedly in that direction.  Because the variance is sufficiently small in this direction, $\EE w_{12}\to0$ with $\EE w_{12}^2$ shrinking faster, so $a(t)$ remains bounded.
\end{enumerate}
Indeed, Case 1 appears consistent with the behavior in the identity and mild power-law models (where $\csm,\csV$ remain bounded), whereas Case 2 appears consistent with the behavior in the zero-one and extreme power-law models (where $\csm,\csV$ grow unboundedly).

\subsection{Proofs for the zero-one model (Proposition \ref{prop:zero-one})}\label{sec:proof_zero-one}
 In this subsection, we prove Proposition \ref{prop:zero-one}.  We begin by recalling set-up, introducing some additional notations, and proving some preliminary bounds.  Lemmas \ref{lem:m_ij_bounds} and \ref{lem:vij_asymp} will be proved for the general zero-one model.  For Lemma \ref{lem:zero-one_mij_final}, we impose symmetry on the classes as well as Assumption \ref{ass:W_1W_2ab}. At the end of the subsection we will bring these results together to prove Proposition \ref{prop:zero-one}.
Recall that, in the zero-one model, all eigenvectors of $K_1$ and $K_2$ are either 0 or 1 and the index set $[d]$ is partitioned as
\begin{equation}
    [d]=I_{00}\cup I_{01}\cup I_{10}\cup I_{11},\quad\text{where }I_{jk}=\{i\leq d\;|\;\lambda^{(1)}_i=j,\;\lambda^{(2)}_i=k\}.
\end{equation}
We denote the magnitude of $\mu$ in the four key eigenspaces by
\begin{equation}
    \tilde{\mu}_{(ij)}=\sum_{\rho\in I_{ij}}\tilde{\mu}_\rho=\sum_{\rho\in I_{ij}}(\mu^\top u_\rho)^2,\qquad
    \text{thus }\tilde{\mu}_{(01)}+\tilde{\mu}_{(01)}+\tilde{\mu}_{(10)}+\tilde{\mu}_{(11)}=\|\mu\|^2.
\end{equation}
We will eventually take $\tilde{\mu}_{(ij)}\to\frac14$ as $d\to\infty$,  but we initially assume only that there exist positive constants with $c_{00}+c_{01}+c_{10}+c_{11}=1$ such that, for $i,j\in\{0,1\},$ we have 
\begin{equation}
    \lim_{d\to\infty}\frac{|I_{ij}|}{d}=c_{ij},\qquad \text{which implies }\lim_{d\to\infty}\frac{\tilde{\mu}_{(ij)}}{\|\mu\|^2}=c_{ij}.
\end{equation}
One example that falls into this set-up (with high probability) is standard Gaussian $\mu$, in which case $c_{ij}=\frac14$. We define the parts of $\csm(t)$ and $\csV(t)$ associated with these four eigenspaces as
\begin{equation}
    m_{(ij)}(t)=\frac1d\sum_{\rho\in I_{ij}}\csm_\rho(t),\qquad v_{(ij)}(t)=\frac1d\sum_{\rho\in I_{ij}}\csV_\rho(t),
\end{equation}
so we have $m_{(00)}(t)+m_{(01)}(t)+m_{(10)}(t)+m_{(11)}(t)=\csm(t)$ and likewise for $\csV(t)$.  Using the integral equations for $\csm_\rho,\csV_\rho$, and the notation $W_1(t)=p_1\EE w_{12}(t)+p_2\EE w_{21}(t)$, we get
\begin{equation}\begin{split}
    m_{(00)}(t)&=\gamma\tilde{\mu}_{(00)}\int_0^tW_1(s)\dif s\\
    m_{(01)}(t)&=\gamma\tilde{\mu}_{(01)}\int_0^tW_1(s)\exp\left(-\gamma \int_s^tp_2(\EE w_{21}-\EE w_{21}^2)\dif\tau\right)\dif s\\
    m_{(10)}(t)&=\gamma\tilde{\mu}_{(10)}\int_0^tW_1(s)\exp\left(-\gamma \int_s^tp_1(\EE w_{12}-\EE w_{12}^2)\dif\tau\right)\dif s\\
    m_{(11)}(t)&=\gamma\tilde{\mu}_{(11)}\int_0^tW_1(s)\exp\left(-\gamma \int_s^t\Big(p_1(\EE w_{12}-\EE w_{12}^2)+p_2(\EE w_{21}-\EE w_{21}^2)\Big)\dif\tau\right)\dif s.\\
\end{split}\end{equation}
Since $\EE w_{12}(\tau)-\EE w_{12}^2(\tau)\geq0$ for all $\tau$, and likewise for $w_{21}$, the integral equations imply that, for all $t$,
\begin{equation}\label{eq:mij_relations}
    \frac{m_{(00)}(t)}{\tilde{\mu}_{(00)}}\geq \frac{m_{(01)}(t)}{\tilde{\mu}_{(01)}},\frac{m_{(10)}(t)}{\tilde{\mu}_{(10)}}\geq \frac{m_{(11)}(t)}{\tilde{\mu}_{(11)}}.
\end{equation}
Likewise, we see that, for $d$ sufficiently large, $m_{(00)}(t)\geq c_{00}m(t)$.  Using the information above, we arrive at the following bounds (We obtain tighter bounds in a later lemma, subject to a symmetry assumption on the two classes and an upper bound on the learning rate).
\begin{lemma}\label{lem:m_ij_bounds} 
    The quantity $m_{00}(t)$ is unbounded with $t$.  More specifically, it has upper and lower bounds
    \begin{equation}
        m_{(00)}(t)=O(t),\qquad m_{(00)}(t)=\Omega(\log t).
    \end{equation}
    Furthermore, for the other $m_{(ij)}(t)$ bounds (where $i,j$ are not both zero), we have
    \begin{equation}
        m_{(ij)}(t)=O(m_{(00)}(t))=O(t),\qquad m_{(ij)}(t)=\Omega(1).
    \end{equation}
\end{lemma}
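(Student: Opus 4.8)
The plan is to argue entirely from the explicit integral representations of $m_{(00)},m_{(01)},m_{(10)},m_{(11)}$ displayed just before the lemma, combined with the pointwise bounds on $\EE w_{12},\EE w_{21}$ from Lemma~\ref{lem:w12bounds}. I would first dispose of the upper bounds, then the $\Omega(\log t)$ lower bound for $m_{(00)}$ via a differential inequality, and finally the $\Omega(1)$ lower bounds for the remaining blocks via an integrating-factor identity.

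\textbf{Step 1 (upper bounds).} Since $\EE w_{12}(\tau)-\EE w_{12}^2(\tau)\ge0$ and likewise for $w_{21}$, every exponential factor in the formulas for $m_{(01)},m_{(10)},m_{(11)}$ is $\le 1$, so $m_{(ij)}(t)\le \tfrac{\tilde{\mu}_{(ij)}}{\tilde{\mu}_{(00)}}\,m_{(00)}(t)$ for each $(i,j)$ (here $\tilde{\mu}_{(00)}>0$ by the standing assumption on $I_{00}$), which is the claim $m_{(ij)}(t)=O(m_{(00)}(t))$. Each $m_{(ij)}(t)$ is nonnegative (the integrands are), hence $\csm(t)\ge m_{(00)}(t)\ge0$; this lets me apply Lemma~\ref{lem:w12bounds}, which gives $\EE w_{12}(s),\EE w_{21}(s)\le\tfrac12$, hence $W_1(s)\le\tfrac12$ and $\int_0^tW_1\le t/2$. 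Feeding this into $m_{(00)}(t)=\gamma\tilde{\mu}_{(00)}\int_0^tW_1(s)\dif s$ yields $m_{(00)}(t)=O(t)$, and then $m_{(ij)}(t)=O(t)$ as well.

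\textbf{Step 2 ($m_{(00)}=\Omega(\log t)$).} Set $A(t)\defas\int_0^tW_1(s)\dif s$, so $m_{(00)}(t)=\gamma\tilde{\mu}_{(00)}A(t)$ and, by Step 1, $\csm(t)=\sum_{ij}m_{(ij)}(t)\le\gamma\|\mu\|^2A(t)$. Writing $c\defas\gamma\|\mu\|^2$, the lower bound in Lemma~\ref{lem:w12bounds} together with $\csm(t)\ge0$ gives $A'(t)=W_1(t)\ge\frac{1}{1+e^{\csm(t)}}\ge\frac{1}{1+e^{cA(t)}}$, i.e. $A'(t)\bigl(1+e^{cA(t)}\bigr)\ge1$. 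Integrating from $0$ to $t$ gives $A(t)+\tfrac1c\bigl(e^{cA(t)}-1\bigr)\ge t$, and combining with the a priori bound $A(t)\le t/2$ from Step 1 yields $e^{cA(t)}\ge \tfrac{c}{2}t+1$, hence $A(t)\ge\tfrac1c\log\bigl(\tfrac c2 t+1\bigr)$ and $m_{(00)}(t)=\Omega(\log t)$.

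\textbf{Step 3 ($m_{(ij)}=\Omega(1)$ for $(i,j)\ne(0,0)$).} Let $g_{ij}(\cdot)\ge0$ be the rate appearing inside the exponential in the formula for $m_{(ij)}$ (so $g_{01}=p_2(\EE w_{21}-\EE w_{21}^2)$, $g_{10}=p_1(\EE w_{12}-\EE w_{12}^2)$, $g_{11}=g_{01}+g_{10}$). The key inequality is $W_1(s)\ge g_{ij}(s)$, since $W_1=p_1\EE w_{12}+p_2\EE w_{21}\ge p_2\EE w_{21}\ge p_2(\EE w_{21}-\EE w_{21}^2)$ and similarly for the other cases. Therefore, with $G_{ij}(t)\defas\int_0^t g_{ij}$,
\[
m_{(ij)}(t)\ge\gamma\tilde{\mu}_{(ij)}\int_0^t g_{ij}(s)\,e^{-\gamma(G_{ij}(t)-G_{ij}(s))}\dif s=\gamma\tilde{\mu}_{(ij)}\,e^{-\gamma G_{ij}(t)}\int_0^t\tfrac1\gamma\,\tfrac{\dif}{\dif s}e^{\gamma G_{ij}(s)}\dif s=\tilde{\mu}_{(ij)}\bigl(1-e^{-\gamma G_{ij}(t)}\bigr).
\]
At $t=0$ one has $\csm(0)=\csB_i(0)=0$, so $\EE w_{21}(0)-\EE w_{21}^2(0)=\tfrac14$; a short-time a priori estimate on the ODEs (bounding $\csm,\csB_i$ on a fixed interval $[0,t_0]$ by constants depending only on $\gamma,\|\mu\|$) keeps $g_{ij}$ above a positive constant on $[0,t_0]$, so $G_{ij}(t)\ge G_{ij}(t_0)>0$ for $t\ge t_0$. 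Since $\tilde{\mu}_{(ij)}\to c_{ij}\|\mu\|^2>0$, this gives $m_{(ij)}(t)\ge\tilde{\mu}_{(ij)}\bigl(1-e^{-\gamma G_{ij}(t_0)}\bigr)=\Omega(1)$.

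\textbf{Main obstacle.} The two lower bounds are where the work is. For Step 2 the differential inequality $A'(1+e^{cA})\ge1$ by itself does not pin $A$ down from below (it is consistent with $A$ bounded); the point is to re-insert the a priori upper bound $A(t)\le t/2$ after integrating. For Step 3 the naive bound on the exponential factors produces something that decays, and one really needs the exact integrating-factor cancellation together with a \emph{uniform} (in $d$) short-time lower bound on $g_{ij}$ and the non-degeneracy $c_{ij}>0$ to get a genuine $\Omega(1)$; establishing this uniformity is the part I expect to require the most care.
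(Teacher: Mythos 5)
Your proposal is correct, and Steps 1--2 match the paper's argument essentially verbatim: the paper also derives the $\Omega(\log t)$ bound from the differential inequality $m_{(00)}'(t)\bigl(1+e^{m_{(00)}(t)/c_{00}}\bigr)\geq\gamma\tilde\mu_{(00)}$ (obtained via $\csm(t)\leq m_{(00)}(t)/c_{00}$ rather than your $\csm(t)\leq\gamma\|\mu\|^2A(t)$, but the two are equivalent since $\tilde\mu_{(00)}/\|\mu\|^2\to c_{00}$).

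Step 3 is where you take a genuinely different route from the paper, and it is worth pointing out the difference because it affects how much work remains. Both you and the paper exploit an exact integrating-factor cancellation in the formula $m_{(ij)}(t)=\gamma\tilde\mu_{(ij)}\int_0^t W_1(s)e^{-\gamma(G_{ij}(t)-G_{ij}(s))}\dif s$, but you \emph{lower the prefactor} ($W_1\geq g_{ij}$, matching the exponent) to get $\tilde\mu_{(ij)}\bigl(1-e^{-\gamma G_{ij}(t)}\bigr)$, whereas the paper \emph{enlarges the exponent} ($g_{ij}\leq W_1$, matching the prefactor) to get $\tilde\mu_{(ij)}\bigl(1-e^{-\gamma\omega_1(t)}\bigr)$ with $\omega_1(t)=\int_0^tW_1$. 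The paper's version is strictly tighter (since $\omega_1\geq G_{ij}$), and more importantly it closes immediately: $\omega_1(1)$ is bounded below by $\bigl(1+\exp\bigl(\tfrac{\gamma\tilde\mu_{(00)}}{2c_{00}}\bigr)\bigr)^{-1}$ using nothing beyond Lemma~\ref{lem:w12bounds} and the already-established linear-in-$t$ upper bound on $\csm$. Your version instead needs $G_{ij}(t_0)>0$ uniformly in $d$, which — as you correctly flag as the main obstacle — requires an additional argument: you must show $\EE w_{12}-\EE w_{12}^2=\EE[w_{12}(1-w_{12})]$ stays bounded away from zero on some $[0,t_0]$, which in turn requires an a priori bound on $\csB_i$ in addition to $\csm$ (it is not enough that $g_{ij}(0)=1/4$; you need a quantitative continuity statement uniform in $d$). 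This can certainly be done (the bound $\csm(s)\leq\tfrac{\gamma\|\mu\|^2}{2}s$ from Step~1, together with an analogous short-time bound on $\csB_i$ obtained from the $\csV_\rho$ equation, keeps $w_{12}$ away from $\{0,1\}$), but the paper's route avoids this extra step entirely by never needing a lower bound on $g_{ij}$.
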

\begin{proof}
    Initializing at $X_0=0$, one can deduce from the differential equations that $\csm(t)\geq0$ for all $t$.  Thus, applying Lemma \ref{lem:w12bounds}, we see that, for all $t$, 
    \begin{equation}\label{eq:W1_bds}
        \frac{1}{1+e^{\csm(t)}}\leq w_{12}(t),w_{21}(t)\leq\frac12,\quad\text{so }\frac{1}{1+e^{\csm(t)}}\leq W_1(t)\leq\frac12 .
    \end{equation}
 For the upper bound, this gives us
\begin{equation}
    m_{(00)}(t)\leq\gamma\tilde{\mu}_{(00)}\int_0^t\frac12\dif s=\frac\gamma2\tilde{\mu}_{(00)}t=O(t).
\end{equation}
For the lower bound, we use the observation that $m_{(00)}(t)\geq c_{00}\csm(t)$ to write
\begin{equation}\label{eq:m00intermed-bd}
    m_{(00)}(t)\geq\gamma\tilde{\mu}_{(00)}\int_0^t\frac{1}{1+e^{m_{(00)}(s)/c_{00}}}\dif s.
\end{equation}
Now consider the corresponding integral equation
\begin{equation}
y(t)=\gamma\tilde{\mu}_{(00)}\int_0^t\frac{1}{1+e^{y(s)/c_{00}}}\dif s.
\end{equation}
Differentiating both sides and solving the resulting equation with initial condition $y(0)=0$ gives
    $t=\frac{1}{\gamma\tilde{\mu}_{(00)}}(y+c_{00}(e^{y/c_{00}}-1))$.
This implies that the inequality \eqref{eq:m00intermed-bd} is satisfied for 
\begin{equation}
    \frac{1}{\gamma\tilde{\mu}_{(00)}}\left(m_{(00)}(t)+c_{00}(e^{m_{(00)}(t)/c_{00}}-1)\right)\geq t,\quad\text{thus }m_{(00)}(t)=\Omega(\log t).
\end{equation}

For the bounds on the other $m_{(ij)}(t)$ values, the upper bound comes directly from the integral equations by observing that
    \begin{equation}
        m_{(01)}(t)\leq \frac{c_{01}}{c_{00}} m_{(00)}(t)
    \end{equation}
and similarly for $m_{(10)},m_{(11)}$.  For the lower bound we introduce the notation $\omega_1(s)=\int_0^sW_1(\tau)\dif\tau$ and we get
\begin{equation}\begin{split}
     m_{(01)}(t)&\geq\gamma\tilde{\mu}_{(01)}\int_0^t W_1(s)\exp\left(-\gamma\int_s^t W_1(\tau)\dif\tau\right)\dif s\\
     &=\gamma\tilde{\mu}_{(01)}\int_0^{\omega_1(t)}\exp(-\gamma(\omega_1(t)-u))\dif u\\
     &=\tilde{\mu}_{(01)}(1-e^{-\gamma\omega_1(t)}).
\end{split}\end{equation}
Using the lower bound on $W_1$ given in \eqref{eq:W1_bds} and the fact that $\csm(t)\leq m_{(00)}(t)/c_{00}\leq\frac{\gamma\tilde{\mu}_{(00)}}{2c_{00}}t$, we see that $\omega_1(t)=\int_0^tW_1(\tau)\dif\tau\geq\frac{1}{1+\exp(\frac{\gamma\tilde{\mu}_{(00)}}{2c_{00}})}$ for $t\geq1$. Thus, \[m_{(01)}(t)\geq\tilde{\mu}_{(01)}\left(1-\exp\left(-\frac12\gamma\left(1+\exp(\frac{\gamma\tilde{\mu}_{(00)}}{2c_{00}})\right)^{-1}\right)\right).\]
 For any fixed $\gamma$, this is bounded away from zero.  The lower bounds for $m_{(10)}$, $m_{(11)}$ are similar, replacing $\tilde{\mu}_{(01)}$ by $\tilde{\mu}_{(10)}$, $\tilde{\mu}_{(11)}$, respectively.
\end{proof}

\begin{lemma}\label{lem:vij_asymp}
The asymptotics for $v_{(ij)}$ can be expressed in terms $m_{(ij)}$. In particular, for $i,j\in\{0,1\}$,
\begin{equation}\label{eq:v00_asymp}
    v_{(ij)}(t)\asymp m_{(ij)}^2(t)
\quad\text{ and, for $i=j=0,$ }\quad
    v_{(00)}(t)=\frac{m_{(00)}^2(t)}{\tilde{\mu}_{(00)}}\left(1+O((\log t)^{-1})\right).
\end{equation}
\end{lemma}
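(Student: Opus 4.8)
The plan is to treat each eigenspace $I_{ij}$ separately, starting from the reduced system \eqref{eq:V_m_rho_1}. First I would sum the equations for $\csm_\rho$ and $\csV_\rho$ over $\rho\in I_{ij}$ and divide by $d$. Since $\lambda_\rho^{(1)}=i$ and $\lambda_\rho^{(2)}=j$ are constant on $I_{ij}$, setting
\[
\psi_{ij}\defas p_1i(\EE w_{12}-\EE w_{12}^2)+p_2j(\EE w_{21}-\EE w_{21}^2),\quad
\chi_{ij}\defas p_1i\EE w_{12}^2+p_2j\EE w_{21}^2,
\]
\[
\Phi\defas p_1\EE w_{12}^2+p_2\EE w_{21}^2,\qquad W_1\defas p_1\EE w_{12}+p_2\EE w_{21},
\]
(all nonnegative functions of $t$), the aggregated system reads
\begin{align*}
m_{(ij)}'&=-\gamma\psi_{ij}\,m_{(ij)}+\gamma\tilde{\mu}_{(ij)}W_1,\\
v_{(ij)}'&=-2\gamma\psi_{ij}\,v_{(ij)}+2\gamma W_1 m_{(ij)}+\gamma^2\Big(\tfrac{|I_{ij}|}{d}\chi_{ij}+\tfrac{\tilde{\mu}_{(ij)}}{d}\Phi\Big),
\end{align*}
with $m_{(ij)}(0)=v_{(ij)}(0)=0$ since $X_0=0$. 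In particular the first equation integrates to $m_{(ij)}(t)=\gamma\tilde{\mu}_{(ij)}\int_0^t e^{-\gamma\int_s^t\psi_{ij}}W_1(s)\,\dif s$, recovering the formulas already recorded before Lemma \ref{lem:m_ij_bounds}.

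The key observation is that the auxiliary quantity $U_{(ij)}(t)\defas v_{(ij)}(t)-m_{(ij)}(t)^2/\tilde{\mu}_{(ij)}$ obeys a clean linear ODE. Differentiating $m_{(ij)}^2/\tilde{\mu}_{(ij)}$ with the first equation and subtracting from the second, the cross terms $2\gamma W_1 m_{(ij)}$ cancel exactly and one gets
\[
U_{(ij)}'=-2\gamma\psi_{ij}\,U_{(ij)}+\gamma^2\Big(\tfrac{|I_{ij}|}{d}\chi_{ij}+\tfrac{\tilde{\mu}_{(ij)}}{d}\Phi\Big),\qquad U_{(ij)}(0)=0 .
\]
Because $\psi_{ij}\ge0$ and the forcing term is nonnegative, Gr\"onwall (or direct integration) gives $U_{(ij)}(t)\ge0$ for all $t$, i.e. $v_{(ij)}(t)\ge m_{(ij)}(t)^2/\tilde{\mu}_{(ij)}$; this is just Cauchy--Schwarz applied to the projections of the iterate and of $\mu$ onto the $I_{ij}$-eigenspace, made quantitative, and it supplies the lower half of $\asymp$.

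For the matching upper bound I would solve the linear ODE: $U_{(ij)}(t)=\gamma^2\int_0^t e^{-2\gamma\int_s^t\psi_{ij}}\big(\tfrac{|I_{ij}|}{d}\chi_{ij}(s)+\tfrac{\tilde{\mu}_{(ij)}}{d}\Phi(s)\big)\,\dif s$. Using $\chi_{ij}(s)\le\Phi(s)\le W_1(s)$ (which holds since $0\le w\le1$ forces $\EE w^2\le\EE w$, and $i,j\le1$) together with $e^{-2\gamma\int_s^t\psi_{ij}}\le e^{-\gamma\int_s^t\psi_{ij}}$, and then recognizing $\int_0^t e^{-\gamma\int_s^t\psi_{ij}}W_1(s)\,\dif s=m_{(ij)}(t)/(\gamma\tilde{\mu}_{(ij)})$, one obtains $U_{(ij)}(t)\le\tfrac{\gamma(|I_{ij}|+\tilde{\mu}_{(ij)})}{d\,\tilde{\mu}_{(ij)}}\,m_{(ij)}(t)=O(m_{(ij)}(t))$. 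Hence $v_{(ij)}(t)=\tfrac{m_{(ij)}(t)^2}{\tilde{\mu}_{(ij)}}\big(1+O(m_{(ij)}(t)^{-1})\big)$. Since $\tilde{\mu}_{(ij)}\to c_{ij}\|\mu\|^2>0$ and, by Lemma \ref{lem:m_ij_bounds}, $m_{(ij)}(t)=\Omega(1)$ for $t$ bounded away from $0$, this yields $v_{(ij)}(t)\asymp m_{(ij)}^2(t)$. For $(i,j)=(0,0)$ we have $\psi_{00}=\chi_{00}=0$, so $U_{(00)}(t)=\tfrac{\gamma^2\tilde{\mu}_{(00)}}{d}\int_0^t\Phi(s)\,\dif s\le\tfrac{\gamma}{d}m_{(00)}(t)$, giving again $v_{(00)}(t)=\tfrac{m_{(00)}(t)^2}{\tilde{\mu}_{(00)}}(1+O(m_{(00)}(t)^{-1}))$; plugging in the sharper bound $m_{(00)}(t)=\Omega(\log t)$ from Lemma \ref{lem:m_ij_bounds} turns the error into $O((\log t)^{-1})$, which is \eqref{eq:v00_asymp}.

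There is no substantial obstacle here beyond bookkeeping: everything hinges on spotting that $U_{(ij)}=v_{(ij)}-m_{(ij)}^2/\tilde{\mu}_{(ij)}$ satisfies a linear ODE with nonnegative forcing. The only mildly delicate step is the upper bound on $U_{(ij)}$, where one must trade the $v$-decay rate $2\gamma\psi_{ij}$ against the $m$-decay rate $\gamma\psi_{ij}$ (via $e^{-2\gamma\int\psi}\le e^{-\gamma\int\psi}$) so that the forcing integral can be identified with $m_{(ij)}(t)$ itself; and one should keep in mind that the $\asymp$ claim is only meaningful for $t$ away from $0$, precisely the regime where Lemma \ref{lem:m_ij_bounds} guarantees $m_{(ij)}(t)=\Omega(1)$.
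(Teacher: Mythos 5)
Your proof is correct, and the underlying computation is the same one the paper carries out, but your organization of it is genuinely cleaner. The paper works directly from the explicit integral representations of $v_{(ij)}(t)$: it first absorbs the $\gamma^2$ forcing terms into $Cm_{(ij)}(s)$ using the lower bound $m_{(ij)}(s)=\Omega(1)$ from Lemma \ref{lem:m_ij_bounds}, then symmetrizes the resulting double integral into $\left(\int\cdots\right)^2$ to obtain $\frac{C}{\tilde{\mu}_{(ij)}}m_{(ij)}^2(t)$, with the lower bound coming from the same manipulation with $C=1$. Your auxiliary variable $U_{(ij)}=v_{(ij)}-m_{(ij)}^2/\tilde{\mu}_{(ij)}$ compresses that symmetrization into a single observation — the $2\gamma W_1 m_{(ij)}$ cross terms cancel — which makes $U_{(ij)}\ge0$ immediate from nonnegativity of the forcing, and delivers the upper bound $U_{(ij)}=O(m_{(ij)})$ via $e^{-2\gamma\int\psi_{ij}}\le e^{-\gamma\int\psi_{ij}}$ and $\chi_{ij}\le\Phi\le W_1$ without having to invoke $m_{(ij)}(s)=\Omega(1)$ pointwise inside the integral; you only use that lower bound at the very end to turn $O(m_{(ij)}^{-1})$ into $O(1)$. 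This sidesteps a mild uniformity subtlety in the paper's absorption step, which as written would need separate treatment of the contribution from small $s$ where $m_{(ij)}(s)$ has not yet grown to order one. One small bookkeeping discrepancy you should be aware of: the paper's displayed integrand for $v_{(00)}$ contains $\gamma\tilde{\mu}_{(00)}W_2(s)$, whereas summing \eqref{eq:V_m_rho_1} over $\rho\in I_{00}$ and dividing by $d$ actually produces $\gamma\tilde{\mu}_{(00)}W_2(s)/d$, matching the $\tilde{\mu}_{(ij)}/d$ in your aggregated equation; since this term is lower order either way the final asymptotics are unchanged, but your version is the one that follows from the stated ODEs.
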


\begin{proof}
As with the $m_{(ij)}$ bounds above, we have the following bounds for $v_{(ij)}$, where we denote by $W_2 = p_1\EE w_{12}^2(t)+p_2\EE w_{21}^2(t)$: 
\begin{equation}\begin{split}
    v_{(00)}(t)=\gamma\int_0^t&\left(2m_{(00)}(s)W_1(s)+\gamma\tilde{\mu}_{(00)}W_2(s)\right)\dif s,\\
    v_{(01)}(t)=\gamma\int_0^t&\left(2m_{(01)}(s)W_1(s)+\gamma(p_2\EE w_{21}^2(s)+\tilde{\mu}_{(01)}W_2(s))\right)\\
    &\exp\left(-2\gamma\int_s^tp_2\EE(w_{21}(\tau)-w_{21}^2(\tau))\dif\tau\right)\dif s,\\
    v_{(10)}(t)=\gamma\int_0^t&\left(2m_{(10)}(s)W_1(s)+\gamma(p_1\EE w_{12}^2(s)+\tilde{\mu}_{(10)}W_2(s))\right)\\
    &\exp\left(-2\gamma\int_s^tp_1\EE(w_{12}(\tau)-w_{12}^2(\tau))\dif\tau\right)\dif s,\\
    v_{(11)}(t)=\gamma\int_0^t&\left(2m_{(11)}(s)W_1(s)+\gamma(1+\tilde{\mu}_{(11)})W_2(s)\right)
    \exp\left(-2\gamma\int_s^t(W_1(\tau)-W_2(\tau))\dif\tau\right)\dif s,\\
\end{split}\end{equation}
For $v_{(00)}(t)$, we have a lower bound
\begin{equation}\label{eq:v00_lowerbd}\begin{split}
    v_{(00)}(t)&\geq2\gamma\int_0^tW_1(s)m_{(00)}(s)\dif s
    \;=\;2\gamma^2\tilde{\mu}_{(00)}\iint_{0\leq\tau\leq s\leq t}W_1(\tau)W_1(s)\dif\tau\dif s\\
    &=\gamma^2\tilde{\mu}_{(00)}\left(\int_0^t W_1(s)\dif s\right)^2\;\;
    =\;\;\frac{m_{(00)}^2(t)}{\tilde{\mu}_{(00)}}.
\end{split}\end{equation}
On the other hand, we have an upper bound
\begin{equation}\begin{split}
    v_{(00)}\leq 2\gamma\int_0^tW_1(s)\left(m_{(00)}(s)+\gamma\tilde{\mu}_{(00)}\right)\dif s\;
    =\;\frac{m_{(00)}^2(t)}{\tilde{\mu}_{(00)}}+2\gamma m_{(00)}(t).
\end{split}\end{equation}
Combining the upper and lower bounds, and applying Lemma \ref{lem:m_ij_bounds}, we get
\begin{equation}
    v_{(00)}(t)=\frac{m_{(00)}^2(t)}{\tilde{\mu}_{(00)}}\left(1+O((\log t)^{-1})\right).
\end{equation}
Similarly, for $v_{(01)}(t)$ with sufficiently large $t$, since $m_{(01)}(t)=\Omega(1)$, there exists some $C>0$, which may depend on the learning rate, such that 
\begin{equation}\label{eq:v01_asymp}\begin{split}
    &v_{(01)}(t)\leq2C\gamma\int_0^tW_1(s)\exp\left(-2\gamma\int_s^tp_2\EE(w_{21}-w_{21}^2)\dif\tau\right)m_{(01)}(s)\dif s\\
    &=2C\gamma^2\tilde{\mu}_{(01)}\int_0^tW_1(s)\exp\left(-2\gamma\int_s^tp_2\EE(w_{21}-w_{21}^2)\dif\tau\right)\int_0^sW_1(u)\exp\left(-\gamma\int_u^sp_2\EE(w_{21}-w_{21}^2)\dif\tau\right)\dif u\dif s \\
    &= \gamma^2C\tilde{\mu}_{(01)}\left(\int_0^tW_1(s)\exp\left(-\gamma\int_s^tp_2\EE(w_{21}-w_{21}^2)\dif\tau\right)\dif s\right)^2\\
    &=\frac{C}{\tilde{\mu}_{(01)}}m_{(01)}^2(t).
\end{split}\end{equation}
Furthermore, note that we also have the lower bound $v_{(01)}(t)\geq\frac{m_{(01)}^2(t)}{\tilde{\mu}_{(01)}}$ using the same computation as in \eqref{eq:v01_asymp} but with the inequality in the first line reversed and $C$ replaced by 1.
and likewise for the other quantities, we get (possibly with different $C$ values)
\begin{equation}\label{eq:v10-11_asymp}
   \frac{m_{(10)}^2(t)}{\tilde{\mu}_{(10)}}\leq v_{(10)}(t)\leq \frac{C m_{(10)}^2(t)}{\tilde{\mu}_{(10)}},\qquad
    \frac{m_{(11)}^2(t)}{\tilde{\mu}_{(11)}}\leq v_{(11)}(t)\leq \frac{Cm_{(11)}^2(t)}{\tilde{\mu}_{(11)}}.
\end{equation}
\end{proof}

\begin{lemma}\label{lem:zero-one_mij_final}
    If we impose the additional assumption of symmetry, meaning that $p_1=p_2=1/2$ and $|I_{00}|=|I_{01}|=|I_{10}|=|I_{11}|$ and we take $\tilde{\mu}_{(00)}=\tilde{\mu}_{(01)}=\tilde{\mu}_{(10)}=\tilde{\mu}_{(11)}=\frac14$ and further impose Assumption \ref{ass:W_1W_2ab}, then we get tighter bounds on $m_{(ij)}(t)$ and $W_1(t)$.  In particular, we get that
    \[m_{(01)},m_{(10)},m_{(11)}\asymp1,\qquad m_{(00)} = \log t +O(1),\qquad W_1(t)\asymp t^{-1}.
    \]

\end{lemma}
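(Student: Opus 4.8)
The plan is to combine the explicit integral representations of $m_{(ij)}(t)$ and $v_{(ij)}(t)$ recorded before Lemma~\ref{lem:m_ij_bounds} with Assumption~\ref{ass:W_1W_2ab}, and to run a short bootstrap. As a preliminary reduction, note that under the symmetry hypotheses ($p_1=p_2=\tfrac12$, $|I_{01}|=|I_{10}|$, $\tilde\mu_{(ij)}=\tfrac14$) the whole ODE system is invariant under the involution swapping the two classes, which maps $I_{01}\leftrightarrow I_{10}$; by uniqueness of the solution this forces $v_{(01)}(t)\equiv v_{(10)}(t)$ and $m_{(01)}(t)\equiv m_{(10)}(t)$, hence $\csB_1(t)=v_{(10)}(t)+v_{(11)}(t)=v_{(01)}(t)+v_{(11)}(t)=\csB_2(t)$ and $\EE w_{12}(t)=\EE w_{21}(t)=:W_1(t)$, $\EE w_{12}^2(t)=\EE w_{21}^2(t)=:W_2(t)$. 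With this identification, Assumption~\ref{ass:W_1W_2ab} reads $W_1(t)-W_2(t)\ge W_1(t)/C_w$.

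The first real step is to show $m_{(01)},m_{(10)},m_{(11)}\asymp1$. For the upper bounds, in the integral formula for $m_{(11)}$ the exponential weight is $\exp\!\big(-\gamma\int_s^t(W_1-W_2)\big)\le\exp\!\big(-\tfrac{\gamma}{C_w}\int_s^tW_1\big)$; writing $\omega(t)=\int_0^tW_1$ and substituting $u=\omega(s)$ gives $m_{(11)}(t)\le\tfrac{\gamma}{4}\int_0^{\omega(t)}e^{-\frac{\gamma}{C_w}(\omega(t)-u)}\,du=\tfrac{C_w}{4}\big(1-e^{-\frac{\gamma}{C_w}\omega(t)}\big)\le\tfrac{C_w}{4}$; the same computation with $\tfrac{\gamma}{2C_w}$ in place of $\tfrac{\gamma}{C_w}$ (only one class enters the exponent of $m_{(01)},m_{(10)}$) gives $m_{(01)}(t),m_{(10)}(t)\le\tfrac{C_w}{2}$. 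The matching lower bounds $m_{(ij)}(t)=\Omega(1)$ are already supplied by Lemma~\ref{lem:m_ij_bounds}, so $m_{(01)},m_{(10)},m_{(11)}\asymp1$; Lemma~\ref{lem:vij_asymp} then yields $v_{(01)},v_{(10)},v_{(11)}\asymp1$, and in particular $\csB_1(t)=v_{(10)}(t)+v_{(11)}(t)$ is bounded above (and below for $t\ge1$).

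The second step pins down $W_1$. Since $X_0=0$ forces $\csm(t)\ge0$ (as used in Lemma~\ref{lem:m_ij_bounds}), Lemma~\ref{lem:w12bounds} gives $\frac{1}{1+e^{\csm(t)}}\le W_1(t)\le\frac{2}{3+e^{\csm(t)-\csB_1(t)/2}}$; using $\csB_1(t)=O(1)$ from the previous step together with $\csm(t)=m_{(00)}(t)+m_{(01)}(t)+m_{(10)}(t)+m_{(11)}(t)=m_{(00)}(t)+O(1)$, this sandwiches $W_1(t)\asymp e^{-m_{(00)}(t)}$ for $t\ge1$. Finally, differentiating $m_{(00)}(t)=\tfrac{\gamma}{4}\int_0^tW_1(s)\,ds$ gives $\tfrac{d}{dt}e^{m_{(00)}(t)}=\tfrac{\gamma}{4}W_1(t)e^{m_{(00)}(t)}$, which lies between two positive constants; integrating from some $t_0\ge1$ shows $e^{m_{(00)}(t)}=\Theta(t)$, i.e.\ $m_{(00)}(t)=\log t+O(1)$, and hence $W_1(t)\asymp e^{-m_{(00)}(t)}\asymp t^{-1}$.

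I expect the main obstacle to be arranging the bootstrap so that nothing is circular: the two-sided bound $W_1\asymp e^{-m_{(00)}}$ requires $\csB_1=O(1)$, which requires $m_{(10)},m_{(11)}=O(1)$, which is exactly where the symmetry reduction ($\EE w_{12}=\EE w_{21}$) and Assumption~\ref{ass:W_1W_2ab} are indispensable; once the order is fixed, every remaining estimate is an elementary integral/ODE manipulation. A secondary caveat is the reading of ``$\asymp$'' near $t=0$: all of these quantities vanish at $t=0$ because $X_0=0$, so the bounds should be understood for $t\ge1$ (or $t$ large), consistently with Lemma~\ref{lem:m_ij_bounds}.
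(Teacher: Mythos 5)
Your proof is essentially the same as the paper's: both establish $m_{(01)},m_{(10)},m_{(11)}=O(1)$ via Assumption~\ref{ass:W_1W_2ab} (you via the integral representation and a change of variables, the paper via the corresponding ODE and a monotonicity/continuity argument), then bound $\csB_1,\csB_2$ using Lemma~\ref{lem:vij_asymp}, sandwich $W_1\asymp e^{-m_{(00)}}$ using Lemma~\ref{lem:w12bounds}, and integrate $\dot m_{(00)}=\tfrac\gamma4 W_1$. Your one genuine addition is the explicit symmetry reduction at the start (establishing $\EE w_{12}=\EE w_{21}$, $\csB_1=\csB_2$ by uniqueness of the solution under the class-swap involution), which the paper uses implicitly when it rewrites $p_1(\EE w_{12}-\EE w_{12}^2)$ as $\tfrac12(W_1-W_2)$; making that explicit is a small but worthwhile improvement in rigor.
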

\begin{proof}
We begin with the differential equation and initial condition
\begin{equation}
    \frac{\dif}{\dif t}m_{(10)}(t)=-\frac\gamma2 m_{(10)}(t)(W_1(t)-W_2(t))+\frac\gamma4W_1(t),\qquad m_{(10)}(0)=0.
\end{equation}
Using Assumption \ref{ass:W_1W_2ab} and continuity, we conclude that
$m_{(10)}\leq2C_w\tilde{\mu}_{(10)}.$
Combining this with the lower bound on $m_{(10)}$ from Lemma \ref{lem:m_ij_bounds}, and applying a similar argument for $m_{(01)},m_{(11)},$ we conclude that
    $m_{(01)},m_{(10)},m_{(11)}\asymp1.$
    
Based on the upper bound of $W_1$ from Lemma \ref{lem:w12bounds}, it will be useful to show that $\csB_1(t)=O(1)$ (and by symmetry it will also hold for $\csB_2$).  We have
\begin{equation}\label{eq:B1bound0-1}
    \csB_1=v_{(10)}+v_{(11)}=O((m_{(10)}^2+m_{(11)}^2))=O(m_{(10)}^2)
\end{equation}
This bound, along with the bounds on $m_{(ij)}$, imply that 
\begin{equation}\label{eq:zero-one_m-B_lb}
    \csm(t)=m_{(00)}(t)+O(1),\qquad \csB_1(t),\csB_2(t)=O(1).
\end{equation} 
Thus, for large $t$, Lemma \ref{lem:w12bounds} implies that
$W_1=e^{-m_{(00)}(t)+O(1)}.$
Combining this with the differential equation $\frac{\dif m_{(00)}}{\dif t}=\frac\gamma4W_1,$ we get
$ \frac{\dif m_{(00)}}{\dif t}=e^{-m_{(00)}(t)+O(1)},$
 and from this we conclude that
 \begin{equation}
     m_{(00)}(t)=\log t+O(1),\qquad W_1(t)\asymp t^{-1}.
 \end{equation} 
\end{proof}
\begin{proof}[Proof of Proposition \ref{prop:zero-one}]
    
     \textit{Part \ref{prop:zero-one_loss}}: Lemma \ref{lem:risk_lub} says that, in the case of $\csB_1=\csB_2=\csB$ (which holds in the symmetric zero-one model), the loss is bounded by
\begin{equation}
    \log(1+e^{-\csm(t)})\leq\csL(t)\leq\log(1+e^{-\csm(t)+\csB(t)/2}).
\end{equation}
Furthermore, Lemma \ref{lem:zero-one_mij_final} and line \eqref{eq:zero-one_m-B_lb} of its proof imply that, for sufficiently large $t$,
    $\csL(t)\asymp t^{-1}.$

    \textit{Part \ref{prop:zero-one_m/v2}}: The asymptotic for $v_{(00)}$ in equation \eqref{eq:v00_asymp} tells us that, for sufficiently large $t$ and $\tilde{\mu}_{(00)}=\frac14$, we have
    \begin{equation}
        \frac{m_{(00)}(t)}{\sqrt{v_{(00)}(t)}}=\frac12\left(1+O((\log t)^{-1})\right)).
    \end{equation}
    From the asymptotics for $m_{(ij)}(t)$ in Lemma \ref{lem:zero-one_mij_final} and the asymptotics for $v_{(ij)}(t)$ in equations \eqref{eq:v00_asymp}, \eqref{eq:v01_asymp}, \eqref{eq:v10-11_asymp}, we see that
    \begin{equation}
        \csm(t)=m_{(00)}(t)(1+O((\log t)^{-1})),\qquad \csV(t)=v_{(00)}(t)(1+O((\log t)^{-2})),
    \end{equation}
    and this implies Part \ref{prop:zero-one_m/v2} of the Proposition.

    \textit{Part \ref{prop:zero-one_mij}}: Direct from Lemma \ref{lem:zero-one_mij_final}.
\end{proof}

\subsection{Proofs for the power-law model \label{sec:power_law_proofs}}

In this section, we prove 
Propositions \ref{prop:power_law_good_regime} and \ref{prop:power_law_bad_regime}.  
We prove them under a milder assumption on the eigenvalues and mean. For this purpose, we present the following kernels:
$$F_{\mu}(x)\defas \sum_{\rho}\tilde{\mu}_{\rho}e^{-\gamma\lambda_{\rho}x}\quad \text{and} \quad \cK_2(x) \defas \frac{1}{d}\sum_{\rho}\lambda_{\rho}^2e^{-2\gamma\lambda_{\rho}x}.
$$ 
We require these kernels to satisfy the following assumption:
\begin{assumption}
\label{ass:F_K_power_law_assmp} $F_\mu(x)\asymp x^{-\kappa_\mu}$ and $\mathcal{K}_2(x)\asymp x^{-\kappa_2}$ for $x\geq1$ with $\kappa_\mu\geq0,$  $\kappa_2> 1.$  
\end{assumption}
Lemma \ref{lem:K_mu_k_map_a_b} shows that Assumption \ref{ass:power_law_mu_lam_con} (power-law set-up) satisfies this assumption with $\kappa_\mu = \frac{\beta+1}{\alpha},$ and $\kappa_2 = \frac{1}{\alpha} +2.$
In addition, in the regime where $F_\mu$ is integrable, i.e., $\kappa_\mu >1$, or in the identity case, taking $\|\cdot\|_1$ to denote integration on $(0,\infty)$, we have
$$\|F_\mu\|_1 = \frac{1}{\gamma }\mu^\top K^{-1} \mu \quad \text{for} \quad \kappa_\mu >1 \,\text{or} \, K_1=K_2=I_d.$$
When $\kappa_\mu\le 1,$ $ F_\mu$ is not integrable. 
Taking $\tilde{\mu}_\rho,$ and $K$ as in Assumption \ref{ass:power_law_mu_lam_con}, we obtain $\|F_\mu\|_1 = \frac{1}{\gamma}\sum_{\rho}\frac{\tilde{\mu}_\rho}{\lambda_\rho} = \frac{1}{\gamma (\beta-\alpha +1)}+O(d^{-1})$ when $\frac{\beta+1}{\a}=\kappa_\mu >1$ and $\|F_\mu\|_1=\frac{\|\mu\|^2}{\gamma }$ when $K_1=K_2 =I_d$. Similarly $\|\cK_2\|_1= \frac{1}{2d \gamma}\tr(K).$

As in the zero-one section \ref{sec:proof_zero-one}, we will use the notation 
\[
W_p \defas \EE[w_{12}^p] \quad \text{and} \quad \omega_p(t) \defas \int_0^t\EE[w_{12}^p](s)\dif s \quad \text{for} \quad p=\{1,2\}.
\]
For simplicity, we assume  $K_1=K_2$ and $p_1 = \frac{1}{2}.$  Using symmetry of $w_{12}$ and $w_{21}$, we rewrite the equations for $ \csm_{\rho}, \csV_{\rho}$ for convenience in this notation: 
\begin{align}\label{eq:v_rho_one_K}
\frac{\dif \csV_{\rho}}{\dif t}&=-2\gamma\lambda_{\rho}\csV_{\rho}(W_{1}-W_{2})+2\gamma \csm_{\rho}W_{1}+\gamma^{2}\lambda_{\rho}W_{2}+\gamma^{2}\tilde{\mu}_{\rho}W_{2}\\
\label{eq:m_rho_one_K}
\frac{\dif \csm_{\rho}}{\dif t}&=-\gamma \csm_{\rho}\lambda_{\rho}(W_{1}-W_{2})+\gamma d\tilde{\mu}_{\rho}W_{1}.
\end{align}

\begin{proof}[Proof of Proposition \ref{prop:power_law_good_regime}]
 Throughout the proof, we assume without loss of generality that $\csm(t)\ge0$
(otherwise, we flip the direction of $\mu$). We start by analyzing the overlap (first statement of the Proposition). 
Taking the sum over $\rho$, in Eq. \eqref{eq:m_rho_lb_final} (in Lemma \ref{lem:ulb_m_v_rho}): 
\[
\csm(t) = \frac{1}{d}\sum_{\rho=1}^d \csm_\rho(t)\ge \gamma\sum_{\rho}\tilde{\mu}_{\rho}\int_{0}^{\omega_{1}(t)}e^{-\gamma\lambda_{\rho}\left(\omega_{1}(t)-u\right)}\dif u.
\]
By the power-law property of the kernel
$F_{\mu}(x)\asymp x^{-\kappa_{\mu}},$
\begin{align}
\gamma \sum_{\rho} \tilde{\mu}_{\rho}\int_{0}^{u} e^{-\gamma\lambda_{\rho}(u-v)} \dif v
&= \gamma \sum_{\rho} \tilde{\mu}_{\rho}\int_{0}^{u}e^{-\gamma\lambda_{\rho}x} \dif x
\\\nonumber &= \gamma \int_0^u F_\mu(x)\dif x =  
\gamma(\|F_\mu\|_1 - \int_u^\infty F_\mu(x)\dif x ).
\end{align}
We then obtain that for some constant $c>0$ for $\kappa_\mu >1$
\begin{align}\label{eq:m_lb}
\csm(t)\geq \gamma\|F_\mu\|_1 - c\gamma\omega_{1}(t)^{-\kappa_{\mu}+1}   
\end{align}
and $\csm(t)\geq \|\mu\|^2-e^{-\gamma \omega_1(t)},$ when $K_1=K_2 =I_d.$ Next, we derive an upper bound for $\csm(t).$ 
Using Assumption \ref{ass:W_1W_2ab},
we have that
\begin{align}\label{eq:w_b_max_bound}
 \int_{0}^{t} (\EE[w_{12}](\tau)- \EE[w_{12}^2](\tau)) \dif \tau 
 \ge  
 \frac{\omega_1(t)}{C_w}. 
\end{align}
Using Lemma \ref{lem:ulb_m_v_rho} with $\bar{c}(t) = 1/
C_w
$ and $t_0=0$
in particular averaging over $\rho$ in Eq. \eqref{eq:m_rho_ub_final} 
: 
\begin{align}
\csm(t)\le\gamma\int_{0}^{\omega_1(t)}F_{\mu}\left(\frac{\omega_1(t)-v_{1}}{C_w}\right)dv_{1}= C_w\gamma\int_{0}^{\omega_1(t)/C_w}F_{\mu}\left(y\right)\dif y.      
\end{align}

It then follows that,  
\begin{align}\label{eq:m_ub}
\csm(t)\leq {C_w}{\gamma }\left( \|F_\mu\|_1 - g_F(\omega_1(t);C_w)\right) \le  {C_w}\gamma  \|F_\mu\|_1    
\end{align}
with $g_F(\omega_1(t);C_w) = \left(\frac{\omega_{1}(t)}{C_w}\right)^{-\kappa_{\mu}+1}$ when $\kappa_\mu>1$ and $g_F(\omega_1(t);C_w) =e^{-\gamma \frac{\omega_{1}(t)}{C_w}}$ when $K_1=K_2 =I_d$. Thus we have $\csm(t)=O(1)$ which, by Lemma \ref{lem:w12bounds}, implies $\EE w_{12}=\Omega(1)$ and thus $\omega_1(t)\asymp t$.
Hence, the upper bound above matches the lower bound in Eq. \eqref{eq:m_lb} up to constants. Furthermore, $\{\csm_\rho(t)\}_{\rho=1}^d$ are monotone increasing functions and therefore $\csm(t)$ converges to a limit. This proves the first statement of the Lemma. 
To show the second statement of the Lemma, we move to bound $\csB$. By Lemma \ref{lem:ulb_m_v_rho} (Eq. \eqref{eq:m_rho_ub_final}    
 and      
Eq. \eqref{eq:v_rho_ub}), multiplying by $\lambda_\rho$ and averaging over $\rho$: 
\begin{align}
    \csB(t)&
    \le 
    C_w \gamma \frac{1}{d}\sum_{\rho}{\tilde{\mu}_{\rho} d}\int_{0}^{\omega_1(t)}e^{-\gamma\lambda_{\rho}{\left(\omega_{1}(t)-u\right)/C_w}}\dif u
    +
    {\gamma^{2}}\frac{1}{d}\sum_{\rho}\lambda_{\rho}^2\int_{0}^{\omega_1(t)}e^{-2\gamma\lambda_{\rho}{\left(\omega_{1}(t)-u\right)/C_w}}\dif u +O(d^{-1})
    \\\nonumber 
    &=C_w^2\gamma \int_{0}^{\omega_1(t)/C_w}F_{\mu}\left(u\right)\dif u
    +
   {\gamma^{2}}C_w\int_{0}^{\omega_1(t)/C_w}\cK_2\left(u\right)\dif u +O(d^{-1}).
\end{align}
It then follows that when $\kappa_\mu>1$,  
\begin{align}
    \csB(t)\le {C_w^2}{\gamma }\left( \|F_\mu\|_1 - g_F(\omega_1(t))\right) + \gamma ^2C_w\left(\|\cK_2\|_1- g_\cK(\omega_1(t))\right)\le {C_w^2}{\gamma } \|F_\mu\|_1  + \gamma ^2 C_w\|\cK_2\|_1
\end{align}
with $g_\cK(\omega_1(t)) = \left(\frac{\omega_{1}(t)}{C_w}\right)^{-\kappa_{2}+1}$ for $ \kappa_\mu>1$ and $g_\cK(\omega_1(t)) = e^{-2\gamma \frac{\omega_{1}(t)}{C_w}}$ when $K_1=K_2 =I_d.$
This then shows that both for $\kappa_\mu>1$ and when $K_1=K_2 =I_d,$ $\csB$ is bounded as stated in the second statement of the Lemma. Next, following Eq. \eqref{eq:m_ub} together with Lemma \ref{lem:risk_lub}   we have that 
\begin{align}
    \log(1+e^{-C_w\gamma\|F_\mu\|_1}
    )\le \csL(t).
\end{align}
As $\csm$ and $\csB$ are bounded, there exists $t_0>0$ such that for all $t>t_0,$ the loss is also bounded from above by Lemma \ref{lem:risk_lub}
\begin{align}
   \csL(t)\le  \log(1+e^{-(1-\frac{C_w^2}{2})\gamma\|F_\mu\|_1+\frac{\gamma^2C_w}{2}\|\cK_2\|_1}
    ).
\end{align}
This then completes the proof. 
\end{proof}

\begin{proof}[Proof of Proposition \ref{prop:power_law_bad_regime}] 
    Throughout the proof, we assume without loss of generality that $\csm(t)\ge0$
(otherwise, we flip the direction of $\mu$). 
We start with a lower bound on the overlap $\csm(t).$ Using Lemma \ref{lem:ulb_m_v_rho}, in particular Eq. \eqref{eq:m_rho_lb_final} it follows that 
    \[
\csm_{\rho}(t)\ge\gamma\tilde{\mu}_{\rho} d\int_{0}^{t}W_{1}(s)e^{-\gamma\lambda_{\rho}\left(\omega_{1}(t)-\omega_{1}(s)\right)}\dif s = \tilde{\mu}_{\rho} d\frac{(1-e^{-\gamma\lambda_{\rho}\omega_{1}(t)})}{\lambda_{\rho}}.
\]
 By Assumption \ref{ass:F_K_power_law_assmp} (or Assumption \ref{ass:power_law_mu_lam_con} and Lemma \ref{lem:K_mu_k_map_a_b}), $F_{\mu}(x)=\sum_{\rho}\tilde{\mu}_{\rho}e^{-\gamma\lambda_{\rho}x}\asymp x^{-\kappa_{\mu}}$
 (with $\kappa_{\mu}=\frac{\beta+1}{\alpha}$ in the case of Assumption \ref{ass:power_law_mu_lam_con}). 
Averaging over the eigenmodes in the above equation,
\begin{align}\label{eq:mt_lb_pl}
  \csm(t)&\ge\gamma\int_{0}^{t}W_{1}(s)F_{\mu}(\omega_{1}(t)-\omega_{1}(s))\dif s
  \\\nonumber &=\gamma \int_{0
  }^{\omega_{1}(t)}  F_\mu(\omega_{1}(t)-v) \dif v = \gamma \int_{0
  }^{\omega_{1}(t)}  F_\mu(y) \dif y \ge \gamma c_\mu^-\int_{\e}^{\omega_{1}(t)}y^{-\kappa_{\mu}}\dif y 
\end{align}
for some constant $c_\mu^->0$ and $\e>0.$ 
This then yields the following lower bounds:  
\begin{equation}
 \csm(t)=\Omega(\omega_{1}(t)^{1-\kappa_{\mu}}) \quad \text{for} \quad \kappa_{\mu}< 1, \quad \csm(t)=\Omega(\log(\omega_{1}(t)
 )) \quad 
\text{for} \quad  \kappa_{\mu}=1. \label{eq:m_lb_extrem_pl}  
\end{equation}
Next, using the lower bound in Lemma \ref{lem:w12bounds}
$$ \frac{\dif \omega_1(t)}{\dif t} \ge \frac{1}{1+e^{\csm(t)}}.$$ 
In addition, we note that a crude upper bound on $\csm$ is obtained from Eq. \eqref{eq:m(t)_rho}, $\csm(t) \le \gamma \|\mu\|^2\omega_1(t).$ 
Plugging this upper bound on $\csm(t)$ it then follows that: 
\begin{align}
 \omega_1(t) = \Omega(\log(t)). 
\end{align}
Plugging these in Eq. \eqref{eq:m_lb_extrem_pl}, the first claim of the Lemma is obtained. 

We proceed by deriving a tighter upper bound on $\csm(t)$. By Assumption \ref{ass:W_1W_2ab} and Lemma \ref{lem:ulb_m_v_rho} averaging over $\rho$: 
\begin{align}
   \csm(t)&\le \gamma \frac{1}{d}\sum_\rho {\tilde{\mu}_{\rho} d}\int_{0}^{\omega_1(t)}e^{-\gamma\lambda_{\rho}{\left(\omega_{1}(t)-u\right)/C_w}}\dif u  
   \\\nonumber &= \gamma  \int_{0
  }^{\omega_{1}(t)}  F_\mu((\omega_{1}(t)-v)/C_w) \dif v = \gamma C_w\int_{0
  }^{\omega_{1}(t)/C_w}  F_\mu(y) \dif y.
\end{align}
Assumption \ref{ass:F_K_power_law_assmp} on $F_\mu$ yields the following upper bounds that match the lower bounds on $\csm(t)$ up to multiplicative constants:  
\begin{equation}\label{eq:m_ub_extrem_pl}
 \csm(t)=O(\omega_{1}(t)^{1-\kappa_{\mu}}) \quad \text{for} \quad \kappa_{\mu}< 1, \quad \csm(t)=O(\log(\omega_{1}(t)
 )) \quad 
\text{for} \quad  \kappa_{\mu}=1.   
\end{equation}
We move to find a bound on $\csB$ in order to obtain the asymptotic behavior of $\omega_1(t)$ as a function of $t$. We will then use this relation to show that the risk is decreasing to zero. By Lemma \ref{lem:ulb_m_v_rho} (Eq. \eqref{eq:v_rho_ub} and Eq. \eqref{eq:m_rho_ub_final}) multiplying by $\lambda_\rho$ and summing over $\rho$: 
\begin{align}\label{eq:B_extrem_pl}
    \csB(t)&    
    \le \frac{1}{d}\sum_{\rho} \frac{\lambda_\rho}{\tilde{\mu}_\rho d} {\csm_\rho^2} +     {\gamma^2}\left(\lambda_{\rho}+\tilde{\mu}_{\rho}\right)\int_{0}^{\omega_1(t)} e^{-2\gamma\lambda_{\rho}\bar{c}(t)(\omega_{1}(t) - u)}\dif u.    
\end{align}
 Suppose Assumption \ref{ass:W_1W_2ab} holds and there exists $\gamma, t_0>0$ such that $C_w\le 2-\e$ for some $\e = \e(\gamma,\kappa_\mu)\in (0,1)$. By Lemma \ref{lem:ulb_m_v_rho} (Eq, \ref{eq:m_rho_ub_final}) it holds that for all $\rho\in[1,d]$,  
\begin{equation}
\csm_\rho(t)\le \gamma\tilde{\mu}_{\rho} d\int_{0}^{\omega_1(t)} e^{-\frac{\gamma\lambda_{\rho}}{2-\e}(\omega_{1}(t) - u)}\dif u   = (2-\e)\frac{\tilde{\mu}_{\rho} d }{\lambda_\rho}(1-e^{-\frac{\gamma\lambda_{\rho}}{2-\e}\omega_{1}(t) } )\le(2-\e)\frac{\tilde{\mu}_{\rho} d }{\lambda_\rho}. 
\end{equation}
The first term in Eq. \eqref{eq:B_extrem_pl} is then
$$\frac{1}{d}\sum_{\rho = 1}^{d} \csm_\rho^2 (t) \frac{\lambda_\rho}{\tilde{\mu}_{\rho}d} \le 2(1-\e/2)\frac{1}{d}\sum_{\rho = 1}^{d} \csm_\rho (t) = 2(1-\e/2)\csm(t). $$
We move to the second term in Eq. \eqref{eq:B_extrem_pl} 
\begin{align*}
     &{\gamma^2}\sum_\rho \lambda_{\rho}\left(\lambda_{\rho}+\tilde{\mu}_{\rho}\right)\int_{0}^{\omega_1(t)} e^{-2\gamma\lambda_{\rho}\bar{c}(t)(\omega_{1}(t) - u)}\dif u 
     \\ &=\gamma^2 C_w \frac{1}{d}\sum_{\rho} \lambda_\rho^2\int_{0}^{\omega_1(t)/C_w}e^{-2\gamma\lambda_{\rho}y}\dif u +O(d^{-1}) = \gamma^2 C_w\int_{0}^{\omega_1(t)/C_w}\cK_2(y) \dif y +O(d^{-1}).
\end{align*}
Combining the above, we therefore have that, 
\begin{align}\label{eq:B_up_k_lt_1}
    \csB(t)&    
    \le  2(1-\e/2)\csm(t) + \gamma \frac{1}{d}\sum_{\rho}\frac{ \lambda_\rho^2}{\tilde{\mu}_\rho d} \csm_\rho +O(d^{-1}).
\end{align}
In addition, the difference 
\begin{align*} 
\csm(t)-\frac{\csB(t)}{2}&
\geq  \frac{\e}{2}\csm(t)-\frac{\gamma^2}{2} C_w\int_0^{\omega_1(t)/C_w} \cK_2(z) \dif z+ O(d^{-1}).  
\end{align*} 

By Assumption, $\beta < 2\a$, which implies that $\kappa_\mu<\kappa_2$. This then shows that the second term is sub-leading with respect to the first. Then, by assumption \ref{ass:F_K_power_law_assmp} 
\begin{align}\label{eq:m_B_2_k_lth_1} 
\csm(t)-\frac{\csB(t)}{2}=\Omega(\omega_1(t)^{1-\kappa_\mu}) \quad \text{for} \quad \kappa_\mu<1, \quad  \csm(t)-\frac{\csB(t)}{2}=\Omega(\log(\omega_1(t))) \quad  \text{for}\quad \kappa_\mu=1.
\end{align}
The above bound together with Eq. \eqref{eq:B_up_k_lt_1}, $\b< 2\a$, and the lower bound on $\csV_\rho(t)$ in Eq. \eqref{eq:v_rho_ub} we obtain that $\csB(t)\asymp \csm(t).$
Next, following the upper and lower bounds in Lemma \ref{lem:w12bounds}, this, together with Eq. \eqref{eq:m_B_2_k_lth_1}, yields the following inequalities 
$$\frac{2}{3+e^{c\csm(t)}}\ge \frac{\dif \omega_1(t)}{\dif t}
\ge \frac{1}{1+e^{\csm(t)}}
$$  
for some $c>0$. Next, using the upper bound \eqref{eq:m_ub_extrem_pl} and lower bound \eqref{eq:m_lb_extrem_pl} on $\csm(t).$ We obtain that $$\omega_1(t) \asymp (\log(t))^{\frac{1}{1-\kappa_\mu}} 
\quad \text{for} \quad \kappa_\mu<1, \quad  t^{a_1}<\omega_1(t)<t^{a_2} \quad  \text{for}\quad \kappa_\mu=1 \text{ and } a_1, a_2\in (0,1),$$ 
where we used the fact that $\int_0^{\omega_1(t)}e^{cu^{1-\kappa_\mu}}\dif u  =  \frac{(-c)^{-\frac{1}{1-\kappa_\mu}}}{1-\kappa_\mu}\left(\Gamma(\frac{1}{1-\kappa_\mu}) - \Gamma(\frac{1}{1-\kappa_\mu},-c\omega(t)^{1-\kappa_\mu}) \right) $ and that $\Gamma(s,x)\sim x^{s-1}e^{-x}$ as $x\to\infty$ with $\Gamma(s,x)$ being the upper incomplete Gamma function.
Together with \eqref{eq:m_lb_extrem_pl} and \eqref{eq:m_ub_extrem_pl} we obtain that $\csm(t)\asymp \log(t)$ as claimed. 
Finally, using Lemma \ref{lem:risk_lub}, \eqref{eq:m_B_2_k_lth_1}, \eqref{eq:m_ub_extrem_pl} and \eqref{eq:m_lb_extrem_pl} for large enough $t$ the risk decreases to zero for some constants $c_1(\gamma,\kappa_\mu), c_2(\gamma,\kappa_\mu)>0$,  
\[
t^{-c_1}\le \csL(t)\le t^{-c_2}.\]
\end{proof}

\subsection{Proof for Identity model}\label{sec:identity_proof}
In this section we present a proof of Proposition \ref{prop:identity_model} that does not rely on Assumption \ref{ass:W_1W_2ab}. We suspect that this bound is too loose, which is why the result holds only for a small range of learning rates and $\|\mu\|$. 
The proof of the identity model that uses Assumption \ref{ass:W_1W_2ab} is presented in Proposition \ref{prop:power_law_good_regime} ($\a =0$). 
This provides larger ranges for the learning rates and $\mu$. We accompany these two results with a numerical simulation showing that the constant in Assumption \ref{ass:W_1W_2ab} and the decrease of the loss indeed change significantly with the learning rate (Figure \ref{fig:combined_plots_gamma_ID_a_loss}).
\begin{figure}[t]
\centering
% --- Panel (a) ---
\begin{minipage}[t]{0.4\textwidth}
    \centering
    \includegraphics[width=\linewidth,keepaspectratio]{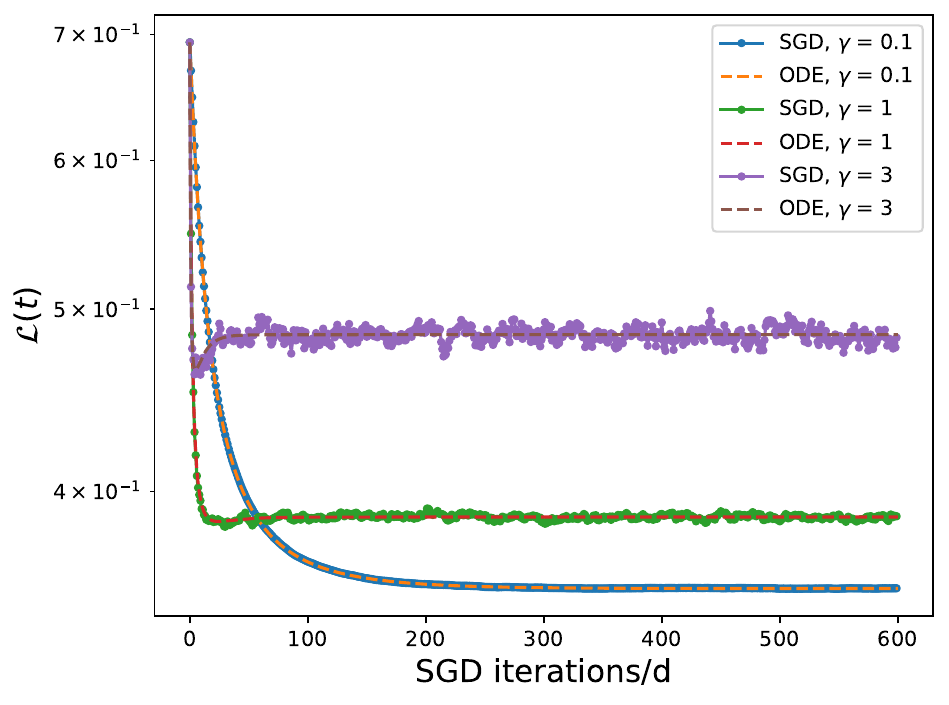}
    \medskip
    \textit{(a) Loss vs. SGD iterations/$d$.}
\end{minipage}%\hspace{0.02\textwidth}%
% --- Panel (b) ---
\begin{minipage}[t]{0.4\textwidth}
    \centering
    \includegraphics[width=\linewidth,keepaspectratio]{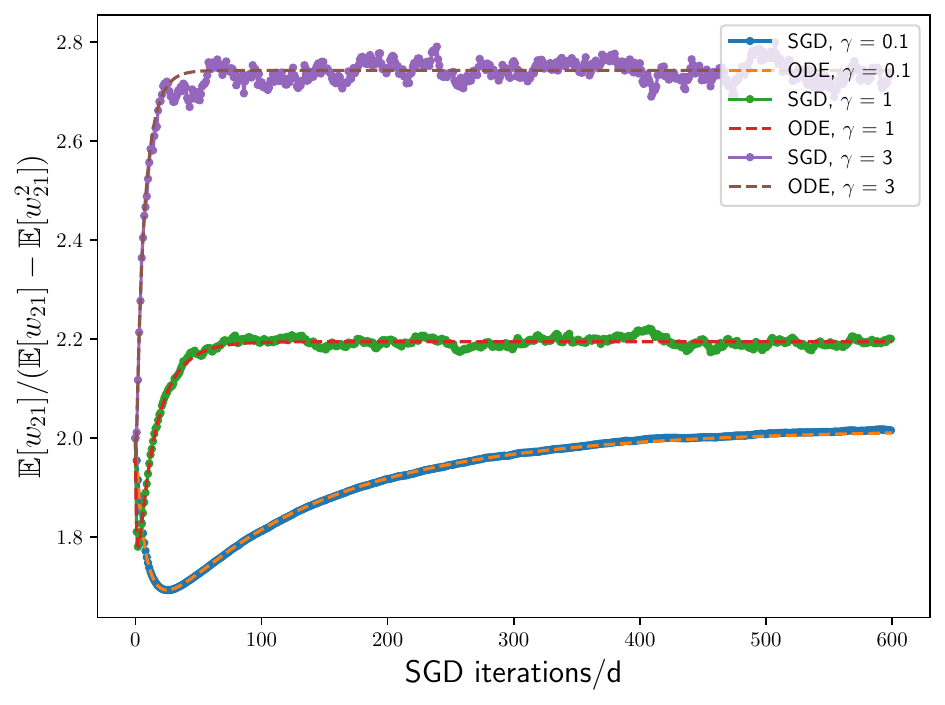}
    \medskip
    \textit{(b) $a(t)$ vs. SGD iterations/$d$.}
\end{minipage}

\caption{\textbf{Learning curve and $a(t)$ for the identity model.} Comparison of simulation results (loss and $a(t) = \frac{\EE w_{12}}{\EE w_{12}-\EE w_{12}^2}$) for different learning rates $\gamma$.}
\label{fig:combined_plots_gamma_ID_a_loss}
\vspace{-0.5cm}
\end{figure}

\begin{proposition}[Identity covariance]\label{prop:identity_model}
Suppose $p_1=\frac{1}{2}$, $K_1 =K_2 = I_d$ 
, and
the learning rate satisfies the relations $\|\mu\|^2\le \frac{(\gamma -1)^2}{8(2-\gamma)}$ and $\gamma\le 1.$  
Then for $t\ge 1$  
\begin{enumerate}
    \item $\csm(t)\to C\|\mu\|^2$ for some $C>1$
    \item  $\csV(t) \le \frac{1-\gamma - 8\|\mu\|^2 + \sqrt{(\gamma-1)^2-8\|\mu\|^2(2-\gamma)}}{2\|\mu\|^2}$   
    \item $\csL(t)$ converges to a non-zero constant.
\end{enumerate}
\end{proposition}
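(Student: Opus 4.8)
The plan is to use the total degeneracy of the identity model to collapse \eqref{eq:V_m_rho_1} to a two–dimensional autonomous ODE in $(\csm,\csV)$ and then run a bounded–invariant–region argument whose thresholds are exactly the quantities in the statement. Since $\lambda^{(i)}_\rho=1$ for all $\rho,i$, we have $\csB_1(t)=\csB_2(t)=\csV(t)$; with $p_1=p_2=\tfrac12$ and class means $\pm\mu$, the $\mu\mapsto-\mu$ symmetry gives $\csm_{\rho,1}=-\csm_{\rho,2}=\csm_\rho$, $\EE w_{21}=\EE w_{12}$ and $\EE[w_{21}w_{22}]=\EE[w_{11}w_{12}]$. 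Writing $W_1=\EE w_{12}$, $W_2=\EE w_{12}^2$ (both functions of $(\csm,\csV)$ only, through $w_{12}=(1+e^{\csm+\sqrt{\csV}z})^{-1}$, $z\sim\cN(0,1)$, so that $W_1-W_2=\EE[w_{12}(1-w_{12})]$), summing \eqref{eq:V_m_rho_1} over $\rho$ and dividing by $d$ yields, up to a suppressed $O(1/d)$ term,
\begin{align*}
\dot\csm &=-\gamma\csm(W_1-W_2)+\gamma\|\mu\|^2 W_1,\\
\dot\csV &=-2\gamma\csV(W_1-W_2)+2\gamma\csm W_1+\gamma^2 W_2,
\end{align*}
with $\csm(0)=\csV(0)=0$ because $X_0=0$. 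As $W_1-W_2\ge0$, the vector field points into the first quadrant, so $\csm(t),\csV(t)\ge0$ for all $t$ and Lemma \ref{lem:w12bounds} applies throughout.

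Next I would record two structural facts. Setting $U(t)\defas\csV(t)-\csm(t)^2/\|\mu\|^2$, differentiating the two equations gives the linear ODE $\dot U=-2\gamma(W_1-W_2)U+\gamma^2 W_2$ with $U(0)=0$; since the forcing is nonnegative and the damping rate nonnegative, $U(t)\ge0$, i.e.\ $\csV(t)\ge\csm(t)^2/\|\mu\|^2$ for all $t$. Also $\dot\csm=\gamma W_1(\|\mu\|^2-\csm)+\gamma\csm W_2>0$ whenever $0\le\csm\le\|\mu\|^2$, and (using the representation $\csm(t)/\|\mu\|^2=\gamma\int_0^t W_1(s)e^{-\gamma\int_s^t (W_1-W_2)\,\dif\tau}\,\dif s$ together with the lower bound $W_1-W_2\ge\delta>0$ from the a priori bound below) $\csm$ is nondecreasing on all of $[0,\infty)$.

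The heart of the proof is the a priori bound, via a continuity/bootstrap argument. Let $V^\star$ be the right–hand side of part 2 — the larger root of $\|\mu\|^2 V^2-(1-\gamma-8\|\mu\|^2)V+(2\gamma+16\|\mu\|^2)=0$, which is real and positive exactly because $\gamma\le1$ and $\|\mu\|^2\le\frac{(\gamma-1)^2}{8(2-\gamma)}$ — and let $m^\star$ be a suitable constant multiple of $\|\mu\|^2$. Let $T^\star$ be the supremum of $T$ with $\csm(t)\le m^\star$ and $\csV(t)\le V^\star$ for all $t\le T$. On $[0,T^\star]$ one feeds $0\le\csm\le m^\star$ and $\csm^2/\|\mu\|^2\le\csV\le V^\star$ into Lemma \ref{lem:w12bounds} and into a companion lower estimate for $W_1-W_2$ to obtain uniform bounds $W_1\le\tfrac12$ and $W_1-W_2\ge\delta>0$; plugging these into the two ODEs one checks $\dot\csm<0$ at $\csm=m^\star$ and $\dot\csV<0$ at $\csV=V^\star$, the choice of $V^\star$ being precisely what makes the resulting quadratic inequality close (the $8\|\mu\|^2,16\|\mu\|^2$ corrections are what appear when the terms $\csV\ge\csm^2/\|\mu\|^2$ and the $O(1/d)$ piece are carried through honestly). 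Hence the orbit never leaves the rectangle, $T^\star=\infty$, proving part 2 and boundedness of $\csm$. Parts 1 and 3 then follow: the forward orbit is bounded, $\csm$ is monotone hence converges, and either the $U$–ODE or Poincaré–Bendixson (periodic orbits excluded by a Bendixson–Dulac sign computation on the vector field) gives $(\csm,\csV)\to(\csm_\infty,\csV_\infty)$; passing to the limit in $\dot\csm=0$ forces $\csm_\infty=\|\mu\|^2\frac{(W_1)_\infty}{(W_1-W_2)_\infty}$, and $\frac{W_1}{W_1-W_2}=\frac1{1-W_2/W_1}>1$ since $W_2>0$, giving $C>1$. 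Finally $\csL(t)$ is the fixed continuous function $\csL=-\tfrac12\csm+\tfrac12\EE\log(1+e^{\csm+\sqrt{\csV}z})+\tfrac12\EE\log(1+e^{-\csm+\sqrt{\csV}z})$ of $(\csm(t),\csV(t))$, so $\csL(t)\to\csL(\csm_\infty,\csV_\infty)$, which is strictly positive because, by Lemma \ref{lem:risk_lub}, $\csL(t)\ge\log(1+e^{-\csm(t)})\ge\log(1+e^{-m^\star})>0$.

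The main obstacle is the lower bound on $W_1-W_2=\EE[w_{12}(1-w_{12})]$ needed to close the bootstrap: Lemma \ref{lem:w12bounds} controls $\EE w_{12}$ but says nothing about this fluctuation, and the cheap bounds (e.g.\ $W_1-W_2\ge\tfrac14\EE e^{-|\csm+\sqrt{\csV}z|}\ge\tfrac14 e^{-\sqrt{\csm^2+\csV}}$) are exponentially lossy. A usable estimate exploits that $W_1-W_2=(\phi_{0,\csV}\ast\ell)(\csm)$, where $\ell$ is the logistic density, so that log–concavity together with $\csm^2/\csV\le\|\mu\|^2$ (from $U\ge0$) gives $W_1-W_2\gtrsim(\csV+\tfrac{\pi^2}{3})^{-1/2}e^{-c\|\mu\|^2}$; it is the looseness of whatever bound one uses here that confines the conclusion to $\gamma\le1$, $\|\mu\|^2\le\frac{(\gamma-1)^2}{8(2-\gamma)}$ — outside this window one must instead route through Assumption \ref{ass:W_1W_2ab}, as in Proposition \ref{prop:power_law_good_regime} with $\alpha=0$.
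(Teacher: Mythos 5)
Your reduction to the two-dimensional autonomous system in $(\csm,\csV)$ and the invariant-region / bootstrap strategy matches the paper's proof in spirit: the paper also works from the collapsed ODEs \eqref{eq:v_identity}--\eqref{eq:m_identity}, bounds $\csm$ by the attracting curve $\csm\le\|\mu\|^2 W_1/(W_1-W_2)$, and then imposes $\dot\csV\le0$ to get a closed quadratic inequality for $\csV$. Your observation that $U\defas\csV-\csm^2/\|\mu\|^2\ge0$ (via the linear ODE $\dot U=-2\gamma(W_1-W_2)U+\gamma^2 W_2$) is correct and is a clean addition — it appears implicitly in the power-law section (Lemma~\ref{lem:ulb_m_v_rho}) but not in the identity-model proof — though the paper does not actually need it to close the argument.

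Where your proposal has a genuine gap is exactly the point you flag: you need a quantitative bound tying $W_1/(W_1-W_2)$ to $\csV$, and you propose to get it by a log-concavity/convolution estimate that you do not carry out. That is not the route the paper takes, and it would not reproduce the specific constant $V^\star$ in part 2. The ingredient the paper uses is Lemma~\ref{lem:w12_2_ub}, a Gaussian Poincar\'e inequality applied to $w_{12}=\sigma(\csm+\sqrt{\csV}z)$:
\[
\EE[w_{12}^2]\le\frac{\csV+2}{\csV+4}\,\EE[w_{12}],
\qquad\text{equivalently}\qquad
\frac{W_1}{W_1-W_2}\le\frac{\csV}{2}+2,
\]
obtained from $\mathrm{Var}(w_{12})\le\EE[(w_{12}')^2]$ with $w_{12}'=-\sqrt{\csV}\,w_{12}(1-w_{12})$, then $w_{12}(1-w_{12})\le\tfrac14$ and $\EE w_{12}\le\tfrac12$. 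Once that lemma is in hand, the chain $\csm\le\|\mu\|^2\,\frac{W_1}{W_1-W_2}\le\|\mu\|^2(\tfrac{\csV}{2}+2)$ plugs into the $\dot\csV\le0$ condition and produces exactly the quadratic whose larger root is the stated $V^\star$; no lower bound on $W_1-W_2$ alone is ever used. In short: your scaffolding is sound, but the step that makes the argument quantitative — the Poincar\'e-type ratio bound on $W_2/W_1$ in terms of $\csV$ — is absent, and the alternative you sketch (a lower bound $W_1-W_2\gtrsim(\csV+\pi^2/3)^{-1/2}e^{-c\|\mu\|^2}$) is neither proved nor of the form needed to land on the constants in the statement.

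One smaller remark: once the bound $\csV(t)\le V^\star$ is established, monotonicity of $\csm$ plus the expression $\csm_\infty=\|\mu\|^2\,(W_1/(W_1-W_2))_\infty>\|\mu\|^2$ gives $C>1$ directly (your argument for this part is essentially the paper's), and Lemma~\ref{lem:risk_lub} handles part 3; those portions of your proposal are fine. The invocation of Poincar\'e–Bendixson is unnecessary and slightly misleading, since the system is not autonomous in the strict 2D sense you write it (the pair $(W_1,W_2)$ depends only on $(\csm,\csV)$, so it actually is autonomous, but the paper's more elementary monotonicity argument — $\csm$ increasing and bounded, $\csV$ eventually decreasing — suffices and avoids topological tools entirely).
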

\begin{remark} In the proof below, we demonstrate that the conditions $\|\mu\|^2\le \frac{(\gamma -1)^2}{8(2-\gamma)}$ and $\gamma\le 1$ are sufficient to guarantee that $\dif\csV/\dif t\leq0$ which, in turn, implies the three results stated in the proposition.  This is overly restrictive in the sense that one could likely allow a larger range of $\gamma,\mu$ in which $\dif\csV/\dif t>0$ but nevertheless $\csm,\csV$ remain bounded.  Our goal with this proposition is not to obtain the most general result, but simply to illustrate that one can find a range of allowable parameters under which bounds can be proven without resorting to Assumption \ref{ass:W_1W_2ab}.
\end{remark}
\begin{proof}[Proof of Proposition \ref{prop:identity_model}]
 The equations for identity covariance are simplified to two equations for the norm of the iterates and the overlap 
\begin{align}\label{eq:v_identity}
\frac{\dif \csV}{\dif t}&=-2\gamma\csV(W_{1}-W_{2})+2\gamma \csm W_{1}+\gamma^{2}W_{2}+\gamma^{2}\frac{\|\mu\|^2}{d}W_{2}\\
\frac{\dif \csm}{\dif t}&=-\gamma \csm (W_{1}-W_{2})+\gamma \|\mu\|^2 W_{1}\label{eq:m_identity}.
\end{align}
Solving for $\csm$ we obtain the following lower bound: 
\begin{align}\label{eq:m_sol_identity}
\csm(t)  = \gamma \|\mu\|^2 \int_{0}^{t}\dot{\omega_{1}}(s)e^{-\gamma \int_s^t (\EE[w_{12}]- \EE[w_{12}^2]) \dif \tau}\dif s\ge \|\mu\|^2 (1- e^{-\gamma \omega_1 (t)}). 
\end{align}
As we initialize $0\leq\csm(0)\leq \|\mu\|^2,$ it follows that for all $t$ and all $\gamma,$
\begin{align} \label{eq:stab_m}
    \csm(t) \le \frac{\|\mu\|^2 W_1(t)}{W_1(t)-W_2(t)}.
\end{align}
This is because the curve $y(t)=\frac{\|\mu\|^2 W_1(t)}{W_1(t)-W_2(t)}$ is an attracting stable curve for $\csm(t)$.
We note that $W_1(t)-W_2(t)> 0,$ and $\frac{ W_1(t)}{W_1(t)-W_2(t)}> 1$ by construction for all $t> 0.$ By Lemma \ref{lem:w12_2_ub} we also have that for all $t> 0$ \begin{align}\label{eq:GP_W_12}
 \frac{ W_1(t)}{W_1(t)-W_2(t)} \le \frac{1}{2}\csV(t)+2.  
\end{align} 
Together with Eq. \eqref{eq:stab_m}, we conclude that \begin{align}\label{eq:m_V_linear_relation}
    \frac{\csm(t)}{\|\mu\|^2}\le  \frac{1}{2}\csV(t)+2  .
\end{align}
Next, we impose a stability condition for $\csV(t)$ in \eqref{eq:v_identity}. In particular, we want to find a sufficient condition such that $\dif\csV(t)/\dif t\leq0$ for all $t\ge t_0$ for some $t_0\ge 0$. This will show that both $\csm(t)$ and $\csV(t)$ are bounded by a constant independent of $t$.  
We observe that
\begin{align}  \label{eq:stab_V}  
&\left(2\gamma\csm W_1+\gamma^2 W_2+\gamma^2\frac{\|\mu\|^2}{d}W_2\right)\Big/\Big(2\gamma(W_1-W_2)\Big)\\\nonumber
&=\csm(t)\frac{W_1(t)}{W_1(t)-W_2(t)}+\frac{1}{2}\gamma(1+ \frac{\|\mu\|^2}{d})\frac{W_2(t)}{W_1(t)-W_2(t)}
\\\nonumber &\le 
\left(\frac{W_{1}}{W_{1}(t)-W_{2}(t)}\right)^2\|\mu\|^2+ \frac{1}{2}\gamma(1+ \frac{\|\mu\|^2}{d})\left(\frac{W_1(t)}{W_1(t)-W_2(t)} -1\right)
\\\nonumber &\le 
4\left(\frac{\csV(t)}{4}+1\right)^2{\|\mu\|^2} + \gamma(1+ \frac{\|\mu\|^2}{d}) (\frac{\csV(t)}{4}+\frac{1}{2})
\end{align}
where the first transition is using the condition on $\csm(t)$ in Eq. \eqref{eq:stab_m}, the second transition is using Eq. \eqref{eq:GP_W_12}.
Next, we impose a stability condition on $\csV$ to be non-exploding for any $t\ge t_0$, following Eq. \eqref{eq:m_identity}
\begin{align}
\csV(t)\ge    4\left(\frac{\csV(t)}{4}+1\right)^2\|\mu\|^2 + \gamma(1+ \frac{\|\mu\|^2}{d}) (\frac{\csV(t)}{4}+\frac{1}{2}).
\end{align}
Rearranging the terms, and denoting by $z = \frac{\csV(t)}{4}+1,$ 
We conclude that for large enough $d$
\begin{align}
      4z^2\|\mu\|^2+(\gamma-1)z +1 -\frac{\gamma}{2}  \le  0 .
\end{align}
Solving for $z\ge 0$ we obtain that, 
\begin{align}
    \csV(t) \le \frac{1-\gamma - 8\|\mu\|^2 + \sqrt{(\gamma-1)^2-8\|\mu\|^2(2-\gamma)}}{2\|\mu\|^2}
\end{align}
provided that $\|\mu\|^2\le \frac{(\gamma -1)^2}{8(2-\gamma)}$ and that $\gamma\le 1$ is such that the above term is positive. This condition implies also that $\|\mu\|^2 \le 1/4.$ 
Plugging it back in Eq. \eqref{eq:m_sol_identity}, we obtain that $\csm$ converges to a fraction of $\|\mu\|^2$. 
Similarly, solving Eq. \eqref{eq:v_identity}, we obtain that $\csV$ also converges  
to an $O(1)$ constant which depends on $\|\mu\|^2$. Here, the convergence of $\csm(t)$ is due to the fact that we have an upper bound on $\csm$ and it is a monotonically increasing function.  Likewise, $\csV$ is a positive function and we are working in a range of parameters where, beyond some $t_0$, it is monotonically decreasing.
This completes the proof of the first two statements of the Proposition. The third statement follows from Lemma \ref{lem:risk_lub}. 
\end{proof}

\subsection{Technical deterministic lemmas}
\begin{lemma}[Risk bounds] \label{lem:risk_lub}
In the binary logistic regression setting for $p_1 = 1/2$, if $\csB_1(t)=\csB_2(t)=\csB(t)$ (e.g. in the case of $K_1=K_2$), then the risk is bounded from below and above as follows: \begin{equation}
    \log(1+e^{-\csm(t)})\leq\csL(t)\leq\log(1+e^{-\csm(t)+\csB(t)/2}).
\end{equation} 
\end{lemma}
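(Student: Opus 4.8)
The plan is to first collapse the two–term expression for $\csL(t)$ into a single Gaussian expectation, and then read off the two bounds by applying Jensen's inequality in two different ways.

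\emph{Step 1 (reduction to one expectation).} With $p_1=p_2=\tfrac12$ and $\csB_1(t)=\csB_2(t)=\csB(t)$, the formula for the limiting risk in the binary logistic setting reads
\[
\csL(t) = -\tfrac12\csm(t) + \tfrac12\,\EE_z\big[\log(1+e^{\csm(t)+\sqrt{\csB(t)}\,z})\big] + \tfrac12\,\EE_z\big[\log(1+e^{-\csm(t)+\sqrt{\csB(t)}\,z})\big],
\]
where $z\sim\mathcal N(0,1)$. I would use the elementary identity $\log(1+e^{a})=a+\log(1+e^{-a})$ with $a=\csm(t)+\sqrt{\csB(t)}\,z$, followed by the symmetry $z\eqd-z$, to rewrite the first expectation as $\csm(t)+\EE_z[\log(1+e^{-\csm(t)+\sqrt{\csB(t)}\,z})]$. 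Substituting this back and cancelling the $\pm\tfrac12\csm(t)$ terms yields the clean form
\[
\csL(t) = \EE_z\big[\log(1+e^{-\csm(t)+\sqrt{\csB(t)}\,z})\big].
\]

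\emph{Step 2 (lower bound).} The softplus function $u\mapsto\log(1+e^{u})$ is convex, so Jensen's inequality applied to the inner expression gives
\[
\csL(t) = \EE_z\big[\log(1+e^{-\csm(t)+\sqrt{\csB(t)}\,z})\big] \ \geq\ \log\big(1+e^{\,\EE_z[-\csm(t)+\sqrt{\csB(t)}\,z]}\big) = \log(1+e^{-\csm(t)}),
\]
since $\EE_z[z]=0$. This is the claimed lower bound.

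\emph{Step 3 (upper bound).} Write the integrand as $\log(1+e^{-\csm(t)}e^{\sqrt{\csB(t)}\,z})$. The map $x\mapsto\log(1+x)$ is concave on $[0,\infty)$, so Jensen's inequality now points the other way:
\[
\csL(t) \ \leq\ \log\big(1+e^{-\csm(t)}\,\EE_z\big[e^{\sqrt{\csB(t)}\,z}\big]\big) = \log\big(1+e^{-\csm(t)+\csB(t)/2}\big),
\]
using the Gaussian moment generating function $\EE_z[e^{\sqrt{\csB(t)}\,z}]=e^{\csB(t)/2}$. Together with Step 2 this proves the lemma.

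I do not expect a genuine obstacle here: the argument is short once the reduction in Step 1 is in place. The only points demanding a little care are the algebraic manipulation using the softplus identity and the symmetry of $z$, and making sure the two Jensen applications are used with the correct convexity (softplus) versus concavity ($\log(1+\cdot)$) so that the inequalities face the right directions.
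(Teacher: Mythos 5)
Your proof is correct and follows essentially the same route as the paper: reduce $\csL(t)$ to the single expectation $\EE_z[\log(1+e^{-\csm(t)+\sqrt{\csB(t)}z})]$, then apply Jensen once with the convexity of the softplus $u\mapsto\log(1+e^u)$ for the lower bound and once with the concavity of $x\mapsto\log(1+x)$ for the upper bound. The only difference is that you spell out the algebra of the reduction (softplus identity plus $z\eqd -z$), which the paper states without detail.
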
 
\begin{proof}
  The cross-entropy loss for a Gaussian mixture is as follows: 
\begin{align}
\csL(t)&= \EE[f_i(r_i)]=-p_1 \csm(t)+p_1\EE_z[\log(1+e^{\csm(t)+\sqrt{\csB_1(t)}z})] \\\nonumber &+p_2\EE_z[\log(1+e^{-\csm(t)+\sqrt{\csB_2(t)}z})]   .
\end{align}
In the case where $\csB_1=\csB_2=\csB$,  the loss function simplifies to 
\begin{align}
\csL(t)&= \EE_z[\log(1+e^{-\csm(t)+\sqrt{\csB(t)}z})]   .
\end{align}
Since the function $\log x$ is concave, while the function $\log(1+ce^{kx})$ with $c>0$ is convex.  Applying Jensen's inequality with each of these two functions, we obtain upper and lower bounds, respectively, as stated.    
\end{proof}

\begin{lemma}\label{lem:w12_2_ub} Define $w_{12}=1/(e^{bz+m}+1)$ with $z\sim \cN(0,1)$, and $m,b\in \R$, then
\begin{equation}
   \label{eq:w12_1_2_bound}
\EE [w_{12}^2]\le 
\frac{b^2+2}{b^2+4}
\EE [w_{12}]
\end{equation}
\end{lemma}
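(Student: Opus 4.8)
The plan is to prove the inequality $\EE[w_{12}^2]\le \frac{b^2+2}{b^2+4}\EE[w_{12}]$ by a symmetrization trick that pairs $z$ with $-z$, reducing both sides to integrals of explicit functions over $z\ge 0$, and then comparing the integrands pointwise. Write $w_{12}(z)=1/(1+e^{bz+m})$ and note $1-w_{12}(z)=1/(1+e^{-bz-m})=w_{12}(-z)$ with the sign of $b$ flipped; more usefully, for the pairing it is cleanest to fold the Gaussian integral as
\[
\EE[w_{12}^p]=\frac{1}{\sqrt{2\pi}}\int_0^\infty e^{-z^2/2}\big(w_{12}(z)^p+w_{12}(-z)^p\big)\,\dif z.
\]
So it suffices to show the pointwise inequality $w_{12}(z)^2+w_{12}(-z)^2\le \frac{b^2+2}{b^2+4}\big(w_{12}(z)+w_{12}(-z)\big)$ — except this cannot hold pointwise for all $z$ (the ratio $w^2/w=w$ can be as large as close to $1$ when $bz+m\to-\infty$), so the $b$-dependence must genuinely come from the Gaussian weight. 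Hence the pairing argument alone is not enough and the key is to exploit that the "bad" region where $w_{12}(z)$ is close to $1$ carries exponentially small Gaussian mass controlled by $b$.

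The approach I would actually carry out: reduce to $m=0$ first. Since $\EE[w_{12}^2]/\EE[w_{12}]$ — one should check whether this ratio is monotone or maximized at $m=0$; heuristically $m=0$ is the worst case because it makes $w_{12}$ most spread out, and for $m\ne 0$ the mass of $w_{12}$ concentrates near $0$ or $1$ in a way that only helps. With $m=0$ we have $w_{12}(z)+w_{12}(-z)=1$ identically, so $\EE[w_{12}]=\tfrac12$ and the claim becomes $\EE[w_{12}^2]\le \tfrac12\cdot\frac{b^2+2}{b^2+4}=\tfrac12-\frac{1}{b^2+4}$, i.e.
\[
\EE\Big[\big(w_{12}-\tfrac12\big)^2\Big]=\EE[w_{12}^2]-\tfrac14\le \tfrac14-\frac{1}{b^2+4}.
\]
Now $w_{12}(z)-\tfrac12=-\tfrac12\tanh(bz/2)$, so the statement reduces to the clean inequality
\[
\EE_z\big[\tanh^2(bz/2)\big]\le 1-\frac{4}{b^2+4},\qquad z\sim\cN(0,1).
\]
This I would prove by the bound $\tanh^2(x)\le \frac{x^2}{1+x^2}$ — wait, one must verify this elementary inequality (it holds: $\tanh^2 x\cdot(1+x^2)\le x^2$ for all real $x$, checkable by series at $0$ and by $\tanh^2 x<1\le x^2/(x^2)\cdots$; more carefully $\tanh x\le x$ and $\tanh^2 x\le x\tanh x\le x^2/(1+\text{something})$, I would just confirm $g(x)=x^2-(1+x^2)\tanh^2 x\ge0$ by noting $g(0)=0$, $g$ even, and $g'(x)\ge0$ for $x\ge0$). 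Granting it, $\EE[\tanh^2(bz/2)]\le \EE\big[\frac{(bz/2)^2}{1+(bz/2)^2}\big]$, and then Jensen applied to the concave function $u\mapsto \frac{u}{1+u}$ on $u=(bz/2)^2$ with $\EE[u]=b^2/4$ gives $\EE[\tanh^2(bz/2)]\le \frac{b^2/4}{1+b^2/4}=\frac{b^2}{b^2+4}=1-\frac{4}{b^2+4}$, which is exactly what is needed.

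The main obstacle is the reduction from general $m$ to $m=0$: one needs to rule out that a nonzero shift $m$ could push the ratio $\EE[w_{12}^2]/\EE[w_{12}]$ above $\frac{b^2+2}{b^2+4}$. I expect this to follow by a direct estimate rather than a monotonicity argument — specifically, for $m\ge0$ one has $w_{12}(z)\le 1$ and, splitting at $z=0$ as above, $\EE[w_{12}^2]=\frac{1}{\sqrt{2\pi}}\int_0^\infty e^{-z^2/2}(w_{12}(z)^2+w_{12}(-z)^2)\dif z$; using $w_{12}(z)+w_{12}(-z)\le 1$ and $w_{12}(z)w_{12}(-z)\ge 0$ plus the pointwise bound $w_{12}(z)^2+w_{12}(-z)^2\le (w_{12}(z)+w_{12}(-z))^2-\tfrac12(\text{telescoping with }\tanh)$ should let one dominate the $m\ne0$ case by the $m=0$ computation, since increasing $|m|$ only decreases the "variance-like" quantity $\EE[w_{12}(1-w_{12})]$ relative to $\EE[w_{12}]$. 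If that domination is not immediate, the fallback is to carry the parameter $m$ through: write $w_{12}(z)-w_{12}^2(z)=\tfrac14\operatorname{sech}^2\!\big(\tfrac{bz+m}{2}\big)$ and $w_{12}(z)=\tfrac12\big(1-\tanh\tfrac{bz+m}{2}\big)$, so the inequality $\EE[w_{12}^2]\le\frac{b^2+2}{b^2+4}\EE[w_{12}]$ is equivalent to $\EE[w_{12}]-\frac{b^2+4}{2}\EE[w_{12}-w_{12}^2]\ge \frac{b^2+2}{b^2+4}\EE[w_{12}]-\frac{b^2+2}{2}\EE[w_{12}-w_{12}^2]$, which rearranges to a single bound $\EE[\operatorname{sech}^2(\tfrac{bz+m}{2})]\ge \frac{4}{b^2+4}\EE[1-\tanh(\tfrac{bz+m}{2})]$ — and since $\operatorname{sech}^2=1-\tanh^2$, this is again an application of $\tanh^2(x)\le \frac{x^2}{1+x^2}$ followed by Jensen, now with $\EE[(\tfrac{bz+m}{2})^2]=\tfrac{b^2+m^2}{4}$; one then checks the resulting inequality degrades gracefully in $m$ and still holds, completing the proof.
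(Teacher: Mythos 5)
The paper's proof is genuinely different from yours: it applies the Gaussian Poincar\'e inequality, $\operatorname{Var}(w_{12})\le\EE[(w_{12}')^2]$, computes $w_{12}'=-b\,w_{12}(1-w_{12})$, bounds $(w_{12}(1-w_{12}))^2\le\tfrac14 w_{12}(1-w_{12})$, and then uses $(\EE w_{12})^2\le\tfrac12\EE w_{12}$ to rearrange into the claimed bound. The $m$-dependence never appears explicitly because the derivative $w_{12}'$ is unchanged by a constant shift, so the Poincar\'e route handles general $m$ in one stroke. Your $m=0$ argument is clean and correct: writing $w_{12}-\tfrac12=-\tfrac12\tanh(bz/2)$, the claim becomes $\EE[\tanh^2(bz/2)]\le b^2/(b^2+4)$, and the chain $\tanh^2 x\le x^2/(1+x^2)$ plus Jensen on the concave map $u\mapsto u/(1+u)$ gives exactly this with equality in the Jensen step when the argument degenerates. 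That is a nice elementary alternative to Poincar\'e at $m=0$.

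The gap is in the extension to $m\ne0$. Your first idea (show $\EE[w_{12}^2]/\EE[w_{12}]$ is maximized at $m=0$) is stated but not proved, and the ``fallback'' reduces to showing $\EE[\operatorname{sech}^2 u]\ge\tfrac{4}{b^2+4}\EE[1-\tanh u]$ with $u=(bz+m)/2$. But the bound you propose, namely $\EE[\operatorname{sech}^2 u]\ge\tfrac{4}{b^2+m^2+4}$ from $\tanh^2\le x^2/(1+x^2)$ and Jensen, \emph{degrades} in $m$, so to close you would need $\EE[1-\tanh u]=2\EE w_{12}\le\tfrac{b^2+4}{b^2+m^2+4}$, a nontrivial inequality you do not establish; the available upper bound on $\EE w_{12}$ (e.g.\ Lemma \ref{lem:w12bounds}) is not tight enough to yield it in general (try $b=m=1$). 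So the ``degrades gracefully'' claim is a genuine hole, not a cosmetic one. Separately, note that the lemma is false for $m<0$ (take $b=0$, $m<0$: then $w_{12}>\tfrac12$ and the claim reads $w_{12}\le\tfrac12$); the intended hypothesis is $m\ge0$, equivalently $\EE w_{12}\le\tfrac12$, which the paper's proof uses silently and which you should also state.
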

\begin{proof}

Using the Gaussian Poincaré inequality, we have that 
\begin{align}\label{eq:poin_w12}
\text{Var}(w_{12}) = \EE [w_{12}^2]-(\EE [w_{12}])^2\le\EE [(w_{12}')^2],
\end{align}
where $w_{12}'= -b \frac{e^{bz+m}}{(e^{bz+m}+1)^2}= -bw_{12}(1-w_{12})$, taking the square and expectation: 
$$\EE[(w_{12}')^2]= b^2 \EE [(w_{12}(1-w_{12}))^2] \le \frac{b^2}{4} \EE [(w_{12}(1-w_{12}))] = \frac{b^2}{4} (\EE [w_{12}]-\EE [w_{12}^2])$$
where the second transition is since $w_{12}\le 1$ and therefore $w_{12}(1-w_{12})\le 1/4.$ 
Plugging in Eq. \eqref{eq:poin_w12} and using the bound $\EE[w_{12}]\le \frac{1}{2}$, 
\begin{align}
\EE [w_{12}^2]\le \frac{b^2}{4} (\EE [w_{12}]-\EE [w_{12}^2])+\frac{1}{2}\EE [w_{12}],
\end{align}
Rearranging terms, we obtain the statement of the lemma.
\end{proof}
The next Lemma studies the regime at which $w$ concentrates around its expectation. In this regime:  
\begin{lemma}[Power-law mapping]\label{lem:K_mu_k_map_a_b}
 Suppose $\a>0,\, \b\ge 0$, and \begin{equation}\label{eq:power_law_mu_lam_con}
    \lambda_\rho=\left(\frac{\rho}{d}\right)^\a;\qquad
    \tilde{\mu}_\rho=\frac{1}{d}\left(\frac{\rho}{d}\right)^\b.
\end{equation}   
Then Assumption \ref{ass:F_K_power_law_assmp} is satisfied with $\kappa_\mu = \frac{\beta+1}{\alpha},$ and $\kappa = \frac{1}{\alpha} +1.$ When $\alpha = 0$ (identity matrix),
$   
\cK(x) = e^{-2\gamma x},   
$ and $F_{\mu}(x) = \|\mu\|^2 e^{- \gamma x}.$
\end{lemma}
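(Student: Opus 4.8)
\textbf{Proof plan for Lemma \ref{lem:K_mu_k_map_a_b}.}
The statement is essentially a computation: I need to evaluate the Riemann-sum approximations of $F_\mu(x)=\sum_\rho\tilde\mu_\rho e^{-\gamma\lambda_\rho x}$ and $\mathcal{K}_2(x)=\frac1d\sum_\rho\lambda_\rho^2 e^{-2\gamma\lambda_\rho x}$ under the power-law parametrization $\lambda_\rho=(\rho/d)^\a$, $\tilde\mu_\rho=\frac1d(\rho/d)^\b$, and show they decay like $x^{-\kappa_\mu}$ and $x^{-\kappa_2}$ respectively with $\kappa_\mu=\frac{\beta+1}{\alpha}$ and $\kappa_2=\frac1\alpha+2$. (I note the excerpt's statement has a typo — it writes $\kappa=\frac1\alpha+1$ at the end of the lemma but $\kappa_2=\frac1\alpha+2$ in Assumption \ref{ass:F_K_power_law_assmp} and in Lemma \ref{lem:K_mu_k_map_a_b}'s first sentence; I will prove the latter, which is the one consistent with $\|\mathcal{K}_2\|_1=\frac{1}{2d\gamma}\Tr(K)$.) First I would pass from the sum to an integral: writing $s=\rho/d\in(0,1]$ with spacing $1/d$, we have
\[
F_\mu(x)=\frac1d\sum_{\rho=1}^d \left(\tfrac\rho d\right)^\beta e^{-\gamma x(\rho/d)^\alpha}
=\int_0^1 s^\beta e^{-\gamma x s^\alpha}\,\dif s + O(d^{-1}),
\]
the error being controlled because the integrand has bounded variation uniformly in $x\ge 1$ (its derivative is integrable with a bound depending only on $\alpha,\beta,\gamma$). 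The same reasoning gives $\mathcal{K}_2(x)=\int_0^1 s^{2\alpha}e^{-2\gamma x s^\alpha}\,\dif s+O(d^{-1})$.

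Next I would evaluate the two integrals by the substitution $u=\gamma x s^\alpha$, i.e. $s=(u/(\gamma x))^{1/\alpha}$ and $\dif s=\frac1\alpha (\gamma x)^{-1/\alpha} u^{1/\alpha-1}\dif u$. For $F_\mu$ this yields
\[
\int_0^1 s^\beta e^{-\gamma x s^\alpha}\,\dif s
=\frac{1}{\alpha}(\gamma x)^{-\frac{\beta+1}{\alpha}}\int_0^{\gamma x} u^{\frac{\beta+1}{\alpha}-1}e^{-u}\,\dif u
=\frac{1}{\alpha}(\gamma x)^{-\frac{\beta+1}{\alpha}}\,\gamma_{\mathrm{inc}}\!\left(\tfrac{\beta+1}{\alpha},\gamma x\right),
\]
where $\gamma_{\mathrm{inc}}$ is the lower incomplete Gamma function. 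Since for $x\ge 1$ the quantity $\gamma_{\mathrm{inc}}(\frac{\beta+1}{\alpha},\gamma x)$ is bounded above by $\Gamma(\frac{\beta+1}{\alpha})$ and bounded below by $\gamma_{\mathrm{inc}}(\frac{\beta+1}{\alpha},\gamma)>0$, this gives $F_\mu(x)\asymp x^{-\frac{\beta+1}{\alpha}}$, i.e. $\kappa_\mu=\frac{\beta+1}{\alpha}$, with the $O(d^{-1})$ error negligible against the $\asymp$ for $x$ in any bounded range and, for large $x$, one observes $x^{-\kappa_\mu}$ eventually dominates $d^{-1}$ only in the regime of interest — more cleanly, the $\asymp$ claim of Assumption \ref{ass:F_K_power_law_assmp} is to be read for the limiting kernel, so I would simply record the integral formula and the two-sided Gamma bounds. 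An identical computation for $\mathcal{K}_2$ with exponent $2\alpha$ in the numerator and $2\gamma x$ in the exponential gives
\[
\int_0^1 s^{2\alpha}e^{-2\gamma x s^\alpha}\,\dif s
=\frac{1}{\alpha}(2\gamma x)^{-\frac{2\alpha+1}{\alpha}}\,\gamma_{\mathrm{inc}}\!\left(\tfrac{2\alpha+1}{\alpha},2\gamma x\right)
=\frac{1}{\alpha}(2\gamma x)^{-(2+\frac1\alpha)}\,\gamma_{\mathrm{inc}}\!\left(2+\tfrac1\alpha,2\gamma x\right),
\]
so $\mathcal{K}_2(x)\asymp x^{-(2+1/\alpha)}$, i.e. $\kappa_2=2+\frac1\alpha$, and $\kappa_2>1$ for all $\alpha>0$. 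The condition $\kappa_\mu\ge 0$ holds automatically since $\alpha,\beta\ge 0$.

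Finally, the $\alpha=0$ (identity) case is immediate and should be treated separately at the start: then $\lambda_\rho\equiv 1$ so $F_\mu(x)=\sum_\rho\tilde\mu_\rho e^{-\gamma x}=\|\mu\|^2 e^{-\gamma x}$ and $\mathcal{K}_2(x)=\frac1d\sum_\rho e^{-2\gamma x}=e^{-2\gamma x}$, matching the stated formulas. I do not expect a genuine obstacle here; the only mild care needed is the Riemann-sum error bound — I would state it as: the map $s\mapsto s^\beta e^{-\gamma x s^\alpha}$ on $[0,1]$ has total variation bounded by a constant $C(\alpha,\beta,\gamma)$ uniformly over $x\ge 1$ (it increases then decreases, with maximum value $O(1)$), hence the difference between $\frac1d\sum$ and $\int_0^1$ is $O(d^{-1})$ with an absolute constant, and similarly for $\mathcal{K}_2$. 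With that in hand the lemma follows from the incomplete-Gamma identities above and the elementary monotonicity bounds $\gamma_{\mathrm{inc}}(\nu,\gamma)\le\gamma_{\mathrm{inc}}(\nu,\gamma x)\le\Gamma(\nu)$ for $x\ge 1$.
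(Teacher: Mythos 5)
Your proof follows the same method as the paper's: pass from the Riemann sum to the integral $\int_0^1 s^\beta e^{-\gamma x s^\alpha}\,\dif s$ (resp. $\int_0^1 s^{2\alpha}e^{-2\gamma x s^\alpha}\,\dif s$), substitute $u=\gamma x s^\alpha$ to obtain an incomplete Gamma function, and use the two-sided bounds $\boldsymbol\gamma(\nu,\gamma)\le\boldsymbol\gamma(\nu,\gamma x)\le\Gamma(\nu)$ for $x\ge1$. You have in fact been more careful than the paper's own proof on two points worth flagging. First, you correctly noticed that the lemma statement as written gives $\kappa=\tfrac1\alpha+1$, which contradicts both Assumption~\ref{ass:F_K_power_law_assmp} (which refers to $\cK_2$) and the sentence introducing the lemma (which states $\kappa_2=\tfrac1\alpha+2$); the paper's own displayed computation evaluates the wrong kernel $\frac1d\sum_\rho\lambda_\rho e^{-2\gamma\lambda_\rho x}$ with a single power of $\lambda_\rho$, whereas the kernel $\cK_2$ that is actually used in the proofs of Propositions~\ref{prop:power_law_good_regime} and \ref{prop:power_law_bad_regime} (and that satisfies the stated identity $\|\cK_2\|_1=\frac{1}{2d\gamma}\Tr(K)$) has $\lambda_\rho^2$, giving $\kappa_2=2+\tfrac1\alpha$ as you compute. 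Second, your substitution correctly retains the prefactor $\tfrac1\alpha$ and the base $2\gamma x$ in the $\cK_2$ case, both of which the paper drops; these affect only constants so do not change the $\asymp$ conclusion, but your bookkeeping is the cleaner one. Your remark that the $\asymp$ is to be read for the limiting kernel rather than jointly uniformly in $x$ and $d$ is the honest reading of what both the paper and Assumption~\ref{ass:F_K_power_law_assmp} intend.
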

\begin{proof}  Following the definition of $F_\mu(x) 
$ 
\begin{align}   F_\mu(x)&=\sum_{\rho=1}^d\tilde{\mu}_\rho\exp(-\gamma x\lambda_\rho) =
    \int_0^1y^\b\exp(-\gamma xy^\a)\dif y +O(d^{-1})
    \\ \nonumber &=(\gamma x)^{-\frac{\b+1}{\a}}\boldsymbol{\gamma}\left(\frac{\b+1}{\a},\gamma x\right) +O(d^{-1}).
\end{align}
where $\boldsymbol{\gamma}$ denotes the incomplete gamma function.  Since $\boldsymbol{\gamma}(s,t)\to\Gamma(s)$ as $t\to\infty$, we have $F_\mu(x)
\asymp (\gamma x)^{-\frac{\b+1}{\a}}$ in the power-law case.  In particular, this means that there is a transition in the integrability of $F_\mu$ when $\a=\b+1$.  Similarly for \begin{align}    
\cK(x) &= \frac{1}{d}\sum_{\rho}\lambda_{\rho}e^{-2\gamma\lambda_{\rho}x} = 
\int_0^1y^\alpha\exp(-2\gamma xy^\a)\dif y +O(d^{-1}) \\ \nonumber & = (\gamma x)^{-\frac{\alpha+1}{\a}}\boldsymbol{\gamma}\left(\frac{\alpha+1}{\a},\gamma x\right) +O(d^{-1}).
\end{align} 
The claim for $\alpha = 0$ is by definition. 
\end{proof}

\begin{lemma}\label{lem:ulb_m_v_rho}
Suppose $K_1=K_2 = K$, $X_0 = 0$. Then, for any $\rho\in [d],$   \begin{align}     \csm_\rho(t) &\ge \frac{d\tilde{\mu}_{\rho}}{\lambda_\rho}(1-e^{-\gamma \lambda_\rho\omega_1(t)})  \label{eq:m_rho_lb_final}  \\       
\label{eq:v_rho_ub}
\frac{\csm_{\rho}^2(t)}{\tilde{\mu}_\rho d}\le \csV_{\rho}(t)
&\le \frac{1}{\tilde{\mu}_\rho d} \csm_{\rho}^2(t)
+{\gamma^2}\left(\lambda_{\rho}+\tilde{\mu}_{\rho}\right)\int_{0}^{\omega_1(t)} e^{-2\gamma\lambda_{\rho}\bar{c}(t)(\omega_{1}(t) - u)}\dif u
\\\nonumber &\le \frac{1}{\tilde{\mu}_\rho d} \csm_{\rho}^2(t)
+\frac{\gamma}{\tilde{\mu}_{\rho} d}\left(\lambda_{\rho}+\tilde{\mu}_{\rho}\right)\csm_{\rho}(t).  
\end{align} 
Denote by $c(t)\in (0,1)$ some constant such that 
for all $t\ge 0$, it holds that $\E w_{12}^2(t) \le c(t) \E w_{12}(t)$. 
Then,  
\begin{align}
\csm_\rho(t)&\le \gamma\tilde{\mu}_{\rho} d\int_{0}^{\omega_1(t)} e^{-\gamma\lambda_{\rho}\bar{c}(t)(\omega_{1}(t) - u)}\dif u  \label{eq:m_rho_ub_final} 
\end{align} 
with $\bar{c}(t) = 1 - \sup_{s\in [t_0,t]} c(s)$. 
If in addition Assumption \ref{ass:W_1W_2ab} holds then,  $\bar{c}(t) =1/C_w  $ independent of $t$. 
\end{lemma}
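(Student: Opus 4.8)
### Proof plan for Lemma \ref{lem:ulb_m_v_rho}

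The starting point is to solve the linear ODEs \eqref{eq:v_rho_one_K}--\eqref{eq:m_rho_one_K} explicitly using the integrating-factor method, reparametrizing time via $\omega_1(t) = \int_0^t W_1(s)\,\dif s$ and $\omega_2(t) = \int_0^t W_2(s)\,\dif s$. For $\csm_\rho$, the ODE $\dot{\csm}_\rho = -\gamma\lambda_\rho \csm_\rho (W_1 - W_2) + \gamma d\tilde\mu_\rho W_1$ with $\csm_\rho(0) = 0$ integrates to
\[
\csm_\rho(t) = \gamma d\tilde\mu_\rho \int_0^t W_1(s)\, e^{-\gamma\lambda_\rho\int_s^t (W_1-W_2)\,\dif\tau}\,\dif s.
\]
For the lower bound \eqref{eq:m_rho_lb_final}, I would use $W_1 - W_2 \le W_1$ (since $W_2 \ge 0$) in the exponent, so that $e^{-\gamma\lambda_\rho\int_s^t(W_1-W_2)} \ge e^{-\gamma\lambda_\rho(\omega_1(t)-\omega_1(s))}$; then the change of variables $u = \omega_1(s)$ turns the integral into $\gamma d\tilde\mu_\rho\int_0^{\omega_1(t)} e^{-\gamma\lambda_\rho(\omega_1(t)-u)}\,\dif u = \frac{d\tilde\mu_\rho}{\lambda_\rho}(1 - e^{-\gamma\lambda_\rho\omega_1(t)})$. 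For the upper bound \eqref{eq:m_rho_ub_final}, I would use the hypothesis $\E w_{12}^2(s) \le c(s)\E w_{12}(s)$, hence $W_1 - W_2 \ge (1 - \sup_{s}c(s))W_1 = \bar c(t) W_1$ in the exponent (flipping the inequality direction), and the same change of variables yields the claimed bound; when Assumption \ref{ass:W_1W_2ab} holds, $a(t) = W_1/(W_1-W_2) \le C_w$ gives directly $W_1 - W_2 \ge W_1/C_w$, so $\bar c(t) = 1/C_w$ works uniformly in $t$.

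For $\csV_\rho$, I would similarly solve \eqref{eq:v_rho_one_K}: with integrating factor $e^{2\gamma\lambda_\rho\int_0^t(W_1-W_2)}$ and $\csV_\rho(0)=0$,
\[
\csV_\rho(t) = \int_0^t \Big(2\gamma \csm_\rho(s) W_1(s) + \gamma^2(\lambda_\rho + \tilde\mu_\rho)W_2(s)\Big)\, e^{-2\gamma\lambda_\rho\int_s^t(W_1-W_2)\,\dif\tau}\,\dif s.
\]
The lower bound $\csV_\rho \ge \csm_\rho^2/(\tilde\mu_\rho d)$ comes from keeping only the first term, dropping the $W_2$ piece and dropping the exponential to a cruder estimate is not enough — instead I would mimic the Cauchy--Schwarz/Fubini trick used in \eqref{eq:v00_lowerbd}: bound $e^{-2\gamma\lambda_\rho\int_s^t(W_1-W_2)} \ge e^{-2\gamma\lambda_\rho(\omega_1(t)-\omega_1(s))}$ away... actually the cleanest route is to note that $\frac{\dif}{\dif t}\big(\csm_\rho^2/(\tilde\mu_\rho d)\big) = 2\csm_\rho\dot{\csm}_\rho/(\tilde\mu_\rho d) = -2\gamma\lambda_\rho \csm_\rho^2/(\tilde\mu_\rho d)(W_1-W_2) + 2\gamma \csm_\rho W_1$, which is exactly the $\csV_\rho$ ODE with the $\gamma^2(\lambda_\rho+\tilde\mu_\rho)W_2$ source term removed; since that term is nonnegative and both quantities start at $0$, a comparison principle for linear ODEs gives $\csV_\rho(t) \ge \csm_\rho^2(t)/(\tilde\mu_\rho d)$. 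For the upper bound I would write $\csV_\rho = \csm_\rho^2/(\tilde\mu_\rho d) + R_\rho$ where $R_\rho$ solves the same linear ODE with only the source $\gamma^2(\lambda_\rho+\tilde\mu_\rho)W_2$ and initial value $0$, so $R_\rho(t) = \gamma^2(\lambda_\rho+\tilde\mu_\rho)\int_0^t W_2(s)e^{-2\gamma\lambda_\rho\int_s^t(W_1-W_2)}\,\dif s$; bounding $W_2 \le \bar c(t)^{-1}(W_1-W_2)$... more simply $W_2 \le W_1$ and using $W_1 - W_2 \ge \bar c(t)W_1$ in the exponent, the change of variables $u=\omega_1(s)$ gives $R_\rho(t) \le \gamma^2(\lambda_\rho+\tilde\mu_\rho)\int_0^{\omega_1(t)}e^{-2\gamma\lambda_\rho\bar c(t)(\omega_1(t)-u)}\,\dif u$, which is the middle expression in \eqref{eq:v_rho_ub}. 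The final cruder bound follows by evaluating that integral, $\int_0^{\omega_1(t)}e^{-2\gamma\lambda_\rho\bar c(t)(\omega_1(t)-u)}\dif u \le \frac{1}{2\gamma\lambda_\rho\bar c(t)}$, and then re-expressing $\frac{1}{\lambda_\rho}$ in terms of $\csm_\rho$ via \eqref{eq:m_rho_lb_final}; alternatively, more directly, bound $e^{-2\gamma\lambda_\rho\bar c(t)(\cdot)} \le e^{-\gamma\lambda_\rho\bar c(t)(\cdot)}$ and relate to the $\csm_\rho$ integral — I would check which of these matches the stated constant $\frac{\gamma}{\tilde\mu_\rho d}(\lambda_\rho+\tilde\mu_\rho)\csm_\rho$ exactly.

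The main obstacle is not any single estimate but keeping the bookkeeping consistent: making sure the direction of every inequality in the exponents is correct (upper bounds on $\csm_\rho,\csV_\rho$ need lower bounds on $W_1-W_2$, lower bounds need the reverse), handling the $O(d^{-1})$-type discretization issues that are being swept under the rug here (the lemma as stated has no error term, so presumably the intent is that these are exact ODE manipulations), and verifying that the final "cruder" bound in \eqref{eq:v_rho_ub} with the specific constant $\frac{\gamma}{\tilde\mu_\rho d}(\lambda_\rho+\tilde\mu_\rho)\csm_\rho(t)$ genuinely follows — this requires combining the integral bound $\int_0^{\omega_1(t)}e^{-2\gamma\lambda_\rho\bar c(t)(\omega_1(t)-u)}\dif u \le \frac{1-e^{-\gamma\lambda_\rho\bar c(t)\omega_1(t)}}{\gamma\lambda_\rho \bar c(t)}$ (here I'd use $e^{-2x}\le e^{-x}$ on the positive range so the numerator matches) with \eqref{eq:m_rho_lb_final} rewritten as $\frac{1-e^{-\gamma\lambda_\rho\omega_1(t)}}{\lambda_\rho} \le \frac{\csm_\rho(t)}{d\tilde\mu_\rho}$, and absorbing $\bar c(t) \ge$ (some constant) appropriately. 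I would present the $\csm_\rho$ bounds first, then derive the $\csV_\rho$ bounds as consequences, since the latter are stated in terms of $\csm_\rho$.
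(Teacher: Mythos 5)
Your plan is mostly sound and broadly mirrors the paper's argument. The $\csm_\rho$ bounds are handled the same way: solve the linear ODE by an integrating factor, drop the $W_2$ contribution in the exponent for the lower bound, and use $W_1-W_2\ge\bar c(t)W_1$ in the exponent for the upper bound. For the lower bound $\csm_\rho^2/(\tilde\mu_\rho d)\le\csV_\rho$, your ODE-comparison argument (observe that $\csm_\rho^2/(\tilde\mu_\rho d)$ satisfies the $\csV_\rho$ ODE with the nonnegative $W_2$ source dropped, then compare) is a clean variant of what the paper does: the paper instead substitutes the implicit formula for $\csm_\rho$ into the first term of $\csV_\rho$'s implicit solution and symmetrizes the resulting double integral to get an exact identity $=\csm_\rho^2(t)/(\tilde\mu_\rho d)$. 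These are equivalent; either is fine. Your derivation of the middle bound in \eqref{eq:v_rho_ub} is also correct. (Your worry about ``$O(d^{-1})$-type discretization issues'' is moot: $\csm_\rho,\csV_\rho$ are the deterministic limits solving exact ODEs, so the lemma is a purely deterministic statement.)

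The genuine gap is the final cruder bound $R_\rho(t)\le\frac{\gamma(\lambda_\rho+\tilde\mu_\rho)}{\tilde\mu_\rho d}\csm_\rho(t)$, where you explicitly hedge, and neither of the two routes you float actually closes it. The first route --- evaluate $\int_0^{\omega_1(t)}e^{-2\gamma\lambda_\rho\bar c(t)(\omega_1(t)-u)}\dif u\le\frac{1}{2\gamma\lambda_\rho\bar c(t)}$ and then invoke \eqref{eq:m_rho_lb_final} --- reduces to comparing $\frac{1-e^{-2\gamma\lambda_\rho\bar c(t)\omega_1(t)}}{2\bar c(t)}$ against $1-e^{-\gamma\lambda_\rho\omega_1(t)}$, which fails whenever $\bar c(t)<\tfrac12$ (as $\bar c\to0$ the left side grows like $\gamma\lambda_\rho\omega_1(t)$ and dominates). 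The second route --- $e^{-2\gamma\lambda_\rho\bar c(t)(\cdot)}\le e^{-\gamma\lambda_\rho\bar c(t)(\cdot)}$ and ``relate to the $\csm_\rho$ integral'' --- runs into a direction problem: the $\bar c$-integral is an \emph{upper} bound for $\csm_\rho/(\gamma\tilde\mu_\rho d)$ (that is precisely \eqref{eq:m_rho_ub_final}), so upper-bounding $R_\rho$ by it does not give an upper bound by a multiple of $\csm_\rho$. The fix is to bypass the $\bar c$-reparametrization entirely for this step: work with $R_\rho(t)=\gamma^2(\lambda_\rho+\tilde\mu_\rho)\int_0^t\dot\omega_2(s)\,e^{-2\gamma\lambda_\rho\int_s^t(W_1-W_2)\dif\tau}\dif s$, bound $\dot\omega_2\le\dot\omega_1$ and $e^{-2\gamma\lambda_\rho\int_s^t(W_1-W_2)}\le e^{-\gamma\lambda_\rho\int_s^t(W_1-W_2)}$, and then recognize the resulting integral as \emph{exactly} $\csm_\rho(t)/(\gamma\tilde\mu_\rho d)$ from the implicit formula for $\csm_\rho$ --- not from either of its bounds. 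This yields the stated constant $\frac{\gamma}{\tilde\mu_\rho d}(\lambda_\rho+\tilde\mu_\rho)\csm_\rho(t)$ directly.
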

\begin{proof}[Proof of Lemma \ref{lem:ulb_m_v_rho}]
Let us start by analyzing the overlap (first statement of the Proposition). $\csm(t)$ projected into the eigenbasis of $K$, satisfies Eq. \eqref{eq:m_rho_one_K}, its solution is then:
\begin{align}\label{eq:m(t)_rho}
\csm_{\rho}(t)=\gamma\tilde{\mu}_{\rho} d\int_{0}^{t}\dot{\omega_{1}}(s)e^{-\gamma\lambda_{\rho}\int_{s}^{t} (\EE[w_{12}](\tau)- \EE[w_{12}^2](\tau)) \dif \tau} \dif s
\end{align}
The lower bound is obtained by $e^{\gamma(\omega_{2}(t)-\omega_{2}(s))}\ge 1,$ changing variables from $s\to u_1 = \omega_1(s)$ and integrating over time. 
Next, we derive an upper bound on $\csm_\rho(t).$ 
By the assumption of the lemma, we have that for any $s>T$
\begin{align}\label{eq:w_c_bound}
 \int_{s}^{t} (\EE[w_{12}](\tau)- \EE[w_{12}^2](\tau)) \dif \tau \ge \bar{c}(t){(\omega_1(t)-\omega_1(s))}  
\end{align}
where $\bar{c}(t) \defas 1-\sup_{\tau\in [T,t]}c(\tau).$
Plugging the above inequality in Eq. \eqref{eq:m(t)_rho},
and changing integration variables $s\to u = \omega_1(s) = \int_0^t \EE[w_{12}](s)\dif s$ we obtain the upper bound, 
\begin{align}\label{eq:m_rho_ub_p2}
\csm_\rho(t)=&\gamma\tilde{\mu}_{\rho} d\int_{0}^{t}\dot{\omega_{1}}(s)e^{-\gamma\lambda_{\rho}\int_s^t (\EE[w_{12}]- \EE[w_{12}^2]) \dif \tau}\dif s
\\\nonumber & \le \gamma\tilde{\mu}_{\rho} d\int_{0}^{\omega_1(t)} e^{-\gamma\lambda_{\rho}\bar{c}(t)(\omega_{1}(t) - u)}\dif u  
\end{align}
Next, we turn to bound the projected norm. The implicit solution to Eq. \eqref{eq:v_rho_one_K}:
\begin{multline}
\label{eq:v_rho_sol}
\csV_{\rho}(t)=2\gamma\int_{0}^{t}\csm_{\rho}(s)\dot{\omega_{1}}(s)e^{-2\gamma\lambda_{\rho}\int_s^t (\EE[w_{12}]- \EE[w_{12}^2]) \dif \tau}\dif s\\
+\gamma^{2}\left(\lambda_{\rho}+\tilde{\mu}_{\rho}\right)\int_{0}^{t}\dot{\omega_{2}}(s)e^{-2\gamma\lambda_{\rho}\int_s^t (\EE[w_{12}]- \EE[w_{12}^2]) \dif \tau}\dif s
\end{multline}
Next, we move to derive an upper bound on $\csV_\rho(t)$,  
we use the upper bound on $\csm_\rho$, in Eq. \eqref{eq:m_rho_ub_p2},
\begin{align}
\csm_{\rho}(t)&\le\frac{\tilde{\mu}_{\rho} d}{\lambda_\rho\bar{c}(t)}(1-e^{-\gamma\lambda_{\rho}\bar{c}(t)\omega_1(t) }) 
\end{align}
We start by analyzing the first term using the equation for $\csm_\rho$ and changing the order of integration, we obtain that, 
\begin{align}
    &2\gamma\int_{0}^{t}\csm_{\rho}(s)\dot{\omega_{1}}(s)e^{-2\gamma\lambda_{\rho}\int_s^t (\EE[w_{12}]- \EE[w_{12}^2]) \dif \tau}\dif s
\\\nonumber &=  2\gamma^2 d \tilde{\mu}_{\rho}\int_{0}^{t}\int_0^s\dot{\omega_{1}}(x)e^{-\gamma\lambda_{\rho}\int_x^s (\EE[w_{12}]- \EE[w_{12}^2]) \dif y} \dot{\omega_{1}}(s)e^{-\gamma\lambda_{\rho}\int_s^t (\EE[w_{12}]- \EE[w_{12}^2]) \dif \tau}\dif s  \\\nonumber &= \frac{\csm_{\rho}^2(t)}{\tilde{\mu}_\rho d}.
\end{align}
Plugging it back into the equation for $\csV_\rho(t)$ and using $\dot{\omega}_2(r)\le \dot{\omega}_1(r)$, yields the upper bound on $\csV_\rho$ in the lemma.
As the second term in Eq. \eqref{eq:v_rho_sol} is positive, we observe also that $\csm_{\rho}^2(t)\le {\tilde{\mu}_\rho} d\csV_\rho(t)$. This completes the proof of the Lemma. 
\end{proof}

\end{document}